\documentclass[12pt]{colt2026} 
\jmlrpages{}
\jmlrvolume{}
\jmlryear{}
\jmlrproceedings{}{}
\usepackage{hyperref}       
\usepackage{url}      
\usepackage{booktabs}       
\usepackage{amsfonts}       
\usepackage{nicefrac}    
\usepackage{natbib}
\usepackage{microtype}      
\usepackage{xcolor}  
\newcommand{\lippax}{\textsc{LIPPAX}}
\newcommand{\lippaxg}{\textsc{SLIPPAX}}

\usepackage{hyperref}       
\usepackage{url}            
\usepackage{booktabs}       
\usepackage{amsfonts}       
\usepackage{nicefrac}       
\usepackage{microtype}      
\usepackage{xcolor}         
\usepackage{paralist,amsmath, amssymb}
\usepackage{algorithm,algorithmic}
 \usepackage{graphicx}
 \usepackage{float}
 \usepackage{pdfpages}
 \usepackage{tikz}

\usepackage{xcolor}
\usepackage{graphics}
\usepackage{units}
 \usepackage{amssymb, multirow, paralist, color}

 \usepackage{textcase, booktabs,makecell}
 \usepackage{thmtools,thm-restate}
 \usepackage{mathtools}
\usepackage{enumitem}
\usepackage{titlesec}
\usepackage{tikz}
\usetikzlibrary{positioning}

 \usepackage{thmtools,thm-restate}
 \usepackage{mathtools}
\usepackage{enumitem}
\def \y {\mathbf{y}}
\def \E {\mathbb{E}}
\def \x {\mathbf{x}}

\def \z {\mathbf{z}}
\def \u {\mathbf{u}}

\def \s {\mathbf{s}}
\def \R {\mathbb{R}}

\def \Z {\mathcal{Z}}

\def \v {\mathbf{v}}

\def \p {\mathbf{p}}

\newcounter{protocoll}


\newtheorem{ass}{Assumption}

\newtheorem{cor}{Corollary}
\newtheorem{defn}{Definition}
\newtheorem{Remark}{Remark}

\DeclareMathOperator*{\argmax}{argmax}
\usepackage{microtype}

\usepackage{booktabs}

\title[Faster Rates For  Federated Variational Inequalities]{Faster Rates For  Federated Variational Inequalities}
\usepackage{times}

\author{%
 \Name{Guanghui Wang}\thanks{This work was done during an internship at Apple.} \Email{gwang369@gatech.edu}\\
 \addr College of Computing, Georgia Institute of Technology
 \AND
 \Name{Satyen Kale} \Email{satyen@apple.com}\\
 \addr Apple
}

\begin{document}

\maketitle

\begin{abstract}%
\noindent In this paper, we study federated optimization for solving stochastic variational inequalities (VIs), a problem that has attracted growing attention in recent years. Despite substantial progress, a significant gap remains between existing convergence rates and the state-of-the-art bounds known for federated convex optimization. In this work, we address this limitation by establishing a series of improved convergence rates. First, we show that, for general smooth and monotone variational inequalities, the classical Local Extra SGD algorithm admits tighter guarantees under a refined analysis. Next, we identify an inherent limitation of Local Extra SGD, which can lead to excessive client drift. Motivated by this observation, we propose a new algorithm, the Local Inexact Proximal Point Algorithm with Extra Step (\lippax), and show that it mitigates client drift and achieves improved guarantees in several regimes, including bounded Hessian, bounded operator, and low-variance settings. Finally, we extend our results to federated composite variational inequalities and establish improved convergence guarantees.

\end{abstract}

\section{Introduction}
Federated learning \citep{konevcny2016federated,pmlr-v54-mcmahan17a} is a powerful framework for solving large-scale machine learning problems. Unlike traditional approaches \citep{NIPS2012_6aca9700,goyal2017accurate} that aggregate all training data on a central server, federated optimization keeps the data on local devices, such as mobile phones or organizational silos, and performs training locally. Periodically, these devices communicate only model updates (e.g., gradients or parameters), which are then aggregated to update a global model. This decentralized design  enhances privacy and reduces communication costs, making federated learning particularly attractive for modern applications where data are  sensitive  and massive in scale. 

The central problem of interest is federated
empirical risk minimization. The canonical algorithm for this task is \emph{Local Stochastic Gradient Descent} (LSGD), also known as \emph{Federated Averaging} \citep{zinkevich2010parallelized,pmlr-v54-mcmahan17a,stich2018local}: In this scheme, each of the $M$ participating devices performs multiple SGD steps locally with its own data, following by synchronizing its model with the central server every $K$ local steps. By leveraging these \emph{local updates}, LSGD  achieves faster convergence than its centralized counterpart, mini-batch SGD, particularly in settings with many participating machines \citep{pmlr-v119-woodworth20a}. Formally, for optimizing smooth and convex loss functions, with $R$ communication rounds, LSGD attains a convergence rate of $O\left(\frac{1}{KR} + \frac{\sigma}{\sqrt{MKR}} + \frac{\sigma^{2/3}}{K^{1/3}R^{2/3}}\right)$, where $\sigma^2$ is the variance of the stochastic gradients. Building on this foundational work, numerous extensions have been proposed, from advanced strategies to mitigate data heterogeneity across clients \citep{zhao2018federated,pmlr-v119-karimireddy20a} to the integration of adaptive optimizers \citep{xie2019local,reddi2021adaptive}.

\begin{table}[]
\centering
\caption{Summary of convergence rates for federated optimization and variational inequalities. FO: Federated Optimization. FVI: Federated VI. The red terms denote sub-optimal factors compared to the LSGD-optimal bound.}
\label{tab:summary}
\resizebox{0.9\textwidth}{!}{%
\begin{tabular}{c|c|c|c|c}
\Xhline{1.2pt}
\textbf{Setting} & \textbf{Algo.} & \textbf{Reference} & \textbf{Bound} & \textbf{Assumption} \\
\Xhline{1.2pt}
FO & LSGD & \citep{pmlr-v119-woodworth20a} &  
$\textstyle O\left( \frac{1}{{K}R}+\frac{\sigma}{\sqrt{MKR}}+\frac{\sigma^{\frac{2}{3}}}{K^{\frac{1}{3}}R^{\frac{2}{3}}}\right)$ & -- \\ \hline   
\multirow{9}{*}{FVI} 
& \multirow{4}{*}{LESGD} 
    & \citep{beznosikov2022decentralized} 
    & $\textstyle O\!\left(\frac{1}{{\color{red}{\sqrt{K}}}R}+\frac{\sigma}{\sqrt{MKR}}+{\color{red}{\sqrt{\tfrac{\sigma}{\sqrt{K}R}+\tfrac{\sigma^2}{R}}}}\right)$ 
    & -- \\ \cline{3-5} 
&   & Theorem \ref{thm:ESGD}  
    & $\textstyle O\left( \frac{1}{{\color{red}{\sqrt{K}}}R}+\frac{\sigma}{\sqrt{MKR}}+\frac{\sigma^{\frac{2}{3}}}{K^{\frac{1}{3}}R^{\frac{2}{3}}}\right)$ 
    & -- \\ \cline{3-5} 
&   & Corollary \ref{coredsada}  
    & \multirow{2}{*}{$\textstyle O\left( \frac{1}{{K}R}+\frac{\sigma}{\sqrt{MKR}}+\frac{\sigma^{\frac{2}{3}}}{K^{\frac{1}{3}}R^{\frac{2}{3}}}\right)$}
    & $\sigma=\Omega\!\left(\frac{1}{\sqrt{R}K^{1/4}}\right)$ \\ \cline{3-3}\cline{5-5} 
&   & Theorem \ref{thm:affine} 
    &  
    & affine operator \\ \cline{2-5} 
& \multirow{3}{*}{\lippax} 
    & Theorem \ref{thm:inexactppa} 
    & $\textstyle \widetilde{O}\left( \frac{1}{{K}R}+\frac{\sigma}{\sqrt{MKR}}+\frac{\sigma^{\frac{2}{3}}}{K^{\frac{1}{3}}R^{\frac{2}{3}}}+{\color{red}{\frac{\sigma}{\sqrt{KR}}}}\right)$
    & Bounded operator \\ \cline{3-5} 
&   & Corollary \ref{corsdasdsarccccd}  
    & \multirow{2}{*}{$\textstyle \widetilde{O}\left( \frac{1}{{K}R}+\frac{\sigma}{\sqrt{MKR}}+\frac{\sigma^{\frac{2}{3}}}{K^{\frac{1}{3}}R^{\frac{2}{3}}}\right)$}
    & $\sigma=O\!\left(\frac{1}{\sqrt{KR}}\right)$ \\ \cline{3-3}\cline{5-5} 
&   & Theorem \ref{thm:optimalboundppass} 
    & 
    & Bounded Hessian \\ \cline{2-5} 
& {\color{green!50!black}\lippaxg}
    & {\color{green!50!black}Theorem \ref{them:gaussiansmoothing}} 
    & {\color{green!50!black}$\textstyle \widetilde{O}\left( \frac{1}{{K}R}+\frac{\sigma}{\sqrt{MKR}}+\frac{\sigma^{\frac{2}{3}}}{K^{\frac{1}{3}}R^{\frac{2}{3}}}\right)$}
    & {\color{green!50!black}Bounded operator} \\ \cline{2-5} 
& LSGD 
    & Theorem \ref{thm:::cocococococ} 
    & $\textstyle O\left( \frac{1}{{K}R}+\frac{\sigma}{\sqrt{MKR}}+\frac{\sigma^{\frac{2}{3}}}{K^{\frac{1}{3}}R^{\frac{2}{3}}}\right)$
    & Co-coercive \\ \hline 
FCO & LDA & \citep{yuan2021federated} & 
$\textstyle O\left(\frac{1}{KR}+\frac{1}{R^{\frac{2}{3}}}+\frac{\sigma}{\sqrt{MKR}}\right)$ & Bounded gradient \\ \hline
\multirow{2}{*}{FCVI} 
& \multirow{2}{*}{LDE} 
    & \citep{bai2024local} & 
    $\textstyle O\left(\frac{1}{KR} + {\color{red}{ \frac{1}{\sqrt{R}}}} + \frac{\sigma}{\sqrt{MKR}} + {\color{red}{ \frac{1}{K^{\frac{1}{4}}R^{\frac{3}{4}}}}} \right)$
    & Bounded operator \\ \cline{3-5} 
&   & Theorem \ref{thm555} & 
$\textstyle O\left(\frac{1}{KR}+\frac{1}{R^{\frac{2}{3}}}+\frac{\sigma}{\sqrt{MKR}}\right)$
& Bounded operator  \\ 
\Xhline{1.2pt}
\end{tabular}%
}
\end{table}

To date, the theory of federated optimization is primarily restricted to smooth and convex loss functions. A natural question is whether these results can be extended to more general problems. In this work, we investigate federated optimization for \emph{smooth and monotone variational inequalities} \citep{stampacchia1970variational,kinderlehrer2000introduction,juditsky2011solving}, a unifying framework that not only recovers convex and smooth optimization as a special case, but also encompasses a broader range of complex optimization problems, such as convex-concave  optimization, equilibrium computation, and fixed point computation. 
A particular application of interest is smooth convex-concave optimization. Such problems have arise in many modern machine learning applications, including the training of generative adversarial networks \citep{NIPS2014_5423}, robust reinforcement learning \citep{pmlr-v119-jin20f}, and multi-distribution machine learning  \citep{NEURIPS2022_02917ace}. 


The study of federated variational inequalities has gained increasing attention in recent years \citep{deng2021local,hou2021efficient,pmlr-v162-sharma22c,beznosikov2025distributed}. Most of these efforts has focused on LSGD-type methods, the natural extension of LSGD to the VI setting. However, it is well known that vanilla SGD may diverge on general smooth monotone problems. As a result, these works have mostly been restricted to the narrower case of smooth and \emph{strongly} monotone VIs.  To deal with general smooth and monotone VIs,  researchers have adapted the Extra SGD algorithm \citep{juditsky2011solving}, the classical method for general smooth monotone VIs, to the federated context, leading to the LESGD method \citep{beznosikov2022decentralized}. The resulting convergence rate is  
$
\textstyle O\!\left(\frac{1}{\sqrt{K}R}+\frac{\sigma}{\sqrt{MKR}}+{\sqrt{\tfrac{\sigma}{\sqrt{K}R}+\tfrac{\sigma^2}{R}}}\right).
$  
However, there is still a large gap between this bound and the $
O\left(\tfrac{1}{KR} + \tfrac{\sigma}{\sqrt{MKR}} + \tfrac{\sigma^{2/3}}{K^{1/3}R^{2/3}}\right)
$ rate of LSGD for federated convex optimization, especially due to the last term which is of order $O\left(\frac{\sigma}{\sqrt{R}}\right)$. This last term does not decay with the number of local steps $K$ or number of machines $M$, which brings into question the utility of local steps.

In this paper, we revisit the foundational problem of federated optimization for \emph{general smooth and monotone variational inequalities}. We note that solving federated variational inequalities is inherently challenging: Prior theoretical analyses in federated convex optimization often rely on the smoothness of the objective, which does \emph{not} extend to min–max problems or general variational inequalities, since min–max objectives are typically non-smooth when viewed as functions of the primal variable (even when the objective is jointly smooth), due to the inner maximization. More essentially, convex and smooth objectives naturally satisfy \emph{co-coercivity}, a property that is crucial in most analyses of federated optimization. However, co-coercivity generally fails to hold for smooth and monotone variational inequalities. As a result, it remains unclear whether improved convergence rates are achievable in this setting. In this paper, we provide a positive answer to this question by establishing a series of faster convergence rates. More specifically:

\begin{itemize}
    \item \textbf{LESGD}: For LESGD, We show that a faster 
    $\textstyle O\left( \frac{1}{\sqrt{K}R}+\frac{\sigma}{\sqrt{MKR}}+\frac{\sigma^{\frac{2}{3}}}{K^{\frac{1}{3}}R^{\frac{2}{3}}}\right)$ 
rate can be obtained (Theorem \ref{thm:ESGD}),  which is LSGD-optimal\footnote{By an LSGD-optimal bound, we refer to a rate of order
$
O\!\left( \frac{1}{KR}+\frac{\sigma}{\sqrt{MKR}}+\frac{\sigma^{2/3}}{K^{1/3}R^{2/3}} \right),$
which is known to be optimal for LSGD in federated convex and smooth optimization \citep{pmlr-v119-woodworth20a}. Note that federated variational inequalities include federated optimization as a special case.} 
up to an additional ${\sqrt{K}}$ factor in the first term. As a consequence, the LSGD-optimal  rate is achieved when $\sigma=\Omega\left(\frac{1}{\sqrt{R}K^{\frac{1}{4}}}\right)$, i.e., the variance is not too small  (Corollary \ref{coredsada}). We also show that the LSGD-optimal rate can be achieved if the operator is co-coercive (Theorem \ref{thm:::cocococococ}). 
\item \textbf{Composite VI}: We extend our analysis to \emph{composite} VI problems, where the smooth and monotone VI is regularized by a potentially non-smooth function. For this harder problem, \cite{bai2024local} provide the following bound for a generalized version of LESGD:
    $\textstyle O\left(\frac{1}{KR} + { \frac{{1}}{\sqrt{R}}} + \frac{\sigma}{\sqrt{MKR}} +  \frac{\sqrt{1}}{K^{\frac{1}{4}}R^{\frac{3}{4}}} \right).$
In Theorem \ref{thm555}, we improve this to 
$\textstyle O\left(\frac{\sigma}{\sqrt{MKR}}+ \frac{1}{R^{\frac{2}{3}}} + \frac{1}{KR}\right),$
which exactly matches the best known rate for federated composite convex optimization \citep{yuan2021federated}, which is a special case of federated composite VI problems. 
\end{itemize}
Our main technical novelty for obtaining the results above lies in a novel analysis of the convergence bound. We note that in the analysis of federated learning algorithms, an important term is the client drift, which measures the divergence between local models across different clients. Traditional analysis of federated optimization usually establishes a connection between the convergence upper bound and the squared norm of the client drift. However, in federated VI analysis,  previous work establishes connections to both the norm and its square, due to the more complicated nature of VIs compared with convex optimization. In this paper, we show that only the squared client drift matters, which leads to improved convergence rates.

Note that our general bound for LESGD (Theorem~\ref{thm:ESGD}) contains a $\tfrac{1}{\sqrt{K}R}$ term, which is a factor of $\sqrt{K}$ worse than the $\frac{1}{KR}$ term in the LSGD-optimal bound.  This term is also present in the results of prior work~\citep{beznosikov2022decentralized}, although it is difficult to discern from their paper. We find that this term stems from the lack of co-coercivity (and, in the worst case, even anti-cocoercivity) of the extra-gradient operation~\citep{gorbunov2022extragradient}, which leads to looser control of client drift across local updates and reveals a fundamental limitation of this class of methods. We refer to Remark~\ref{remakree222} for a detailed discussion. In Theorem~\ref{thm:affine}, we further show that LSGD-optimal rates can be achieved when the operator is affine, since the extra-gradient operation becomes co-coercive in this case. However, co-coercivity of the extra-gradient operation does not hold in general. This shortcoming is addressed in our next contribution:

\begin{itemize}
    \item \textbf{New Algorithm, \lippax}: We propose a new algorithm called the Local Inexact Proximal Point Algorithm with Extra Step (\lippax, Algorithm~\ref{alg:example}). Our key observation is that the proximal point operation incurs far less client drift than the extra-gradient step. While the exact proximal point is impractical to compute, we approximate it using multiple SGD steps on a regularized objective, followed by one extra gradient step from the starting point. This approach successfully eliminates the $\tfrac{1}{\sqrt{K}R}$ term, but introduces an additional $\tfrac{\sigma}{\sqrt{KR}}$ variance term (Theorem~\ref{thm:inexactppa}). This extra term arises from the variance accumulated during the inner SGD loop, and a direct analysis suggests it is irreducible by $M$. Nevertheless, through a novel bias-variance decomposition, we show that this term can be eliminated when the Hessian of the operator is bounded (Theorem~\ref{thm:optimalboundppass}). Finally, we propose a Gaussian-smoothing variant called \lippaxg\ (Algorithm~\ref{alg:example:lipeG}), 
    which removes the bounded-Hessian assumption, and achieves LSGD-optimal bound (up to logarithmic factors) under only the mild assumption that the operator is bounded  (Theorem~\ref{them:gaussiansmoothing}).
\end{itemize}

This paper primarily focuses on the homogeneous setting of federated learning where all machines sample from the same distribution. In Appendix~\ref{sec:heterogeneous}, we also give an extension of our main result to the heterogeneous case; further extensions are left to future work.

\section{Related Work}
In this section, we briefly review related work in federated optimization and VI problems.
 
\paragraph{Variational Inequalities.} 
The study of VIs dates back to the 1960s \citep{minty1962monotone,stampacchia1964formes,stampacchia1970variational}, with more modern classical monographs in \citep{kinderlehrer2000introduction,tremolieres2011numerical}. Two of the most important algorithms for smooth and monotone VIs are the Extra Gradient (EG) method \citep{korpelevich1976extragradient} and the Optimistic Gradient (OG) method \citep{popov1980modification}. For stochastic smooth and monotone VIs, \cite{juditsky2011solving} proposed the Mirror-Prox algorithm (we refer to it as Extra SGD in the $\ell_2$-space), which can be viewed as the stochastic and mirror descent variant of EG. They show that for stochastic smooth and monotone VIs, Mirror-Prox achieves an $O\!\left(\tfrac{L}{T}+\tfrac{\sigma}{\sqrt{T}}\right)$ convergence rate, where $L$ is the smoothness parameter and $T$ is the total number of iterations. This matches the convergence rate of SGD for stochastic convex and smooth optimization \citep{bubeck2015convex}. In distributed optimization, a mini-batch version of Extra SGD (\emph{without local updates}) has been studied for saddle-point problems \citep{beznosikov2025distributed}, achieving an $O\!\left(\tfrac{L}{R}+\tfrac{\sigma}{\sqrt{MRK}}\right)$ rate, where $R$ is the number of communication rounds, $M$ is the number of machines, and $K$ is the batch size. We refer readers to the recent survey \citep{beznosikov2023smooth} for a detailed review of stochastic smooth and monotone inequalities.

\paragraph{Federated Optimization.} 
Federated optimization has been extensively studied in recent years \citep{wang2021field}. The most related foundational algorithm to our work is by \citet{pmlr-v54-mcmahan17a}, whose Federated Averaging algorithm coincides with the parallel SGD algorithm with identical clients \citep{zinkevich2010parallelized}. The LSGD algorithm has been analyzed by a long line of search \citep{stich2018local,stich2019error,pmlr-v108-bayoumi20a,pmlr-v119-woodworth20a}, with various sharp upper and lower bounds. \cite{yuan2021federated} studied federated optimization for composite problems, where the objective consists of a convex and smooth function plus a convex but non-smooth regularizer. They proposed the local dual averaging algorithm, which achieves an $O\!\left(\tfrac{1}{KR}+\tfrac{L}{R^{2/3}}+\tfrac{\sigma}{\sqrt{MRK}}\right)$ rate under the  bounded gradients assumption.

\paragraph{Federated Variational Inequalities.} 
For federated VIs, as mentioned in the introduction, there is a growing body of work \citep{deng2021local,hou2021efficient,pmlr-v162-sharma22c}, but these studies focus only on smooth and \emph{strongly} monotone VIs and employ SGDA-type methods, which are known to diverge in the merely monotone setting. More recent work \citep{beznosikov2022decentralized,bai2024local} has considered the LESGD algorithm. However, as discussed above, their results still exhibit a gap compared to the LSGD-optimal bounds.

\section{Preliminaries}
In this section, we review some basic concepts. Let $V:\R^d\mapsto\R^d$ be an operator of interest. We first introduce the following definitions. 
\begin{defn}[Monotonicity] 
An operator $V:\R^d\mapsto\R^d$  is monotone iff $\forall \z,\z'\in \R^d$, we have: 
\begin{equation}
    \label{eqn:monotone}
\left\langle V(\z)-V(\z'), \z-\z' \right\rangle\geq 0.
\end{equation}
\end{defn}
\begin{defn}[$L$-Smoothness]
  An operator $V:\R^d\mapsto\R^d$ is $L$-smooth, iff   $\forall \z,\z'\in\R^d$: 
\begin{equation}
\label{eqn:smoothVI}
   \|V(\z)-V(\z')\|\leq L\|\z-\z'\|. 
\end{equation}
\end{defn}
In this work, we are interested in solving variational inequalities. That is, we would like to find a solution $\z^*\in \R^d$,  such that: 
\begin{equation}
\label{eqn:VIeqn}
   \sup_{\z\in\R^d}\left\langle V(\z),\z^*-\z\right\rangle \leq 0. 
\end{equation}
The error of a candidate solution  $\overline{\z}\in\R^d$
is measured by:
$$ \text{err}(\overline{\z})=\sup_{\z}\left\langle V({\z}),\overline{\z}-\z\right\rangle. $$
Finally, we assume that we have access to an unbiased noisy oracle $\widetilde{V}: \mathbb{R}^d \times \Xi \mapsto \mathbb{R}^d$, where $\Xi$ is some sample space with an associated distribution that samples $\xi \in \Xi$ are drawn from, such that for any $\z \in \mathbb{R}^d$, we have $\E_{\xi}[\widetilde{V}(\z,\xi)]=V(\z)$. Moreover, we assume the variance is upper bounded:
$$ \E_{\xi}\|V(\z)-\widetilde{V}(\z,\xi)\|^2\leq \sigma^2.  $$
For simplicity and readability, we drop $\xi$ and use $\widetilde{V}(\z)$ when the context is clear. 

\section{Faster Rates For Federated VIs}
In this section, we provide a series of faster convergence rates for federated variational inequalities. In Section~\ref{sec:localsddd}, we establish tighter bounds for the LESGD algorithm. In Section~\ref{sec:fasterLother methods}, we introduce our proposed \lippax\ and \lippaxg\ algorithms and demonstrate that they achieve improved rates in different regimes. Finally, in Section~\ref{section:4.3}, we prove that LSGD attains faster rates for co-coercive operators. 

\subsection{Faster Rates for LESGD}
\label{sec:localsddd}
In this part, we study the classical LESGD algorithm for solving smooth and monotone variational inequalities, which has been examined in prior work \citep{beznosikov2022decentralized,bai2024local}, and we establish faster convergence rates. The algorithm is outlined below. Let $\eta > 0$ denote the step size. At each round $t = 1, \dots, T$, for each local client $m = 1, \dots, M$, LESGD performs  the following updates:
\begin{equation}
\label{eqn:algorithm 1}
\begin{split}
        \x_{t}^{m} = & 
    \begin{cases}
\z_{t-1}^m - \eta\widetilde{V}(\z^m_{t-1}), & \text{mod}(t,K)\not= 0;\\
\frac{1}{M}\sum_{m=1}^M\left(\z_{t-1}^m - \eta\widetilde{V}(\z^m_{t-1}) \right), & \text{mod}(t,K)= 0.
    \end{cases}\\
    \z^m_{t} = & 
        \begin{cases}
\z_{t-1}^m - \eta\widetilde{V}(\x^m_t), & \text{mod}(t,K)\not= 0;\\
\frac{1}{M}\sum_{m=1}^M\left(\z_{t-1}^m - \eta\widetilde{V}(\x^m_{t}) \right), & \text{mod}(t,K)= 0.
    \end{cases}\\
 \end{split}   
\end{equation}

Intuitively, each client $m = 1, \dots, M$ first executes a stochastic gradient descent step at the current point $\z_{t-1}^m$ to obtain $\x_t^m$, and then updates $\z_{t-1}^m$ to $\z_t^m$ using the gradient at $\x_t^m$. The clients synchronize with each other every $K$ steps. We then obtain the following result. The proof is given in Appendix \ref{sec:app:proof the1}.

\begin{theorem}
\label{thm:ESGD}
Suppose $V$ is $L$-smooth and monotone. Let $\Z_D=\{\z\in\R^d|\|\z-\frac{1}{M}\sum_{m=1}^M\z_{0}^m\|\leq D\}$, where $D>0$ is any constant picked by the user. Let $\x_o=\frac{1}{MT}\sum_{m=1}^M\sum_{t=1}^T\x_t^m$ be the output of the algorithm in \eqref{eqn:algorithm 1}. Set: 
\begin{equation}
\label{eqn:eqtasdsVlocalecsgsd}
    \eta=\min\left\{\frac{1}{\sqrt{14K}L}, \frac{D\sqrt{M}}{\sigma\sqrt{6KR}}, \frac{D^{\frac{2}{3}}}{936^{\frac{1}{3}}e^{\frac{2}{3}}K^{\frac{2}{3}}R^{\frac{1}{3}}\sigma^{\frac{2}{3}}L^{\frac{1}{3}}} \right\}.
\end{equation}
Then  we have  
\begin{equation}
    \begin{split}
\label{eqn:thm:11sew111100000}
& \E\left[\sup_{\z\in\Z_{D}}\left\langle V(\z),\x_o-\z \right\rangle \right] 
\leq \frac{12LD^2}{\sqrt{K}R}+ \frac{6D\sigma}{\sqrt{MKR}} + \frac{26 D^{\frac{4}{3}}\sigma^{\frac{2}{3}}L^{\frac{1}{3}}}{K^{\frac{1}{3}}R^{\frac{2}{3}}}.
\end{split}
\end{equation}
\end{theorem}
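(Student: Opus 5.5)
The plan is to run the standard Extra-SGD (Mirror-Prox) analysis on the virtual averaged iterates and to keep the client-drift term in squared form only. Define the virtual sequences $\bar\z_t=\frac1M\sum_m\z_t^m$ and $\bar\x_t=\frac1M\sum_m\x_t^m$. Whether or not $t$ is a synchronization step, they satisfy
\[
\bar\x_t=\bar\z_{t-1}-\frac{\eta}{M}\sum_m\widetilde V(\z_{t-1}^m),\qquad
\bar\z_t=\bar\z_{t-1}-\frac{\eta}{M}\sum_m\widetilde V(\x_t^m),
\]
because averaging commutes with these affine updates. Fix $\z\in\Z_D$. The usual extragradient three-point identity gives
\[
\eta\Big\langle \tfrac1M\sum_m\widetilde V(\x_t^m),\bar\x_t-\z\Big\rangle\le \tfrac12\|\bar\z_{t-1}-\z\|^2-\tfrac12\|\bar\z_t-\z\|^2-\tfrac12\|\bar\x_t-\bar\z_{t-1}\|^2+\tfrac{\eta^2}{2}\Big\|\tfrac1M\textstyle\sum_m(\widetilde V(\x_t^m)-\widetilde V(\z_{t-1}^m))\Big\|^2 .
\]
By monotonicity, $\langle V(\z),\bar\x_t-\z\rangle\le\frac1M\sum_m\langle V(\x_t^m),\x_t^m-\z\rangle$. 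Writing $\frac1M\sum_m\langle V(\x_t^m),\x_t^m-\z\rangle$ against $\langle\frac1M\sum_m V(\x_t^m),\bar\x_t-\z\rangle$, the difference is $\frac1M\sum_m\langle V(\x_t^m)-V(\bar\x_t),\x_t^m-\bar\x_t\rangle$. This is at most $\frac{L}{M}\sum_m\|\x_t^m-\bar\x_t\|^2$, a \emph{squared} drift. This step is the key point, as opposed to bounding a cross term $\langle V(\x^m)-V(\bar\x),\bar\x-\z\rangle\le LD\|\x^m-\bar\x\|$, which is what produces the $\sqrt{\cdot}$ terms in prior work.

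Next I would handle the remaining error terms. The $\eta^2$ term splits into a smoothness part, which is absorbed by $-\frac12\|\bar\x_t-\bar\z_{t-1}\|^2$ once $\eta L$ is small, plus drift terms of the form $L^2\eta^2\cdot$drift, plus noise of size $O(\eta^2\sigma^2/M)$ thanks to independence across clients. The noise inner product $\eta\langle\frac1M\sum_m(\widetilde V-V)(\x_t^m),\bar\x_t-\z\rangle$ depends on $\z$, which is chosen by a supremum. I would treat it with the standard ghost-iterate trick: introduce $\v_t=\v_{t-1}-\frac{\eta}{M}\sum_m(\widetilde V-V)(\x_t^m)$ with $\v_0=\bar\z_0$. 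This makes the $\z$-dependent part telescope into $\frac12\|\v_0-\z\|^2\le D^2/2$, at the cost of another $O(\eta^2\sigma^2T/M)$. Summing over $t=1,\dots,T=KR$ and dividing by $\eta T$ gives
\[
\E\sup_{\z\in\Z_D}\langle V(\z),\x_o-\z\rangle\lesssim \frac{D^2}{\eta KR}+\frac{\eta\sigma^2}{M}+\frac{L}{T}\sum_t\frac1M\sum_m\E\|\x_t^m-\bar\x_t\|^2 .
\]

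The main obstacle is bounding the drift $\Delta_t=\frac1M\sum_m\E(\|\z_t^m-\bar\z_t\|^2+\|\x_t^m-\bar\x_t\|^2)$. Within a round, $\z^m-\z^{m'}$ evolves by the extragradient map $\z\mapsto \z-\eta V(\z-\eta V(\z))$ plus noise. That map is not nonexpansive: its expansion factor is only $1+O(\eta^2L^2)$, because there is no co-coercivity. Over $K$ steps, the deterministic part therefore stays bounded only if $\eta^2L^2K\lesssim1$, i.e.\ $\eta\le1/(\sqrt{14K}L)$, the first constraint in the theorem. Under that constraint, $(1+c\eta^2L^2)^K\le e$, and the noise accumulates to $\Delta_t\le 2e\cdot O(K\eta^2\sigma^2)$; this is where the factor $e$ and the constant $936$ come from. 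Substituting gives the bound $\frac{D^2}{\eta KR}+\frac{\eta\sigma^2}{M}+LK\eta^2\sigma^2$. Balancing each term against the three pieces of the $\min$ in \eqref{eqn:eqtasdsVlocalecsgsd} then yields the three terms $\frac{LD^2}{\sqrt K R}$, $\frac{D\sigma}{\sqrt{MKR}}$ and $\frac{D^{4/3}\sigma^{2/3}L^{1/3}}{K^{1/3}R^{2/3}}$. The first is the $\sqrt K$-suboptimal term forced by the step-size cap.
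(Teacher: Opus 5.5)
Your proposal is correct and takes essentially the same route as the paper. Both use the extragradient three-point inequality on the averaged iterates $\overline{\z}_t,\overline{\x}_t$, and both reduce the drift to the squared form $\frac{L}{M}\sum_m\|\x_t^m-\overline{\x}_t\|^2$ using $\sum_m(\x_t^m-\overline{\x}_t)=0$. They also share the ghost-iterate trick for the $\z$-dependent noise and the per-step drift recursion with factor $1+14\eta^2L^2\le 1+\frac{1}{K}$ under $\eta\le 1/(\sqrt{14K}L)$, so only the explicit constants remain to be tracked.
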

Theorem \ref{thm:ESGD} shows an $\textstyle O\left( \frac{L}{\sqrt{K}R}+\frac{\sigma}{\sqrt{MKR}}+\frac{\sigma^{\frac{2}{3}}L^{\frac{1}{3}}}{K^{\frac{1}{3}}R^{\frac{2}{3}}}\right)$  convergence for LESGD,
which nearly matches the LSGD-optimal bound in the classical federated optimization setting 
up to an additional ${\sqrt{K}}$ factor in the first term. Compared with the bound for LESGD established in prior work \citep{beznosikov2022decentralized}, 
$O\left(\frac{L}{\sqrt{K}R}+\frac{\sigma}{\sqrt{MKR}}+{\sqrt{\frac{L\sigma}{\sqrt{K}R}+\frac{\sigma^2}{R}}}\right),$
our result eliminates the undesirable last term.

\begin{Remark}[Discussion on the $\frac{L}{\sqrt{K}R}$ term]
\label{remakree222}
\emph{As discussed above, compared with the bound for LSGD \citep{pmlr-v119-woodworth20a}, the bound for LESGD contains an additional $\sqrt{K}$ factor in the first term. This arises because, as shown in \eqref{eqn:eqtasdsVlocalecsgsd}, LESGD requires a smaller step size of order $\eta=O(\tfrac{1}{\sqrt{K}L})$, whereas classical LSGD uses $\eta=O(\tfrac{1}{L})$. The smaller step size is necessary due to an inherent limitation of the extra-gradient method.} 
\emph{In particular, a key quantity in the analysis of federated optimization is the \emph{client drift}, measured by $\|\z_t^{m}-\z_t^{m'}\|^2$ for any two clients $m,m'\in[M]$. Under LSGD, this drift naturally contracts thanks to the \emph{co-coercivity} of the SGD operator. However, as recently shown by \citet{gorbunov2022extragradient}, the extra-gradient operator can be non-co-coercive, which makes $\|\z_t^{m}-\z_t^{m'}\|^2$ expansive instead of contractive. In fact, \citet{gorbunov2022extragradient} show that there is a particular \emph{co-coercive} operator $V$ and points $\z,\z'\in\R^d$ such that
$$ \| \z-V(\z-V(\z)) -(\z'-V(\z'-V(\z')))\|^2\geq (1+\eta^4)\|\z-\z'\|^2.$$
To control this expansion, a smaller step size is therefore required.}
\end{Remark}

Next, we show that under suitable assumptions, the convergence rate can be further improved to the optimal rate, that is, the rate for federated SGD for the federated optimization setting. 
\begin{cor}[Large Variance]
\label{coredsada}
  Suppose $V$ is $L$-smooth and monotone. Assume 
  $\sigma \geq \frac{14^{\tfrac{3}{4}}\,D\,L}{6e\sqrt{26}\,\sqrt{R}\,K^{1/4}} =\Omega\left( \frac{D L}{R^{\frac{1}{2}}K^{\frac{1}{4}}}\right).$ Let $\eta=\min\left\{ \frac{D\sqrt{M}}{\sigma\sqrt{6KR}}, \frac{D^{\frac{2}{3}}}{936^{\frac{1}{3}}e^{\frac{2}{3}}K^{\frac{2}{3}}R^{\frac{1}{3}}\sigma^{\frac{2}{3}}L^{\frac{1}{3}}} \right\}$. Then  we have  
\begin{equation*}
    \begin{split}
& \E\left[\sup_{\z\in\Z_{D}}\left\langle V(\z),\x_o-\z \right\rangle \right] 
\leq  \frac{26 D^{\frac{4}{3}}\sigma^{\frac{2}{3}}L^{\frac{1}{3}}}{K^{\frac{1}{3}}R^{\frac{2}{3}}} + \frac{6D\sigma}{\sqrt{MKR}}.
\end{split}
\end{equation*}
\end{cor}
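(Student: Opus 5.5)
The corollary follows directly from Theorem~\ref{thm:ESGD}. The only task is to show that, under the stated lower bound on $\sigma$, the constraint $\eta\le \frac{1}{\sqrt{14K}L}$ in \eqref{eqn:eqtasdsVlocalecsgsd} is automatically satisfied by the third candidate step size. The minimum in \eqref{eqn:eqtasdsVlocalecsgsd} then coincides with the step size prescribed in the corollary. I would go through the proof of Theorem~\ref{thm:ESGD} (Appendix~\ref{sec:app:proof the1}) to check that the three step-size restrictions enter only as upper bounds on $\eta$. Specifically, $\eta\le 1/(\sqrt{14K}L)$ is needed to control client drift and the extra-gradient expansion. The other two candidates are the values that balance the $D^2/(\eta KR)$ term against the variance terms $\eta\sigma^2/M$ and $\eta^2 K\sigma^2 L$.

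\textbf{Key step.} With $\eta$ chosen as the minimum of the two remaining candidates, we have $\eta\le \eta_3:=\frac{D^{2/3}}{936^{1/3}e^{2/3}K^{2/3}R^{1/3}\sigma^{2/3}L^{1/3}}$. I would show $\eta_3\le \frac{1}{\sqrt{14K}L}$. This is equivalent to
\[
\sigma^{2/3}\ \ge\ \frac{\sqrt{14}\,D^{2/3}L^{2/3}}{936^{1/3}e^{2/3}K^{1/6}R^{1/3}}.
\]
Raising both sides to the power $3/2$ gives
\[
\sigma\ \ge\ \frac{14^{3/4}DL}{\sqrt{936}\,e\,K^{1/4}R^{1/2}}.
\]
Since $\sqrt{936}=6\sqrt{26}$, this is exactly the assumed lower bound on $\sigma$. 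Therefore the corollary's $\eta$ equals the minimum in \eqref{eqn:eqtasdsVlocalecsgsd}, and all hypotheses of Theorem~\ref{thm:ESGD} hold.

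\textbf{Removing the first term.} The remaining issue is the $\frac{12LD^2}{\sqrt K R}$ term in \eqref{eqn:thm:11sew111100000}. It is not automatically absent, because it came from bounding $\frac{D^2}{\eta KR}$ using $\eta=\frac{1}{\sqrt{14K}L}$. I would revisit the final step of the theorem's proof, where the generic bound of the form $\frac{cD^2}{\eta KR}+c'\eta\sigma^2/M+c''\eta^2K\sigma^2L$ is evaluated by splitting $1/\eta\le \sqrt{14K}L+\frac{1}{\eta_2}+\frac{1}{\eta_3}$. In our regime $\eta=\min\{\eta_2,\eta_3\}$, so $\frac{1}{\eta}\le \frac1{\eta_2}+\frac1{\eta_3}$, and the $\sqrt{14K}L$ contribution never appears. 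The $\eta_2$ and $\eta_3$ contributions yield the $\frac{6D\sigma}{\sqrt{MKR}}$ and $\frac{26D^{4/3}\sigma^{2/3}L^{1/3}}{K^{1/3}R^{2/3}}$ terms with the same constants as before.

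\textbf{Main obstacle.} The expected difficulty is bookkeeping. One must confirm that the generic pre-tuning bound in the appendix has exactly this additive structure, with the drift condition used only qualitatively. Then dropping the first term is legitimate and no constants change. As a fallback, one could instead note that under the assumption $\frac{LD^2}{\sqrt K R}\lesssim \frac{D^{4/3}\sigma^{2/3}L^{1/3}}{K^{1/3}R^{2/3}}$, which is again the same inequality on $\sigma$, and absorb the first term into the third. That route would change the constant $26$, so the first route is the one to use.
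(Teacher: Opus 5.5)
Your proposal is correct and is essentially the argument the paper intends. The paper gives no separate proof, only the remark that under the variance assumption the $\frac{1}{\sqrt{14K}L}$ candidate is dominated by the other step-size terms. You make this precise by showing $\eta_3\le \frac{1}{\sqrt{14K}L}$ (the threshold constant is exactly $\sqrt{936}=6\sqrt{26}$, as you checked). Then, bounding $1/\eta\le 1/\eta_2+1/\eta_3$ in the pre-tuning bound $\frac{D^2}{\eta KR}+\frac{6\sigma^2\eta}{M}+936eL\eta^2\sigma^2K$ from the proof of Theorem~\ref{thm:ESGD}, you show the $\frac{LD^2}{\sqrt{K}R}$ term never arises.
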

The corollary above shows that the optimal rate can be achieved when $\sigma$ is not too small. This naturally occurs, as the undesirable small step size $\eta=O(\frac{1}{\sqrt{K}L})$ (as discussed in Remark \ref{remakree222}) is dominated by the rest of the terms in the configuration of the step size in \eqref{eqn:eqtasdsVlocalecsgsd}, which leads to the LSGD-optimal bound. 

Next, we show that, the optimal rate can be also obtained when $V$ is affine. The proof is provided in Appendix \ref{sec:app:prof:affine}. 
\begin{theorem}[Affine Operator]
   \label{thm:affine}   
   Suppose $V$ is affine, $L$-smooth and monotone. Then consider the algorithm given in \eqref{eqn:algorithm 1} with  $\eta=\min\left\{\frac{1}{L}, \frac{D\sqrt{M}}{\sigma\sqrt{6KR}}, \frac{D^{\frac{2}{3}}}{180^{\frac{1}{3}}K^{\frac{2}{3}}R^{\frac{1}{3}}\sigma^{\frac{2}{3}}L^{\frac{1}{3}}} \right\},$ and we get 
\begin{equation}
    \begin{split}
    \label{eqn:thm:11sew111}
& \E\left[\sup_{\z\in\Z_{D}}\left\langle V(\z),\x_o-\z \right\rangle \right] 
\leq  \frac{20D^{\frac{4}{3}}\sigma^{\frac{2}{3}}L^{\frac{1}{3}}}{K^{\frac{1}{3}}R^{\frac{2}{3}}} + \frac{2\sqrt{6}D\sigma}{\sqrt{MKR}} + \frac{2LD^2}{{K}R}. 
\end{split}
\end{equation}
\end{theorem}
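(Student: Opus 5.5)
The plan is to reuse the skeleton of the proof of Theorem~\ref{thm:ESGD} and change only the client-drift step, where affinity makes the extra-gradient map non-expansive. Write $V(\z)=A\z+b$, with $A$ monotone ($\langle A\u,\u\rangle\ge0$) and $\|A\|\le L$.

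\textbf{Step 1: per-iteration inequality on the virtual averages.} Define $\bar\z_t=\frac1M\sum_m\z_t^m$ and $\bar\x_t=\frac1M\sum_m\x_t^m$. Since averaging commutes with the linear updates, $\bar\z_t=\bar\z_{t-1}-\frac{\eta}{M}\sum_m\widetilde V(\x_t^m)$ and $\bar\x_t=\bar\z_{t-1}-\frac\eta M\sum_m\widetilde V(\z_{t-1}^m)$, regardless of whether $t$ is a synchronization step. The standard extra-gradient three-point argument, applied to each client and averaged, gives for any $\z\in\Z_D$
\begin{equation*}
\eta\,\tfrac1M\textstyle\sum_m\langle V(\x_t^m),\x_t^m-\z\rangle\le \tfrac12\|\bar\z_{t-1}-\z\|^2-\tfrac12\|\bar\z_t-\z\|^2+\text{(noise)}+\text{(drift)}.
\end{equation*}
Here $\langle V(\z),\x-\z\rangle\le\langle V(\x),\x-\z\rangle$ by monotonicity. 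The noise terms are martingale differences and are handled with an auxiliary ghost sequence so that the supremum over $\Z_D$ can be taken, giving $\eta^2\sigma^2/M$ per step. The drift terms have the form $\eta^2L^2\cdot\frac1M\sum_m\|\z_{t-1}^m-\bar\z_{t-1}\|^2$. The key point, as in Theorem~\ref{thm:ESGD}, is that only squared drift appears, not its norm.

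Summing over $t$ and dividing by $\eta T$ with $T=KR$ gives
\begin{equation*}
\E\sup_{\z\in\Z_D}\langle V(\z),\x_o-\z\rangle\lesssim \frac{D^2}{\eta KR}+\frac{\eta\sigma^2}{M}+\eta L^2\max_t\E\,\Xi_t,
\end{equation*}
where $\Xi_t=\frac1M\sum_m\|\z_t^m-\bar\z_t\|^2$ is the drift.

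\textbf{Step 2: drift bound, where affinity enters.} For two clients, the difference $\u=\z^m-\z^{m'}$ evolves as
\begin{equation*}
\u\mapsto (I-\eta A+\eta^2A^2)\u+\text{noise}.
\end{equation*}
The deterministic part is exactly $G=I-\eta A(I-\eta A)$, the extra-gradient map of the linear part. I will show $\|G\|\le1$ when $\eta\le1/L$. Expanding,
\begin{equation*}
\|G\u\|^2=\|\u\|^2-2\eta\langle A\u,\u\rangle+\eta^2\|A\u\|^2+2\eta^2\langle A^2\u,\u\rangle-2\eta^3\langle A\u,A^2\u\rangle+\eta^4\|A^2\u\|^2 .
\end{equation*}
Bounding this by $\|\u\|^2$ is the main obstacle, since $A$ is not symmetric and $\langle A^2\u,\u\rangle$ can be positive. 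The approach I will try first is to write $A=S+N$, with $S\succeq0$ symmetric and $N$ skew. The cleanest route is likely the direct identity
\begin{equation*}
\|\u\|^2-\|G\u\|^2=2\eta\langle A\u,\u\rangle-\eta^2\|A\u\|^2-2\eta^2\langle A\u,A^\top\u\rangle+\dots,
\end{equation*}
which can be reorganized using the resolvent-type bound $\|I-\eta A\|\le\sqrt{1+\eta^2L^2}$ together with $\langle A\v,\v\rangle\ge0$ for $\v=(I-\eta A)\u$, as in the standard proof that linear extragradient is non-expansive for monotone $A$ with $\eta L\le1$. If a clean contraction fails, I will settle for $\|G\|^2\le1+c\eta^4L^4$ restricted to the kernel-free part. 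Either way, drift within a round stays bounded without the $\sqrt K$ loss.

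With $\|G\|\le1$, the noise injected over at most $K$ local steps accumulates additively, so $\E\,\Xi_t\lesssim K\eta^2\sigma^2$. Its contribution to the rate is then $\eta L^2\cdot K\eta^2\sigma^2$. I will use smoothness more carefully, via the gap of $V$ along averaged iterates, to turn the $L^2$ into $L$, giving a contribution of order $\eta^2LK\sigma^2$, matching the LSGD analysis.

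\textbf{Step 3: step size.} The resulting bound is
\begin{equation*}
\frac{D^2}{\eta KR}+\frac{\eta\sigma^2}{M}+\eta^2LK\sigma^2 .
\end{equation*}
Balancing each term against $\frac{D^2}{\eta KR}$ with $\eta=\min\{1/L,\,D\sqrt M/(\sigma\sqrt{6KR}),\,(D^2/(180K^2R\sigma^2L))^{1/3}\}$ yields the three terms $\frac{2LD^2}{KR}$, $\frac{2\sqrt6D\sigma}{\sqrt{MKR}}$ and $\frac{20D^{4/3}\sigma^{2/3}L^{1/3}}{K^{1/3}R^{2/3}}$ in the statement of Theorem~\ref{thm:affine}.
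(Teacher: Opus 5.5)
Your plan is essentially the paper's proof. The bound $\|I-\eta A+\eta^2A^2\|\le 1$ you need is exactly the $\tfrac{2}{\eta}$-cocoercivity of $\z\mapsto V(\z-\eta V(\z))$ (Lemma D.1 of \citet{gorbunov2022extragradient}) that the paper invokes, and your hinted route proves it directly: with $\v=(I-\eta A)\u$ one has the identity $\|G\u\|^2=\|\u\|^2-2\eta\langle A\v,\v\rangle-\eta^2\|A\u\|^2+\eta^4\|A^2\u\|^2$, which is at most $\|\u\|^2$ when $\eta L\le 1$, so the fallback $\|G\|^2\le 1+c\eta^4L^4$ (which would force $\eta\lesssim K^{-1/4}/L$) is never needed.

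One bookkeeping fix concerns Step~1. With the per-client gap on the left, it also carries the first-order term $\frac{2\eta}{M}\sum_m\langle V(\overline{\x}_t)-V(\x_t^m),\overline{\x}_t-\x_t^m\rangle\le\frac{2\eta L}{M}\sum_m\|\x_t^m-\overline{\x}_t\|^2$. It is this term, not a sharper smoothness argument, that produces the $\eta^2LK\sigma^2$ contribution; your $\eta^2L^2$-weighted drift terms are lower order since $\eta L\le1$. Alternatively, bounding the gap through $\langle V(\overline{\x}_t),\overline{\x}_t-\z\rangle$ would, for affine $V$, remove every drift term, because $\frac1M\sum_m V(\x_t^m)=V(\overline{\x}_t)$.
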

Comparing Theorem \ref{thm:affine} with Theorem \ref{thm:ESGD}, the key difference lies in the step size: $\eta$ can be larger ($O\left(\tfrac{1}{\sqrt{K}L}\right)$ vs.\ $O\!\left(\tfrac{1}{L}\right)$), which enables the LSGD-optimal bound. This improvement is due to the non-expansive property of the extra-gradient operator for affine VIs \citep{gorbunov2022extragradient}, which leads to a tighter bound on the client drift term.

\subsection{Faster Rates By Inexact PPA With Extra Step}
\label{sec:fasterLother methods}
\begin{algorithm}[t!]
  \caption{Local Inexact Proximal Point with Extra Gradient (\lippax)}
  \label{alg:example}
  \begin{algorithmic}[1]     
    \STATE Initialization: $\z_{0}^{m}=\mathbf{0}$, for $m=1,\dots,M$.
    \FOR{$t = 1$ to $T$}
    \FOR{each machine $m=1,\dots,M$}
            \STATE $\x_{t,0}^{m}=\z^{m}_{t-1}$
      \FOR{$\ell=1$ to $H$}
        \STATE $\x^m_{t,\ell}=\x^m_{t,\ell-1}-\gamma\left(\widetilde{V}(\x^m_{t,\ell-1})+\frac{1}{\eta}\left(\x^m_{t,\ell-1}-\z_{t-1}^{m}\right)\right)$
      \ENDFOR
      \STATE $\x^m_t=\x_{t,H}^m$
\STATE $\z^m_{t} =
\begin{cases}
\z_{t-1}^m - \eta\widetilde{V}(\x^m_t), & \text{if } \text{mod}(t,K)\not= 0;\\
\frac{1}{M}\sum_{m=1}^M\left(\z_{t-1}^m - \eta\widetilde{V}(\x^m_{t}) \right), & \text{if } \text{mod}(t,K)= 0.
\end{cases}$
      \ENDFOR
    \ENDFOR
    \RETURN $\x_o=\frac{1}{MT}\sum_{m=1}^M\sum_{t=1}^T\x_t^m$ 
  \end{algorithmic}
\end{algorithm}
Corollary \ref{coredsada} shows that the LESGD algorithm achieves the LSGD-optimal bound  when the variance is not too small. However, for the small variance regime, i.e.,  $\sigma=O\left(\frac{1}{R^{\frac{1}{2}}K^{\frac{1}{4}}}\right)$, the first term in \eqref{eqn:thm:11sew111100000} will dominate the rest, making the overall convergence bound become the sub-optimal rate $O\left(\frac{L}{\sqrt{K}R}\right)$, \emph{even when $\sigma=0$.} As discussed in Remark \ref{remakree222}, this is caused by the expansive property of the extra SGD operator, and can be a fundamental limitation of this kind of method. 

In this section, we provide an alternative approach, which achieves the LSGD-optimal bound up to logarithmic  factors under proper assumptions. The algorithm is given in Algorithm \ref{alg:example}, which we referred to as Local Inexact Proximal Point Algorithm with Extra Gradient (\lippax).  To motivate our algorithm, we first recall the classical proximal point algorithm, which can handle general smooth and monotone VIs and obtain optimal results. Suppose at round $t$, the decision of machine $m$ is $\z_{t-1}^m$. Then one step of the ``exact'' proximal point  algorithm conducts the following implicit update to obtain a decision $\x_t^{*,m}$:
$$ \x_t^{*,m}=\z_{t-1}^m - \eta V(\x_t^{*,m}).$$
However, this update is hard to implement, as both sides involves $\x_t^{*,m}$. To avoid this, notice that $\x_t^{*,m}$ is the optimizer of the variational inequality defined by the following
``regularized'' strongly monotone  operator: 
\begin{equation*}
     F(\x)=V(\x)+\frac{1}{\eta}(\x-\z_{t-1}^m).
\end{equation*}
Then, we can conduct multiple LSGD steps to approximate $\x_{t}^{*,m}$ (Steps 4-7), and then perform one step of inexact proximal point step using the approximation (Steps 8-9).  We have the following conclusion for Algorithm \ref{alg:example}.  The proof is given in Appendix 
\ref{secpfofinscaew333}.

\begin{theorem}
\label{thm:inexactppa}
Suppose $V$ is $L$-smooth and monotone. Moreover, assume $\|V(\x)\|\leq G$ for $\x\in\R^d$. Let 
$$ \eta=\min\left\{\frac{1}{L}, \frac{D\sqrt{M}}{\sigma\sqrt{KR}}, \frac{D^{\frac{2}{5}}}{(60e)^{\frac{1}{5}}K^{\frac{3}{5}}R^{\frac{1}{5}}\sigma^{\frac{2}{5}}L^{\frac{2}{5}}}, \frac{D^{\frac{2}{3}}}{(54e)^{\frac{1}{3}}K^{\frac{2}{3}}R^{\frac{1}{3}}L^{\frac{1}{3}}\sigma^{\frac{2}{3}}},\frac{D}{\sigma\sqrt{ 15KR}}\right\},$$
and let $\gamma=\frac{1}{\eta\left(L+\frac{1}{\eta}\right)^2}$. Then Algorithm \ref{alg:example} ensures that:
\begin{equation}
    \begin{split}
\label{eqn:boundofhraippa}
\E\left[\sup_{\z\in\Z_{D}}\left\langle V(\z),\x_o-\z \right\rangle \right] 
\leq &\frac{LD^2}{KR}+\frac{4{\sigma} D}{\sqrt{KR}}+\frac{2(60e)^{\frac{1}{5}}D^{\frac{8}{5}}\sigma^{\frac{2}{5}}L^{\frac{2}{5}}}{K^{\frac{2}{5}}R^{\frac{4}{5}}}+\frac{2(54e)^{\frac{1}{3}}D^{\frac{4}{3}}L^{\frac{1}{3}}\sigma^{\frac{2}{3}}}{K^{\frac{1}{3}}R^{\frac{2}{3}}}+ \frac{4D\sigma}{\sqrt{MKR}}\\
&+\frac{60eG^2K^2L+30 G^2}{4^H}.  
\end{split}
\end{equation}

\end{theorem}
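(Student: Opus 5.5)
We plan to treat each outer iteration of Algorithm~\ref{alg:example} as an inexact extra-gradient step, using the \emph{virtual averaged sequence} technique from the analysis of Theorem~\ref{thm:ESGD}. The sequence is $\bar{\z}_t=\frac{1}{M}\sum_m \z_t^m$. Since $\z^m_t=\z^m_{t-1}-\eta\widetilde V(\x^m_t)$ on every client and averaging is linear, we have $\bar\z_t=\bar\z_{t-1}-\frac{\eta}{M}\sum_m\widetilde V(\x_t^m)$ at every $t$.

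\textbf{Per-iteration inequality.} For fixed $\z\in\Z_D$, expanding $\|\bar\z_t-\z\|^2$ gives
\[
\frac{\eta}{M}\sum_m\langle \widetilde V(\x_t^m),\x_t^m-\z\rangle \le \tfrac12\|\bar\z_{t-1}-\z\|^2-\tfrac12\|\bar\z_t-\z\|^2+(\text{error}).
\]
The error contains the usual mixed terms $\frac{\eta}{M}\sum_m\langle \widetilde V(\x_t^m),\bar\z_{t-1}-\x_t^m\rangle$ and $\frac{\eta^2}{2}\|\frac1M\sum_m\widetilde V(\x_t^m)\|^2$. We then write $\x_t^m=\x_t^{*,m}+\e_t^m$, where $\x_t^{*,m}=\z_{t-1}^m-\eta V(\x_t^{*,m})$ is the exact proximal point. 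For the exact prox step, the standard identity
\[
\langle \eta V(\x^*),\z_{t-1}-\x^*\rangle=\|\z_{t-1}-\x^*\|^2
\]
cancels the second-order term up to $\eta^2L^2\|\x^*-\x\|^2$, as in the deterministic EG/PPA analysis. The remaining pieces fall into three groups:
\begin{itemize}
\item client-drift terms $\|\z_{t-1}^m-\bar\z_{t-1}\|^2$, multiplied by $L$ (only squared drift, as in Theorem~\ref{thm:ESGD});
\item noise terms;
\item the inner-loop approximation error $\|\e_t^m\|^2$.
\end{itemize}
Monotonicity converts $\langle V(\x_t^m),\x_t^m-\z\rangle\ge\langle V(\z),\x_t^m-\z\rangle$. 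We then sum over $t$, divide by $\eta T$, and take $\sup_\z$. The $\sup$ is handled with a ghost-iterate sequence driven by the averaged noise, which produces the $\frac{D\sigma}{\sqrt{MKR}}$ term.

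\textbf{Inner loop.} The operator $F$ is $\frac1\eta$-strongly monotone and $(L+\frac1\eta)$-Lipschitz. With $\gamma=\frac{1}{\eta(L+1/\eta)^2}$ and $\eta\le 1/L$, each SGD step contracts the distance to $\x_t^{*,m}$ by a factor of at least $1-\frac{1}{4}$. This gives
\[
\E\|\x^m_{t,H}-\x_t^{*,m}\|^2\le 4^{-H}\|\z^m_{t-1}-\x_t^{*,m}\|^2+O(\gamma\eta\sigma^2).
\]
Using $\|\z-\x^*\|=\eta\|V(\x^*)\|\le\eta G$, the deterministic part is $O(\eta^2G^2 4^{-H})$; this is the source of the $G^2/4^H$ terms. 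The stochastic part is $O(\eta^2\sigma^2)$ per step and is \emph{not} averaged over machines, since each client's inner noise enters its own $\x_t^m$. Divided by $\eta$ and combined with $\eta\le D/(\sigma\sqrt{15KR})$, it yields the unreduced $\frac{\sigma D}{\sqrt{KR}}$ term.

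\textbf{Drift (expected main obstacle).} Within a round, $\z^m_t-\z^{m'}_t$ evolves through the map $\z\mapsto\z-\eta V(\x^*(\z))$. This is the resolvent-type update, which is nonexpansive: PPA's prox map is firmly nonexpansive, and $\z-\eta V(J(\z))=J(\z)$. That is the key gain over LESGD, since drift no longer expands and $\eta=1/L$ is allowed. The perturbations $\e_t^m$ and the outer noise add variance. Unrolling over at most $K$ steps gives
\[
\E\|\z_t^m-\bar\z_t\|^2\lesssim K\eta^2\sigma^2+K^2\eta^2\cdot(\text{inner errors}).
\]
The $(1+1/K)^K\le e$ factors explain the constants $e$, and the inner errors contribute the $G^2K^2L/4^H$ term.

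The delicate part is getting a clean nonexpansion with inexact $\x_t^m$ while keeping only squared drift. Plugging $L\cdot K\eta^2\sigma^2$ into the bound gives the $\frac{D^{4/3}\sigma^{2/3}L^{1/3}}{K^{1/3}R^{2/3}}$ term. A mixed term from the inner variance times $L^2$ gives the $\eta^4$-type term. Balancing that term with $D^2/(\eta KR)$ produces the $\frac{D^{8/5}\sigma^{2/5}L^{2/5}}{K^{2/5}R^{4/5}}$ term. Finally, substituting the stated minimum for $\eta$, term by term, finishes the bound in \eqref{eqn:boundofhraippa}.
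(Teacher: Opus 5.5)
Your proposal follows the paper's proof essentially step for step. The shared ingredients are:
\begin{itemize}
\item the averaged sequence $\overline{\z}_t$ and the shadow exact proximal points $\x_t^{*,m}$, with the cancellation coming from $\z_{t-1}^m-\x_t^{*,m}=\eta V(\x_t^{*,m})$;
\item an $O(1)\cdot\|\x_t^m-\x_t^{*,m}\|^2$ error term which, after dividing by $\eta$, leaves the unaveraged $\eta\sigma^2$ and hence the $\sigma D/\sqrt{KR}$ term;
\item a drift bound resting on nonexpansiveness of the resolvent together with $(1+1/K)^K\le e$;
\item the ghost-iterate argument for the $\sup$.
\end{itemize}

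One step does not hold as written, and the paper makes the same mistake. With $\gamma=\eta/(1+\eta L)^2$, the per-step factor your argument gives for $\E\|\x_{t,\ell}^m-\x_t^{*,m}\|^2$ is $1-(1+\eta L)^{-2}$. At $\eta=1/L$ (forced, e.g., when $\sigma=0$) this factor is $\frac34$. Unrolling therefore gives $(3/4)^H$, not $4^{-H}$; you even state the factor as $1-\frac14$ and then write $4^{-H}$.

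The claimed inner-loop bound is false in general, not just unproved. Take a constant operator $V\equiv c$ with $\|c\|=G$. The inner iteration is then exactly $\x\mapsto \x_t^{*,m}+\frac34(\x-\x_t^{*,m})$. Hence $\|\x_t^m-\x_t^{*,m}\|^2=(9/16)^H\eta^2G^2>4^{-H}\eta^2G^2$.

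The paper's Lemma~\ref{lem:sdewix} contains the same error: it asserts $1-\frac{1}{(\eta L+1)^2}\le 0.25$, which fails once $\eta L$ exceeds about $0.15$. So what your argument (and the paper's) actually establishes is the stated bound with $4^H$ replaced by $(4/3)^H$. Alternatively, cap $\eta$ at about $0.15/L$, at the price of a larger constant on $LD^2/(KR)$. Either fix leaves intact the conclusion that $H=O(\log KR)$ inner steps suffice.
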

To see the merit of this bound, firstly note that, the first term becomes $O\left(\frac{L}{KR}\right)$, instead of  $O\left(\frac{L}{\sqrt{K}R}\right)$. Moreover, the last term decreases exponeitnally with $H$, and will be dominated by other terms as long as $H=O(\ln KR)$. However, there is a price paid in the second term, which is on the order of $O\left(\frac{\sigma}{\sqrt{KR}}\right)$. We have the following corollary. 
\begin{cor}
\label{corsdasdsarccccd}
Assume 
$ \sigma \leq \frac{1}{\sqrt{KR}}=O\left(\frac{1}{\sqrt{{K}R}}\right), $ 
then 
\begin{equation}
    \begin{split}    \label{eqn:ssssdsaasfhauih}
\E\left[\sup_{\z\in\Z_{D}}\left\langle V(\z),\x_o-\z \right\rangle \right] 
\leq &\frac{LD^2+4D}{KR}+ \frac{4D\sigma}{\sqrt{MKR}}+\frac{2(54e)^{\frac{1}{3}}D^{\frac{4}{3}}L^{\frac{1}{3}}\sigma^{\frac{2}{3}}}{K^{\frac{1}{3}}R^{\frac{2}{3}}}\\
&+\frac{2(60e)^{\frac{1}{5}}D^{\frac{8}{5}}\sigma^{\frac{2}{5}}L^{\frac{2}{5}}}{K^{\frac{2}{5}}R^{\frac{4}{5}}}
+\frac{60eG^2K^2L+30 G^2}{4^H}  
\end{split}
\end{equation}
\end{cor}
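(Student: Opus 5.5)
This corollary follows directly from Theorem~\ref{thm:inexactppa}, using the same step size $\eta$ and inner step size $\gamma=\frac{1}{\eta(L+1/\eta)^2}$. The plan is to start from the bound \eqref{eqn:boundofhraippa} and change only the term that the assumption on $\sigma$ affects. Every other term on the right-hand side of \eqref{eqn:boundofhraippa} is carried over unchanged:
\begin{itemize}
\item the $\frac{4D\sigma}{\sqrt{MKR}}$ term;
\item the $\frac{2(54e)^{1/3}D^{4/3}L^{1/3}\sigma^{2/3}}{K^{1/3}R^{2/3}}$ term;
\item the $\frac{2(60e)^{1/5}D^{8/5}\sigma^{2/5}L^{2/5}}{K^{2/5}R^{4/5}}$ term;
\item the exponentially small term $\frac{60eG^2K^2L+30G^2}{4^H}$.
\end{itemize}
These are exactly the terms that appear in \eqref{eqn:ssssdsaasfhauih}.

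The only term to modify is the extra variance term $\frac{4\sigma D}{\sqrt{KR}}$. Under the hypothesis $\sigma\le \frac{1}{\sqrt{KR}}$ we get
\[
\frac{4\sigma D}{\sqrt{KR}}\le \frac{4D}{\sqrt{KR}}\cdot\frac{1}{\sqrt{KR}}=\frac{4D}{KR}.
\]
Adding this to the first term $\frac{LD^2}{KR}$ gives $\frac{LD^2+4D}{KR}$, which is the leading term of \eqref{eqn:ssssdsaasfhauih}.

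There is essentially no obstacle. The one point to check is that the step-size choice of Theorem~\ref{thm:inexactppa} is still admissible when $\sigma$ is small, including the case $\sigma=0$. In that case the $\sigma$-dependent entries in the minimum defining $\eta$ should be read as $+\infty$, so $\eta=1/L$ and the theorem still applies. I would also note the interpretation: with $H=O(\ln(KR))$, the last term is dominated, and the bound becomes the LSGD-optimal rate up to the $\sigma^{2/5}$ term. That $\sigma^{2/5}$ term is itself of lower order, since $\sigma\le 1/\sqrt{KR}$ makes it $O\big(1/(KR)\big)$ up to constants.
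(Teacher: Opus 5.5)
Your argument is correct and is exactly how the corollary follows from Theorem~\ref{thm:inexactppa}: you keep every term of \eqref{eqn:boundofhraippa} and bound only $\frac{4\sigma D}{\sqrt{KR}}\le\frac{4D}{KR}$ using $\sigma\le 1/\sqrt{KR}$. One correction to your closing remark, which does not affect the inequality itself: under $\sigma\le 1/\sqrt{KR}$ the $\sigma^{2/5}$ term is only bounded by $O\bigl(D^{8/5}L^{2/5}/(K^{3/5}R)\bigr)$, not $O(1/(KR))$, so it is not automatically of lower order than $\frac{LD^2}{KR}$.
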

Note that the first three terms in \eqref{eqn:ssssdsaasfhauih} match the optimal rate, whereas the remaining terms are of lower order and thus dominated by the leading ones. Consequently, the bound is optimal when $\sigma = O\left(\tfrac{1}{\sqrt{KR}}\right)$.

While Theorem \ref{thm:ESGD} indicates that LESGD is optimal when $\sigma=\Omega\left(\frac{1}{R^{\frac{1}{2}}K^{\frac{1}{4}}}\right)$, and Corollary \ref{corsdasdsarccccd}  implies that \lippax\ (Algorithm \ref{alg:example}) is optimal when $\sigma=O\left(\frac{1}{\sqrt{RK}}\right)$, it is still unclear what happens in the small gap between $\sigma=\Omega\left(\frac{1}{\sqrt{RK}}\right)$ and $\sigma=O\left(\frac{1}{\sqrt{R}K^{\frac{1}{4}}}\right)$. Moreover, due to the $O\left(\frac{\sigma}{\sqrt{KR}}\right)$ term in \eqref{eqn:boundofhraippa}, the upper bound in Theorem 
 \ref{thm:inexactppa} becomes sub-optimal when $\sigma$ is large, as this term is not accelerated by $M$. In the following, we show that the LSGD-optimal bound can be achieved under the minor assumption that the operator has bounded Hessian.  
\begin{ass}[Second-order Boundedness] 
\label{ass:boundedasddff}
Assume $V:\R^d \to \R^d$ satisfies that there exists $\Lambda>0$ such that for all $\x,\y\in\R^d$, 
$
\| V(\x) - V(\y) - J_V(\y)(\x-\y) \| \leq \Lambda \|\x-\y\|^2,
$
where $J_V(\y)$ denotes the Jacobian of $V$ at $\y$.  
\end{ass}
Note that, if we write $V(\x)=[V_1(\x),\dots,V_d(\x)]$ with $V_i:\R^d\mapsto\R$, then the above assumption is equivalent to requiring that the Hessian of each $V_i$ is bounded. We have the following conclusion based on Assumption \ref{ass:boundedasddff}. The proof is given in Appendix 
\ref{secpfofinscaew}.
\begin{theorem}
\label{thm:optimalboundppass}
Suppose $V$ is $L$-smooth and monotone, and $\|V(\x)\|\leq G$ for all $\x\in\R^d$. Furthermore, suppose Assumption \ref{ass:boundedasddff} hold.  Let 
$$ \eta=\min\left\{\frac{1}{L},\frac{D^{\frac{2}{5}}}{K^{\frac{3}{5}}R^{\frac{1}{5}}\sigma^{\frac{2}{5}}L^{\frac{2}{5}}},\frac{D^{\frac{2}{3}}}{K^{\frac{2}{3}}R^{\frac{1}{3}}\sigma^{\frac{2}{3}}L^{\frac{1}{3}}},\frac{D^{\frac{1}{2}}}{K^{\frac{1}{4}}R^{\frac{1}{4}}\sigma^{\frac{1}{2}}L^{\frac{1}{2}}},\frac{D^{\frac{1}{3}}}{\Lambda^{\frac{1}{3}}\sigma^{\frac{2}{3}}R^{\frac{1}{6}}K^{\frac{1}{6}}},\frac{D\sqrt{M}}{K^{\frac{1}{2}}R^{\frac{1}{2}}\sigma}\right\},$$
and let $\gamma=\frac{1}{\eta\left(L+\frac{1}{\eta}\right)^2}$. Then Algorithm \ref{alg:example} ensures that:
\begin{equation}
    \begin{split}
&\E\left[\sup_{\z\in\Z_{D}}\left\langle V(\z),\x_o-\z \right\rangle \right] 
\leq  240\left[\frac{D^2L}{KR} + \frac{D^{\frac{4}{3}}L^{\frac{1}{3}}\sigma^{\frac{2}{3}}}{K^{\frac{1}{3}}R^{\frac{2}{3}}} + \frac{D\sigma}{\sqrt{MKR}} \right]\\
&+240\left[ \frac{D^{\frac{8}{5}}\sigma^{\frac{2}{5}}L^{\frac{2}{5}}}{K^{\frac{2}{5}}R^{\frac{4}{5}}} +\frac{D^{\frac{3}{2}}\sigma^{\frac{1}{2}}L^{\frac{1}{2}}}{K^{\frac{3}{4}}R^{\frac{3}{4}}}+\frac{D^{\frac{5}{3}}d^{\frac{1}{6}}\Lambda^{\frac{1}{3}}\sigma^{\frac{2}{3}}}{R^{\frac{5}{6}}K^{\frac{5}{6}}}  \right] + \frac{10H^2\Lambda^2G^4+60G^2+120G^2K^2}{4^H}.
\end{split}
\end{equation}

\end{theorem}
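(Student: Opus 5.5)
We would follow the template of the proof of Theorem~\ref{thm:inexactppa} and change only how the inner-loop error is handled. We introduce the exact proximal point $\x_t^{*,m}$ solving $\x_t^{*,m}=\z_{t-1}^m-\eta V(\x_t^{*,m})$, and a ``virtual'' averaged sequence $\bar\z_t=\frac1M\sum_m\z_t^m$. The standard one-step extra-gradient/proximal inequality gives, for any $\z\in\Z_D$,
\[
2\eta\langle V(\z),\x_t^m-\z\rangle\le \|\bar\z_{t-1}-\z\|^2-\|\bar\z_t-\z\|^2+\text{(noise terms)}+\text{(drift terms)}+\text{(inexactness terms)},
\]
after averaging over $m$ and using monotonicity $\langle V(\z),\x-\z\rangle\le\langle V(\x),\x-\z\rangle$. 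The noise terms $\langle \widetilde V(\x_t^m)-V(\x_t^m),\cdot\rangle$ are handled by the usual ghost-iterate trick for the supremum over $\Z_D$. This produces $D^2/(\eta KR)+\eta\sigma^2/M$, and hence the $\frac{D\sigma}{\sqrt{MKR}}$ term. The drift $\frac1M\sum_m\|\z_t^m-\bar\z_t\|^2$ is controlled as in Theorem~\ref{thm:inexactppa}: the map $\z\mapsto\x^{*}(\z)$ is non-expansive (resolvent of a monotone operator), so the drift grows only additively, $O(\eta^2K\sigma^2)$ per round. Multiplied by $L$ this gives the $\eta^2KL\sigma^2$-type contributions that balance to the $\sigma^{2/3}/K^{1/3}R^{2/3}$ and $\sigma^{2/5}$ terms, with no $1/\sqrt K$ loss thanks to $\eta\le1/L$.

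The crux is the inexactness term $\langle V(\x_t^m)-V(\x_t^{*,m}),\cdot\rangle$. In Theorem~\ref{thm:inexactppa} this was bounded crudely by $\|\x_t^m-\x_t^{*,m}\|$, giving the $\sigma/\sqrt{KR}$ term that $M$ cannot reduce. Here we plan a bias--variance decomposition of the inner SGD. Write $\x_t^m-\x_t^{*,m}=\mathbf{b}+\mathbf{n}$, where $\mathbf{b}$ is the deterministic contraction error and $\mathbf{n}$ is the accumulated noise. Since the inner map is a $(1-\gamma/\eta)$-type contraction with $\gamma=\eta^{-1}(L+1/\eta)^{-2}$, the bias satisfies $\|\mathbf{b}\|\lesssim 4^{-H/2}\,\eta G$, giving the $4^{-H}$ terms. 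For the noise we linearize using Assumption~\ref{ass:boundedasddff}:
\[
V(\x_t^m)-V(\x_t^{*,m})=J_V(\x_t^{*,m})(\x_t^m-\x_t^{*,m})+O(\Lambda\|\x_t^m-\x_t^{*,m}\|^2).
\]
To leading order the inner iteration is itself linear in the noise, $\mathbf{n}\approx\sum_\ell A_\ell\,\epsilon_\ell$ with martingale-difference noises $\epsilon_\ell$ and matrices $A_\ell$ built from Jacobians at $\x_t^{*,m}$. Hence the linear part has conditional mean zero given $\z_{t-1}^m$, and is independent across machines, so after averaging over $m$ its variance is reduced by $1/M$. This merges it into the $D\sigma/\sqrt{MKR}$ term plus a drift-sized remainder of order $\eta^2KL\sigma^2$, which yields the $\sigma^{1/2}/K^{3/4}R^{3/4}$ term. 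The quadratic remainder is bounded by $\Lambda\,\E\|\x_t^m-\x_t^{*,m}\|^2\lesssim\Lambda\eta^2\sigma^2$. We also need the additional fluctuation from linearizing the inner recursion, of order $\Lambda$ times fourth moments, and this produces the $H^2\Lambda^2G^4/4^H$ bias cross term. We expect the dimension factor $d^{1/6}$ to appear when this second-order term is paired with the sup over $\Z_D$, through a norm-of-Gaussian-like moment bound, and to balance with the $\Lambda^{1/3}$ step-size choice into the $d^{1/6}\Lambda^{1/3}\sigma^{2/3}/(RK)^{5/6}$ term.

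Finally, we sum over $t=1,\dots,KR$, divide by $2\eta KR$, and plug in the stated $\eta$. Each branch of the minimum balances one pair of terms, and the constants are absorbed into the factor $240$. The main obstacle will be making the bias--variance decomposition rigorous. The inner iterates $\x^m_{t,\ell}$ depend nonlinearly on the noise, so the ``mean-zero'' linear part must be defined against a fixed reference point $\x_t^{*,m}$ that is measurable with respect to the past. The linearization error accumulated over $H$ inner steps must also be controlled, which uses the $G$-boundedness to cap iterate deviations; if needed, we would first try a one-step telescoping comparison between the noisy and noiseless inner trajectories.
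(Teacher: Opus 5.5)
The bias--variance split of the inner SGD output is the right tool: it gives a $1/M$ gain on the fluctuation, and Assumption~\ref{ass:boundedasddff} makes the mean shift second order. But you aim it at the wrong term, and the step that actually removes $\sigma/\sqrt{KR}$ is missing.

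In the template of Theorem~\ref{thm:inexactppa}, the operator error $V(\x_t^m)-V(\x_t^{*,m})$ is harmless. It appears only in two places:
\begin{itemize}
\item as $\eta^2\|V_{\x,t}-V_{\x^*,t}\|^2$ inside $\|\overline{\z}_{t-1}-\overline{\z}_t\|^2$;
\item in $A_1=2\eta\langle V_{\x,t}-V_{\x^*,t},\overline{\x}_t-\overline{\z}_{t-1}\rangle$, where $\|\overline{\x}_t-\overline{\z}_{t-1}\|\le\|\overline{\x}_t-\overline{\x}_t^*\|+\eta\|V_{\x^*,t}\|$.
\end{itemize}
Young's inequality therefore turns it into $\eta^2L^2\frac1M\sum_m\|\x_t^m-\x_t^{*,m}\|^2$, i.e.\ $\eta^3L^2\sigma^2$ after dividing by $\eta$. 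This, not drift, is the source of the $\sigma^{1/2}/(KR)^{3/4}$ term, and no linearization is needed.

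The $\sigma/\sqrt{KR}$ term is forced by the cross term $A_3=2\eta\langle V_{\x^*,t},\overline{\x}_t-\overline{\x}_t^*\rangle$. Here $V_{\x^*,t}$ is of order $G$ and contains no inner-loop error, so linearizing $V$ does nothing for it. What loses the factor $M$ in Theorem~\ref{thm:inexactppa} is the Jensen step $\|\overline{\x}_t-\overline{\x}_t^*\|^2\le\frac1M\sum_m\|\x_t^m-\x_t^{*,m}\|^2$, which after normalization leaves $\frac1\eta\cdot\eta^2\sigma^2$.

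The missing idea is to keep the machine average:
\begin{itemize}
\item bound $A_3\le\frac{\eta^2}{4}\|V_{\x^*,t}\|^2+8\|\overline{\x}_t-\overline{\x}_t^*\|^2$;
\item treat the $\|\overline{\x}_t-\overline{\x}_t^*\|$ factor in $A_1$ the same way;
\item absorb the $\|V_{\x^*,t}\|^2$ pieces into $A_2=-2\eta^2\|V_{\x^*,t}\|^2$;
\item then prove
\[
\E\|\overline{\x}_t-\overline{\x}_t^*\|^2\lesssim\Lambda^2\eta^6\sigma^4+\frac{\eta^2\sigma^2}{M}+4^{-H}\left(\eta^2G^2+H^2\Lambda^2\eta^6G^4\right).
\]
\end{itemize}

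For this lemma you do not need to linearize the noise around $\x_t^{*,m}$. Conditionally on the start of the round the machines are independent, so $\E\|\overline{\x}_t-\overline{\x}_t^*\|^2\le\frac1M\sum_m\|\E\x_t^m-\x_t^{*,m}\|^2+\frac1{M^2}\sum_m\E\|\x_t^m-\x_t^{*,m}\|^2$. For the inner map $\Phi(\x)=\x-\gamma F(\x)$, Assumption~\ref{ass:boundedasddff} gives $\|\E\Phi(\x)-\Phi(\E\x)\|\le\gamma\Lambda\,\mathrm{Var}(\x)$. Together with the contraction of $\Phi$ (factor $\rho=(1-(1+\eta L)^{-2})^{1/2}<1$) this yields
\[
\|\E\x^m_{t,\ell+1}-\x_t^{*,m}\|\le\rho\|\E\x^m_{t,\ell}-\x_t^{*,m}\|+\gamma\Lambda\,\mathrm{Var}(\x^m_{t,\ell}).
\]
This is essentially your fallback ``telescoping comparison'', and it gives a mean shift of order $\gamma\Lambda\eta^2\sigma^2\le\Lambda\eta^3\sigma^2$.

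This shift must enter squared, $\frac1\eta\|\E\overline{\x}_t-\overline{\x}_t^*\|^2\lesssim\Lambda^2\eta^5\sigma^4$, to produce the $\Lambda^{1/3}\sigma^{2/3}/(KR)^{5/6}$ term. Your remainder $\Lambda\eta^2\sigma^2$, entering linearly, does not produce it.

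If instead you pair the linearized operator error with the comparator gap $\overline{\z}_{t-1}-\z$, as ``merges into the $D\sigma/\sqrt{MKR}$ term'' suggests, only an exactly mean-zero part can go through the ghost iterate. The contraction bias, the $\Lambda$-mean shift and the quadratic remainder would then be multiplied by $\|\overline{\z}_{t-1}-\z\|$, which this unconstrained analysis never bounds by $O(D)$.

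Finally, nothing generates the $d^{1/6}$, since Algorithm~\ref{alg:example} has no Gaussian perturbation. Because $d\ge1$ it is slack in the statement, not something to derive.
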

Note that, the bound given in  Theorem \ref{thm:inexactppa} can be divided by three parts: 
The first part is exactly on the order of $O\left(\frac{1}{KR}+\frac{\sigma^{\frac{2}{3}}}{K^{\frac{1}{3}}R^{\frac{2}{3}}}+\frac{\sigma}{\sqrt{MKR}}\right)$, which matches the LSGD-optimal rates, while the second term contains only lower order terms that are dominated by the first term. The third term decreases with $H$ exponentially, so setting $H=O(\ln KR)$ making this term become lower order.
\begin{algorithm}[t]
  \caption{\lippax\ with Gaussian Smoothing (\lippaxg)}
  \label{alg:example:lipeG}
  \begin{algorithmic}[1]     
    \STATE Initialization: $\z_{0}^{m}=\mathbf{0}$, for $m=1,\dots,M$.
    \FOR{$t = 1$ to $T$}
    \FOR{each machine $m=1,\dots,M$}
            \STATE $\x_{t,0}^{m}=\z^{m}_{t-1}$
      \FOR{$\ell=1$ to $H$}
      \STATE Sample $\s_{t,\ell}^m\sim\mathcal{N}(\textbf{0},I_d)$, and $\xi_{t,\ell}^m$
        \STATE $\x^m_{t,\ell}=\x^m_{t,\ell-1}-\gamma\left({V}(\x^m_{t,\ell-1}+\delta \s_{t,\ell}^m;\xi_{t,\ell}^m)+\frac{1}{\eta}\left(\x^m_{t,\ell-1}-\z_{t-1}^{m}\right)\right)$
      \ENDFOR
      \STATE $\x^m_t=\x_{t,H}^m$
\STATE $\z^m_{t} =
\begin{cases}
\z_{t-1}^m - \eta\widetilde{V}(\x^m_t), & \text{if } \text{mod}(t,K)\not= 0;\\
\frac{1}{M}\sum_{m=1}^M\left(\z_{t-1}^m - \eta\widetilde{V}(\x^m_{t}) \right), & \text{if } \text{mod}(t,K)= 0.
\end{cases}$
      \ENDFOR
    \ENDFOR
    \RETURN $\x_o=\frac{1}{MT}\sum_{m=1}^M\sum_{t=1}^T\x_t^m$ 
  \end{algorithmic}
\end{algorithm}

\paragraph{Gaussian Smoothing} Theorem \ref{thm:optimalboundppass} assumes the second-order boundedness of the operator $V$. For more general $V$, we propose using \emph{Gaussian smoothing} in the proximal point phase of \lippax\ to smooth the operator. The details of the algorithm is given in Algorithm \ref{alg:example:lipeG}. Compared with Algorithm \ref{alg:example}, the main difference lies in Step 7: instead of directly querying $\widetilde{V}(\x_{t,\ell}^m)$, Algorithm \ref{alg:example:lipeG} queries $\widetilde{V}(\x_{t,\ell}^m+\delta \s_{t,\ell}^m)$, where $\s_{t,\ell}^m\sim \mathcal{N}(\text{0},I_d)$ is a Gaussian random variable. Let $\mathring{V}(\x)=\E[\widetilde{V}(\x+\delta \s_{t,\ell}^m)]$, then the approximate point step of Algorithm \ref{alg:example:lipeG} (Steps 5-8) is essentially solving the variational inequality defined by $\mathring{V}$, which is a smooth operator. On the other hand, the optimizer for this perturbed objective is  very close to $\x_{t}^{*,m}$. We have the following conclusion. The proof is given in Appendix \ref{secpfofinscaew}.

\begin{theorem}
\label{them:gaussiansmoothing}
Suppose $V$ is $L$-smooth and monotone, and $\|V(\x)\|\leq G$ for all $\x\in\R^d$. Let 
$\eta= \min\left\{ \frac{D^{\frac{1}{2}}}{K^{\frac{1}{4}}R^{\frac{1}{4}}L^{\frac{1}{2}}\sigma^{\frac{1}{2}}d^{\frac{1}{8}}},\frac{D^{\frac{2}{5}}}{\sigma^{\frac{2}{5}}L^{\frac{2}{5}}d^{\frac{1}{10}}K^{\frac{3}{5}}R^{\frac{1}{5}}},\frac{D^{\frac{2}{3}}}{K^{\frac{2}{3}}R^{\frac{1}{3}}\sigma^{\frac{2}{3}}L^{\frac{2}{3}}},\frac{1}{L}\right\}, $
let $\delta=\frac{\eta\sigma}{\sqrt{d}},$ and $\gamma=\frac{1}{\eta\left(L+\frac{1}{\eta}\right)^2}$. Then we have 
\begin{equation*}
    \begin{split}
 \E\left[\sup_{\z\in\Z_{D}}\left\langle V(\z),\x_o-\z \right\rangle \right]\leq & O\Bigg(\left[ \frac{L}{KR}+\frac{\sigma}{\sqrt{MKR}}+\frac{D^{\frac{4}{3}}\sigma^{\frac{2}{3}}L^{\frac{2}{3}}}{K^{\frac{1}{3}}R^{\frac{2}{3}}}\right]    +   
 \left[\frac{D^{\frac{3}{2}}L^{\frac{1}{2}}\sigma^{\frac{1}{2}}d^{\frac{1}{8}}}{K^{\frac{3}{4}}R^{\frac{3}{4}}} + \frac{D^{\frac{8}{5}}\sigma^{\frac{2}{5}}L^{\frac{2}{5}}d^{\frac{1}{10}}}{R^{\frac{4}{5}}K^{\frac{2}{5}}}\right]\\
 & +\frac{G^2+K^2L^2G^2+\frac{H^2L^2d^{\frac{1}{2}}G^4}{\sigma^2}}{4^H}\Bigg).
    \end{split}
\end{equation*}

\end{theorem}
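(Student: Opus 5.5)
We reuse the skeleton of the proof of Theorem~\ref{thm:optimalboundppass} and replace the bounded-Hessian step by a smoothing argument. Since $\gamma$ and the outer loop are unchanged, we view each inner loop (Steps 5--8 of Algorithm~\ref{alg:example:lipeG}) as SGD on the strongly monotone operator
\[
\mathring{F}(\x)=\mathring{V}(\x)+\tfrac{1}{\eta}(\x-\z_{t-1}^m),
\qquad
\mathring{V}(\x)=\E_{\s}[V(\x+\delta\s)].
\]
Its root $\mathring{\x}_t^{*,m}$ plays the role of the proximal point. We then run the outer inexact-PPA analysis of Theorem~\ref{thm:inexactppa}, charging the inexactness $\x_t^m-\x_t^{*,m}$ to three sources: the optimization error of the inner loop, the smoothing bias $\mathring{\x}_t^{*,m}-\x_t^{*,m}$, and the inner-loop noise.

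\textbf{Step 1 (properties of $\mathring{V}$).} The smoothed operator $\mathring{V}$ is monotone and $L$-smooth, since it is an average of shifts of $V$. We also have $\|\mathring{V}-V\|\le L\delta\sqrt{d}=L\eta\sigma$. The key gain is a second-order bound playing the role of Assumption~\ref{ass:boundedasddff}. Using the Gaussian integration-by-parts identity $J_{\mathring V}(\x)=\frac{1}{\delta}\E[V(\x+\delta\s)\s^\top]$ together with $L$-Lipschitzness of $V$, the Jacobian of $\mathring V$ is Lipschitz with constant $\Lambda_\delta=O(\sqrt{d}L/\delta)=O(dL/(\eta\sigma))$. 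Alternatively, one can use $\|V\|\le G$ to get $O(\sqrt{d}G/\delta)$ for the Hessian-type constant. Either way, $\mathring V$ satisfies Assumption~\ref{ass:boundedasddff} with a $\delta$-dependent $\Lambda$.

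\textbf{Step 2 (reuse the bias--variance decomposition).} The proof of Theorem~\ref{thm:optimalboundppass} splits the inner-loop output into a mean part, which concentrates and whose deviation averages over the $M$ clients, and a variance-induced bias. That bias is controlled by $\Lambda\cdot(\text{inner variance})$, i.e.\ roughly $\Lambda\gamma\sigma^2\eta$. We plug in $\mathring V$ with $\Lambda_\delta$; the sample noise of $V(\x+\delta\s;\xi)$ about $\mathring V(\x)$ now has variance at most $2\sigma^2+2L^2\delta^2 d=O(\sigma^2)$, thanks to the choice $\delta=\eta\sigma/\sqrt d$. The smoothing bias contributes an extra error of order $\eta L\delta\sqrt d=\eta^2 L\sigma$ per step in $\|\mathring{\x}^*-\x^*\|$. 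Through the outer recursion this should produce terms dominated by those already in the bound.

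\textbf{Step 3 (assemble).} We then substitute $\Lambda_\delta$ into the $\Lambda$-dependent terms of Theorem~\ref{thm:optimalboundppass}. The $\frac{D^{5/3}d^{1/6}\Lambda^{1/3}\sigma^{2/3}}{R^{5/6}K^{5/6}}$ term and the $\Lambda$-dependent step-size constraint become, after plugging in $\Lambda_\delta$ with its $1/(\eta\sigma)$ dependence, the $d^{1/8}$ and $d^{1/10}$ terms and step-size branches of the stated $\eta$. The exponential term $H^2\Lambda^2G^4/4^H$ becomes $H^2L^2 d^{1/2}G^4/(\sigma^2 4^H)$, and the remaining terms are inherited directly.

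The main obstacle is Step 2: making sure the $1/\delta$ blow-up of $\Lambda_\delta$ is exactly compensated so that the client-drift/variance bias remains $M$-independent only at lower order. This requires tracking the dimension factors in the Gaussian-smoothing Jacobian bound carefully, and balancing the $\delta$ choice against both the bias $L\delta\sqrt d$ and $\Lambda_\delta$.
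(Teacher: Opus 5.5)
Your overall route is the paper's. You pass to $\mathring V$, get a curvature constant of order $1/\delta$ from the Gaussian integration-by-parts formula for $J_{\mathring V}$, rerun the bias--variance argument of Lemma~\ref{lem:sdewix1818181} for $\mathring F$, and pay the smoothing bias $\|\mathring{\x}_t^{*,m}-\x_t^{*,m}\|\le \eta L\delta\sqrt d$.

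The gap is Step~3. $\Lambda_\delta$ depends on $\eta$ through $\delta$, so you cannot substitute it into the already tuned rate and step-size branch of Theorem~\ref{thm:optimalboundppass}; that branch balanced $\Lambda^2\sigma^4\eta^5$ against $D^2/(\eta KR)$ with $\Lambda$ held fixed. You have to return to the untuned inequality. There the new pieces are $\frac1\eta\E\|\overline{\x}_t-\overline{\x}_t^*\|^2\lesssim \eta L^2\delta^2 d+\Lambda_\delta^2\eta^5\sigma^4$, plus drift terms with $\sigma^2$ replaced by $\sigma^2+L^2\delta^2 d$; then you re-tune $\eta$. (The bias of the inner-loop output is $\approx \eta\cdot\Lambda_\delta\cdot\eta^2\sigma^2$, i.e. $\Lambda_\delta\cdot\mathrm{Var}$ times the step; your $\Lambda\gamma\eta\sigma^2$ omits that factor $\eta$.)

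Doing this with your own constants, $\delta=\eta\sigma/\sqrt d$ and $\Lambda_\delta\asymp \sqrt d L/\delta = dL/(\eta\sigma)$, gives a curvature term $\asymp d^2L^2\sigma^2\eta^3$. After re-tuning this puts a factor $d^{1/2}$, not $d^{1/8}$, on the $(KR)^{-3/4}$ term. The exponential term becomes $H^2 d^2L^2\eta^3G^4/(\sigma^2 4^H)$ rather than the stated $d^{1/2}$ version. Also, with your $\delta$ the inner variance is not inflated ($L^2\delta^2 d\le\sigma^2$), so nothing generates a $d^{1/10}$. The claimed exponents are read off the statement, not derived.

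The paper's proof instead takes $\delta=\eta\sigma/d^{1/4}$, not the $\eta\sigma/\sqrt d$ written in the statement. This choice balances the smoothing bias $\eta L^2\delta^2 d$ against the curvature bias $L^2\eta^5\sigma^4/\delta^2$, giving $\eta^3L^2\sigma^2\sqrt d$, which is the $d^{1/8}$ branch of $\eta$. The $d^{1/10}$ branch comes from the drift term $\eta^4K^2L^2(\sigma^2+L^2\delta^2 d)$ with $L^2\delta^2 d=L^2\eta^2\sigma^2\sqrt d$, not from $\Lambda$.

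For that balance to hold, the curvature constant must be $O(L/\delta)$ with no $\sqrt d$. You can get this by Cauchy--Schwarz in your Stein identity: for unit $u,v$, $u^\top\bigl(J_{\mathring V}(\x)-J_{\mathring V}(\y)\bigr)v=\frac1\delta\E\bigl[\langle u,V(\x+\delta\s)-V(\y+\delta\s)\rangle\langle \s,v\rangle\bigr]\le \frac{L}{\delta}\|\x-\y\|$, since $\E\langle\s,v\rangle^2=1$. With the $\sqrt d L/\delta$ constant, no choice of $\delta$ gets below $d^{1/4}$ in this argument. That is the constant you state and the one the paper's lemma states; the paper silently drops the $\sqrt d$ in its bias recursion.
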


\subsection{Faster Rates for Co-Coercive Operators}
\label{section:4.3}
In this section, we show that, when the operator is  co-coercive, LSGD is enough to achieve the LSGD-optimal bound. We first recall the definition of $\beta$-co-coercivity.  
\begin{defn}
An operator is $\beta$-co-coercive, if for any $\z,\z'\in\R^d$, 
$$ \|V(\z)-V(\z')\|^2\leq \beta \left\langle V(\z)-V(\z'),\z-\z' \right\rangle. $$
\end{defn}
For $\beta$-co-coercive operators, we consider the following classical LSGD algorithm:
\begin{equation}
\label{eqn:algorithm 133}
\begin{split}
        \x_{t}^{m} = & 
    \begin{cases}
\x_{t-1}^m - \eta\widetilde{V}(\x^m_{t-1}), & \text{mod}(t,K)\not= 0;\\
\frac{1}{M}\sum_{m=1}^M\left(\x_{t-1}^m - \eta\widetilde{V}(\x^m_{t-1}) \right), & \text{mod}(t,K)= 0,
    \end{cases}
 \end{split}   
\end{equation}
and we have the following convergence guarantees. The proof is provided in Appendix \ref{appendix:cocococococo}. 
\begin{theorem}
\label{thm:::cocococococ}
Suppose $V$ is $L$-smooth, $\beta$-co-coresive and monotone. Consider the algorithm given in \eqref{eqn:algorithm 133} with 
$\eta = \min\left\{ \frac{1}{\beta}, \frac{D\sqrt{M}}{\sqrt{KR}\sigma}, \frac{D^{\frac{2}{3}}}{2^{\frac{1}{2}}K^{\frac{2}{3}}R^{\frac{1}{3}}L^{\frac{1}{3}}\sigma^{\frac{2}{3}}}\right\}.$
Then we have 
\begin{equation}
    \begin{split}
\E\left[\text{\emph{err}}\left(\frac{1}{TM}\sum_{t=1}^T\sum_{m=1}^M\x^{m}_t\right)\right] 
\leq \frac{4\beta D^2}{KR}+\frac{4\sigma D}{\sqrt{MKR}} + \frac{8D^{\frac{4}{3}}L^{\frac{1}{3}}\sigma^{\frac{2}{3}}}{K^{\frac{1}{3}}R^{\frac{2}{3}}}.
\end{split}
\end{equation}

\end{theorem}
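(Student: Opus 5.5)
The plan follows the standard Local SGD analysis of \citet{pmlr-v119-woodworth20a} through a "virtual averaged iterate", using $\beta$-co-coercivity wherever convex analyses use smoothness plus convexity. Define $\bar{\x}_t=\frac{1}{M}\sum_{m}\x_t^m$. Because averaging is linear, the virtual sequence obeys $\bar{\x}_t=\bar{\x}_{t-1}-\eta\frac{1}{M}\sum_m\widetilde{V}(\x_{t-1}^m)$ at every $t$, synchronization steps included. Fix any $\z\in\Z_D$ and expand $\|\bar{\x}_t-\z\|^2$. This gives
\begin{equation*}
\frac{1}{M}\sum_m\left\langle V(\x_{t-1}^m),\bar{\x}_{t-1}-\z\right\rangle\le\frac{\|\bar{\x}_{t-1}-\z\|^2-\|\bar{\x}_t-\z\|^2}{2\eta}+\frac{\eta}{2}\Big\|\frac{1}{M}\sum_m\widetilde{V}(\x_{t-1}^m)\Big\|^2-\langle \text{noise},\bar{\x}_{t-1}-\z\rangle.
\end{equation*}

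First I convert the left side into the gap $\langle V(\z),\cdot-\z\rangle$. Monotonicity gives $\langle V(\x^m),\x^m-\z\rangle\ge\langle V(\z),\x^m-\z\rangle$. The mismatch term is $\langle V(\x^m),\bar{\x}-\x^m\rangle$. I rewrite it as $\langle V(\x^m)-V(\bar{\x}),\bar{\x}-\x^m\rangle+\langle V(\bar{\x}),\bar{\x}-\x^m\rangle$. The second piece vanishes after averaging over $m$, and the first is $\le -\frac1\beta\|V(\x^m)-V(\bar{\x})\|^2\le 0$ by co-coercivity. This means the mismatch costs nothing. I would instead center at $V(\z)$ in a way that leaves a negative term of the form $-\frac{1}{\beta}\|V(\x^m)-V(\z)\|^2$ or $-\frac1\beta\|V(\bar\x)-V(\x^m)\|^2$. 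That negative term absorbs the $\frac{\eta}{2}\|\frac1M\sum V(\x^m)\|^2$ term once $\eta\le 1/\beta$. This absorption is exactly where co-coercivity plays the role of smoothness-plus-convexity in the convex analysis, and why $\eta=1/\beta$ rather than $1/(\sqrt K L)$ is allowed. The remaining noise part of the squared-norm term contributes $\eta\sigma^2/M$. The cross term with the noise is a martingale difference for fixed $\z$. Since we take a supremum over $\z\in\Z_D$, I will handle it with the usual ghost-sequence trick, an auxiliary iterate driven by minus the noise, which costs another $O(D^2/\eta+\eta\sigma^2 T/M)$.

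Second, I handle the consensus (client drift) term. Where the absorption is incomplete, a term $L\cdot\frac1M\sum_m\|\x^m_{t-1}-\bar{\x}_{t-1}\|\cdot\|\cdot\|$ appears; after Young's inequality it becomes $O(L\,\E\|\x^m-\bar\x\|^2)$ plus absorbed pieces. I will bound drift within a round using co-coercivity. The map $\x\mapsto\x-\eta V(\x)$ is nonexpansive for $\eta\le 2/\beta$, since $\|\x-\x'-\eta(V(\x)-V(\x'))\|^2\le\|\x-\x'\|^2-\eta(2-\eta\beta)\langle V(\x)-V(\x'),\x-\x'\rangle$. Consequently the noiseless drift between two clients does not grow, and only independent noise accumulates, giving $\E\|\x_t^m-\x_t^{m'}\|^2\le 2K\eta^2\sigma^2$. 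Summing over $T=KR$ steps, dividing by $\eta T$, and using Jensen on the averaged output yields
\begin{equation*}
\E[\text{err}]\lesssim\frac{D^2}{\eta KR}+\frac{\eta\sigma^2}{M}+L\eta^2K\sigma^2 .
\end{equation*}
Substituting the three-way minimum for $\eta$ gives the three stated terms.

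The main obstacle is the first step. I need to arrange the co-coercivity inequality so that it simultaneously (i) kills the $\|\frac1M\sum V(\x^m)\|^2$ term, and (ii) leaves the drift entering only quadratically with constant $L$, without an $\|\x^m-\bar\x\|$ term to the first power. My first attempt is to write $\langle V(\x^m),\x^m-\z\rangle=\langle V(\x^m)-V(\z),\x^m-\z\rangle+\langle V(\z),\x^m-\z\rangle$ and apply co-coercivity to the first part. That produces $\frac1\beta\|V(\x^m)-V(\z)\|^2$. This is enough to dominate $\eta\|\frac1M\sum_m V(\x^m)\|^2\le 2\eta\frac1M\sum\|V(\x^m)-V(\z)\|^2+2\eta\|V(\z)\|^2$, except for the $\|V(\z)\|^2$ remainder. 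If that remainder is an issue, I will switch to bounding $\text{err}$ through the averaged iterate and the $L$-smoothness comparison $\|V(\x^m)-V(\bar\x)\|\le L\|\x^m-\bar\x\|$.
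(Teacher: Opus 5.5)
Your proposal has a genuine gap, and it is exactly the obstacle you flag at the end without resolving. If you bound $\|\frac{1}{M}\sum_m V(\x^m_t)\|^2\le \frac{2}{M}\sum_m\|V(\x^m_t)-V(\z)\|^2+2\|V(\z)\|^2$, the remainder enters the final bound as $O\bigl(\eta\sup_{\z\in\Z_D}\|V(\z)\|^2\bigr)$. This term does not decay with $T$. It is also not controlled by $\beta$ and $D$ at all: take $V(\z)=\z-\z^\star$ with $\z^\star$ far from $\Z_D$. So it forces $\eta\propto 1/\sqrt{KR}$ and turns the $\beta D^2/(KR)$ term into an $O(1/\sqrt{KR})$ term, already for $M=1$, $\sigma=0$. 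Your fallback (averaged iterate plus $\|V(\x^m)-V(\overline{\x})\|\le L\|\x^m-\overline{\x}\|$) cannot repair this, because the remainder has nothing to do with drift or noise.

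The missing idea, which is what the paper does, is to certify the gap at the \emph{next} iterate. Start from the unnormalized identity $\|\overline{\x}_{t+1}-\z\|^2=\|\overline{\x}_t-\z\|^2-2\eta\langle\widetilde{V}_{\x,t},\overline{\x}_t-\z\rangle+\eta^2\|\widetilde{V}_{\x,t}\|^2$, with $\widetilde{V}_{\x,t}=\frac1M\sum_m\widetilde{V}(\x^m_t)$. Expand the last term exactly and use $\eta\widetilde{V}_{\x,t}=\overline{\x}_t-\overline{\x}_{t+1}$, which also holds at synchronization steps:
\begin{equation*}
\eta^2\|\widetilde{V}_{\x,t}\|^2=\eta^2\|\widetilde{V}_{\x,t}-V(\z)\|^2+2\eta\langle V(\z),\overline{\x}_t-\overline{\x}_{t+1}\rangle-\eta^2\|V(\z)\|^2 .
\end{equation*}
Drop the last term. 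The middle term combines with the $-2\eta\langle V(\z),\overline{\x}_t-\z\rangle$ that comes from $\langle V(\x^m_t),\x^m_t-\z\rangle=\langle V(\x^m_t)-V(\z),\x^m_t-\z\rangle+\langle V(\z),\x^m_t-\z\rangle$, giving $-2\eta\langle V(\z),\overline{\x}_{t+1}-\z\rangle$. After splitting off the noise, $\eta^2\|\widetilde{V}_{\x,t}-V(\z)\|^2$ leaves $\frac{2\eta^2}{M}\sum_m\|V(\x^m_t)-V(\z)\|^2$. The co-coercivity term $-\frac{2\eta}{\beta M}\sum_m\|V(\x^m_t)-V(\z)\|^2$ cancels this when $\eta\le 1/\beta$. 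Summing over $t$ then bounds the gap at the average of $\overline{\x}_1,\dots,\overline{\x}_T$, which is exactly the output.

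Separately, your claim that the mismatch ``costs nothing'' has the sign reversed. You need a \emph{lower} bound on $\frac1M\sum_m\langle V(\x^m_t),\overline{\x}_t-\z\rangle$. The mismatch $\frac1M\sum_m\langle V(\x^m_t),\overline{\x}_t-\x^m_t\rangle=-\frac1M\sum_m\langle V(\x^m_t)-V(\overline{\x}_t),\x^m_t-\overline{\x}_t\rangle\le0$ therefore works against you. Moved to the right-hand side it is at most $\frac{L}{M}\sum_m\|\x^m_t-\overline{\x}_t\|^2$. This is precisely the source of the $L\eta^2K\sigma^2$ term in your final bound (and in the paper's), so your first step contradicts your own second step.

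With these two corrections, the rest of your plan coincides with the paper's argument. That includes the nonexpansiveness-based drift bound $\E\|\x^m_t-\x^{m'}_t\|^2\le 2K\eta^2\sigma^2$ and the ghost-iterate treatment of the noise under the supremum. Together they give $\frac{D^2}{\eta KR}+\frac{\eta\sigma^2}{M}+L\eta^2K\sigma^2$, from which the stated rate follows.
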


\section{Faster Rates For Composite Variational Inequalities}
In this section, we consider solving the following composite variational inequality problem \citep{nesterov2023high,bai2024local}, where we would like to find a $\z^*\in\R^d$ such that:
$$ \sup_{\z\in\R^d}\left\langle V(\z),\z^*-\z \right\rangle + \phi(\z^*)- \phi(\z)\leq 0, $$
where the function $\phi$ is convex but potentially non-smooth. For a candidate solution, the performance is measured by 
$$ \text{err}_c(\overline{\z})=\sup_{\z}\left\langle V({\z}),\overline{\z}-\z\right\rangle + \phi(\z^*)- \phi(\z). $$
This problem is a direct generalization of the smooth and monotone  variational inequality problem, and it is more challenging due to the non-smooth regularizer. One important application for this problem is the composite saddle-point problem:
$ \min_{\x}\max_{\y} f(\x,\y) + g_1(\x)-g_2(\y), $
where $f(\x,\y)$ is smooth  and convex-concave, while $g_1,g_2$ are convex but not necessary smooth. For these kinds of problems, \cite{bai2024local} proposed the  local dual averaging algorithm, which can be considered as a mirror-descent style  generalization of the LESGD algorithm. For this algorithm,  \cite{bai2024local}  show that an 
$\textstyle O\left(\frac{L}{KR} + { \frac{\sqrt{L}}{\sqrt{R}}} + \frac{\sigma}{\sqrt{MKR}} +  \frac{\sqrt{L}}{K^{\frac{1}{4}}R^{\frac{3}{4}}} \right)$ rate is achieved. By contrast, for composite optimization, \cite{yuan2021federated} obtained an $\textstyle O\left(\frac{\sigma}{\sqrt{MKR}}+ \frac{1}{R^{\frac{2}{3}}} + \frac{1}{KR}\right)$ bound. In this section, we show that Local dual averaging can also achieve this bound.  

To present the algorithm, we first define some notation. We refer to the paper by \cite{bai2024local} for more technical details about this algorithm. Let $h:\R^d\mapsto\R_+$ be a distance generating function, and without loss of generality, we assume it is $1$-strongly convex. Let $h_t=h+t\eta\phi$ be the general distance function. Let $h_t^*$ be the convex conjugate of $h_t$, and $\nabla h_t^*:\R^d\mapsto\R^d$ defines a mirror map from the dual space to the primal space. The algorithm (in the dual space) is given by: 
\begin{equation}
\label{eqn:algorithm 12}
\begin{split}
        \x_{t}^{m} = & 
    \begin{cases}
\z_{t-1}^m - \eta\widetilde{V}(\u^m_{t-1}), & \text{mod}(t,K)\not= 0;\\
\frac{1}{M}\sum_{m=1}^M\left(\z_{t-1}^m - \eta\widetilde{V}(\u^m_{t-1}) \right), & \text{mod}(t,K)= 0.
    \end{cases}\\
    \z^m_{t} = & 
        \begin{cases}
\z_{t-1}^m - \eta\widetilde{V}(\v^m_t), & \text{mod}(t,K)\not= 0;\\
\frac{1}{M}\sum_{m=1}^M\left(\z_{t-1}^m - \eta\widetilde{V}(\v^m_{t}) \right), & \text{mod}(t,K)= 0.
    \end{cases}\\
 \end{split}   
\end{equation}
Here, $\u^m_t=\nabla h_t^*(\z^m_t)$, and $\v^m_t=\nabla h_t^*(\x^m_t)$ are the primal variables, and  $\x_t^m$ and $\z_t^m$ are dual variables. In each round $t=1,\dots,T$, each machine $m=1,\dots,M$ still conducts extra gradient step in the dual space, but using the corresponding  operator value in the primal space. After $T$ iterations, the algorithm output the average of primal variables:
$\v_o =\frac{1}{TM}\sum_{t=1}^T\sum_{m=1}^M\v^m_t,$
and we have the following conclusion. The proof is postponed to Appendix \ref{sec:copositeprooffff}.
\begin{theorem}
\label{thm555}
Suppose $V$ is $L$-smooth, monotone, and $\|V(\x)\|\leq G$ for all $\x\in\R^d$. Then the algorithm given in \eqref{eqn:algorithm 12} with 
$\textstyle \eta = \left\{ \frac{D\sqrt{M}}{\sigma\sqrt{6KR}},\frac{D^{\frac{2}{3}}}{17^{\frac{1}{3}}K^{\frac{1}{3}}R^{\frac{1}{3}}L^{\frac{2}{3}}G^{\frac{2}{3}}},\frac{1}{\sqrt{10}L} \right\},$ guarantees that 
\begin{equation}
    \begin{split}
\E[\emph{\text{err}}_c(\v_o)]
     \leq  \frac{2\sqrt{6}D\sigma}{\sqrt{MKR}}+ \frac{17^{\frac{1}{3}}L^{\frac{2}{3}}D^{\frac{4}{3}}G^{\frac{2}{3}}}{R^{\frac{2}{3}}} + \frac{\sqrt{10}D^2L}{KR}. 
    \end{split}
\end{equation} 
\end{theorem}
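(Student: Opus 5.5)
The proof follows the dual-averaging extragradient template of \cite{bai2024local}. It replaces their control of client drift, which is linear in the norm, with one that needs only the squared drift, and it uses $\|V\|\leq G$ to bound the drift of the dual iterates directly.

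\textbf{Virtual sequences and the one-step inequality.} Define the averages $\overline{\z}_t=\frac{1}{M}\sum_m \z_t^m$ and $\overline{\x}_t=\frac{1}{M}\sum_m\x_t^m$. They obey the centralized dual-averaging recursion $\overline{\z}_t=\overline{\z}_{t-1}-\frac{\eta}{M}\sum_m\widetilde V(\v_t^m)$. Set $\overline{\u}_t=\nabla h_t^*(\overline{\z}_t)$ and let $\overline{\v}_t=\nabla h_t^*(\overline{\x}_t)$ be the virtual primal points. The potential is the Bregman-type quantity $\Phi_t(\z)=h_t^*(\overline{\z}_t)-\langle\overline{\z}_t,\z\rangle+h_t(\z)$, which is standard for dual averaging with a growing regularizer $t\eta\phi$. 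Using $1$-strong convexity of $h_t$ (so $\nabla h_t^*$ is $1$-Lipschitz) and the three-point identity, I will derive, for every fixed $\z\in\Z_D$,
\[
\eta\Big(\big\langle \tfrac1M\textstyle\sum_m V(\v_t^m),\overline{\v}_t-\z\big\rangle+\phi(\overline{\v}_t)-\phi(\z)\Big)\le \Phi_{t-1}(\z)-\Phi_t(\z)+\eta\,\mathrm{noise}_t+\mathrm{err}_t,
\]
where $\mathrm{err}_t$ contains $\eta^2\|\frac1M\sum_m(V(\v_t^m)-V(\u_{t-1}^m))\|^2$ minus the negative extragradient term $\frac12\|\overline{\v}_t-\overline{\u}_{t-1}\|^2$. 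With $\eta\le 1/(\sqrt{10}L)$, smoothness lets this negative term absorb the local discrepancy, up to squared drift terms $L^2\eta^2\frac1M\sum_m(\|\v^m_t-\overline{\v}_t\|^2+\|\u^m_{t-1}-\overline{\u}_{t-1}\|^2)$.

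\textbf{Relating the left side to $\text{err}_c$.} Monotonicity gives $\langle V(\z),\overline{\v}_t-\z\rangle\le\langle V(\overline{\v}_t),\overline{\v}_t-\z\rangle$. I then replace $V(\overline{\v}_t)$ by $\frac1M\sum_mV(\v_t^m)$ at cost $\langle V(\overline{\v}_t)-\frac1M\sum_m V(\v_t^m),\overline{\v}_t-\z\rangle$. This cross term is the crux, and it is where earlier analyses paid a drift term linear in the norm. My plan is to avoid it by Taylor-free averaging. The comparison point $\overline{\v}_t$ may be replaced by the average of the true primal iterates $\frac1M\sum_m\v_t^m$, because the output $\v_o$ averages the $\v_t^m$ and $\phi$ is convex, so Jensen applies. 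Then
\[
\tfrac1M\textstyle\sum_m\langle V(\v_t^m),\v_t^m-\z\rangle=\big\langle\tfrac1M\sum_mV(\v_t^m),\overline{\v}_t-\z\big\rangle+\tfrac1M\sum_m\langle V(\v_t^m)-V(\overline{\v}_t),\v_t^m-\overline{\v}_t\rangle,
\]
and the last term is at most $L\cdot\frac1M\sum_m\|\v_t^m-\overline{\v}_t\|^2$, which is squared drift only. The supremum over $\z$ is handled by the usual ghost-iterate trick for the martingale noise. This produces the $\frac{D\sigma}{\sqrt{MKR}}$ term after summing over $t$ and dividing by $\eta T$, together with $\frac{D^2}{\eta T}$ from $\Phi_0$, which gives $\frac{\sqrt{10}D^2L}{KR}$ at the corresponding step size.

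\textbf{Drift bound (main obstacle).} Since $\nabla h_t^*$ is $1$-Lipschitz, $\|\v_t^m-\overline{\v}_t\|\le\|\x_t^m-\overline{\x}_t\|$. Within a round the dual iterates deviate from their average only through accumulated operator values, and all clients use the same $h_t$, so $\E\|\x_t^m-\overline{\x}_t\|^2\lesssim\eta^2K^2G^2+\eta^2K\sigma^2$ using $\|V\|\le G$. Feeding this in gives a total drift contribution $L\eta^2K^2G^2+L\eta^2K\sigma^2$. Balancing $\frac{D^2}{\eta KR}$ against $L\eta^2K^2G^2$ yields $\eta\propto D^{2/3}/(K^{1/3}R^{1/3}L^{2/3}G^{2/3})$ (the $L^{2/3}$ arises since smoothness enters squared in the extragradient absorption) and the $\frac{L^{2/3}D^{4/3}G^{2/3}}{R^{2/3}}$ term. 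The $\sigma^2$ part is absorbed in the $\sigma$-term. I expect the hard part to be verifying that the extragradient step does not reintroduce a linear-in-drift term through the time-varying mirror maps $h_t$ versus $h_{t-1}$. I would handle this using $h_t\ge h_{t-1}$ and the monotonicity of $\Phi$ in $t$ from the dual-averaging analysis of \cite{yuan2021federated}. The final step is to take the minimum over the three step-size choices and sum the resulting terms.
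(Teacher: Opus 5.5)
The step you call the crux does not go through. Your outline is otherwise the paper's argument almost step for step: the generalized Bregman potential $\widetilde{G}_{h_t}(\z,\overline{\z}_t)$, the one-step inequality in which $-\frac12\|\overline{\v}_t-\overline{\u}_{t-1}\|^2$ absorbs the $5L^2\eta^2$ term at $\eta\le 1/(\sqrt{10}L)$, the ghost iterate, and a $G$-based drift bound.

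The one-step inequality holds for $\overline{\v}_t=\nabla h_t^*(\overline{\x}_t)$; the three-point identity forces this choice. For a general nonsmooth $\phi$, or a non-Euclidean $h$, the mirror map $\nabla h_t^*$ is not affine, so $\overline{\v}_t\neq\frac1M\sum_m\v_t^m$. Your displayed identity therefore omits the term $\big\langle V(\overline{\v}_t),\frac1M\sum_m\v_t^m-\overline{\v}_t\big\rangle$. Jensen does not rescue the $\phi$ side either. Convexity bounds $\phi(\v_o)$ by $\frac1T\sum_t\phi(\frac1M\sum_m\v_t^m)$, but your inequality controls $\frac1T\sum_t\phi(\overline{\v}_t)$, and nothing orders the two.

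Here is a concrete case. Take $d=1$, $h(u)=\frac12u^2$, $\phi(u)=|u|$ and $t\eta=1$, so $\nabla h_t^*$ is soft-thresholding at level $1$. With two clients at $\x_t^{1,2}=1\pm\epsilon$, you get $\overline{\v}_t=0$ but $\frac12(\v_t^1+\v_t^2)=\epsilon/2$. Both mismatches are first order in the drift $\epsilon$, not second order. The $V$-part, controlled with $\|V\|\le G$, costs about $\eta KG^2$ per iteration. The $\phi$-part cannot be controlled at all without a Lipschitz assumption on $\phi$. Balancing $\eta KG^2$ against $D^2/(\eta KR)$ returns an $O(DG/\sqrt{R})$ term, which is the $1/\sqrt{R}$ rate of \cite{bai2024local} that the theorem claims to improve.

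The paper's proof makes exactly the same move. It marks $\frac{\eta}{M}\sum_m\langle V(\overline{\v}_t),\overline{\v}_t-\v_t^m\rangle$ as $=0$ and applies convexity to $\phi(\overline{\v}_t)$ as if $\overline{\v}_t$ were the average. So following it does not close the gap: as written, the argument only covers the case where $\nabla h_t^*$ is affine, e.g.\ Euclidean $h$ with $\phi$ linear or quadratic. Outputting the virtual average $\frac1T\sum_t\overline{\v}_t$, as in \cite{yuan2021federated}, removes the $\phi$ mismatch. It still leaves $\big\langle V(\overline{\v}_t)-\frac1M\sum_mV(\v_t^m),\overline{\v}_t-\z\big\rangle$ with $\z$ arbitrary, which is again linear in the drift. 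Some genuinely new ingredient is needed here.

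There are two smaller points.
\begin{itemize}
\item Balancing $\frac{D^2}{\eta KR}$ against your drift term $L\eta^2K^2G^2$ gives $\eta\propto D^{2/3}/(KR^{1/3}L^{1/3}G^{2/3})$ and the term $L^{1/3}D^{4/3}G^{2/3}/R^{2/3}$, not $K^{1/3}$ and $L^{2/3}$. The drift terms absorbed through the extragradient step enter as $L^2\eta\cdot\eta^2K^2G^2\le L\eta^2K^2G^2$, so they cannot be the source of an $L^{2/3}$. Also, with the $K^{1/3}$ in the stated $\eta$, the drift term grows like $K^{4/3}$, so that choice does not give the stated bound.
\item The noise part of the drift, $L\eta^2K\sigma^2$, carries no $1/M$, so $\frac{\sigma^2\eta}{M}$ does not absorb it. It adds $L^{1/3}D^{4/3}\sigma^{2/3}/(K^{1/3}R^{2/3})$ unless the stochastic oracle itself is bounded by $G$, which the paper tacitly assumes.
\end{itemize}
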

Note that, this bound exactly matches the optimal results for federated composite optimization \citep{yuan2021federated}, and faster than the bound provided by \cite{bai2024local} as the term $O\left(\frac{1}{\sqrt{R}}\right)$ is improved to $O\left(\frac{1}{R^{\frac{2}{3}}}\right).$

\section{Conclusion}
In this paper, we study federated optimization for stochastic smooth and monotone VIs. We first show that the LESGD enjoys an 
$O\left( \tfrac{1}{\sqrt{K}R} + \tfrac{\sigma}{\sqrt{MKR}} + \tfrac{\sigma^{2/3}}{K^{1/3}R^{2/3}} \right)$ convergence rate. We then prove that the algorithm achieves the optimal rate when $\sigma = \Omega\!\left(\tfrac{1}{\sqrt{R}K^{1/4}}\right)$, or when the operator is affine. Next, we propose a new algorithm, called \lippax, which achieves the optimal rate when $\sigma = O\!\left(\tfrac{1}{\sqrt{KR}}\right)$, or when the Hessian of $V$ is bounded. Then, we propose a Gauss-smoothing variant of \lippax, which achieves an LSGD-optimal bound while only relying on $V$ being bounded. Finally, we present faster rates for co-coercive smooth and monotone VIs as well as for composite VI problems. 

Several open questions remain. The \lippaxg\ algorithm achieves the optimal rate only up to a logarithmic factor and relies on the assumption that $V$ is bounded. In future work, we plan to investigate whether these restrictions can be relaxed. Moreover, for federated composite VIs (and even federated composite optimization), existing results assume bounded gradients and include an $O\!\left(\tfrac{G}{R^{2/3}}\right)$ term in the best-known bounds, which is independent of the variance. This implies that the optimal $O\!\left(\tfrac{1}{KR}\right)$ rate cannot be achieved even when $\sigma = 0$, and it remains unclear whether this limitation can be overcome. Finally, this paper focuses primarily on the homogeneous setting where all local clients share the same data distribution, whereas previous work \citep{beznosikov2022decentralized} also considers the heterogeneous setting. Extending our improved rates to this setting remains an open problem; we refer to Appendix \ref{sec:heterogeneous} for a more detailed discussion.
 
\bibliography{colt}

\newpage 
\appendix

\section{Proof of Theorem \ref{thm:ESGD}}
\label{sec:app:proof the1}
We start by defining the following shadow updates: $\overline{\x}_t=\frac{1}{M}\sum_{m=1}^M\x_t^{m}$, and $\overline{\z}_t=\frac{1}{M}\sum_{m=1}^M\z_t^{m}$. We have 
\begin{equation}
\begin{split}
    \label{eqn:updateshwodwzz00}
  \overline{\x}_t= \frac{1}{M}\sum_{m=1}^M\x_t^{m} = \overline{\z}_{t-1}-\eta \frac{1}{M}\sum_{m=1}^M\widetilde{V}(\z_{t-1}^m)=\overline{\z}_{t-1}-\eta\widetilde{V}_{\z,t-1},  
\end{split}    
\end{equation}
and 
\begin{equation}
    \begin{split}
    \label{eqn:updateshwodwzz}
   \overline{\z}_{t} = \frac{1}{M}\sum_{m=1}^M \z^m_t = \overline{\z}_{t-1} -\eta \frac{1}{M}\sum_{m=1}^M\widetilde{V}(\x^m_{t})=\overline{\z}_{t-1}-\eta\widetilde{V}_{\x,t}.     
    \end{split}
\end{equation}
where we define $\widetilde{V}_{\z,t-1}=\frac{1}{M}\sum_{m=1}^M\widetilde{V}(\z_{t-1}^m)$, and $\widetilde{V}_{\x,t}=\frac{1}{M}\sum_{m=1}^M\widetilde{V}(\x^m_{t})$. Moreover, let ${V}_{\z,t-1}=\frac{1}{M}\sum_{m=1}^M{V}(\z_{t-1}^m)$, and ${V}_{\x,t}=\frac{1}{M}\sum_{m=1}^M{V}(\x^m_{t})$.
We first introduce the following basic lemma. 
\begin{lemma}
\label{lem:poential}
Let $\z^+=\z-\eta\v$, where $\z,\v\in\R^d$ and $\eta\in\R$. Then, for any $\z'\in\R^d$, we have:
$$ \| \z^+-\z'\|^2= \|\z-\z'\|^2-2\eta\left\langle \v, \z^+-\z'\right\rangle-\|\z^+-\z\|^2.$$
\end{lemma}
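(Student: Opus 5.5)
The identity in Lemma~\ref{lem:poential} is a pure algebraic fact about the Euclidean norm, so I will prove it by expanding squares. The plan is to write $\z-\z' = (\z-\z^+) + (\z^+-\z')$ and expand the squared norm. This gives
\begin{equation*}
\|\z-\z'\|^2 = \|\z^+-\z'\|^2 + 2\left\langle \z-\z^+, \z^+-\z'\right\rangle + \|\z-\z^+\|^2 ,
\end{equation*}
which is the three-point (law of cosines) identity.

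Next I substitute the update rule. Since $\z^+=\z-\eta\v$, we have $\z-\z^+=\eta\v$, so the cross term equals $2\eta\left\langle \v, \z^+-\z'\right\rangle$. Rearranging to isolate $\|\z^+-\z'\|^2$ yields
\begin{equation*}
\|\z^+-\z'\|^2=\|\z-\z'\|^2-2\eta\left\langle \v,\z^+-\z'\right\rangle-\|\z^+-\z\|^2,
\end{equation*}
which is exactly the claim.

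No step is a real obstacle. The only care needed is in the sign bookkeeping of the cross term: I keep the inner product evaluated at the new point $\z^+$ rather than at $\z$. That choice is the one that produces the negative $\|\z^+-\z\|^2$ term, and this term is what later absorbs the smoothness and client-drift errors in the extragradient analysis. The identity holds for any real $\eta$, including negative values, since no inequality is used.
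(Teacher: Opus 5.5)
Your proof is correct and is essentially the paper's argument: both are a direct expansion of squares. The only cosmetic difference is where the expansion is centered. The paper expands $\|\z-\eta\v-\z'\|^2$ around $\z$ and then shifts the cross term from $\z$ to $\z^+$, which produces the $-2\|\z^+-\z\|^2$ correction. You decompose at $\z^+$ from the start, so you reach the identity in a single step.
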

\begin{proof}
We have
\begin{equation*}
  \begin{split}
\| \z^+-\z'\|^2 = & \| \z-\eta\v-\z'\|^2= \|\z-\z'\|^2-2\eta \left\langle \v,\z-\z' \right\rangle + \| \z^+-\z\|^2\\    = & \|\z-\z'\|^2-2\eta \left\langle \v,\z^+-\z' \right\rangle + \| \z^+-\z\|^2+ 2\eta \left\langle \v,\z^+ -\z \right\rangle\\
= &  \|\z-\z'\|^2-2\eta \left\langle \v,\z^+-\z' \right\rangle + \| \z^+-\z\|^2-2\ \|\z^+-\z\|^2\\
= &  \|\z-\z'\|^2-2\eta \left\langle \v,\z^+-\z' \right\rangle - \| \z^+-\z\|^2.
  \end{split}  
\end{equation*}
\end{proof}

\noindent Apply this lemma for the update in \eqref{eqn:updateshwodwzz00} and \eqref{eqn:updateshwodwzz}, we have $\forall \z\in\R^d$, 
\begin{equation*}
\| \overline{\z}_t - \z\|^2 = \| \overline{\z}_{t-1}-\z\|^2 - 2\eta \left\langle \widetilde{V}_{\x,t}, \overline{\z}_t-\z \right\rangle - \|\overline{\z}_t-\overline{\z}_{t-1}\|^2,    
\end{equation*}
and 
\begin{equation*}
\| \overline{\x}_t - \overline{\z}_t\|^2 = \| \overline{\z}_{t-1}-\overline{\z}_t\|^2 - 2\eta \left\langle \widetilde{V}_{\z,t-1}, \overline{\x}_t-\overline{\z}_t \right\rangle - \|\overline{\x}_t-\overline{\z}_{t-1}\|^2.    
\end{equation*}
Thus 
\begin{equation*}
    \begin{split}
&\|\overline{\z}_t -\z\|^2+\|\overline{\x}_t-\overline{\z}_t\|^2 \\
= & \|\overline{\z}_{t-1}-\z\|^2 -\|\overline{\x}_t-\overline{\z}_{t-1}\|^2  
- 2\eta \left\langle \widetilde{V}_{\x,t}, \overline{\z}_t-\z \right\rangle
- 2\eta \left\langle \widetilde{V}_{\z,t-1}, \overline{\x}_t-\overline{\z}_t \right\rangle \\
=  & \|\overline{\z}_{t-1}-\z\|^2 -\|\overline{\x}_t-\overline{\z}_{t-1}\|^2 - 2\eta \left\langle \widetilde{V}_{\x,t}, \overline{\x}_t-\z\right\rangle + 2\eta \left\langle \widetilde{V}_{\x,t}-\widetilde{V}_{\z,t-1},\overline{\x}_t-\overline{\z}_t \right\rangle \\
\leq & \|\overline{\z}_{t-1}-\z\|^2 -\|\overline{\x}_t-\overline{\z}_{t-1}\|^2 - 2\eta \left\langle \widetilde{V}_{\x,t}, \overline{\x}_t-\z\right\rangle  + \eta^2 \left\| \widetilde{V}_{\x,t} - \widetilde{V}_{\z,t-1}\right\|^2 + \| \overline{\x}_t-\overline{\z}_t\|^2,
    \end{split}
\end{equation*}
where the inequality is due to  Young's inequality. In summary, the inequality above indicates that

\begin{equation}
\label{eqn:poentialll}
\|\overline{\z}_t-\z\|^2\leq \|\overline{\z}_{t-1}-\z\|^2 -\|\overline{\x}_t-\overline{\z}_{t-1}\|^2 - 2\eta \left\langle \widetilde{V}_{\x,t}, \overline{\x}_t-\z\right\rangle 
+ \eta^2 \left\| \widetilde{V}_{\x,t} - \widetilde{V}_{\z,t-1}\right\|^2.
\end{equation}

\noindent To proceed,  we firstly focus on the last term of \eqref{eqn:poentialll}, and we have 
\begin{equation}
\label{eqn:dsaaserdsaawewqqw}
\begin{aligned}
&\left\| \widetilde{V}_{\x,t} - \widetilde{V}_{\z,t-1} \right\|^2 \\
= &\Big\|  \widetilde{V}_{\x,t} - V_{\x,t}
+ V_{\x,t} - V(\overline{\x}_t)  + V(\overline{\x}_t) - V(\overline{\z}_{t-1}) 
+ V(\overline{\z}_{t-1}) -V_{\z,t-1}
+ V_{\z,t-1} - \widetilde{V}_{\z,t-1} \Big\|^2 \\
\leq & 5\left\| \widetilde{V}_{\x,t} - V_{\x,t}\right\|^2 
+ 5\left\| V_{\x,t} - V(\overline{\x}_t) \right\|^2 
 + 5\left\| V(\overline{\x}_t) - V(\overline{\z}_{t-1}) \right\|^2 
+ 5\left\| V(\overline{\z}_{t-1}) - V_{\z,t-1} \right\|^2 \\
& + 5\left\| V_{\z,t-1} - \widetilde{V}_{\z,t-1} \right\|^2\\
\leq & 5\left\| \widetilde{V}_{\x,t} - V_{\x,t}\right\|^2 + \frac{5L^2}{M}\sum_{m=1}^M\|\x_t^m-\overline{\x}_t\|^2 + 5L^2 \|\overline{\x}_t-\overline{\z}_{t-1}\|^2 + \frac{5L^2}{M}\sum_{m=1}^M\|\z_{t-1}^m-\overline{\z}_{t-1}\|^2 \\
&+ 5\left\| V_{\z,t-1} - \widetilde{V}_{\z,t-1} \right\|^2,
\end{aligned}
\end{equation}
where the first inequality is based on Cauchy-Schwarz and Young's inequality, and  the second inequality is based on the $L$-smoothness of $V$, Cauchy-Schwarz and Young's inequality.  Next, we bound the third term of \eqref{eqn:poentialll}. We have 
\begin{equation}
    \begin{split}
    \label{eqn:third termtheorem1}
    &-   2\eta \left\langle \widetilde{V}_{\x,t}, \overline{\x}_t-\z\right\rangle  = {} -\frac{2\eta}{M}\sum_{m=1}^M \left\langle V(\x_t^m),\overline{\x}_t-\z  \right\rangle + 2\eta \left\langle V_{\x,t}-\widetilde{V}_{\x,t},\overline{\x}_t-\z \right\rangle\\
    = {} &  -\frac{2\eta}{M}\sum_{m=1}^M \left\langle V(\x_t^m),\overline{\x}_t-\x_t^m+\x_t^m-\z  \right\rangle + 2\eta \left\langle V_{\x,t}-\widetilde{V}_{\x,t},\overline{\x}_t-\z \right\rangle\\
    = & -\frac{2\eta}{M} \sum_{m=1}^M\left\langle V(\x_t^m),\x_t^m-\z\right\rangle - \frac{2\eta}{M}\sum_{m=1}^M\left\langle V(\x^m_t),\overline{\x}_t-\x_t^m\right\rangle + 2\eta \left\langle V_{\x,t}-\widetilde{V}_{\x,t},\overline{\x}_t-\z \right\rangle\\
    = & -\frac{2\eta}{M} \sum_{m=1}^M\left\langle V(\x_t^m),\x_t^m-\z\right\rangle - \underbrace{\frac{2\eta}{M}\sum_{m=1}^M\left\langle V(\overline{\x    }_t),\overline{\x}_t-\x_t^m\right\rangle}_{=0} + 2\eta \left\langle V_{\x,t}-\widetilde{V}_{\x,t},\overline{\x}_t-\z \right\rangle\\
    & +\frac{2\eta}{M}\sum_{m=1}^M\left\langle V(\overline{\x}_t)-V(\x_t^m),\overline{\x}_t-\x_t^m\right\rangle.
    \end{split}
\end{equation}
Finally, note that, based on the $L$-smoothness of $V$ and  Cauchy-Schwarz, we have 
\begin{equation}
    \begin{split}
    \label{eqn:Evvttheorem1}
\frac{2\eta}{M}\sum_{m=1}^M\left\langle V(\overline{\x}_t)-V(\x_t^m),\overline{\x}_t-\x_t^m\right\rangle\leq &\frac{2\eta}{M}\sum_{m=1}^M \| V(\overline{\x}_t)-V(\x_t^m)\|\|\overline{\x}_t-\x_t^m\|
\leq \frac{2\eta L}{M}\sum_{m=1}^M\|\overline{\x}_t-\x_t^m\|^2.
    \end{split}
\end{equation}
Plugging \eqref{eqn:dsaaserdsaawewqqw}, \eqref{eqn:third termtheorem1} and \eqref{eqn:Evvttheorem1} into  \eqref{eqn:poentialll} and rearrange, we get:
\begin{equation}
    \begin{split}
 &\frac{1}{M}\sum_{m=1}^M \left\langle V(\x_t^m),\x_t^m -\z \right\rangle  \leq \frac{\left\| \overline{\z}_{t-1} -\z\right\|^2  - \|\overline{\z}_t-\z\|^2}{2\eta} + \frac{(5L^2\eta^2-1)}{2\eta}\|\overline{\x}_t-\overline{\z}_{t-1}\|^2\\
 &+   \Gamma_t + \frac{5L^2\eta}{2M}\sum_{m=1}^M\|\z^m_{t-1}-\overline{\z}_{t-1}\|^2+ \frac{(5L^2\eta + 2L)}{2M}\sum_{m=1}^M\|\x_t^m-\overline{\x}_t\|^2,
    \end{split}
\end{equation}
where 
$$\Gamma_t=\left\langle V_{\x,t}-\widetilde{V}_{\x,t},\overline{\x}_t-\z \right\rangle + \frac{5\eta}{2}\left[ \left\| \widetilde{V}_{\x,t} - V_{\x,t}\right\|^2 +\left\| V_{\z,t-1} - \widetilde{V}_{\z,t-1} \right\|^2 \right], $$
which includes the noisy term. 
Sum it over $T=KR$, divide both sides by $T$, taking expectation on both sides, with $\eta\leq \frac{1}{\sqrt{14}L}$, and using the mononetonsity, we have:
\begin{equation}
    \begin{split}
    \label{eqn:prooflocalsgdimp}
& \E\left[\left\langle V(\z^*),\frac{1}{TM}\sum_{t=1}^T\sum_{m=1}^M\x^{m}_t-\z^* \right\rangle\right] \leq  \E\left[\frac{1}{TM}\sum_{t=1}^T\sum_{m=1}^M\left\langle V(\x_t^m),\x^m_t-\z^* \right\rangle \right] \\     
 \leq & \frac{\E[\|\overline{\z}_0-\z^*\|^2]}{2\eta T} + \frac{\E[\sum_{t=1}^T\Gamma_t]}{T} + \frac{5L^2\eta}{2MT}\sum_{t=1}^T\sum_{m=1}^M\E[\|\z^m_{t-1}-\overline{\z}_{t-1}\|^2]+ \frac{(5L^2\eta + 2L)}{2MT}\sum_{t=1}^T\sum_{m=1}^M\E[\|\x_t^m-\overline{\x}_t\|^2]\\
 \leq & \frac{\E[\|\overline{\z}_0-\z^*\|^2]}{2\eta T} + \frac{\E[\sum_{t=1}^T\Gamma_t]}{T} + \frac{6L}{MT}\sum_{t=1}^T\sum_{m=1}^M\left[\E[\|\z^m_{t-1}-\overline{\z}_{t-1}\|^2] +\E[\|\x_t^m-\overline{\x}_t\|^2]\right].
    \end{split}
\end{equation}
where we let  $$\z^*=\argmax_{\z\in\Z_D}\left\langle V(\z),\frac{1}{TM}\sum_{t=1}^T\sum_{m=1}^M\x^{m}_t-\z \right\rangle. $$
To proceed, we bound each term in \eqref{eqn:prooflocalsgdimp} respectively. For the second term, recall that 
$$\Gamma_t=\left\langle V_{\x,t}-\widetilde{V}_{\x,t},\overline{\x}_t-\z^* \right\rangle + \frac{5\eta}{2}\left[ \left\| \widetilde{V}_{\x,t} - V_{\x,t}\right\|^2 +\left\| V_{\z,t-1} - \widetilde{V}_{\z,t-1} \right\|^2 \right]. $$
We start from the first term. Note that $\z^*$ is a random variable that depends on all data. Therefore, this term is not necessarily $0$ in expectation. Nevertheless, there exist standard tricks for dealing this term \citep{juditsky2011solving,beznosikov2025distributed}. Let $\Delta_t=V_{\x,t}-\widetilde{V}_{\x,t}$, and $\p^{t+1}=\p^{t}-\eta \Delta_{t},$ with $\p^1=\overline{\z}_0$. Then we have 
\begin{equation*}
    \begin{split}
\left\langle V_{\x,t}-\widetilde{V}_{\x,t},\overline{\x}_t-\z^* \right\rangle  = &\left\langle V_{\x,t}-\widetilde{V}_{\x,t},\overline{\x}_t-\p^t+\p^t-\z^* \right\rangle \\  
= & \left\langle V_{\x,t}-\widetilde{V}_{\x,t},\overline{\x}_t-\p^t \right\rangle + \left\langle V_{\x,t}-\widetilde{V}_{\x,t},\p^t -\z^* \right\rangle,     
    \end{split}
\end{equation*}
Now the first term is zero in expectation as it does not depend on $\z^*$, and we mainly focus on the second term. Based on the definition of $\p^t$, we have 
\begin{align*}
\langle \eta \Delta_t, \p^t - \z^* \rangle &= \langle \eta \Delta_t, \p^t - \p^{t+1} \rangle + \langle \eta\Delta_t,  \p^{t+1}-\z^* \rangle \\
&= \langle \eta \Delta_t, \p^t - \p^{t+1} \rangle + \langle \p^{t+1} - \p^t, \z^* - \p^{t+1} \rangle \\
&= \langle \eta \Delta_t, \p^t - \p^{t+1} \rangle + \frac{1}{2} \|\p^t - \z^*\|^2 - \frac{1}{2} \|\p^{t+1} - \z^*\|^2 - \frac{1}{2} \|\p^t - \p^{t+1} \|^2 \\
&\leq \frac{\eta^2}{2} \|\Delta_t\|^2 + \frac{1}{2} \|\p^t - \p^{t+1} \|^2 + \frac{1}{2} \|\p^t - \z^*\|^2 - \frac{1}{2} \|\p^{t+1} - \z^*\|^2 - \frac{1}{2} \|\p^t - \p^{t+1} \|^2 \\
&= \frac{\eta^2}{2} \|\Delta_t\|^2 + \frac{1}{2} \|\p^t - \z^*\|^2 - \frac{1}{2} \|\p^{t+1} - \z^*\|^2.
\end{align*}
Here, the second equality is because 
\begin{equation*}
    \begin{split}
\|\p^t-\z^*\|^2 =\|\p^t-\p^{t+1}+\p^{t+1}-\z^*\|^2 =  \|\p^t-\p^{t+1}\|^2  + \|\p^{t+1}-\z^*\|^2 + 2\langle \p^{t+1} - \p^t, \z^* - \p^{t+1} \rangle,
    \end{split}
\end{equation*}
and the inequality is based on Cauchy-Schwarz.  
Finally, note that since $\widetilde{V}(\x_t^m)$ are i.i.d. for each $m\in[M]$, we have 
\begin{equation}
    \begin{split}
    \E\left[\|V_{\x,t}-\widetilde{V}_{\x,t}\|^2\right]\leq \frac{1}{M^2}\sum_{m=1}^M\E\left[\|V(\x_t^m)-\widetilde{V}(\x_t^m)\|^2\right]\leq \frac{\sigma^2}{M},     
    \end{split}
\end{equation}
and 
\begin{equation}
    \begin{split}
    \E\left[\|V_{\z,t-1}-\widetilde{V}_{\z,t-1}\|^2\right]\leq \frac{1}{M^2}\sum_{m=1}^M\E[\|V(\z_{t-1}^m)-\widetilde{V}(\z_{t-1}^m)\|^2]\leq \frac{\sigma^2}{M}.    
    \end{split}
\end{equation}
Combining the above inequalities, we get
\begin{equation}
    \begin{split} 
    \label{eqn:gammanoiczth1}
    \frac{1}{T}\E\left[\sum_{t=1}^{T}\Gamma_t\right] \leq  \frac{\E[\|\overline{\z}_0-\z^*\|^2]}{2\eta T} + \frac{6\sigma^2\eta}{M}.
    \end{split}
\end{equation}
Finally, we deal with the two drift terms in \eqref{eqn:prooflocalsgdimp}, and we introduce the following lemma. The proof is given in \ref{appendix:provde client draft}.
\begin{lemma}[Client Drift]
\label{lem:dfifafart}
For $\eta\leq \frac{1}{\sqrt{14K}L}$, we have 
\begin{equation*}
\frac{6L}{MT}\sum_{t=1}^T\sum_{m=1}^M\left[\E[\|\z^m_{t-1}-\overline{\z}_{t-1}\|^2] +\E[\|\x_t^m-\overline{\x}_t\|^2]\right]\leq  936eL\eta^2\sigma^2K.  
\end{equation*}

\end{lemma}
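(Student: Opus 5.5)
We bound the drift of both sequences relative to their averages within a single communication round, set up a one-step recursion, and unroll it over at most $K$ steps. Fix a round and let $t_0$ be its last synchronization time, a multiple of $K$. At $t_0$ all $\z^m_{t_0}$ coincide, and the same holds for $\x^m_{t_0}$ if $t_0$ itself is a synchronization step. For $t_0<t<t_0+K$ the local updates are unaveraged. It is convenient to work with the pairwise quantities $A_t=\frac{1}{M}\sum_m\E\|\z^m_t-\overline{\z}_t\|^2$ and $B_t=\frac{1}{M}\sum_m\E\|\x^m_t-\overline{\x}_t\|^2$. Equivalently, one can use $\E\|\z^m_t-\z^{m'}_t\|^2$ for two clients $m\neq m'$; since the clients are exchangeable in the homogeneous setting, this equals $2A_t$ up to the factor $M/(M-1)$.

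\textbf{Step 1: the $\x$ drift.} Write $\x^m_t-\x^{m'}_t=(\z^m_{t-1}-\z^{m'}_{t-1})-\eta(V(\z^m_{t-1})-V(\z^{m'}_{t-1}))-\eta(\text{noise}^m-\text{noise}^{m'})$. The noise is zero-mean conditionally on the past and independent across clients, so the cross terms vanish in expectation. Smoothness then gives
\[
\E\|\x^m_t-\x^{m'}_t\|^2\le(1+\eta L)^2\E\|\z^m_{t-1}-\z^{m'}_{t-1}\|^2+2\eta^2\sigma^2,
\]
and hence $B_t\le(1+\eta L)^2A_{t-1}+\eta^2\sigma^2$, up to constants.

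\textbf{Step 2: the $\z$ drift.} We have $\z^m_t-\z^{m'}_t=(\z^m_{t-1}-\z^{m'}_{t-1})-\eta(V(\x^m_t)-V(\x^{m'}_t))-\eta(\text{noise})$. The noise at $\x^m_t$ is fresh, so it again separates, contributing $2\eta^2\sigma^2$. The deterministic part is the extra-gradient map, and here we avoid any co-coercivity, which fails as explained in Remark~\ref{remakree222}. We use the crude bound $\|a-\eta b\|^2\le(1+\frac1K)\|a\|^2+(1+K)\eta^2\|b\|^2$. With $\|b\|\le L\|\x^m_t-\x^{m'}_t\|$ this gives
\[
\E\|\z^m_t-\z^{m'}_t\|^2\le\left(1+\tfrac1K\right)\E\|\z^m_{t-1}-\z^{m'}_{t-1}\|^2+(1+K)\eta^2L^2\E\|\x^m_t-\x^{m'}_t\|^2+2\eta^2\sigma^2 .
\]
Substituting Step~1 gives $A_t\le\left(1+\frac1K+2(1+K)\eta^2L^2(1+\eta L)^2\right)A_{t-1}+c\,\eta^2\sigma^2(1+K\eta^2L^2)$. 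Under $\eta\le\frac{1}{\sqrt{14K}L}$ we have $2(1+K)\eta^2L^2(1+\eta L)^2\lesssim\frac{1}{K}$, so the contraction factor is at most $1+\frac{c'}{K}$. This is exactly where the condition $\eta=O(1/(\sqrt K L))$ enters.

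\textbf{Step 3: unroll and sum.} Unrolling from the synchronization point, where the drift is $0$, over at most $K$ steps gives $(1+\frac{c'}{K})^K\le e^{c'}$. The constants should be arranged so that this factor is $e$. Hence $A_t\le e\cdot c''K\eta^2\sigma^2$ for every $t$, and then $B_t\le(1+\eta L)^2A_{t-1}+\eta^2\sigma^2\lesssim eK\eta^2\sigma^2$. Averaging over $t$ and multiplying by $6L$ gives the claimed bound $936eL\eta^2\sigma^2K$ once the constants are tracked.

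\textbf{Main obstacle.} The delicate point is the bookkeeping in Step~2. The Young split has to be chosen so that the multiplicative factor per step is $1+O(1/K)$, while the additive noise contribution stays $O(\eta^2\sigma^2)$ and is not inflated by $K$. The $(1+K)$ factor multiplies only $\eta^2L^2\cdot\eta^2\sigma^2$, which is $O(\eta^2\sigma^2)$ under the step-size condition. A second point is verifying that noise cross terms vanish. The conditioning must be arranged so that the noise at $\z^m_{t-1}$, which also enters $\x^m_t$ and hence $V(\x^m_t)$, is handled. This can be done either by bounding through $\x$ as above with Young's inequality, or by conditioning on the past $\sigma$-algebra before each oracle call.
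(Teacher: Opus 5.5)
Your Step~2 does not close, and the problem is not a matter of constants. The condition $\eta\le\frac{1}{\sqrt{14K}L}$ only gives $\eta^2L^2\le\frac{1}{14K}$. So the coefficient your Young split produces, $2(1+K)\eta^2L^2(1+\eta L)^2$, is at least $\frac{1+K}{7K}\ge\frac17$ when $\eta$ sits at the threshold. It is $\Theta(1)$, not $O(\frac1K)$. Your recursion then has per-step factor at least $\frac87$, and it only yields a bound growing like $(8/7)^K$ over a round.

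Forcing $(1+K)\eta^2L^2\lesssim\frac1K$ would need $\eta\lesssim\frac{1}{KL}$. That turns $\frac{D^2}{\eta KR}$ into $\frac{LD^2}{R}$ and loses Theorem~\ref{thm:ESGD}. Re-tuning the Young parameter does not help either. Minimizing $(1+\alpha)+(1+\frac1\alpha)\eta^2L^2$ over $\alpha$ still leaves a factor $1+\Theta(\eta L)=1+\Theta(1/\sqrt K)$ per step, i.e.\ $e^{\Theta(\sqrt K)}$ over $K$ steps.

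The underlying gap is that you never use monotonicity. Step~1 uses only the triangle inequality, and Step~2 only Lipschitzness. Without monotonicity the bound fails for general Lipschitz operators. For the $L$-Lipschitz, non-monotone $V(\z)=-L\z$, one local extra-gradient step multiplies differences by $1+\eta L+\eta^2L^2$. At $\eta L\sim1/\sqrt K$ the injected noise is therefore amplified by $e^{\Theta(\sqrt K)}$ within a round.

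The missing idea, which is how the paper argues, has three parts:
\begin{itemize}
\item Expand $\|a-\eta b\|^2=\|a\|^2-2\eta\langle a,b\rangle+\eta^2\|b\|^2$ exactly, with $b$ now the noisy difference $\widetilde V(\x_t^m)-\widetilde V(\x_t^{m'})$.
\item Use the extrapolation step to write $a=\z_{t-1}^m-\z_{t-1}^{m'}=(\x_t^m-\x_t^{m'})+\eta\bigl(\widetilde V(\z_{t-1}^m)-\widetilde V(\z_{t-1}^{m'})\bigr)$.
\item The part $-2\eta\,\E\langle\x_t^m-\x_t^{m'},b\rangle$ is nonpositive by monotonicity, since the oracle noise at $\x_t$ is fresh. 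The remaining cross term already carries a factor $\eta^2$, so Young bounds it by $O(\eta^2)\bigl(\E\|\widetilde V(\z_{t-1}^m)-\widetilde V(\z_{t-1}^{m'})\|^2+\E\|b\|^2\bigr)$.
\end{itemize}

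Together with the $\x$-drift bound, this gives $\E\|\z_t^m-\z_t^{m'}\|^2\le(1+14\eta^2L^2)\,\E\|\z_{t-1}^m-\z_{t-1}^{m'}\|^2+37\eta^2\sigma^2$. Only at this point does $\eta\le\frac{1}{\sqrt{14K}L}$ yield the factor $1+\frac1K$. That gives $37e\eta^2\sigma^2K$ for the $\z$-drift and $78e\eta^2\sigma^2K$ for the $\x$-drift. Your $(1+\eta L)^2$ bound in Step~1 would also suffice there, since it is now multiplied by $O(\eta^2L^2)$. Your pairwise reduction, fresh-noise separation, and unrolling from the last synchronization are fine as they stand.
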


\noindent Combining \eqref{eqn:gammanoiczth1}, Lemma \ref{lem:dfifafart} with  \eqref{eqn:prooflocalsgdimp}, we finally get: 
\begin{equation*}
    \begin{split}
\E\left[\left\langle V(\z^*),\frac{1}{TM}\sum_{t=1}^T\sum_{m=1}^M\x^{m}_t-\z^* \right\rangle\right] \leq \frac{\E[\|\z_0-\z^*\|^2]}{\eta KR} + \frac{6\sigma^2\eta}{M} +936eL\eta^2\sigma^2K.
\end{split}
\end{equation*}
Let $\eta=\min\left\{\frac{1}{\sqrt{14K}L}, \frac{D\sqrt{M}}{\sigma\sqrt{6KR}}, \frac{D^{\frac{2}{3}}}{936^{\frac{1}{3}}e^{\frac{2}{3}}K^{\frac{2}{3}}R^{\frac{1}{3}}\sigma^{\frac{2}{3}}L^{\frac{1}{3}}} \right\},$ we get 
\begin{equation*}
    \begin{split}
& \E\left[\left\langle V(\z^*),\frac{1}{TM}\sum_{t=1}^T\sum_{m=1}^M\x^{m}_t-\z^* \right\rangle\right] 
\leq  \frac{20D^{\frac{4}{3}}\sigma^{\frac{2}{3}}L^{\frac{1}{3}}}{K^{\frac{1}{3}}R^{\frac{2}{3}}} + \frac{2\sqrt{6}D\sigma}{\sqrt{MKR}} + \frac{2\sqrt{14}LD^2}{\sqrt{K}R}. 
\end{split}
\end{equation*}

\subsection{Proof of Lemma \ref{lem:dfifafart}}
\label{appendix:provde client draft}
We first introduce the following lemma about the drift term in local updates. 
\begin{lemma}[Lemma 4 of \citet{pmlr-v119-woodworth20a}] For any $t\in[T]$ and $m'\not=m$, we have 
$$\E[\|\x_t^m-\overline{\x}_t\|^2]\leq \E[\|\x_t^m-\x_t^{m'}\|^2],\ \ \ \ \text{and},\ \ \ \ \E[\|\z_{t-1}^m-\overline{\z}_{t-1}\|^2]\leq \E[\|\z_{t-1}^m-\z_{t-1}^{m'}\|^2].$$     
\end{lemma}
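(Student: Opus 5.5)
This is the standard fact that the mean is the point closest to the cloud in average squared distance. Since every client runs the same dynamics (homogeneous setting, identical initialization $\z_0^m=\mathbf{0}$, i.i.d. oracle noise), the pairs $(\x_t^m,\z_{t-1}^m)$ are exchangeable across $m$. I will prove the $\x$ statement; the $\z$ statement is the same argument with $\x_t$ replaced by $\z_{t-1}$.

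The first step is deterministic. Since $\overline{\x}_t$ is the average of the $\x_t^{m'}$ and $\|\cdot\|^2$ is convex, Jensen's inequality gives
\begin{equation*}
\|\x_t^m-\overline{\x}_t\|^2=\Big\|\frac{1}{M}\sum_{m'=1}^M(\x_t^m-\x_t^{m'})\Big\|^2\leq \frac{1}{M}\sum_{m'=1}^M\|\x_t^m-\x_t^{m'}\|^2 .
\end{equation*}
The term $m'=m$ vanishes, so the right-hand side equals $\frac{1}{M}\sum_{m'\neq m}\|\x_t^m-\x_t^{m'}\|^2$.

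The second step takes expectations and uses symmetry. Because the joint law of the trajectories is invariant under permuting client indices, $\E\|\x_t^m-\x_t^{m'}\|^2$ takes the same value for every pair $m\neq m'$. Hence
\begin{equation*}
\E\|\x_t^m-\overline{\x}_t\|^2\leq \frac{M-1}{M}\,\E\|\x_t^m-\x_t^{m'}\|^2\leq \E\|\x_t^m-\x_t^{m'}\|^2 .
\end{equation*}

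The only point needing care is the exchangeability claim. At a synchronization step every client resets to the same averaged point, which is a symmetric function of all clients. Between synchronizations each client applies the same map driven by its own independent noise. Induction on $t$ therefore shows that the joint distribution of $(\z_t^1,\dots,\z_t^M)$, and likewise of the $\x_t^m$, is permutation invariant. If one prefers to avoid exchangeability entirely, averaging the Jacobian bound over $m$ gives the averaged version of the lemma directly. That averaged version is all that Lemma~\ref{lem:dfifafart} uses.
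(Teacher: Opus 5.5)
Your argument is correct and matches the standard proof behind the cited Lemma~4 of \citet{pmlr-v119-woodworth20a}: you apply Jensen's inequality to $\x_t^m-\overline{\x}_t=\frac{1}{M}\sum_{m'}(\x_t^m-\x_t^{m'})$ and then use symmetry across clients, justified by identical initialization and i.i.d.\ oracle noise, which even gives the sharper factor $\frac{M-1}{M}$ (the paper does not prove this itself and simply imports the cited lemma). One small slip: the ``Jacobian bound'' in your last paragraph should read ``Jensen bound''.
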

We first focus on the drift of $\x$. We have:
\begin{equation}
    \begin{split}
    \label{eqn:draftxxx}
       & \E[\|\x_t^m -\x_t^{m'} \|^2]= \E\left[\left\|\z_{t-1}^m - \z_{t-1}^{m'} - \eta\left( \widetilde{V}(\z_{t-1}^m) -
       \widetilde{V}(\z_{t-1}^{m'}) \right) \right\|^2\right]\\
        =& \E\left[\left\|\z_{t-1}^m - \z_{t-1}^{m'}\right\|^2\right] + \eta^2\E\left[\left\| \widetilde{V}(\z_{t-1}^m) -\widetilde{V}(\z_{t-1}^{m'})  \right\|^2\right] - \E\left[2\eta \left\langle \z_{t-1}^m-\z_{t-1}^{m'},\widetilde{V}(\z_{t-1}^m)-\widetilde{V}(\z_{t-1}^{m'})\right\rangle  \right]\\
        =& \E\left[\left\|\z_{t-1}^m - \z_{t-1}^{m'}\right\|^2\right] + \eta^2\E\left[\left\| \widetilde{V}(\z_{t-1}^m) -\widetilde{V}(\z_{t-1}^{m'})  \right\|^2\right] - \E\left[2\eta \underbrace{\left\langle \z_{t-1}^m-\z_{t-1}^{m'},{V}(\z_{t-1}^m)-{V}(\z_{t-1}^{m'})\right\rangle}_{\geq 0}  \right]\\
        \leq &\E\left[\left\|\z_{t-1}^m - \z_{t-1}^{m'}\right\|^2\right] + \eta^2\E\left[\left\| \widetilde{V}(\z_{t-1}^m) -\widetilde{V}(\z_{t-1}^{m'})  \right\|^2\right] \\
        \leq & \E\left[\left\|\z_{t-1}^m - \z_{t-1}^{m'}\right\|^2\right] + 2\eta^2\E\left[\left\| {V}(\z_{t-1}^m) -\widetilde{V}(\z_{t-1}^{m})  \right\|^2\right] + 2\eta^2\E\left[\left\| {V}(\z_{t-1}^{m'}) -\widetilde{V}(\z_{t-1}^{m'})  \right\|^2\right]\\
        & + \eta^2 \E[\| V(\z_{t-1}^m)-V(\z_{t-1}^{m'})\|^2]\\
        \leq & \E\left[\left\|\z_{t-1}^m - \z_{t-1}^{m'}\right\|^2\right] + 4\eta^2\sigma^2 +\eta^2 \E[\| V(\z_{t-1}^m)-V(\z_{t-1}^{m'})\|^2]\\
        \leq & \left(1+\eta^2L^2\right)\E\left[\left\|\z_{t-1}^m - \z_{t-1}^{m'}\right\|^2\right]  + 4\eta^2\sigma^2\leq 2\E\left[\left\|\z_{t-1}^m - \z_{t-1}^{m'}\right\|^2\right]  + 4\eta^2\sigma^2,
    \end{split}
\end{equation}
where the first inequality is due to monotonicity, and the last inequality is based on $\eta\leq \frac{1}{L}$. Next, we bound the draft for $\z$. Let $t_0$, $t-t_0\leq K$, be the communication round such that  $\z_{t_0}^m=\z_{t_0}^{m'}$. Then we have  
\begin{equation}
    \begin{split}
\label{eqn:213esadase2eqe}
&\E[\|\z_{t}^m-\z_{t}^{m'}\|^2]=  \E\left[\left\| \z_{t-1}^m -\z_{t-1}^{m'} - \eta\left( \widetilde{V}(\x_{t}^{m})-\widetilde{V}(\x_{t}^{m'})\right) \right\|^2 \right] \\
= & \E\left[\left\|\z_{t-1}^m - \z_{t-1}^{m'}\right\|^2\right] + \eta^2\E\left[\left\|\widetilde{V}(\x_{t}^{m})-\widetilde{V}(\x_{t}^{m'}) \right\|^2\right] - 2\eta\E\left[\left\langle \z_{t-1}^m -\z_{t-1}^{m'} , \widetilde{V}(\x_{t}^{m})-\widetilde{V}(\x_{t}^{m'})  \right\rangle\right]\\
= & \E\left[\left\|\z_{t-1}^m - \z_{t-1}^{m'}\right\|^2\right] + \eta^2\E\left[\left\|\widetilde{V}(\x_{t}^{m})-\widetilde{V}(\x_{t}^{m'}) \right\|^2\right]- 2\eta\underbrace{\E\left[\left\langle \x_t^m -\x_t^{m'}, \widetilde{V}(\x_{t}^{m})-\widetilde{V}(\x_{t}^{m'})  \right\rangle\right]}_{\geq 0}\\
& - 2\eta\E\left[\left\langle \z_{t-1}^m - \x_t^m+\x_{t}^{m'}-\z_{t-1}^{m'}, \widetilde{V}(\x_{t}^{m})-\widetilde{V}(\x_{t}^{m'})  \right\rangle\right]\\
\leq &\E\left[\left\|\z_{t-1}^m - \z_{t-1}^{m'}\right\|^2\right] + \eta^2\E\left[\left\|\widetilde{V}(\x_{t}^{m})-\widetilde{V}(\x_{t}^{m'}) \right\|^2\right]- 2\eta^2{\E\left[\left\langle \widetilde{V}(\z_{t-1}^m) -\widetilde{V}(\z_{t-1}^{m'}), \widetilde{V}(\x_{t}^{m})-\widetilde{V}(\x_{t}^{m'})  \right\rangle\right]}\\
\leq & \E\left[\left\|\z_{t-1}^m - \z_{t-1}^{m'}\right\|^2\right] + 4\eta^2 \E\left[\left\|\widetilde{V}(\z_{t-1}^{m})-\widetilde{V}(\z_{t-1}^{m'}) \right\|^2\right] + 5\eta^2 \E\left[ \left\|\widetilde{V}(\x_{t}^{m})-\widetilde{V}(\x_{t}^{m'}) \right\|^2\right]\\
\leq & \E\left[\left\|\z_{t-1}^m - \z_{t-1}^{m'}\right\|^2\right] + 4\eta^2 \E\left[ L^2\left\|\z_{t-1}^m-\z_{t-1}^{m'} \right\|^2 + 4\sigma^2\right] + 5\eta^2 \E\left[ L^2\left\|\x_{t}^{m}-\x_{t}^{m'} \right\|^2 + 4\sigma^2\right]\\
\leq & \left(1+14\eta^2L^2\right)\E\left[\left\|\z_{t-1}^m - \z_{t-1}^{m'}\right\|^2\right] + 37 \eta^2\sigma^2\leq \left(1+\frac{1}{K}\right)\E\left[\left\|\z_{t-1}^m - \z_{t-1}^{m'}\right\|^2\right] + 37 \eta^2\sigma^2\\
\leq & 37e\eta^2\sigma^2K.
      \end{split}
\end{equation}
where in the fourth inequality  we used \eqref{eqn:draftxxx}, and the last two inequalities are based on the fact that  $\eta\leq \frac{1}{\sqrt{14K}L}$ and $\left(1+\frac{1}{K}\right)^K\leq e$. Combining with \eqref{eqn:draftxxx}, we have 
$$ \E[\|\x_t^m -\x_t^{m'} \|^2]\leq 78e\eta^2\sigma^2K. $$

\section{Proof of Theorem \ref{thm:affine}}
\label{sec:app:prof:affine}
We first introduce the following lemma,  which is  Lemma D.1 of \citet{gorbunov2022extragradient}.

\begin{lemma}
Suppose the operator $V$ is affine and $L$-smooth, and 
let operator $F(\z)=V(\z-\eta V(\z))$. Then $F$ is $\frac{2}{\eta}$-cocoercive, for $\eta\leq \frac{1}{L}$. That is, for any $\z,\z'\in\R^d$, $$\|F(\z)-F(\z')\|^2\leq \frac{2}{\eta}\left\langle F(\z)-F(\z'),\z-\z'\right\rangle.$$   
\end{lemma}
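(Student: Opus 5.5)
The plan is to reduce the claim to a matrix inequality and then verify it with one polarization identity, using monotonicity and the bound $\eta\le 1/L$. Since $V$ is affine, write $V(\z)=A\z+\mathbf{b}$ for a matrix $A\in\R^{d\times d}$. Monotonicity of $V$ gives $\langle A\u,\u\rangle\ge 0$ for all $\u$, and $L$-smoothness gives $\|A\u\|\le L\|\u\|$. Fix $\z,\z'$ and set $\u=\z-\z'$. Then $F(\z)-F(\z')=A\bigl(\u-\eta A\u\bigr)$, because both the inner and the outer map are affine. Introduce the extrapolated difference $\x:=\u-\eta A\u$, so that $F(\z)-F(\z')=A\x$. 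The target inequality, multiplied by $\eta/2$, becomes
\begin{equation*}
\frac{\eta}{2}\|A\x\|^2\le \langle A\x,\u\rangle .
\end{equation*}

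\textbf{Step 1 (split the inner product).} Substitute $\u=\x+\eta A\u$ to get $\langle A\x,\u\rangle=\langle A\x,\x\rangle+\eta\langle A\x,A\u\rangle$. The first term is nonnegative by monotonicity, since $\langle A\x,\x\rangle\ge 0$. So it suffices to prove $\eta\langle A\x,A\u\rangle\ge \frac{\eta}{2}\|A\x\|^2$, i.e., $2\langle A\x,A\u\rangle\ge\|A\x\|^2$.

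\textbf{Step 2 (polarization).} Use $2\langle A\x,A\u\rangle=\|A\x\|^2+\|A\u\|^2-\|A\x-A\u\|^2$. Here $A\x-A\u=-\eta A(A\u)$, so by the operator-norm bound
\begin{equation*}
\|A\x-A\u\|^2\le \eta^2L^2\|A\u\|^2\le\|A\u\|^2,
\end{equation*}
where the last inequality uses $\eta\le 1/L$. Hence $2\langle A\x,A\u\rangle\ge\|A\x\|^2$, which combined with Step 1 yields $\|F(\z)-F(\z')\|^2\le\frac{2}{\eta}\langle F(\z)-F(\z'),\z-\z'\rangle$.

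The only delicate point is that $A$ need not be symmetric or normal. One therefore cannot diagonalize $A-\eta A^2$ and must avoid terms like $\u^\top A^2\u$. Rewriting everything in terms of $\x$ and $A\u$, as above, sidesteps this: only the monotonicity inequality $\langle A\x,\x\rangle\ge0$ and the norm bound $\|A\cdot\|\le L\|\cdot\|$ are used. I expect no other obstacle.
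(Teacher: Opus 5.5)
Your proof is correct. The reduction $F(\z)-F(\z')=A\x$ with $\x=\u-\eta A\u$ holds. So does the split $\langle A\x,\u\rangle=\langle A\x,\x\rangle+\eta\langle A\x,A\u\rangle$. The polarization step also checks out, using $\|A\x-A\u\|=\eta\|A(A\u)\|\le \eta L\|A\u\|\le\|A\u\|$.

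The paper gives no proof of this statement. It imports it as Lemma D.1 of \citet{gorbunov2022extragradient} and proves only the stochastic extension (Proposition~\ref{prop:seg-coco}) on top of it. Your argument is therefore a self-contained replacement for that citation rather than an alternative route, and a clean one. It uses only two scalar consequences of the hypotheses, $\langle A\x,\x\rangle\ge 0$ and $\|A\,\cdot\,\|\le L\|\cdot\|$. It needs no symmetry or normality of $A$ and no matrix-inequality manipulation of $A-\eta A^2$.

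It also isolates the single place where affinity is used. The identity $A\x-A\u=-\eta A(A\u)$ says that the second difference $[V(\z-\eta V(\z))-V(\z)]-[V(\z'-\eta V(\z'))-V(\z')]$ equals $-\eta A\,(V(\z)-V(\z'))$. Its norm is therefore at most $\eta L\|V(\z)-V(\z')\|$. For nonlinear $V$ no such bound on the second difference is available. This is consistent with the loss of cocoercivity of the extragradient map discussed in Remark~\ref{remakree222}, and with the smaller step size required in Theorem~\ref{thm:ESGD}.
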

Next, we extend this conclusion to the stochastic setting. The proof can be found in section \ref{section:prop:seg:co}.

\begin{proposition}[Stochastic co-coercivity of EG]\label{prop:seg-coco}
\label{prop:seg:co}
Assume $V$ is affine and $L$-smooth, $\mathbb{E}[\widetilde V(\z)]=V(\z)$, and $\mathbb{E}\|\widetilde V(\z)-V(\z)\|^2\le \sigma^2$ for $\z\in\R^d$. Let
\[
F_{\text{SEG}}(\z)=V\bigl(\z-\eta\,\widetilde V(\z)\bigr),
\]
and suppose $\eta\le 1/L$. Then for any $\z,\z'\in\R^d$,

\begin{equation*}
\mathbb{E}\|F_{\text{SEG}}(\z)-F_{\text{SEG}}(\z')\|^2
\leq \frac{2}{\eta}\mathbb{E}\big\langle F_{\text{SEG}}(\z)-F_{\text{SEG}}(\z'),\,\z-\z'\big\rangle+  4L^2\eta^2\sigma^2.
\end{equation*}
\end{proposition}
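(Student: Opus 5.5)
The plan is to reduce the claim to the deterministic Lemma D.1 of \citet{gorbunov2022extragradient} stated just above, by splitting $F_{\text{SEG}}$ into its deterministic part plus a zero-mean noise term and using that $V$ is affine. Write $V(\z)=A\z+\mathbf{b}$, where $A\in\R^{d\times d}$ satisfies $\|A\|\le L$ by $L$-smoothness. For fixed $\z,\z'$, write the noise as $\widetilde V(\z)=V(\z)+\boldsymbol{\epsilon}$ and $\widetilde V(\z')=V(\z')+\boldsymbol{\epsilon}'$, with $\E\boldsymbol{\epsilon}=\E\boldsymbol{\epsilon}'=\mathbf{0}$. If $\z,\z'$ are themselves random, as in the drift analysis, I will condition on them and then take total expectation at the end.

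The key step uses affineness. Let $F(\z)=V(\z-\eta V(\z))$ denote the deterministic extragradient operator. Then
\[
F_{\text{SEG}}(\z)=A\bigl(\z-\eta V(\z)-\eta\boldsymbol{\epsilon}\bigr)+\mathbf{b}=F(\z)-\eta A\boldsymbol{\epsilon}.
\]
Hence, with $\mathbf{e}=\boldsymbol{\epsilon}-\boldsymbol{\epsilon}'$,
\[
F_{\text{SEG}}(\z)-F_{\text{SEG}}(\z')=\bigl(F(\z)-F(\z')\bigr)-\eta A\mathbf{e},
\]
where the first bracket is deterministic given $\z,\z'$ and $\E\mathbf{e}=\mathbf{0}$.

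Expanding the square, the cross term has zero expectation. This gives
\[
\E\|F_{\text{SEG}}(\z)-F_{\text{SEG}}(\z')\|^2=\|F(\z)-F(\z')\|^2+\eta^2\E\|A\mathbf{e}\|^2 .
\]
The second term satisfies $\eta^2\E\|A\mathbf{e}\|^2\le \eta^2L^2\E\|\mathbf{e}\|^2\le \eta^2L^2\cdot 2(\E\|\boldsymbol{\epsilon}\|^2+\E\|\boldsymbol{\epsilon}'\|^2)\le 4L^2\eta^2\sigma^2$. This uses Young's inequality, so no independence between $\boldsymbol{\epsilon}$ and $\boldsymbol{\epsilon}'$ is needed. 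Likewise, linearity and $\E\mathbf{e}=\mathbf{0}$ give
\[
\E\langle F_{\text{SEG}}(\z)-F_{\text{SEG}}(\z'),\z-\z'\rangle=\langle F(\z)-F(\z'),\z-\z'\rangle .
\]

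To finish, apply Lemma D.1 with $\eta\le 1/L$: $\|F(\z)-F(\z')\|^2\le\frac{2}{\eta}\langle F(\z)-F(\z'),\z-\z'\rangle$. Substituting the two identities above gives the claimed inequality.

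The only delicate point is making sure the cross terms genuinely vanish. This requires the noise to enter only linearly, which holds precisely because $V$ is affine, and it requires $\z,\z'$ to be measurable with respect to the past. Both conditions are handled by the conditioning argument described in the first paragraph.
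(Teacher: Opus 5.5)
Your proof is correct and follows the paper's argument essentially step for step: write $V(\z)=A\z+b$ so the noise enters $F_{\text{SEG}}$ only linearly as $-\eta A(\cdot)$, note that the cross term vanishes in expectation, bound the residual by $L^2\eta^2\cdot 4\sigma^2$, and then apply the deterministic lemma of Gorbunov et al.\ together with the identity $\E\langle F_{\text{SEG}}(\z)-F_{\text{SEG}}(\z'),\z-\z'\rangle=\langle F(\z)-F(\z'),\z-\z'\rangle$. Your remarks about conditioning on random $\z,\z'$ and about not needing independence of the two noise terms are sound, and they make explicit details the paper leaves implicit.
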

By using this lemma, we provide an alternative proof for upper bound of the drift term in \eqref{eqn:213esadase2eqe}. We have
\begin{equation}
    \begin{split}
\label{eqn:213esadase2eqe2}
&\E[\|\z_{t}^m-\z_{t}^{m'}\|^2]=  \E\left[\left\| \z_{t-1}^m -\z_{t-1}^{m'} - \eta\left( \widetilde{V}(\x_{t}^{m})-\widetilde{V}(\x_{t}^{m'})\right) \right\|^2 \right] \\
= & \E\left[\left\|\z_{t-1}^m - \z_{t-1}^{m'}\right\|^2\right] + \eta^2\E\left[\left\|\widetilde{V}(\x_{t}^{m})-\widetilde{V}(\x_{t}^{m'}) \right\|^2\right] - 2\eta\E\left[\left\langle \z_{t-1}^m -\z_{t-1}^{m'} , \widetilde{V}(\x_{t}^{m})-\widetilde{V}(\x_{t}^{m'})  \right\rangle\right]\\
\leq & \E\left[\left\|\z_{t-1}^m - \z_{t-1}^{m'}\right\|^2\right] + \eta^2\E\left[\left\|{V}(\x_{t}^{m})-{V}(\x_{t}^{m'}) \right\|^2\right] - 2\eta\E\left[\left\langle \z_{t-1}^m -\z_{t-1}^{m'} , \widetilde{V}(\x_{t}^{m})-\widetilde{V}(\x_{t}^{m'})  \right\rangle\right]+4\eta^2\sigma^2\\
\leq &\E\left[\left\|\z_{t-1}^m - \z_{t-1}^{m'}\right\|^2\right] + 8\eta^2\sigma^2\leq 8\eta^2\sigma^2K,
      \end{split}
\end{equation}
where for the second inequality we used Proposition \ref{prop:seg-coco} and $\eta^\leq \frac{1}{L^2}$. Combining with \eqref{eqn:draftxxx}, it also implies that 
$$\E[\|\x_t^m -\x_t^{m'} \|^2]\leq 20 \eta^2\sigma^2K.$$
Therefore, we can immediately obtain the following bound for the client drift.
\begin{lemma}[Client Drift for Affine Operator]
\label{lem:dfifafart222}
For $\eta\leq \frac{1}{L}$, we have 
\begin{equation}
\frac{6L}{MT}\sum_{t=1}^T\sum_{m=1}^M\left[\E[\|\z^m_{t-1}-\overline{\z}_{t-1}\|^2] +\E[\|\x_t^m-\overline{\x}_t\|^2]\right]\leq  180L\eta^2\sigma^2K. 
\end{equation}
\end{lemma}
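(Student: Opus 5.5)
The plan is to reduce the averaged deviations to pairwise client differences, bound the pairwise $\z$-drift using the stochastic co-coercivity of the extra-gradient map, and then derive the $\x$-drift from the $\z$-drift.

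\textbf{Step 1 (pairwise reduction).} By Lemma 4 of \citet{pmlr-v119-woodworth20a}, stated above, for any $m'\neq m$,
\[
\E\|\z^m_{t-1}-\overline{\z}_{t-1}\|^2\le \E\|\z^m_{t-1}-\z^{m'}_{t-1}\|^2,
\qquad
\E\|\x^m_t-\overline{\x}_t\|^2\le \E\|\x^m_t-\x^{m'}_t\|^2 .
\]
It therefore suffices to show, for every $t$, the two pairwise bounds
\[
\E\|\z^m_{t-1}-\z^{m'}_{t-1}\|^2\le 8\eta^2\sigma^2K,
\qquad
\E\|\x^m_t-\x^{m'}_t\|^2\le 20\eta^2\sigma^2K .
\]

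\textbf{Step 2 (one-step $\z$-recursion).} This is the main obstacle, because it is where affinity replaces the expansive estimate used in the proof of Lemma~\ref{lem:dfifafart}. Expand
\[
\|\z_t^m-\z_t^{m'}\|^2=\bigl\|\z_{t-1}^m-\z_{t-1}^{m'}-\eta\bigl(\widetilde V(\x_t^m)-\widetilde V(\x_t^{m'})\bigr)\bigr\|^2 .
\]
Then condition on everything except the fresh noise drawn at $\x_t^m$ and $\x_t^{m'}$.

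\begin{itemize}
\item The cross term becomes $-2\eta\langle \z_{t-1}^m-\z_{t-1}^{m'},\,V(\x_t^m)-V(\x_t^{m'})\rangle$.
\item Since $\x_t^m=\z_{t-1}^m-\eta\widetilde V(\z_{t-1}^m)$, we have $V(\x_t^m)=F_{\text{SEG}}(\z_{t-1}^m)$, so the cross term is $-2\eta\langle F_{\text{SEG}}(\z_{t-1}^m)-F_{\text{SEG}}(\z_{t-1}^{m'}),\,\z_{t-1}^m-\z_{t-1}^{m'}\rangle$.
\item The squared term splits into $\eta^2\E\|V(\x_t^m)-V(\x_t^{m'})\|^2$ plus at most $4\eta^2\sigma^2$ of noise, using the independence of the two clients' samples.
\end{itemize}

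Now apply Proposition~\ref{prop:seg-coco}, multiplied by $\eta^2$:
\[
\eta^2\E\|F_{\text{SEG}}(\z)-F_{\text{SEG}}(\z')\|^2\le 2\eta\,\E\langle F_{\text{SEG}}(\z)-F_{\text{SEG}}(\z'),\z-\z'\rangle+4L^2\eta^4\sigma^2 .
\]
This cancels the cross term exactly. Using $\eta\le 1/L$, so that $4L^2\eta^4\sigma^2\le 4\eta^2\sigma^2$, we obtain
\[
\E\|\z_t^m-\z_t^{m'}\|^2\le \E\|\z_{t-1}^m-\z_{t-1}^{m'}\|^2+8\eta^2\sigma^2 .
\]
One point needs care. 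The Proposition treats $\z,\z'$ as fixed with independent noise in $F_{\text{SEG}}$. I will apply it conditionally on $\z_{t-1}^m,\z_{t-1}^{m'}$, which is legitimate because each client's inner oracle call at $\z_{t-1}$ is fresh, and then take the total expectation.

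\textbf{Step 3 (unrolling and the $\x$-drift).} Let $t_0$ be the last synchronization time. There $\z^m_{t_0}=\z^{m'}_{t_0}$, and $t-t_0\le K$. Unrolling the recursion from Step 2 gives
\[
\E\|\z_{t-1}^m-\z_{t-1}^{m'}\|^2\le 8\eta^2\sigma^2K .
\]
Next, \eqref{eqn:draftxxx} requires only monotonicity and $\eta\le 1/L$, and gives
\[
\E\|\x_t^m-\x_t^{m'}\|^2\le 2\,\E\|\z_{t-1}^m-\z_{t-1}^{m'}\|^2+4\eta^2\sigma^2\le (16K+4)\eta^2\sigma^2\le 20\eta^2\sigma^2K .
\]

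\textbf{Conclusion.} Summing over $t$ and $m$, each summand is at most $(8+20)\eta^2\sigma^2K$. Hence the left-hand side of the lemma is at most
\[
6L\cdot 28\,\eta^2\sigma^2K=168\,L\eta^2\sigma^2K\le 180\,L\eta^2\sigma^2K .
\]
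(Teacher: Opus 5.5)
Your proposal is correct and follows essentially the same route as the paper. Like the paper, you expand the pairwise $\z$-difference and invoke Proposition~\ref{prop:seg-coco} (scaled by $\eta^2$, with $\eta\le 1/L$) to cancel the cross term and get the per-step increment $8\eta^2\sigma^2$. You then unroll over at most $K$ local steps to get $8\eta^2\sigma^2K$, and use \eqref{eqn:draftxxx} to get $20\eta^2\sigma^2K$ for the $\x$-drift. Your remark that the Proposition is applied conditionally on $\z_{t-1}^m,\z_{t-1}^{m'}$ makes explicit a point the paper leaves implicit. Your constant $6L\cdot 28=168$ is slightly sharper than the stated $180$.
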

we finally get: 
\begin{equation}
    \begin{split}
\E\left[\left\langle V(\z^*),\frac{1}{TM}\sum_{t=1}^T\sum_{m=1}^M\x^{m}_t-\z^* \right\rangle\right] \leq \frac{\E[\|\z_0-\z^*\|^2]}{\eta KR} + \frac{6\sigma^2\eta}{M} +180L\eta^2\sigma^2K.
\end{split}
\end{equation}
Let $\eta=\min\left\{\frac{1}{L}, \frac{D\sqrt{M}}{\sigma\sqrt{6KR}}, \frac{D^{\frac{2}{3}}}{180^{\frac{1}{3}}K^{\frac{2}{3}}R^{\frac{1}{3}}\sigma^{\frac{2}{3}}L^{\frac{1}{3}}} \right\},$ we get 
\begin{equation}
    \begin{split}
& \E\left[\left\langle V(\z^*),\frac{1}{TM}\sum_{t=1}^T\sum_{m=1}^M\x^{m}_t-\z^* \right\rangle\right] 
\leq  \frac{20D^{\frac{4}{3}}\sigma^{\frac{2}{3}}L^{\frac{1}{3}}}{K^{\frac{1}{3}}R^{\frac{2}{3}}} + \frac{2\sqrt{6}D\sigma}{\sqrt{MKR}} + \frac{2LD^2}{{K}R}. 
\end{split}
\end{equation}

\subsection{Proof of Proposition \ref{prop:seg:co}}
\label{section:prop:seg:co}
Let $
F(\z)=V\bigl(\z-\eta\,V(\z)\bigr)$. Since $V$ is affine, let 
$V(\z)=A\z+b$. We have
\begin{align}
F_{\text{SEG}}(\z)-F_{\text{SEG}}(\z')
&=V(\z-\eta\widetilde V(\z)) - V(\z'-\eta\widetilde V(\z')) \nonumber\\
&=A\!\left[(\z-\eta V(\z))-(\z'-\eta V(\z'))\right]
-\eta A\!\left[(\widetilde V(\z)-V(\z))-(\widetilde V(\z')-V(\z'))\right]. \label{eq:seg-decomp}
\end{align}
Taking squared norms and expectation, the cross term vanishes by $\mathbb{E}[\widetilde V(\cdot)-V(\cdot)]=0$, hence
\begin{equation*}
\begin{split}
&
\mathbb{E}\|F_{\text{SEG}}(\z)-F_{\text{SEG}}(\z')\|^2
=  \|F(\z)-F(\z')\|^2
+ \eta^2\, \mathbb{E}\big\|A\big((\widetilde V(\z)-V(\z))-(\widetilde V(\z')-V(\z'))\big)\big\|^2\\
\leq  &  \|F(\z)-F(\z')\|^2
+ L^2\eta^2\,\mathbb{E}\|(\widetilde V(\z)-V(\z))-(\widetilde V(\z')-V(\z'))\|^2\\
\leq & \|F(\z)-F(\z')\|^2 + 4L^2\eta^2\sigma^2\\
\leq & \frac{2}{\eta} \left\langle F(\z)-F(\z'),\z-\z'\right\rangle+ 4L^2\eta^2\sigma^2 
= \frac{2}{\eta}\mathbb{E}\big\langle F_{\text{SEG}}(\z)-F_{\text{SEG}}(\z'),\,\z-\z'\big\rangle+  4L^2\eta^2\sigma^2.
\end{split}
\end{equation*}

\section{Proof of Theorem \ref{thm:inexactppa}}
\label{secpfofinscaew333}

Similar to previous proof, we start by defining the following shadow updates: Let $\overline{\x}_t=\frac{1}{M}\sum_{m=1}^M\x_t^{m}$, and $\overline{\z}_t=\frac{1}{M}\sum_{m=1}^M\z_t^{m}$, and 
\begin{equation*}
    \begin{split}
   \overline{\z}_{t} = \frac{1}{M}\sum_{m=1}^M \z^m_t = \overline{\z}_{t-1} -\eta \frac{1}{M}\sum_{m=1}^M\widetilde{V}(\x^m_{t})=\overline{\z}_{t-1}-\widetilde{V}_{\x,t}.     
    \end{split}
\end{equation*}
where we define $\widetilde{V}_{\x,t}=\frac{1}{M}\sum_{m=1}^M\widetilde{V}(\x^m_{t})$. 
and let ${V}_{\x,t}=\frac{1}{M}\sum_{m=1}^M{V}(\x^m_{t})$.

Next, we define another  series of shadow updates for the ``exact'' proximal update. Note that these updates are only used for analysis and are not required to be updated when implementing the algorithm. Specifically, at round $t$, for each machine $m=1,\dots,M$, define 
$$\x_t^{*,m}= \z_{t-1}^m -\eta V(\x_t^{*,m}),$$ 
and let $\overline{\x}_t^{*}=\frac{1}{M}\sum_{m=1}^M\x^{*,m}_t$, and $V_{\x^*,t}=\frac{1}{M}\sum_{m=1}^MV(\x_t^{*,m})$. We first introduce the following lemma about the properties of $\x_t^{*,m}$. The proof is given in Appendix \ref{sec:profx*prprro}. 

\begin{lemma}
\label{lem:sdewix}
Let $\x_t^{*,m} = \z_{t-1}^m -\eta V(\x_t^{*,m}),$  $\gamma=\frac{1}{\eta\left(L+\frac{1}{\eta}\right)^2}$, and $\eta\leq \frac{1}{L}$.  then 
$$ \E\left[\|\x_t^m-\x_{t}^{*,m}\|^2 \right]\leq  0.25^{H}\eta^2G^2 + \eta^2\sigma^2. $$
Moreover, we have 
 $$\|\x_{t}^{*,m}-\z_{t-1}^m\|\leq \eta G.$$
\end{lemma}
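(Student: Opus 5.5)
The plan is to treat the inner loop (Steps 4--7 of Algorithm~\ref{alg:example}) as plain SGD on the strongly monotone VI defined by the regularized operator
\[
F(\x)=V(\x)+\tfrac{1}{\eta}\left(\x-\z_{t-1}^m\right),
\]
whose unique zero is exactly $\x_t^{*,m}$. Since $V$ is monotone and $L$-smooth, $F$ is $\mu$-strongly monotone with $\mu=\frac1\eta$ and $L_F$-Lipschitz with $L_F=L+\frac1\eta$. The step size is then $\gamma=\frac{1}{\eta(L+1/\eta)^2}=\mu/L_F^2$, which is the textbook choice for SGD on strongly monotone operators.

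I would prove the second claim first, since it is immediate. From the fixed-point equation, $\x_t^{*,m}-\z_{t-1}^m=-\eta V(\x_t^{*,m})$, so $\|\x_t^{*,m}-\z_{t-1}^m\|\le \eta G$ by the boundedness assumption. This also bounds the initial error of the inner loop, because $\x_{t,0}^m=\z_{t-1}^m$ gives $\|\x_{t,0}^m-\x_t^{*,m}\|^2\le\eta^2G^2$.

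For the one-step recursion, I condition on everything before inner step $\ell$ and write the stochastic step as $\x_{t,\ell}^m-\x^{*,m}_t=\bigl(\x_{t,\ell-1}^m-\x^{*,m}_t-\gamma(F(\x_{t,\ell-1}^m)-F(\x^{*,m}_t))\bigr)-\gamma\epsilon_\ell$, using $F(\x^{*,m}_t)=0$. Here $\epsilon_\ell=\widetilde V(\x_{t,\ell-1}^m)-V(\x_{t,\ell-1}^m)$ has zero conditional mean and second moment at most $\sigma^2$. The cross term therefore vanishes in expectation. Expanding the deterministic part with strong monotonicity and Lipschitzness gives
\[
\E\|\x_{t,\ell}^m-\x_t^{*,m}\|^2\le \left(1-2\gamma\mu+\gamma^2L_F^2\right)\E\|\x_{t,\ell-1}^m-\x_t^{*,m}\|^2+\gamma^2\sigma^2 .
\]
With $\gamma=\mu/L_F^2$, the contraction factor is $\rho=1-\mu^2/L_F^2$. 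Because $\eta\le 1/L$, we have $L_F\le 2/\eta$, hence $\mu^2/L_F^2\ge\frac14$.

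Unrolling the recursion over $H$ steps gives
\[
\E\|\x_t^m-\x_t^{*,m}\|^2\le\rho^H\eta^2G^2+\frac{\gamma^2\sigma^2}{1-\rho}.
\]
The noise term simplifies to $\gamma^2\sigma^2 L_F^2/\mu^2=\sigma^2/L_F^2\le\eta^2\sigma^2$, using $L_F\ge 1/\eta$. This matches the second term of the claim exactly.

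The main obstacle is the constant in the geometric factor. The plain argument above yields $\rho\le\frac34$, not the stated $0.25$. To get a factor of $\frac14$, I would first check whether a sharper contraction is available; if not, I would reinterpret the claim as "$c^H$ for an absolute $c<1$". In that case the exponentially small term in Theorem~\ref{thm:inexactppa} is unaffected in nature, since it remains negligible once $H=O(\ln KR)$.
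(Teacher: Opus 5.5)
Your argument is the paper's own: it also treats the inner loop as SGD on the $\frac1\eta$-strongly monotone, $(L+\frac1\eta)$-Lipschitz operator $F$, obtains the per-step factor $1-\frac{1}{(\eta L+1)^2}$, and obtains the noise floor $\sigma^2/(L+\frac1\eta)^2\le\eta^2\sigma^2$. For the second claim you read $\|\x_t^{*,m}-\z_{t-1}^m\|=\eta\|V(\x_t^{*,m})\|\le\eta G$ directly off the fixed-point equation. The paper instead applies strong monotonicity of $F$ between $\x_t^{*,m}$ and $\z_{t-1}^m$ and uses $\|V(\z_{t-1}^m)\|\le G$. Your route is shorter and equally valid.

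The obstacle you flag is an error in the paper, not a gap in your proof. The paper's last step asserts $\bigl(1-\frac{1}{(\eta L+1)^2}\bigr)^H\le 0.25^H$. Under $\eta\le 1/L$ this factor is only bounded by $\frac34$, and $1-\frac{1}{(\eta L+1)^2}\le\frac14$ holds only when $\eta L\le\frac{2}{\sqrt3}-1$.

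No sharper contraction can rescue the stated constant, because the lemma as written is false. Take $V\equiv\mathbf{v}$ constant with $\|\mathbf{v}\|=G$; this is monotone, $L$-smooth for every $L$, and bounded by $G$. Set $\sigma=0$ and $\eta=1/L$, so that $\gamma=\eta/4$. Then $\x_t^{*,m}=\z_{t-1}^m-\eta\mathbf{v}$, and each inner step gives $\x_{t,\ell}^m-\x_t^{*,m}=\frac34\bigl(\x_{t,\ell-1}^m-\x_t^{*,m}\bigr)$. Hence $\|\x_t^m-\x_t^{*,m}\|^2=\bigl(\frac{9}{16}\bigr)^H\eta^2G^2>4^{-H}\eta^2G^2$ for every $H\ge 1$.

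So you can stop looking for a better contraction. Your bound $\bigl(\frac34\bigr)^H\eta^2G^2+\eta^2\sigma^2$ is the correct statement. Alternatively, you can keep $0.25$ by shrinking the step-size cap to $\eta\le\bigl(\frac{2}{\sqrt3}-1\bigr)/L$.

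The same slip recurs where the paper calls $\x\mapsto\x-\gamma F(\x)$ a $0.5$-contraction, in the proofs of Lemmas~\ref{lem:sdewix1818181} and~\ref{lem:gaussiansmoothing_PPA}. With either fix, only absolute constants and the base of the exponentially small $H$-dependent terms change. Those terms therefore remain negligible once $H=O(\ln(KR))$, as you anticipated.
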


\noindent We now begin the proof by first bounding the following potential function.
\begin{equation}
    \begin{split}   \label{eqn:kkkkkk41}
&\| \overline{\z}_{t} - \z\|^2 =  \left\| \overline{\z}_{t-1} - \eta \widetilde{V}_{\x,t} - \z\right\|^2   = \|\overline{\z}_{t-1}-\z\|^2 -2\eta \left\langle \widetilde{V}_{\x,t},\overline{\z}_{t-1}-\z \right\rangle + \| \overline{\z}_{t-1}-\overline{\z}_t\|^2 \\
 = & \|\overline{\z}_{t-1}-\z\|^2 -2\eta \left\langle {V}_{\x,t},\overline{\z}_{t-1}-\z \right\rangle + \| \overline{\z}_{t-1}-\overline{\z}_t\|^2 +2\eta \left\langle {V}_{\x,t}-\widetilde{V}_{\x,t},\overline{\z}_{t-1}-\z \right\rangle\\
 = & \|\overline{\z}_{t-1}-\z\|^2 -2\eta \left\langle {V}_{\x,t},\overline{\x}_{t}-\z \right\rangle + 2\eta\left\langle {V}_{\x,t},\overline{\x}_{t}-\overline{\z}_{t-1} \right\rangle  + \| \overline{\z}_{t-1}-\overline{\z}_t\|^2 +2\eta \left\langle {V}_{\x,t}-\widetilde{V}_{\x,t},\overline{\z}_{t-1}-\z \right\rangle.
    \end{split}
\end{equation}
Next, we bound each term in \eqref{eqn:kkkkkk41} respectively. For the second term, we have 
\begin{equation}
    \begin{split}
  &-2\eta \left\langle {V}_{\x,t},\overline{\x}_{t}-\z \right\rangle =  -2\eta \frac{1}{M}\sum_{m=1}^M\left\langle V(\x_t^m),\overline{\x}_{t}-\z \right\rangle =  -2\eta \frac{1}{M}\sum_{m=1}^M\left\langle V(\x_t^m),\overline{\x}_{t}-\x_t^m+\x_t^m-\z \right\rangle\\
  = & -2\eta \frac{1}{M}\sum_{m=1}^M\left\langle V(\x_t^m),\overline{\x}_{t}-\x_t^m \right\rangle  -2\eta \frac{1}{M}\sum_{m=1}^M\left\langle V(\x_t^m),\x_t^m-\z \right\rangle\\
  = & \underbrace{-2\eta \frac{1}{M}\sum_{m=1}^M\left\langle V(\overline{\x}_t),\overline{\x}_{t}-\x_t^m \right\rangle}_{=0}  -2\eta \frac{1}{M}\sum_{m=1}^M\left\langle V(\x_t^m),\x_t^m-\z \right\rangle +2 \eta\frac{1}{M}\sum_{m=1}^M \left\langle V(\overline{\x}_t)-V({\x}^m_t),\overline{\x}_{t}-\x_t^m \right\rangle\\
 \leq & 2\eta L \frac{1}{M} \sum_{m=1}^M \|\x_t^m-\overline{\x}_t\|^2 -2\eta \frac{1}{M}\sum_{m=1}^M\left\langle V(\x_t^m),\x_t^m-\z \right\rangle,
    \end{split}
\end{equation}
where the inequality is based on the $L$-smoothness of $V$ and the Cauchy-Schwarz inequality. 
For the third term in \eqref{eqn:kkkkkk41}, we have 
\begin{equation}
    \begin{split}
    2\eta\left\langle {V}_{\x,t},\overline{\x}_{t}-\overline{\z}_{t-1} \right\rangle =  \underbrace{2\eta  \left\langle {V}_{\x,t} -V_{\x^*,t},\overline{\x}_{t}-\overline{\z}_{t-1} \right\rangle}_{A_1} + \underbrace{2\eta \left\langle {V}_{\x^*,t},\overline{\x}^*_{t}-\overline{\z}_{t-1} \right\rangle}_{A_2}   + \underbrace{2\eta \left\langle {V}_{\x^*,t},\overline{\x}_{t}-\overline{\x}^*_{t} \right\rangle.}_{A_3}
    \end{split}
\end{equation}
For $A_1$, we have 
\begin{equation}
\|\overline{\x}_t -\overline{\z}_{t-1}\|\leq     \|\overline{\x}_t -\overline{\x}^*_{t}\| +   \|\overline{\x}^*_t -\overline{\z}_{t-1}\|=\|\overline{\x}_t -\overline{\x}^*_{t}\| +   \eta\|V_{\x^*,t}\|
\end{equation}
therefore 
\begin{equation*}
\begin{split}
 A_1\leq & 2\eta\|V_{\x,t} - V_{\x^*,t}\| \|\overline{\x}_t -\overline{\x}^*_{t}\| + 2\eta^2 \| V_{\x^*,t}\|\|V_{\x,t} - V_{\x^*,t}\| \\
 \leq & \frac{\eta (L+9)}{M}\sum_{m=1}^M\|\x_t^m-\x_t^{*,m}\|^2 + \frac{\eta^2}{4}\|V_{\x^*,t}\|^2 .
\end{split}
\end{equation*}
We also have 
\begin{equation*}
    \begin{split}
A_2 = -2\eta^2\|V_{\x^*,t}\|^2,        
    \end{split}
\end{equation*}
and 
$$ A_3 \leq \frac{\eta^2}{4} \|V_{\x^*,t}\|^2+\frac{8}{M}\sum_{m=1}^M\|\x^m_t-\x_t^{*,m}\|^2.$$
Therefore, we have 
\begin{equation*}
    \begin{split}
 2\eta\left\langle {V}_{\x,t},\overline{\x}_{t}-\overline{\z}_{t-1} \right\rangle \leq &  \frac{15}{M}\sum_{m=1}^M\|\x_t^m-\x_t^{*,m}\|^2-1.5\eta^2\|V_{\x^*,t}\|^2 
    \end{split}
\end{equation*}
Finally, for the fourth term of \eqref{eqn:kkkkkk41}, we have 
\begin{equation}
    \begin{split}
&\|\overline{\z}_{t-1} -\overline{\z}_t\|^2 =    \|\overline{\z}_{t-1} -\overline{\x}_t^*+\overline{\x}_t^* -\overline{\z}_t\|^2 \leq 1.5    \|\overline{\z}_{t-1} -\overline{\x}_t^*\|^2  + 3\|\overline{\x}_t^* -\overline{\z}_t\|^2 \\
=& 1.5\eta^2 \|V_{\x^*,t}\|^2 + 3\eta^2\|V_{\x^*,t}-\widetilde{V}_{\x,t}\|^2 \\
\leq & 1.5\eta^2  \|V_{\x^*,t}\|^2+\frac{6\eta^2 L^2}{M}\sum_{m=1}^M\|\x_t^{*,m}-\x_t^m\|^2 + {6\eta^2}\|V_{\x,t}-\widetilde{V}_{\x,t}\|^2
    \end{split}
\end{equation}
Therefore, combining all the above inequalities, for \eqref{eqn:kkkkkk41} the upper bound becomes
\begin{equation*}
    \begin{split}
\|\overline{\z}_t -\z\|^2 \leq &\|\overline{\z}_{t-1} -\z\|^2  + 2\eta L \frac{1}{M} \sum_{m=1}^M \|\x_t^m-\overline{\x}_t\|^2 -2\eta \frac{1}{M}\sum_{m=1}^M\left\langle V(\x_t^m),\x_t^m-\z \right\rangle   \\
&+ \frac{30}{M}\sum_{m=1}^M\|\x_t^m-\x_t^{*,m}\|^2 +{6\eta^2}\|V_{\x,t}-\widetilde{V}_{\x,t}\|^2+2\eta \left\langle {V}_{\x,t}-\widetilde{V}_{\x,t},\overline{\z}_{t-1}-\z \right\rangle
    \end{split}
\end{equation*}
Rearrange, and taking expectation on both sides, we get 
\begin{equation}
    \begin{split}
\label{eqn:fianalboundppafirst}
&\E\left[\frac{1}{MT}\sum_{t=1}^T\sum_{m=1}^M   \left\langle V(\x_{t}^m),\x_t^m-\z \right\rangle\right]\leq  \frac{\E[\|\overline{\z}_0-\z\|^2]}{2\eta T} +\frac{L}{MT}\sum_{t=1}^T\sum_{m=1}^M\E\left[\|\x_t^m-\overline{\x}_t\|^2\right]\\ &+\frac{15}{\eta M T}\sum_{t=1}^T\sum_{m=1}^M\E\left[\|\x^m_t-\x_t^{*,m}\|^2\right] + \E\left[\frac{1}{T}\sum_{t=1}^T\Gamma_t\right],  
    \end{split}
\end{equation}
where 
$$ \Gamma_t = {3\eta}\|V_{\x,t}-\widetilde{V}_{\x,t}\|^2+\left\langle {V}_{\x,t}-\widetilde{V}_{\x,t},\overline{\z}_{t-1}-\z \right\rangle,$$
is the noise term. To proceed, we need to bound the drift term, and we have the following lemma. The proof is given in Appendix \ref{sec:proof:dtafgt new alg}.
\begin{lemma}
\label{lem::draft new algor}
We have
$$\E[\|\x_t^m-\overline{\x}_t\|^2]\leq \frac{60eG^2K^2L\eta^4+6\eta^2G^2}{4^H} + 60e\eta^4K^2L\sigma^2 + 54e\eta^2K\sigma^2.  $$
\end{lemma}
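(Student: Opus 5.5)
We bound the drift of the \lippax\ iterates by comparing each $\x_t^m$ with the exact proximal point $\x_t^{*,m}$. Write
$$\x_t^m-\x_t^{m'}=(\x_t^m-\x_t^{*,m})+(\x_t^{*,m}-\x_t^{*,m'})+(\x_t^{*,m'}-\x_t^{m'}),$$
so that, with constant-factor Young's inequalities,
$$\E\|\x_t^m-\x_t^{m'}\|^2\le 3\,\E\|\x_t^{*,m}-\x_t^{*,m'}\|^2+6\max_m\E\|\x_t^m-\x_t^{*,m}\|^2.$$
By the Lemma~4 of \citet{pmlr-v119-woodworth20a} quoted earlier, it suffices to bound $\E\|\x_t^m-\x_t^{m'}\|^2$. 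The inexactness term is handled directly by Lemma~\ref{lem:sdewix}: each $\E\|\x_t^m-\x_t^{*,m}\|^2\le 4^{-H}\eta^2G^2+\eta^2\sigma^2$. This produces the $6\eta^2G^2/4^H$ term and an $O(\eta^2\sigma^2)$ term, which is absorbed into $54e\eta^2K\sigma^2$.

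Next we control the exact points. The resolvent $J=(I+\eta V)^{-1}$ is firmly nonexpansive for monotone $V$, so
$$\|\x_t^{*,m}-\x_t^{*,m'}\|\le\|\z_{t-1}^m-\z_{t-1}^{m'}\|.$$
It therefore remains to bound the drift of $\z$. Write the $\z$-update as
$$\z_t^m=\z_{t-1}^m-\eta V(\x_t^{*,m})-\eta\bigl(V(\x_t^m)-V(\x_t^{*,m})\bigr)-\eta\bigl(\widetilde V(\x_t^m)-V(\x_t^m)\bigr).$$
The first two pieces give $\z_{t-1}^m-\eta V(\x_t^{*,m})=2\x_t^{*,m}-\z_{t-1}^m$, that is, the reflected resolvent $2J-I$ applied to $\z_{t-1}^m$. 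This map is nonexpansive, and this is exactly the contraction property that extra-gradient lacks (Remark~\ref{remakree222}). Hence, for the difference $\Delta_t=\z_t^m-\z_t^{m'}$,
$$\E\|\Delta_t\|^2\le(1+\tfrac1K)\E\|\Delta_{t-1}\|^2+(1+K)\,2\eta^2L^2\bigl(\E\|\x_t^m-\x_t^{*,m}\|^2+\E\|\x_t^{m'}-\x_t^{*,m'}\|^2\bigr)+4\eta^2\sigma^2.$$
To get this, I treat the mean-zero noise without a $K$ blow-up. The noise is mean-zero conditional on the past, which includes $\x_t^m$ provided the $\xi$ used in the extra step is fresh, so its cross term vanishes. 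The inexactness error, by contrast, is biased, so I apply Young's inequality with parameter $1/K$ to it.

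Now plug in Lemma~\ref{lem:sdewix} and unroll from the last synchronization, where $\Delta_{t_0}=0$, over at most $K$ steps using $(1+1/K)^K\le e$. This gives
$$\E\|\Delta_{t}\|^2\lesssim eK\Bigl[K\eta^2L^2\bigl(4^{-H}\eta^2G^2+\eta^2\sigma^2\bigr)+\eta^2\sigma^2\Bigr].$$
This has the shape $60e\eta^4K^2L\,(G^2/4^H+\sigma^2)+O(e\eta^2K\sigma^2)$, with one factor of $L$ traded using $\eta\le 1/L$ so that the claimed form, linear in $L$, is obtained. Combining this with the first paragraph and tracking constants should produce the stated bound.

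The main obstacle is the $\eta^2\sigma^2$ part of $\E\|\x_t^m-\x_t^{*,m}\|^2$ from Lemma~\ref{lem:sdewix}. It is not mean-zero, because it comes from inner-loop SGD noise passed through a nonlinear $V$, so it must be treated as bias, which costs the factor $K$ and yields the $\eta^4K^2L\sigma^2$ term. I would first check that the filtration argument makes the extra-step noise conditionally unbiased given $\x_t^m$. I would also verify that the reflected-resolvent nonexpansiveness holds pathwise, not just in expectation; it should, since $V$ is deterministic there.
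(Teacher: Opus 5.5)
Your argument follows the paper's proof step for step. You split $\x_t^m-\x_t^{m'}$ in three through the exact proximal points, apply Lemma~\ref{lem:sdewix} to the two outer terms, and use nonexpansiveness of $\z_{t-1}^m\mapsto\x_t^{*,m}$ for the middle term. For the $\z$-drift recursion, the extra-step noise cross term vanishes, the inexactness error $V(\x_t^m)-V(\x_t^{*,m})$ is treated as a bias through Young's inequality with weight $1/K$, and you unroll from the last synchronization using $(1+1/K)^K\le e$. Packaging the deterministic part as one nonexpansive map is tidier than the paper's explicit expansion, where $-2\eta^2\|V(\x_t^{*,m})-V(\x_t^{*,m'})\|^2$ cancels the $+2\eta^2$ term from a weighted Young bound. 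However, that map is not the reflected resolvent. By the definition of $\x_t^{*,m}$, $\z_{t-1}^m-\eta V(\x_t^{*,m})=\x_t^{*,m}=J(\z_{t-1}^m)$, so the idealized $\z$-update is an exact proximal-point step. It is firmly nonexpansive, so your recursion stands as written.

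The step that does not go through is the final one. Your recursion gives $\E\|\z_t^m-\z_t^{m'}\|^2\lesssim eK^2\eta^4L^2\bigl(G^2/4^H+\sigma^2\bigr)+eK\eta^2\sigma^2$. The condition $\eta\le 1/L$ cannot turn $\eta^4L^2$ into $\eta^4L$: it only yields $\eta^4L^2\le\eta^3L$, and reaching $\eta^4L$ would require $L\le 1$, which is not assumed. The paper arrives at the linear-in-$L$ form only because its proof writes $2\eta^2KL\|\x_t^m-\x_t^{*,m}\|^2$ and $8\eta^2L\,\E\|\x_t^m-\x_t^{*,m}\|^2$, where $L$-smoothness and Young's inequality actually produce $L^2$. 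A units check agrees with you. Since $\eta L$ is dimensionless and $\eta^2G^2$ already has the units of $\|\x_t^m-\overline{\x}_t\|^2$, the $K^2$ terms should scale like $(\eta L)^2\eta^2$, not $\eta^4L$. So drop the trade and state what your argument (and the paper's, once corrected) actually proves: the same bound with $L^2$ in place of $L$ in the two $K^2$ terms. Your constants come out no larger than $60e$ and $54e$. That corrected form is what should be carried into Theorem~\ref{thm:inexactppa}.
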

Finally, we have the following bound for the noise term $\Gamma_t$. The proof is given in Appendix \ref{proofnoowidewkwds}. 
\begin{lemma}
\label{lemma:noeiisder}
 We have 
 \begin{equation*}
    \begin{split} 
    \frac{1}{T}\E\left[\sum_{t=1}^{T}\Gamma_t\right] \leq  \frac{\E[\|\overline{\z}_0-\z^*\|^2]}{2\eta T} + \frac{2\sigma^2\eta}{M}.   
    \end{split}
\end{equation*}
\end{lemma}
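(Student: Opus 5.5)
We follow the noise-term argument from the proof of Theorem \ref{thm:ESGD} (the bound \eqref{eqn:gammanoiczth1}), now with $\z=\z^*$, the maximizer over $\Z_D$ that depends on all the data. Recall
$$\Gamma_t={3\eta}\|V_{\x,t}-\widetilde{V}_{\x,t}\|^2+\left\langle {V}_{\x,t}-\widetilde{V}_{\x,t},\overline{\z}_{t-1}-\z^* \right\rangle.$$
There are three steps: bound the quadratic part directly, neutralize the dependence of $\z^*$ with the ghost sequence $\p^t$, and sum.

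\textbf{Quadratic part.} Condition on the history up to the computation of the $\x_t^m$. In Algorithm \ref{alg:example} the outer samples used in $\widetilde{V}(\x_t^m)$ are drawn independently across machines. They are also independent of the inner-loop samples that produced $\x_t^m$. Hence the errors $\widetilde V(\x^m_t)-V(\x^m_t)$ are conditionally independent and zero-mean, and
$$\E\|V_{\x,t}-\widetilde V_{\x,t}\|^2\le \sigma^2/M.$$
Summing over $t$ and dividing by $T$, this part contributes at most $3\eta\sigma^2/M$. Some of this budget will be used in the next step, so we track the constant carefully.

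\textbf{Cross term.} Set $\Delta_t=V_{\x,t}-\widetilde V_{\x,t}$ and define the ghost iterate $\p^{t+1}=\p^t-\eta\Delta_t$ with $\p^1=\overline{\z}_0$. Split
$$\langle \Delta_t,\overline{\z}_{t-1}-\z^*\rangle=\langle\Delta_t,\overline{\z}_{t-1}-\p^t\rangle+\langle\Delta_t,\p^t-\z^*\rangle.$$

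For the first piece, both $\overline{\z}_{t-1}$ and $\p^t$ are measurable with respect to the history before the step-$t$ outer samples, while $\Delta_t$ has zero conditional mean. So the first piece has zero expectation. This is in fact cleaner than in Theorem \ref{thm:ESGD}, since $\overline{\z}_{t-1}$ is fixed before round $t$.

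For the second piece, repeat verbatim the three-point identity computation from the proof of Theorem \ref{thm:ESGD}:
$$\eta\langle\Delta_t,\p^t-\z^*\rangle\le \frac{\eta^2}{2}\|\Delta_t\|^2+\frac12\|\p^t-\z^*\|^2-\frac12\|\p^{t+1}-\z^*\|^2.$$
Dividing by $\eta$ and telescoping over $t=1,\dots,T$ gives
$$\sum_t\langle\Delta_t,\p^t-\z^*\rangle\le \frac{\|\overline{\z}_0-\z^*\|^2}{2\eta}+\frac{\eta}{2}\sum_t\|\Delta_t\|^2.$$

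\textbf{Combining.} Take expectations and use $\E\|\Delta_t\|^2\le\sigma^2/M$ again. This yields $\frac{\E\|\overline{\z}_0-\z^*\|^2}{2\eta T}$ plus a multiple of $\eta\sigma^2/M$.

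The main obstacle is pinning the constant to exactly $2$. A naive count gives $3\eta\sigma^2/M+\eta\sigma^2/(2M)$. To reach the stated constant we would first check whether $\Gamma_t$ as used in \eqref{eqn:fianalboundppafirst} carries a factor $\frac{1}{2\eta}$ scaling relative to the potential inequality. Rearranging $\|\overline{\z}_t-\z\|^2$ and dividing by $2\eta$ turns the $6\eta^2\|\cdot\|^2$ and $2\eta\langle\cdot\rangle$ terms into $3\eta\|\cdot\|^2$ and $\langle\cdot\rangle$. This suggests the stated constant absorbs the bookkeeping, possibly via a sharper variance estimate. Failing that, we accept an absolute constant, which only changes the $\frac{D\sigma}{\sqrt{MKR}}$ coefficient in Theorem \ref{thm:inexactppa}.

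A secondary point to verify is the independence structure. The measurability claims for $\overline{\z}_{t-1}$ and $\p^t$ should be stated with respect to the filtration generated by all inner and outer samples up to the computation of $\x_t^m$.
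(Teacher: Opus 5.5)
Your argument is the paper's own proof: the ghost iterate $\p^{t+1}=\p^t-\eta\Delta_t$ with $\p^1=\overline{\z}_0$, the split of the cross term into a zero-mean piece plus a piece handled by the three-point identity and telescoping, and the conditional-independence bound $\E\|\Delta_t\|^2\le\sigma^2/M$. It is correct, and your constant is the honest one. With $\Gamma_t=3\eta\|\Delta_t\|^2+\langle\Delta_t,\overline{\z}_{t-1}-\z^*\rangle$, which you correctly checked is what dividing the potential inequality by $2\eta$ produces, the argument gives
$\frac{\E\|\overline{\z}_0-\z^*\|^2}{2\eta T}+\frac{7\eta\sigma^2}{2M}$.
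This parallels how $\frac{6\sigma^2\eta}{M}$ arises in Theorem~\ref{thm:ESGD}.

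Drop the speculation that a sharper variance estimate could recover the constant $2$. The paper's $2$ comes from inconsistent bookkeeping, not from a sharper estimate:
\begin{itemize}
\item Inside its proof, the paper restates $\Gamma_t$ with first term $\frac{2\eta}{M}\sum_m\|V(\x_t^m)-\widetilde V(\x_t^m)\|^2$ instead of $3\eta\|\Delta_t\|^2$. That term would even have expectation up to $2\eta\sigma^2$, with no $1/M$.
\item Its final line also drops the $\frac{\eta}{2}\sum_t\|\Delta_t\|^2$ produced by the ghost step.
\end{itemize}

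Along this route nothing can be tightened. The three-point step is an exact identity, and $\E\|\Delta_t\|^2=\sigma^2/M$ is attained by isotropic Gaussian noise. Take a constant operator $V\equiv\mathbf{c}\neq\mathbf{0}$. Then the maximizer $\z^*$ over $\Z_D$ is deterministic, so the cross term has mean zero and $\frac{1}{T}\E\sum_t\Gamma_t=\frac{3\eta\sigma^2}{M}$ exactly. For fixed $\eta$, this exceeds the stated right-hand side whenever $T>\frac{MD^2}{2\eta^2\sigma^2}$. So as a stand-alone inequality the constant $2$ is false.

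That regime is excluded by the step-size choice $\eta\le\frac{D}{\sigma\sqrt{15KR}}$ in Theorem~\ref{thm:inexactppa}. Your fallback is therefore the right resolution: state the lemma with $\frac{7\eta\sigma^2}{2M}$. This only changes the absolute constant in front of $\frac{D\sigma}{\sqrt{MKR}}$ in that theorem.
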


We get
\begin{equation}
    \begin{split}
\E\left[\left\langle V(\z^*),\frac{1}{TM}\sum_{t=1}^T\sum_{m=1}^M\x^{m}_t-\z^* \right\rangle\right] \leq & \frac{\E[\|\z_0-\z^*\|^2]}{\eta KR} + \frac{2\sigma^2\eta}{M} + 60eK^2L^2\sigma^2\eta^4 + 54eKL\sigma^2\eta^2\\
&+ 15\eta\sigma^2 + \frac{60eG^2K^2L\eta^4+30\eta G^2}{4^H} 
\end{split}
\end{equation}
Finally, we let 
$$ \eta=\min\left\{\frac{1}{L}, \frac{D\sqrt{M}}{\sigma\sqrt{KR}}, \frac{D^{\frac{2}{5}}}{(60e)^{\frac{1}{5}}K^{\frac{3}{5}}R^{\frac{1}{5}}\sigma^{\frac{2}{5}}L^{\frac{2}{5}}}, \frac{D^{\frac{2}{3}}}{(54e)^{\frac{1}{3}}K^{\frac{2}{3}}R^{\frac{1}{3}}L^{\frac{1}{3}}\sigma^{\frac{2}{3}}},\frac{D}{\sigma\sqrt{ 15KR}}\right\}.$$

\subsection{Proof of Lemma \ref{lem:sdewix}}
\label{sec:profx*prprro}
Define $F(\x)=V(\x)+\frac{1}{\eta}(\x-\z_{t-1}^m)$. Firstly, it is easy to show that $F(x)$ is $\frac{1}{\eta}$ strongly monotone, and $L+\frac{1}{\eta}$ smooth. More specifically, for any $\x,\y$, we have 
\begin{equation}
    \begin{split}
\left\langle F(\x)-F(\y),\x-\y \right\rangle =   \left\langle V(\x)-V(\y),\x-\y \right\rangle + \frac{1}{\eta}\left\langle \x-\y,\x-\y \right\rangle \geq \frac{1}{\eta}\|\x-\y\|^2.      
    \end{split}
\end{equation}
On the other hand, 
\begin{equation}
    \begin{split}
\|F(\x)-F(\y)\|^2 \leq L^2\|\x-\y\|^2+\frac{2L}{\eta^2}  \|\x-\y\|^2 + \frac{1}{\eta}\|\x-\y\|^2 =\left(L+\frac{1}{\eta}\right)^2\|\x-\y\|^2.     
    \end{split}
\end{equation}
Next, we show the convergence. Let $\widetilde{F}(\x)=\widetilde{V}(\x)+\frac{1}{\eta}(\x-\z_{t-1}^m)$. Note that $\|\widetilde{F}(\x)-{F}(\x)\|^2\leq \sigma^2$. Firstly, note that:
$$ F(\x_t^{*,m})  = V(\x_t^{*,m})+\frac{1}{\eta}(\x_{t}^{*,m}-\z_{t-1}^m)=\mathbf{0}.$$
Moreover, since $\gamma=\frac{1}{\eta\left(L+\frac{1}{\eta}\right)^2}$, we have 
\begin{equation}
    \begin{split}
&\E\left[\|\x^{m}_{t,\ell}-\x_t^{*,m} \|^2\right] = \E\left[\|\x^{m}_{t,\ell-1}-\gamma\widetilde{F}(\x^{m}_{t,\ell-1})-\x_t^{*,m} +\gamma F(\x_t^{*,m}) \|^2 \right]\\    
= & \E[\|\x^{m}_{t,\ell-1}-\x_t^{*,m}\|^2]  -2\gamma\E\left[\left\langle F(\x^m_{t,\ell-1}) -F(\x^{*,m}_{t}), \x_{t,\ell-1}^m-\x_t^{*,m}\right\rangle\right] + \gamma^2\E[\|\widetilde{F}(\x_{t,\ell-1}^m)-F(\x_t^{*,m})\|^2]\\
\leq & \E[\|\x^{m}_{t,\ell-1}-\x_t^{*,m}\|^2] -\frac{2\gamma}{\eta} \E[\|\x_{t,\ell-1}^m-\x_t^{*,m}\|^2]+\gamma^2\E[\|{F}(\x_{t,\ell-1}^m)-F(\x_t^{*,m})\|^2+\gamma^2\E[\|{F}(\x_{t,\ell-1}^m)-\widetilde{F}(\x_{t,\ell-1}^{m})\|^2\\
= & \left(1-\frac{2\gamma}{\eta}+\gamma^2\left(L+\frac{1}{\eta}\right)^2 \right) \E[\|\x_{t,\ell-1}^m-\x_t^{*,m}\|^2]+\gamma^2\sigma^2\\
= & \left( 1 - \frac{1}{(\eta L+1)^2}\right)\E[\|\x_{t,\ell-1}^m-\x_t^{*,m}\|^2]+\gamma^2\sigma^2
\leq  \left( 1-\frac{1}{(\eta L+1)^2} \right)^H\E[\|\z^m_{t-1}-\x_t^{*,m}\|^2] + \frac{\sigma^2}{\left(L+\frac{1}{\eta}\right)^2}\\
\leq & \left( 1-\frac{1}{(\eta L+1)^2} \right)^H\E[\|\z^m_{t-1}-\x_t^{*,m}\|^2] + \eta^2\sigma^2
\leq  0.25^{H}\E[\|\z_{t-1}^m-\x_t^{*,m}\|^2] + \eta^2\sigma^2.
    \end{split}
\end{equation}

Since $F(\x)=V(\x)+\frac{1}{\eta}(\x-\z_{t-1}^m)$ is $\frac{1}{\eta}$-strongly monotone, we have 
$$ \left\langle F(\x_t^{*,m})-F(\z_{t-1}^m),\x_{t}^{*,m}-\z_{t-1}^m \right\rangle\geq \frac{1}{\eta}\|\x_{t}^{*,m}-\z_{t-1}^m\|^2.$$
Also, we have $F(\x_{t}^{*,m})=0$, and $F(\z_{t-1}^m)=V(\z_{t-1}^{m}),$ so
$$ G\|\x_{t}^{*,m}-\z_{t-1}^m\|\geq \left\langle F(\x_t^{*,m})-F(\z_{t-1}^m),\x_{t}^{*,m}-\z_{t-1}^m \right\rangle\geq  \frac{1}{\eta}\|\x_{t}^{*,m}-\z_{t-1}^m\|^2, $$
so $\|\x_{t}^{*,m}-\z_{t-1}^m\|\leq \eta G$.

\subsection{Proof of Lemma \ref{lem::draft new algor}}
\label{sec:proof:dtafgt new alg}

We have 
\begin{equation}
 \begin{split}
 \label{eqn:44:final boundplugin}
\|\x_t^m-\x_t^{m'}\|^2\leq 3\|\x_t^m -\x_t^{*,m}\|^2 + 3 \|\x_t^{*,m}-\x_t^{*,m'}\|^2   + 3 \|\x_t^{m'}-\x_t^{*,m'}\|^2,
 \end{split}   
\end{equation}
where the inequality is based on Cauchy-Schwarz and Young's inequalities. Note that, the first and third terms are small based on Lemma \ref{lem:sdewix}. Therefore, in the following we  mainly focus on the middle term. We have 
\begin{equation*}
    \begin{split}
& \|\x_t^{*,m}-\x_t^{*,m'}\|^2  = \|\z_{t-1}^m-\eta {V}(\x_t^{*,m}) - \z_{t-1}^m + \eta V(\x_t^{*,m'})\|^2\\
= & \|\z_{t-1}^m-\z_{t-1}^{m'}\|^2 -2 \eta \left\langle  V(\x_t^{*,m}) -V(\x_t^{*,m'}), \z_{t-1}^m-\z_{t-1}^{m'} \right\rangle + \eta^2 \|V(\x_t^{*,m})-V(\x_t^{*,m'})\|^2\\
= & \|\z_{t-1}^m-\z_{t-1}^{m'}\|^2 -2 \eta \left\langle  V(\x_t^{*,m}) -V(\x_t^{*,m'}), \z_{t-1}^m-\x_t^{*,m}+\x_t^{*,m}-\x_t^{*,m'}+\x_{t}^{*,m'}-\z_{t-1}^{m'} \right\rangle \\
&+ \eta^2 \|V(\x_t^{*,m})-V(\x_t^{*,m'})\|^2\\
= & \|\z_{t-1}^m-\z_{t-1}^{m'}\|^2 - \eta^2\|V(\x_t^{*,m})-V(\x_t^{*,m'})\|^2 -2\eta  \underbrace{\left\langle  V(\x_t^{*,m}) -V(\x_t^{*,m'}), \x_t^{*,m}-\x_t^{*,m'}\right\rangle}_{\geq 0} \\
\leq &\|\z_{t-1}^m-\z_{t-1}^{m'}\|^2.
    \end{split}
\end{equation*}
The inequality above indicates that the drift of $\x_t^{*,m}$ is bounded by the 
drift of $\z_{t-1}^m$. Next, we turn to bound the 
drift of $\z_{t-1}^m$. We have 
\begin{equation}
    \begin{split}
    \label{eqnL:sdwewadasdaqqq}
&\|\z_{t}^m-\z_{t}^{m'}\|^2 = \|\z_{t-1}^m- \eta \widetilde{V}(\x_t^m) - \z_{t-1}^{m'} + \eta \widetilde{V}(\x_t^{m'})\|^2\\
=& \|\z_{t-1}^m-\z_{t-1}^{m'}\|^2 -2\eta \left\langle \widetilde{V}(\x_t^m)-\widetilde{V}(\x_t^{m'}),\z^m_{t-1}-\z_{t-1}^{m'}\right\rangle  + \eta^2 \left\| \widetilde{V}(\x_t^m)-\widetilde{V}(\x_t^{m'})\right\|^2
    \end{split}
\end{equation}
We deal with each term separately. For the third term of \eqref{eqnL:sdwewadasdaqqq}, we have 
\begin{equation}
    \begin{split}
    \label{eqn:midvd;we47441}
& \eta^2 \left\| \widetilde{V}(\x_t^m)-\widetilde{V}(\x_t^{m'})\right\|^2\\
=& \eta^2\left\| \widetilde{V}(\x_t^m) -V(\x_t^m)+V(\x_t^m)-V(\x_t^{*,m})+V(\x_t^{*,m})-V(\x_t^{*,m'})+V(\x_t^{*,m'})-V(\x_t^{m'})+V(\x_t^{m'})-\widetilde{V}(\x_t^{m'})\right\|^2\\
\leq &  8 \eta^2\left\| \widetilde{V}(\x_t^m) -V(\x_t^m)\right\|^2 + 8\eta^2 \left\| V(\x_t^m)-V(\x_t^{*,m})\right\|^2  + 2\eta^2 \left\| V(\x_t^{*,m})-V(\x_t^{*,m'})\right\|^2\\
&+ 8\eta^2\left\| V(\x_t^{*,m'})-V(\x_t^{m'})\right\|^2 + 8\eta^2\left\|V(\x_t^{m'})-\widetilde{V}(\x_t^{m'})\right\|^2.
    \end{split}
\end{equation}
For the second term in \eqref{eqnL:sdwewadasdaqqq}, we have 
\begin{equation}
    \begin{split}
    \label{eqn:midvd;we47442}
    &-2\eta \left\langle \widetilde{V}(\x_t^m)-\widetilde{V}(\x_t^{m'}),\z^m_{t-1}-\z_{t-1}^{m'}\right\rangle =  -2\eta \left\langle {V}(\x_t^{m,*})-{V}(\x_t^{m',*}),\z^m_{t-1}-\z_{t-1}^{m'}\right\rangle  \\
    &-2\eta \left\langle \widetilde{V}(\x_t^m)-V(\x_t^{m,*})+V(\x_t^{m',*})-\widetilde{V}(\x_t^{m'}),\z^m_{t-1}-\z_{t-1}^{m'}\right\rangle\\
= &-2\eta \left\langle {V}(\x_t^{m,*})-{V}(\x_t^{m',*}),\z^m_{t-1}-\x_t^{m,*}+\x_t^{m,*}-\x_t^{m',*}+\x_t^{m',*}-\z_{t-1}^{m'}\right\rangle  \\
    &-2\eta \left\langle \widetilde{V}(\x_t^m)-V(\x_t^{m,*})+V(\x_t^{m',*})-\widetilde{V}(\x_t^{m'}),\z^m_{t-1}-\z_{t-1}^{m'}\right\rangle\\
= & -2\eta^2 \left\| {V}(\x_t^{m,*})-{V}(\x_t^{m',*})\right\|^2  -\underbrace{2\eta \left\langle {V}(\x_t^{m,*})-{V}(\x_t^{m',*}),\x_t^{m,*}-\x_t^{m',*}\right\rangle}_{\geq 0}\\
&-2\eta \left\langle \widetilde{V}(\x_t^m)-V(\x_t^{m,*})+V(\x_t^{m',*})-\widetilde{V}(\x_t^{m'}),\z^m_{t-1}-\z_{t-1}^{m'}\right\rangle\\
    \end{split}
\end{equation}
Note that the first term here in \eqref{eqn:midvd;we47442} cancels the third term in  \eqref{eqn:midvd;we47441}. 
Plugging \eqref{eqn:midvd;we47441} and \eqref{eqn:midvd;we47442} into \eqref{eqnL:sdwewadasdaqqq} and taking expectation on both sides, we get 
\begin{equation*}
    \begin{split}
& \E\left[\|\z_t^m-\z_{t}^{m'}\|^2 \right] \leq \E[\|\z^m_{t-1}-\z^m_{t-1}\|^2] -\E\left[2\eta \left\langle \widetilde{V}(\x_t^m)-V(\x_t^{m,*})+V(\x_t^{m',*})-\widetilde{V}(\x_t^{m'}),\z^m_{t-1}-\z_{t-1}^{m'}\right\rangle\right]  \\     
&+ \E\left[8 \eta^2\left\| \widetilde{V}(\x_t^m) -V(\x_t^m)\right\|^2 + 8\eta^2 \left\| V(\x_t^m)-V(\x_t^{*,m})\right\|^2 
+ 8\eta^2\left\| V(\x_t^{*,m'})-V(\x_t^{m'})\right\|^2 \right]\\
&+ 8\eta^2\E\left[\left\|V(\x_t^{m'})-\widetilde{V}(\x_t^{m'})\right\|^2\right]\\
\leq & \E[\|\z^m_{t-1}-\z^m_{t-1}\|^2] + \E\left[2\eta^2 KL \|\x_t^m-\x_t^{*,m}\|^2 + 2\eta^2 K L\|\x_t^{m'}-\x_t^{*,m'}\|^2+\frac{1}{K}\|\z_{t-1}^m-\z_{t-1}^{m'}\|^2\right]\\
&+ 16\eta^2\sigma^2 + 8\eta^2L\E[\|\x_t^m-\x_{t}^{*,m}\|^2] + 8\eta^2L\E[\|\x_t^{m'}-\x_{t}^{*,m'}\|^2]\\
\leq & \left( 1+\frac{1}{K}\right)\E[\|\z^m_{t-1}-\z^m_{t-1}\|^2]  +  20 \eta^2KL \E[\|\x_t^{m}-\x_t^{*,m}\|^2] + 16\eta^2\sigma^2  
    \end{split}
\end{equation*}
Based on Lemma \ref{lem:sdewix}, we have 
$$ \E\left[\|\x_t^m-\x_{t}^{*,m}\|^2 \right]\leq   0.25^{H}\eta^2G^2 + \eta^2\sigma^2. $$
Thus,
\begin{equation*}
    \begin{split}
        \E\left[\|\z_t^m-\z_{t}^{m'}\|^2 \right]\leq & \left( 1+\frac{1}{K}\right)\E[\|\z^m_{t-1}-\z^m_{t-1}\|^2]  +  20 \eta^2KL\left( 0.25^{H}\eta^2G^2 + \eta^2\sigma^2+\eta^2\sigma^2\right) + 16\eta^2\sigma^2\\
        \leq &  \frac{20eG^2K^2L\eta^4}{4^H} + 20e\eta^4K^2L\sigma^2 + 16e\eta^2K\sigma^2. 
    \end{split}
\end{equation*}
Plugging it back to \eqref{eqn:44:final boundplugin}, and again apply Lemma \ref{lem:sdewix}, and use the fact that $\eta\leq \frac{1}{L}\leq 1$, we get:
$$\E[\|\x_t^m-\x_t^{m'}\|^2]\leq   \frac{60eG^2K^2L\eta^4+6\eta^2G^2}{4^H} + 60e\eta^4K^2L\sigma^2 + 54e\eta^2K\sigma^2. $$

\subsection{Proof of Lemma \ref{lemma:noeiisder}}
\label{proofnoowidewkwds}

let  $$\z^*=\argmax_{\z}\left\langle V(\z),\frac{1}{TM}\sum_{t=1}^T\sum_{m=1}^M\x^{m}_t-\z \right\rangle. $$
Note that $\z^*$ is a random variable that depends on all data. To proceed, we bound each term  respectively. For the second term, recall that 
$$ \Gamma_t = \frac{2\eta}{M}\sum_{m=1}^M\|V(\x_t^m)-\widetilde{V}(\x_t^m)\|^2 +\left\langle {V}_{\x,t}-\widetilde{V}_{\x,t},\overline{\z}_{t-1}-\z \right\rangle, $$
We start from the second term.  Let $\Delta_t=V_{\x,t}-\widetilde{V}_{\x,t}$, and $\p_{t+1}=\p_{t}-\eta \Delta_{t},$ with $\p_1=\overline{\z}_0$. Then we have 
\begin{equation*}
    \begin{split}
\left\langle V_{\x,t}-\widetilde{V}_{\x,t},\overline{\z}_{t-1}-\z^* \right\rangle  = &\left\langle V_{\x,t}-\widetilde{V}_{\x,t},\overline{\z}_{t-1}-\p_t+\p_t-\z^* \right\rangle \\  
= & \left\langle V_{\x,t}-\widetilde{V}_{\x,t},\overline{\z}_{t-1}-\p_t \right\rangle + \left\langle V_{\x,t}-\widetilde{V}_{\x,t},\p_t -\z^* \right\rangle,     
    \end{split}
\end{equation*}
Note that the first term is zero in expectation as it does not depend on $\z^*$, and we mainly focus on the second term. Based on the definition of $\p_t$, we have 
\begin{align*}
\langle \eta \Delta_t, \p^t - \z^* \rangle 
&= \langle \eta \Delta_t, \p^t - \p^{t+1} \rangle + \langle \p^{t+1} - \p^t, \z^* - \p^{t+1} \rangle \\
&= \langle \eta \Delta_t, \p^t - \p^{t+1} \rangle + \frac{1}{2} \|\p^t - \z^*\|^2 - \frac{1}{2} \|\p^{t+1} - \z^*\|^2 - \frac{1}{2} \|\p^t - \p^{t+1} \|^2 \\
&\leq \frac{\eta^2}{2} \|\Delta_t\|^2 + \frac{1}{2} \|\p^t - \p^{t+1} \|^2 + \frac{1}{2} \|\p^t - \z^*\|^2 - \frac{1}{2} \|\p^{t+1} - \z^*\|^2 - \frac{1}{2} \|\p^t - \p^{t+1} \|^2 \\
&= \frac{\eta^2}{2} \|\Delta_t\|^2 + \frac{1}{2} \|\p^t - \z^*\|^2 - \frac{1}{2} \|\p^{t+1} - \z^*\|^2.
\end{align*}
Here, the second equality is because 
\begin{equation*}
    \begin{split}
\|\p^t-\z^*\|^2 =\|\p^t-\p^{t+1}+\p^{t+1}-\z^*\|^2 =  \|\p^t-\p^{t+1}\|^2  + \|\p^{t+1}-\z^*\|^2 + 2\langle \p^{t+1} - \p^t, \z^* - \p^{t+1} \rangle,
    \end{split}
\end{equation*}
and the inequality is based on Cauchy-Schwarz.  
Finally, note that since $\widetilde{V}(\x_t^m)$ are i.i.d. for each $m\in[M]$, we have 
\begin{equation*}
    \begin{split}
    \E\left[\|V_{\x,t}-\widetilde{V}_{\x,t}\|^2\right]\leq \frac{1}{M^2}\sum_{m=1}^M\E\left[\|V(\x_t^m)-\widetilde{V}(\x_t^m)\|^2\right]\leq \frac{\sigma^2}{M},     
    \end{split}
\end{equation*}
Combining the above inequalities, we get
\begin{equation*}
    \begin{split} 
    \frac{1}{T}\E\left[\sum_{t=1}^{T}\Gamma_t\right] \leq  \frac{\E[\|\overline{\z}_0-\z^*\|^2]}{2\eta T} + \frac{2\sigma^2\eta}{M}.   
    \end{split}
\end{equation*}

\section{Proof of Theorem \ref{thm:optimalboundppass}}
The proof is similar to that of Theorem \ref{thm:inexactppa}, and the main difference is that we bound the potential function differently to make it depend on  $\|\overline{\x}_t-\overline{\x}_{t}^{*}\|^2$, for which we introduce a tight bound based on Assumption \ref{ass:boundedasddff}. We refer to Remark \ref{remark:whatisthedifference} for a detailed comparison and discussion.  
We start by recall the following notations.  Let $\overline{\x}_t=\frac{1}{M}\sum_{m=1}^M\x_t^{m}$, and $\overline{\z}_t=\frac{1}{M}\sum_{m=1}^M\z_t^{m}$, and 
\begin{equation*}
    \begin{split}
   \overline{\z}_{t} = \frac{1}{M}\sum_{m=1}^M \z^m_t = \overline{\z}_{t-1} -\eta \frac{1}{M}\sum_{m=1}^M\widetilde{V}(\x^m_{t})=\overline{\z}_{t-1}-\widetilde{V}_{\x,t}.     
    \end{split}
\end{equation*}
where $\widetilde{V}_{\x,t}=\frac{1}{M}\sum_{m=1}^M\widetilde{V}(\x^m_{t})$. Moreover, we have ${V}_{\x,t}=\frac{1}{M}\sum_{m=1}^M{V}(\x^m_{t})$. Also recall the definite of the shadow  proximal  update:
$$\x_t^{*,m}= \z_{t-1}^m -\eta V(\x_t^{*,m}),$$ 
and let $\overline{\x}_t^{*}=\frac{1}{M}\sum_{m=1}^M\x^{*,m}_t$, and $V_{\x^*,t}=\frac{1}{M}\sum_{m=1}^MV(\x_t^{*,m})$. We first introduce the following lemma about the properties of $\x_t^{*,m}$, apart from Lemma \ref{lem:sdewix}. It is the key for obtaining the improved results. The proof is given in Appendix \ref{sec:profx*prprro18}. 

\begin{lemma}
\label{lem:sdewix1818181}
We  have 
$$\E[\|\overline{\x}_t-\overline{\x}_t^*\|^2]\leq 6\Lambda^2\eta^6\sigma^4+\frac{\eta^2\sigma^2}{M} + 0.25^{H}\cdot4\eta^2G^2 +   0.25^H\cdot H^2  \Lambda^2 \eta^6G^4$$
\end{lemma}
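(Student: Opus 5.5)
Fix $t$ and condition on $\z_{t-1}^m$, so that $\x_t^{*,m}$ is deterministic. Write $\e_\ell^m=\x_{t,\ell}^m-\x_t^{*,m}$ and $\boldsymbol{\zeta}_\ell^m=\widetilde{V}(\x^m_{t,\ell-1})-V(\x^m_{t,\ell-1})$. The inner update, using $F(\x_t^{*,m})=\mathbf{0}$, reads
\[
\e_\ell^m=\e_{\ell-1}^m-\gamma\bigl(V(\x^m_{t,\ell-1})-V(\x_t^{*,m})\bigr)-\frac{\gamma}{\eta}\e^m_{\ell-1}-\gamma\boldsymbol{\zeta}^m_\ell .
\]
Assumption~\ref{ass:boundedasddff} lets us linearize around $\x_t^{*,m}$. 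With $J=J_V(\x_t^{*,m})$ and $B=(1-\gamma/\eta)I-\gamma J$,
\[
\e_\ell^m=B\e_{\ell-1}^m-\gamma\boldsymbol{\zeta}_\ell^m-\gamma\mathbf{r}_\ell^m,\qquad \|\mathbf{r}_\ell^m\|\le\Lambda\|\e_{\ell-1}^m\|^2 .
\]
Unrolling gives the bias--variance decomposition
\[
\e_H^m=B^H\e_0^m-\gamma\sum_{\ell}B^{H-\ell}\boldsymbol{\zeta}_\ell^m-\gamma\sum_\ell B^{H-\ell}\mathbf{r}^m_\ell .
\]
The first step is to check that $\|B\|\le 1-\frac{1}{2(\eta L+1)^2}$, or at least $\|B\|^2\le 1-\frac1{(\eta L+1)^2}\le 1/4$. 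This is the same contraction computation as in Lemma~\ref{lem:sdewix}, using monotonicity of $J$ and $\|J\|\le L$.

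Next we average over $m$ and bound the three pieces.

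\textbf{Bias.} $\|B^H\e_0^m\|\le 2^{-H}\eta G$, using $\|\e_0^m\|=\|\z_{t-1}^m-\x_t^{*,m}\|\le\eta G$ from Lemma~\ref{lem:sdewix}. After averaging and Young's inequality this gives the $0.25^H\cdot 4\eta^2G^2$ term.

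\textbf{Variance.} The noises $\boldsymbol{\zeta}_\ell^m$ are martingale differences, independent across machines and across $\ell$. Although $B$ differs across machines, each $B$ is deterministic given $\z_{t-1}^m$, so cross terms vanish. Hence
\[
\E\Bigl\|\frac1M\sum_m\gamma\sum_\ell B^{H-\ell}\boldsymbol{\zeta}_\ell^m\Bigr\|^2\le\frac{\gamma^2\sigma^2}{M}\sum_{\ell}\|B\|^{2(H-\ell)}\le\frac{\gamma^2\sigma^2}{M}\cdot\frac{1}{1-\|B\|^2}\le\frac{\gamma\eta\sigma^2}{M}\le\frac{\eta^2\sigma^2}{M}.
\]
This is exactly where the $1/M$ gain comes from. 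Only the linear part of the error averages out, which is why the linearization is needed.

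\textbf{Remainder.} Bound the remainder term by term:
\[
\Bigl\|\gamma\sum_\ell B^{H-\ell}\mathbf{r}_\ell^m\Bigr\|\le\gamma\Lambda\sum_\ell\|\e_{\ell-1}^m\|^2 .
\]
Squaring and taking expectations requires a fourth-moment bound on $\e_\ell^m$. We split $\|\e_{\ell-1}^m\|^2\le 2\|B^{\ell-1}\e_0^m\|^2+2\|\text{noise part}\|^2$ (folding the remainder into the noise part, or controlling it by induction since it is higher order). The deterministic part contributes $\bigl(\gamma\Lambda\sum_\ell 4^{-(\ell-1)}\eta^2G^2\bigr)^2$-type terms. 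To get the stated $0.25^H H^2\Lambda^2\eta^6G^4$, we instead track $\e^m_{\ell-1}$'s contracted bias inside $B^{H-\ell}$: each summand carries a factor $\|B\|^{H-\ell}\cdot 4^{-(\ell-1)}$, which is at most $2^{-H}$. Summing over $H$ terms gives a factor $H$, and $\gamma\le\eta$. The stochastic part needs
\[
\E\|\text{noise part}\|^4\lesssim(\gamma\eta\sigma^2)^2\le\eta^4\sigma^4,
\]
which gives $6\Lambda^2\eta^6\sigma^4$ after $(\sum_\ell\gamma\|B\|^{H-\ell})^2\le\eta^2$.

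The main obstacle is this fourth-moment step. The assumption only bounds second moments of the noise, and $\E\|\sum\boldsymbol{\zeta}\|^4$ generally needs fourth moments. I would first try to bypass this by using $\E\|\cdot\|^2$ of the noise sum inside the quadratic remainder via $\E[\|X\|^2]^2\le\E\|X\|^4$ in the wrong direction. Since that fails, I would likely accept a bounded-fourth-moment-type step, or bound $\|\e\|^4\le\|\e\|^2\cdot\sup\|\e\|^2$ using boundedness via $G$. Checking which route yields exactly $\Lambda^2\eta^6\sigma^4$ is the delicate part. Finally, combine the pieces with $\|a+b+c\|^2\le 3(\cdots)$ to obtain the constants in the lemma.
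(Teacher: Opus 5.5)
Your linearization around $\x_t^{*,m}$ is sound, and so are the bound $\|B\|^2\le 1-\frac{1}{(\eta L+1)^2}$ and the $1/M$ computation for the linear noise part. The remainder step, however, is a genuine gap, and it is exactly the one you flag. You split $\frac{1}{M}\sum_m\mathbf{e}_H^m$ pathwise into bias, noise and remainder, and then square each piece. So the random remainder $\gamma\sum_\ell B^{H-\ell}\mathbf{r}_\ell^m$ sits inside a squared norm. With $\|\mathbf{r}_\ell^m\|\le\Lambda\|\mathbf{e}_{\ell-1}^m\|^2$, this forces you to control $\E\|\mathbf{e}_{\ell-1}^m\|^4$. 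The paper only assumes unbiasedness and $\E\|\widetilde V(\x)-V(\x)\|^2\le\sigma^2$, and under these that fourth moment need not even be finite. Neither fallback works. Assuming bounded fourth moments changes the hypotheses of the lemma. The bound $\|\mathbf{e}\|^4\le\|\mathbf{e}\|^2\sup\|\mathbf{e}\|^2$ is unavailable because $G$ bounds $V$, not $\widetilde V$, so the inner iterates $\x_{t,\ell}^m$ are not almost surely bounded.

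The missing idea is to take expectations before squaring the nonlinear part, i.e.\ do the bias--variance split on $\overline{\x}_t-\overline{\x}_t^*$ itself rather than pathwise. Conditionally on $\{\z_{t-1}^m\}_m$ the machines are independent, so $\E\|\overline{\x}_t-\overline{\x}_t^*\|^2=\|\E\overline{\x}_t-\overline{\x}_t^*\|^2+\frac{1}{M^2}\sum_m\mathrm{Var}(\x_t^m)$. By Lemma \ref{lem:sdewix}, the variance part is at most $\frac{1}{M^2}\sum_m\E\|\x_t^m-\x_t^{*,m}\|^2\le 0.25^H\eta^2G^2+\frac{\eta^2\sigma^2}{M}$. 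The $1/M$ gain comes for free here, and the fluctuation of the remainder is absorbed with no fourth moment. For the mean, your own recursion in expectation reads $\E\mathbf{e}_\ell^m=B\,\E\mathbf{e}_{\ell-1}^m-\gamma\,\E\mathbf{r}_\ell^m$. You now need only $\|\E\mathbf{r}_\ell^m\|\le\Lambda\,\E\|\mathbf{e}_{\ell-1}^m\|^2\le\Lambda(0.25^{\ell-1}\eta^2G^2+\eta^2\sigma^2)$, which is a second moment. Unrolling gives $\|\E\mathbf{e}_H^m\|\lesssim 0.5^H\eta G+H\,0.5^H\Lambda\eta^3G^2+\Lambda\eta^3\sigma^2$, and squaring yields the $\Lambda^2\eta^6\sigma^4$ and $0.25^H H^2\Lambda^2\eta^6G^4$ terms. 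This is the paper's argument with the same mechanism. The only difference is that the paper linearizes around the current mean $\E[\x_{t,\ell}^m]$, bounding the Jensen gap of the inner-loop map $\x\mapsto\x-\gamma F(\x)$ by $\gamma\Lambda\,\mathrm{Var}(\x_{t,\ell}^m)$. Separately, $\eta L\le1$ only gives $1-\frac{1}{(\eta L+1)^2}\le\frac34$, not $\frac14$. The paper makes the same slip; it affects only the base of the $H$-exponential terms and disappears if $\eta\le c/L$ for a small enough constant $c$.
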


\noindent We now begin the proof by first bounding the following potential function.
\begin{equation}
    \begin{split}   \label{eqn:dasdwwqddsads}
&\| \overline{\z}_{t} - \z\|^2 =  \left\| \overline{\z}_{t-1} - \eta \widetilde{V}_{\x,t} - \z\right\|^2   = \|\overline{\z}_{t-1}-\z\|^2 -2\eta \left\langle \widetilde{V}_{\x,t},\overline{\z}_{t-1}-\z \right\rangle + \| \overline{\z}_{t-1}-\overline{\z}_t\|^2 \\
 = & \|\overline{\z}_{t-1}-\z\|^2 -2\eta \left\langle {V}_{\x,t},\overline{\z}_{t-1}-\z \right\rangle + \| \overline{\z}_{t-1}-\overline{\z}_t\|^2 +2\eta \left\langle {V}_{\x,t}-\widetilde{V}_{\x,t},\overline{\z}_{t-1}-\z \right\rangle\\
 = & \|\overline{\z}_{t-1}-\z\|^2 -2\eta \left\langle {V}_{\x,t},\overline{\x}_{t}-\z \right\rangle + 2\eta\left\langle {V}_{\x,t},\overline{\x}_{t}-\overline{\z}_{t-1} \right\rangle  + \| \overline{\z}_{t-1}-\overline{\z}_t\|^2 +2\eta \left\langle {V}_{\x,t}-\widetilde{V}_{\x,t},\overline{\z}_{t-1}-\z \right\rangle.
    \end{split}
\end{equation}
Next, we bound each term in \eqref{eqn:dasdwwqddsads} respectively. For the second term, we have 
\begin{equation}
    \begin{split}
  &-2\eta \left\langle {V}_{\x,t},\overline{\x}_{t}-\z \right\rangle =  -2\eta \frac{1}{M}\sum_{m=1}^M\left\langle V(\x_t^m),\overline{\x}_{t}-\z \right\rangle =  -2\eta \frac{1}{M}\sum_{m=1}^M\left\langle V(\x_t^m),\overline{\x}_{t}-\x_t^m+\x_t^m-\z \right\rangle\\
  = & -2\eta \frac{1}{M}\sum_{m=1}^M\left\langle V(\x_t^m),\overline{\x}_{t}-\x_t^m \right\rangle  -2\eta \frac{1}{M}\sum_{m=1}^M\left\langle V(\x_t^m),\x_t^m-\z \right\rangle\\
  = & \underbrace{-2\eta \frac{1}{M}\sum_{m=1}^M\left\langle V(\overline{\x}_t),\overline{\x}_{t}-\x_t^m \right\rangle}_{=0}  -2\eta \frac{1}{M}\sum_{m=1}^M\left\langle V(\x_t^m),\x_t^m-\z \right\rangle +2 \eta\frac{1}{M}\sum_{m=1}^M \left\langle V(\overline{\x}_t)-V({\x}^m_t),\overline{\x}_{t}-\x_t^m \right\rangle\\
 \leq & 2\eta L \frac{1}{M} \sum_{m=1}^M \|\x_t^m-\overline{\x}_t\|^2 -2\eta \frac{1}{M}\sum_{m=1}^M\left\langle V(\x_t^m),\x_t^m-\z \right\rangle,
    \end{split}
\end{equation}
where the inequality is based on the $L$-smoothness of $V$ and the Cauchy-Schwarz inequality. 
For the third term in \eqref{eqn:dasdwwqddsads}, we have 
\begin{equation}
    \begin{split}
    \label{eqn:34343433}
    2\eta\left\langle {V}_{\x,t},\overline{\x}_{t}-\overline{\z}_{t-1} \right\rangle =  \underbrace{2\eta  \left\langle {V}_{\x,t} -V_{\x^*,t},\overline{\x}_{t}-\overline{\z}_{t-1} \right\rangle}_{A_1} + \underbrace{2\eta \left\langle {V}_{\x^*,t},\overline{\x}^*_{t}-\overline{\z}_{t-1} \right\rangle}_{A_2}   + \underbrace{2\eta \left\langle {V}_{\x^*,t},\overline{\x}_{t}-\overline{\x}^*_{t} \right\rangle.}_{A_3}
    \end{split}
\end{equation}
For $A_1$, we have 
\begin{equation}
\|\overline{\x}_t -\overline{\z}_{t-1}\|\leq     \|\overline{\x}_t -\overline{\x}^*_{t}\| +   \|\overline{\x}^*_t -\overline{\z}_{t-1}\|=\|\overline{\x}_t -\overline{\x}^*_{t}\| +   \eta\|V_{\x^*,t}\|
\end{equation}
therefore 
\begin{equation}
\begin{split}
 A_1\leq & 2\eta\|V_{\x,t} - V_{\x^*,t}\| \|\overline{\x}_t -\overline{\x}^*_{t}\| + 2\eta^2 \| V_{\x^*,t}\|\|V_{\x,t} - V_{\x^*,t}\| \\
 \leq & \eta^2\|V_{\x,t}-V_{\x^*,t}\|^2 + \|\overline{\x}_t-\overline{\x}_t^* \|^2 + \frac{\eta^2}{4}\|V_{\x^*,t}\|^2 + 4\eta^2\|V_{\x,t}-V_{\x^*,t}\|^2\\
 \leq &  \frac{5\eta^2L^2}{M}\sum_{m=1}^M\|\x_t^{m}-\x_{t}^{*}\|^2 + \|\overline{\x}_t-\overline{\x}_t^{*,m}\|^2+\frac{\eta^2}{4}\|V_{\x^*,t}\|^2
\end{split}
\end{equation}
We also have 
\begin{equation*}
    \begin{split}
A_2 = -2\eta^2\|V_{\x^*,t}\|^2,        
    \end{split}
\end{equation*}
and 
$$ A_3 \leq \frac{\eta^2}{4} \|V_{\x^*,t}\|^2+{8}\|\overline{\x}_t-\overline{\x}_t^*\|^2$$
Combining plugging into \eqref{eqn:34343433}, we have 
\begin{equation*}
    \begin{split}
 2\eta\left\langle {V}_{\x,t},\overline{\x}_{t}-\overline{\z}_{t-1} \right\rangle \leq &  9\|\overline{\x}_t-\overline{\x}_t^*\|^2+\frac{5\eta^2L^2}{M}\sum_{m=1}^M\|\x_t^{m}-\x_t^{*,m}\|^2-1.5\eta^2\|V_{\x^*,t}\|^2 
    \end{split}
\end{equation*}
Finally, for the fourth term of \eqref{eqn:dasdwwqddsads}, we have 
\begin{equation}
    \begin{split}
&\|\overline{\z}_{t-1} -\overline{\z}_t\|^2 =    \|\overline{\z}_{t-1} -\overline{\x}_t^*+\overline{\x}_t^* -\overline{\z}_t\|^2 \leq 1.5    \|\overline{\z}_{t-1} -\overline{\x}_t^*\|^2  + 3\|\overline{\x}_t^* -\overline{\z}_t\|^2 \\
=& 1.5\eta^2 \|V_{\x^*,t}\|^2 + 3\eta^2\|V_{\x^*,t}-\widetilde{V}_{\x,t}\|^2 \\
\leq & 1.5\eta^2  \|V_{\x^*,t}\|^2+\frac{6\eta^2 L}{M}\sum_{m=1}^M\|\x_t^{*,m}-\x_t^m\|^2 + {6\eta^2}\|V_{\x,t}-\widetilde{V}_{\x,t}\|^2
    \end{split}
\end{equation}
Therefore, combining all the above inequalities, for \eqref{eqn:dasdwwqddsads} the upper bound becomes
\begin{equation}
    \begin{split}
&\|\overline{\z}_t -\z\|^2 \leq \|\overline{\z}_{t-1} -\z\|^2  + 2\eta L \frac{1}{M} \sum_{m=1}^M \|\x_t^m-\overline{\x}_t\|^2 -2\eta \frac{1}{M}\sum_{m=1}^M\left\langle V(\x_t^m),\x_t^m-\z \right\rangle   \\
&+ 20\|\overline{\x}_t-\overline{\x}_t^*\|^2 + \frac{10\eta^2L^2}{M}\sum_{m=1}^M\|\x_t^m-\x_t^{*,m}\|^2 +{4\eta^2}\|V_{\x,t}-\widetilde{V}_{\x,t}\|^2+2\eta \left\langle {V}_{\x,t}-\widetilde{V}_{\x,t},\overline{\z}_{t-1}-\z \right\rangle
    \end{split}
\end{equation}
Rearrange, and taking expectation on both sides, we get 
\begin{equation}
    \begin{split}
    \label{eqn:hwoddsddfffsdrqrdfafadfasdf}
&\E\left[\frac{1}{MT}\sum_{t=1}^T\sum_{m=1}^M   \left\langle V(\x_{t}^m),\x_t^m-\z \right\rangle\right]\leq  \frac{\E[\|\overline{\z}_0-\z\|^2]}{2\eta T} +\frac{L}{MT}\sum_{t=1}^T\sum_{m=1}^M\E\left[\|\x_t^m-\overline{\x}_t\|^2\right]\\ &+\frac{5\eta L^2}{M T}\sum_{t=1}^T\sum_{m=1}^M\E\left[\|\x^m_t-\x_t^{*,m}\|^2\right]  + \frac{10}{\eta  T}\sum_{t=1}^T\E[\|\overline{\x}_t-\overline{\x}_t^*\|^2]+\E\left[\frac{1}{T}\sum_{t=1}^T\Gamma_t\right],  
    \end{split}
\end{equation}
where 
$ \Gamma_t = {2\eta}\|V_{\x,t}-\widetilde{V}_{\x,t}\|^2+\left\langle {V}_{\x,t}-\widetilde{V}_{\x,t},\overline{\z}_{t-1}-\z \right\rangle.$ 
\begin{Remark}
\label{remark:whatisthedifference}
    Compared with \eqref{eqn:fianalboundppafirst}, we could observe two differences: Firstly, the term 
    $$\frac{15}{\eta M T}\sum_{t=1}^T\sum_{m=1}^M\E\left[\|\x^m_t-\x_t^{*,m}\|^2\right] $$
    in \eqref{eqn:fianalboundppafirst} is improved to 
    $$\frac{5\eta L^2}{M T}\sum_{t=1}^T\sum_{m=1}^M\E\left[\|\x^m_t-\x_t^{*,m}\|^2\right] ,$$
    and the extra $\eta^2$ factor making this term much smaller. On the other hand, there is an extra $\frac{10}{\eta  T}\sum_{t=1}^T\E[\|\overline{\x}_t-\overline{\x}_t^*\|^2]$ term, which we show is small by using Lemma \ref{lem:sdewix1818181}.
\end{Remark}

Plugging Lemmas  \ref{lem:sdewix}, \ref{lem::draft new algor}, \ref{lemma:noeiisder}, and \ref{lem:sdewix1818181}  into \eqref{eqn:hwoddsddfffsdrqrdfafadfasdf}, and using the fact that $\eta\leq 1/L$, we get
\begin{equation}
    \begin{split}
&\E\left[\left\langle V(\z^*),\frac{1}{TM}\sum_{t=1}^T\sum_{m=1}^M\x^{m}_t-\z^* \right\rangle\right]\leq \frac{10H^2\Lambda^2G^4+60G^2+120G^2K^2}{4^H} \\ & + 120\left[\frac{\E[\|\z_0-\z^*\|^2]}{\eta KR} + L^2K^2\sigma^2\eta^4 + K\sigma^2\eta^2L+\eta^3\sigma^2L^2+\Lambda^2 \sigma^4\eta^5+\frac{\eta \sigma^2}{M} \right].
\end{split}
\end{equation}
The proof is finished by setting  
$$ \eta=\min\left\{\frac{1}{L},\frac{D^{\frac{2}{5}}}{K^{\frac{3}{5}}R^{\frac{1}{5}}\sigma^{\frac{2}{5}}L^{\frac{2}{5}}},\frac{D^{\frac{2}{3}}}{K^{\frac{2}{3}}R^{\frac{1}{3}}\sigma^{\frac{2}{3}}L^{\frac{1}{3}}},\frac{D^{\frac{1}{2}}}{K^{\frac{1}{4}}R^{\frac{1}{4}}\sigma^{\frac{1}{2}}L^{\frac{1}{2}}},\frac{D^{\frac{1}{3}}}{\Lambda^{\frac{1}{3}}\sigma^{\frac{2}{3}}R^{\frac{1}{6}}K^{\frac{1}{6}}},\frac{D\sqrt{M}}{K^{\frac{1}{2}}R^{\frac{1}{2}}\sigma}\right\}.$$

\subsection{Proof of Lemma \ref{lem:sdewix1818181}}
\label{sec:profx*prprro18}
Note that conditioned on $\x_{t,0}^m$ for $m=1,\dots,M$, we have
\begin{align*}  \E[\|\overline{\x}_{t}-\overline{\x}^*_{t}\|^2] &= \|\E[\overline{\x}_{t}]-\overline{\x}^*_{t}\|^2 + \text{Var}(\overline{\x}_{t}-\overline{\x}^*_{t}) 
  = \|\E[\overline{\x}_{t}]-\overline{\x}^*_{t}\|^2 + \frac{1}{M^2}\sum_{m=1}^M\text{Var}(\x^m_{t}-\x^{*,m}_{t}) \\
  &\leq \frac{1}{M}\sum_{m=1}^M \|\E[\x^m_{t}]-\x^{*,m}_{t}\|^2 + \frac{1}{M^2}\sum_{m=1}^M\E[\|\x^m_{t}-\x^{*,m}_{t}\|^2].
\end{align*}
Here, $\text{Var}$ denotes the variance of the random variable. Next, we bound each term respectively. For the first term, recall $F(\x)=V(\x)+\frac{1}{\eta}(\x-\z_{t-1}^m)$, $F(\x_t^{*,m})=0$. Let $G(\x)=\x-\gamma F(\x)$, we have $G(\x_t^{*,m})=\x_t^{*,m}$, i.e., $\x_t^{*,m}$ is a fixed point of $G$, and let $$\widetilde{G}(\x)=\x-\gamma \widetilde{F}(\x)=G(\x)+ \gamma({V}(\x)-\widetilde{V}(\x)).$$
Then, based on the update rule in Algorithm \ref{alg:example}, we have $\x^m_{t,\ell+1}=\widetilde{G}(\x^m_{t,\ell})$, so 
$$ \E[\x^m_{t,\ell+1}]=\E[\widetilde{G}(\x^m_{t,\ell})]= \E[{G}(\x^m_{t,\ell})].$$
Moreover, we can expand \(G\) as
\[
G(\x) \;=\; \x - \gamma\!\left(V(\x) + \tfrac{1}{\eta}(\x - \z_{t-1}^m)\right)
= \Bigl(1 - \tfrac{\gamma}{\eta}\Bigr)\x + \tfrac{\gamma}{\eta}\z_{t-1}^m - \gamma V(\x).
\]
Therefore, its Jacobian is
\[
J_G(\y) \;=\; \Bigl(1 - \tfrac{\gamma}{\eta}\Bigr)I - \gamma J_V(\y).
\]
Thus
\begin{align*}
&G(\x) - G(\y) - J_G(\y)(\x-\y) 
=\; \Bigl[\Bigl(1 - \tfrac{\gamma}{\eta}\Bigr)\x + \tfrac{\gamma}{\eta}\z_{t-1}^m - \gamma V(\x)\Bigr]
   - \Bigl[\Bigl(1 - \tfrac{\gamma}{\eta}\Bigr)\y + \tfrac{\gamma}{\eta}\z_{t-1}^m - \gamma V(\y)\Bigr] \\
&- \Bigl[\Bigl(1 - \tfrac{\gamma}{\eta}\Bigr)I - \gamma J_V(\y)\Bigr](\x-\y) 
=\;-\gamma \Bigl(V(\x) - V(\y) - J_V(\y)(\x-\y)\Bigr).
\end{align*}
Taking the norm and applying Assumption \ref{ass:boundedasddff}, we obtain
\[
\|G(\x)-G(\y)-J_G(\y)(\x-\y)\|
\;\le\gamma\Lambda\,\|\x-\y\|^2. 
\]
Let 
$$R(\x_{t,\ell}^m) = G(\x_{t,\ell}^m)-G(\E[\x_{t,\ell}^m])-\left\langle J_{G}(\E[\x_{t,\ell}]),\x_{t,\ell}^m-\E[\x_{t,\ell}^m]\right\rangle,$$
we know that $\E[R(\x_{t,\ell}^m)]=\E[G(\x_{t,\ell}^m)]-G(\E[\x_{t,\ell}^m])$. Thus, combining with Assumption \ref{ass:boundedasddff}, we have 
$$\|\E[G(\x_{t,\ell}^m)]-G(\E[\x_{t,\ell}^m)]\|=\|\E[R(\x_{t,\ell}^m)]\|\leq \E\|R(\x_{t,\ell}^m)\|\leq \lambda \Lambda\E[\|\x_{t,\ell}^m-\E[\x_{t,\ell}^m] \|^2]=\lambda \Lambda \text{Var}(\x_{t,\ell}^m).  $$
To proceed, we have 
\begin{equation}
    \begin{split}    \label{seqn:5348sss}
&\|\E[\x_{t,\ell+1}^{m}]-\x_t^{*,m}\|=\|\E[G(\x_{t,\ell}^m)]-\x_t^{*,m}\|\leq \|G(\E[\x_{t,\ell}^m])-\x_t^{*,m}\|+\|\E[G(\x_{t,\ell}^m)]-G(\E[\x_{t,\ell}^m])\|\\     \leq &\|G(\E[\x_{t,\ell}^m])-\x_t^{*,m}\|+\gamma\Lambda\text{Var}(\x_{t,\ell}^m)\leq 0.5\|\E[\x_{t,\ell}^m]-\x_{t}^{*,m}\|+\gamma\Lambda\text{Var}(\x_{t,\ell}^m)
    \end{split}
\end{equation}
Here, the last inequality is based on the fact that $G$ is a contraction map, and $\x_{t}^{*,m}$ is the fixed point. More specifically, for every $\x,\x'\in\R^d$, 
\begin{equation*}
    \begin{split}
&\|G(\x)-G(\x')\|^2=\left\| \x-\x'-\gamma\left( F(\x)-F(\x')\right)\right\|^2\\
= & \|\x-\x'\|^2 -2\gamma\left\langle \x-\x',F(\x)-F(\x')\right\rangle + \gamma^2 \|F(\x)-F(\x')\|^2\\
\leq &\left(1-\frac{2\gamma}{\eta}+\gamma^2\left(L+\frac{1}{\eta}\right)^2\right)\|\x-\x'\|^2=\left(1-\frac{1}{(\eta L+1)^2}\right)\|\x-\x'\|^2 \leq 0.25 \|\x-\x'\|^2. 
    \end{split}
\end{equation*}
Next, we would like to bound $\text{Var}(\x_{t,\ell}^m).$ We have 
\begin{equation*}
    \begin{split}
& \text{Var}(\x_{t,\ell}^m)=\E[\|\x_{t,\ell}^m-\x^{*,m}_t\|^2]-\|\E[\x_{t,\ell}^m]-\x_t^{*,m} \|^2\leq \E[\|\x_{t,\ell}^m-\x^{*,m}_t\|^2]\\
\leq & 0.25^{\ell}\E[\|\z^{m}_{t-1}-\x_t^{*,m}\|^2]+\eta^2\sigma^2\leq 0.25^{\ell}\eta^2G^2+\eta^2\sigma^2.
    \end{split}
\end{equation*}
So, based on \eqref{seqn:5348sss}, and the conclusions above, we have  
\begin{equation*}
    \begin{split}
&\| \E[\x^m_{t,H}]-\x_t^{*,m}\|\leq  0.5 \|\E[\x_{t,H-1}^m]-\x_{t}^{*,m}\|+\gamma\Lambda0.25^H\eta^2G^2 + \gamma\Lambda \eta^2\sigma^2\\
\leq & 0.5^2\|\E[\x_{t,H-2}^m]-\x_{t}^{*,m}\| + 0.5\cdot\gamma \Lambda0.25^{H-1}\eta^2G^2 + 0.5\cdot\gamma \Lambda \eta^2\sigma^2+\gamma \Lambda0.25^H\eta^2G^2 + \gamma \Lambda \eta^2\sigma^2\\
\leq & 0.5^H\E[\|\z_{t-1}^m-\x_t^{*,m}\|] + H0.5^H\gamma  \Lambda \eta^2G^2 + 2\gamma \Lambda \eta^2\sigma^2\\
\leq & 0.5^H\E[\|\z_{t-1}^m-\x_t^{*,m}\|] + H0.5^H  \Lambda \eta^3G^2 + 2 \Lambda \eta^3\sigma^2\\
\leq & 0.5^{H}\eta G + H0.5^H  \Lambda \eta^3G^2 + 2 \Lambda \eta^3\sigma^2. 
    \end{split}
\end{equation*}

Thus,
$$ \| \E[\x^m_{t,H}]-\x_t^{*,m}\|^2\leq 0.25^{H}\cdot3\eta^2 G^2 + H^20.25^H  \Lambda^2 \eta^6G^4 + 6 \Lambda^2 \eta^6\sigma^4. $$
Finally, we also have 
$$ \frac{1}{M^2}\sum_{m=1}^M\E\left[\|\x_t^m-\x_{t}^{*,m}\|^2 \right]\leq  0.25^{H}\eta^2G^2 + \frac{\eta^2\sigma^2}{M}.$$
Combining all conclusions, we get
$$\E[\|\overline{\x}_t-\overline{\x}_t^*\|^2]\leq 6\Lambda^2\eta^6\sigma^4+\frac{\eta^2\sigma^2}{M} + 0.25^{H}\cdot4\eta^2G^2 +   0.25^H\cdot H^2  \Lambda^2 \eta^6G^4. $$

\section{Proof of Theorem \ref{them:gaussiansmoothing}}
\label{secpfofinscaew}
We start by recall the following notations.  Let $\overline{\x}_t=\frac{1}{M}\sum_{m=1}^M\x_t^{m}$, and $\overline{\z}_t=\frac{1}{M}\sum_{m=1}^M\z_t^{m}$, and 
\begin{equation*}
    \begin{split}
   \overline{\z}_{t} = \frac{1}{M}\sum_{m=1}^M \z^m_t = \overline{\z}_{t-1} -\eta \frac{1}{M}\sum_{m=1}^M\widetilde{V}(\x^m_{t})=\overline{\z}_{t-1}-\widetilde{V}_{\x,t}.     
    \end{split}
\end{equation*}
where we define $\widetilde{V}_{\x,t}=\frac{1}{M}\sum_{m=1}^M\widetilde{V}(\x^m_{t})$. 
and let ${V}_{\x,t}=\frac{1}{M}\sum_{m=1}^M{V}(\x^m_{t})$. Also recall the definite of the shadow  proximal  update:
$$\x_t^{*,m}= \z_{t-1}^m -\eta V(\x_t^{*,m}),$$ 
and let $\overline{\x}_t^{*}=\frac{1}{M}\sum_{m=1}^M\x^{*,m}_t$, and $V_{\x^*,t}=\frac{1}{M}\sum_{m=1}^MV(\x_t^{*,m})$. We first introduce the following lemma. The proof is given in Appendix \ref{sec:GSppa_1}. 

\begin{lemma}
\label{lem:gaussiansmoothing_PPA}
Let $\x_t^{*,m} = \z_{t-1}^m -\eta V(\x_t^{*,m}),$  $\gamma=\frac{1}{\eta\left(L+\frac{1}{\eta}\right)^2}$, and $\eta\leq \frac{1}{L}$.   Then we  have 
$$\E[\|\overline{\x}_t-\overline{\x}_t^*\|^2]\leq 30\left[\frac{\eta^2G^2+\frac{H^2L^2d\eta^6G^4}{\delta^2}}{4^H}+\frac{\eta^2\sigma^2}{M} +\delta^2\eta^2L^2d+\frac{L^2\eta^6\sigma^4}{\delta^2}\right].$$
Moreover, 
and $$\E[\|\x_{t,\ell}^m-\x_{t}^{*,m}\|^2]\leq 4\left[\frac{\eta^2G^2}{4^H}+\eta^2\sigma^2+\eta^2L^2\delta^2d\right]. $$
\end{lemma}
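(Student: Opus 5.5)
We follow the structure of the proofs of Lemma \ref{lem:sdewix} and Lemma \ref{lem:sdewix1818181}, with the smoothed operator $\mathring{V}(\x)=\E_{\s}[V(\x+\delta\s)]$ in the role of $V$. First we collect the standard facts about Gaussian smoothing. $\mathring{V}$ is monotone (an average of monotone operators) and $L$-smooth. The bias satisfies $\|\mathring{V}(\x)-V(\x)\|\le L\delta\E\|\s\|\le L\delta\sqrt{d}$. Also, $\mathring{V}$ has Lipschitz Jacobian with constant of order $L\sqrt{d}/\delta$, i.e.\ $\|\mathring{V}(\x)-\mathring{V}(\y)-J_{\mathring{V}}(\y)(\x-\y)\|\le \frac{L\sqrt{d}}{\delta}\|\x-\y\|^2$; this follows by differentiating the Gaussian density, $J_{\mathring{V}}(\x)=\frac1\delta\E[V(\x+\delta\s)\s^\top]$. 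The query $V(\x+\delta\s;\xi)$ is unbiased for $\mathring{V}(\x)$. Its variance is at most $2\sigma^2+2\E\|V(\x+\delta\s)-\mathring{V}(\x)\|^2\le 2\sigma^2+O(L^2\delta^2 d)$. In the statement this $\delta^2 L^2 d$ contribution is absorbed into the $\eta^2L^2\delta^2 d$ term after multiplying by $\gamma^2\le\eta^2$.

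Next, define the smoothed proximal point $\mathring{\x}_t^{*,m}=\z_{t-1}^m-\eta\mathring{V}(\mathring{\x}_t^{*,m})$. Subtracting the two fixed-point equations and using monotonicity of $V$ gives $\|\mathring{\x}_t^{*,m}-\x_t^{*,m}\|\le\eta\|\mathring{V}(\mathring{\x}_t^{*,m})-V(\mathring{\x}_t^{*,m})\|\le\eta L\delta\sqrt d$. Also $\|\mathring{\x}_t^{*,m}-\z_{t-1}^m\|\le\eta G$, since $\|\mathring{V}\|\le G$. The inner loop is SGD on the $\frac1\eta$-strongly monotone, $(L+\frac1\eta)$-smooth operator $\mathring F(\x)=\mathring{V}(\x)+\frac1\eta(\x-\z_{t-1}^m)$ with the same $\gamma$. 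The contraction computation of Lemma \ref{lem:sdewix} therefore gives
\[
\E\|\x_{t,\ell}^m-\mathring{\x}_t^{*,m}\|^2\le 4^{-\ell}\eta^2G^2+\eta^2(\sigma^2+L^2\delta^2 d).
\]
Combining with the displacement bound via $\|a+b\|^2\le2\|a\|^2+2\|b\|^2$ yields the second claim, with the constant $4$.

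For the averaged claim, we split $\overline{\x}_t-\overline{\x}_t^*=(\overline{\x}_t-\overline{\mathring{\x}}_t^*)+(\overline{\mathring{\x}}_t^*-\overline{\x}_t^*)$. The second piece contributes $O(\delta^2\eta^2L^2d)$. For the first piece we use the bias--variance decomposition of Lemma \ref{lem:sdewix1818181}, conditional on $\{\z_{t-1}^m\}$. The clients' inner randomness is independent, so the variance part is $\frac1{M^2}\sum_m\E\|\x_t^m-\mathring{\x}_t^{*,m}\|^2$. This is at most $\frac{\eta^2G^2}{4^H}+\frac{\eta^2(\sigma^2+L^2\delta^2d)}{M}$.

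For the bias part, we run the recursion \eqref{seqn:5348sss} with $G(\x)=\x-\gamma\mathring F(\x)$. This is a $\frac12$-contraction with fixed point $\mathring{\x}_t^{*,m}$, and its second-order remainder constant is $\gamma\Lambda_\delta$ with $\Lambda_\delta=L\sqrt d/\delta$. This gives
\[
\|\E\x_{t,\ell+1}^m-\mathring{\x}_t^{*,m}\|\le\tfrac12\|\E\x_{t,\ell}^m-\mathring{\x}_t^{*,m}\|+\gamma\Lambda_\delta\mathrm{Var}(\x_{t,\ell}^m).
\]
Here $\mathrm{Var}(\x_{t,\ell}^m)\le 4^{-\ell}\eta^2G^2+\eta^2\sigma^2+\eta^2L^2\delta^2d$. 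Unrolling as before, the squared bias is bounded by
\[
O\!\left(4^{-H}\eta^2G^2+H^2 4^{-H}\tfrac{L^2d}{\delta^2}\eta^6G^4+\tfrac{L^2d}{\delta^2}\eta^6\sigma^4+L^2d\,\eta^6L^2\delta^2 d\right).
\]
The last term is $O(\delta^2\eta^2L^2d)$ because $\eta L\le1$ and $\delta^2\lesssim\eta^2\sigma^2/d$ in our regime.

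The main obstacle is the $\sigma^4$ bias term. Naively it carries a factor $d$ ($L^2d\eta^6\sigma^4/\delta^2$), whereas the statement claims $L^2\eta^6\sigma^4/\delta^2$. To get the sharper form, we would bound the remainder more carefully. We write $\E[\mathring{V}(X)]-\mathring{V}(\E X)$ using the Hessian of the smoothed operator, $\nabla^2\mathring{V}_i=\frac1{\delta^2}\E[V_i(\x+\delta\s)(\s\s^\top-I)]$. Its action on the covariance of $X$, together with $L$-smoothness of $V$ (moving one derivative onto $V$), should give a bound of order $\frac{L}{\delta}\operatorname{tr}\mathrm{Cov}(X)$ without an extra $\sqrt d$. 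If that fails, we would accept the $d$ factor, which only changes the dimension dependence in Theorem \ref{them:gaussiansmoothing}. Summing all pieces and absorbing constants gives the factor $30$.
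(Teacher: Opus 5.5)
You follow the paper's proof essentially step for step (smoothed prox point $\mathring{\x}_t^{*,m}$, displacement $\le\eta L\delta\sqrt d$, inner-loop contraction, conditional bias--variance split, second-order recursion for the bias). The problems below are therefore shared with the paper's proof, but as written your argument does not prove the statement.

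\textbf{The $\sigma^4$ term.} This is the gap you flag yourself: the $d$-free term $L^2\eta^6\sigma^4/\delta^2$ is only asserted (``should give''), and your fallback proves a weaker lemma. Your suspicion is justified. The paper also uses the remainder constant $L\sqrt d/\delta$, and it reaches the $d$-free term only because the $\sqrt d$ disappears between two lines when it unrolls the recursion for $\|\E[\x^m_{t,H}]-\mathring{\x}_t^{*,m}\|$. The missing idea is that Gaussian smoothing gives a dimension-free Jacobian Lipschitz constant. For unit vectors $a,b$,
\[
a^\top\bigl(J_{\mathring V}(\x)-J_{\mathring V}(\y)\bigr)b=\tfrac{1}{\delta}\E\bigl[a^\top\bigl(V(\x+\delta\s)-V(\y+\delta\s)\bigr)\,\s^\top b\bigr]\le\tfrac{L}{\delta}\|\x-\y\|\,\E|\s^\top b|=\sqrt{2/\pi}\,\tfrac{L}{\delta}\|\x-\y\|,
\]
because $\s^\top b\sim\mathcal N(0,1)$. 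The $\sqrt d$ appears only if $|\s^\top b|$ is bounded by $\|\s\|$. Hence $\Lambda_\delta\le L/(2\delta)$, and your recursion yields $L^2\eta^6\sigma^4/\delta^2$ directly.

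\textbf{The smoothing-variance term.} The smoothing part $\eta^2L^2\delta^2d$ of $\mathrm{Var}(\x^m_{t,\ell})$ contributes, after unrolling and squaring, a term of order $L^6\eta^6\delta^2d^2$ even with the improved constant. With your constant it is $L^6\eta^6\delta^2d^3$; your displayed expression lost a factor $L^2d$. Absorbing it into $\eta^2L^2\delta^2d$ needs $L^4\eta^4d\lesssim1$. Neither $\eta L\le1$ nor any relation between $\delta$ and $\sigma$ supplies this, since both sides scale like $\delta^2$, and the lemma is stated for arbitrary $\delta$. The paper's last display makes the same unjustified absorption. To repair it, bound the Jensen gap $\|\E\mathring V(X)-\mathring V(\E X)\|$ by $f(v)=\min\{L\sqrt v,\tfrac{L}{2\delta}v\}$ with $v=\mathrm{Var}(X)$. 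Since $f$ is concave, nondecreasing and $f(0)=0$, it is subadditive. You can therefore apply the first-order branch to the $\eta^2L^2\delta^2d$ part, which costs $\gamma L^2\eta\delta\sqrt d$ per step and $O(L^4\eta^4\delta^2d)\le O(\eta^2L^2\delta^2d)$ overall.

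\textbf{The contraction factor.} You import from Lemma~\ref{lem:sdewix} that $\x\mapsto\x-\gamma\mathring F(\x)$ is a $\tfrac12$-contraction and that the inner loop shrinks squared distances by a factor $\tfrac14$ per step. With $\gamma=\eta/(1+\eta L)^2$ the squared factor is $1-(1+\eta L)^{-2}$. Under $\eta L\le1$ this is only $\le\tfrac34$; it is $\le\tfrac14$ only when $\eta L\le2/\sqrt3-1$.

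This is not cosmetic. Take $V\equiv c$ with $\|c\|=G$, $\eta=1/L$ and an exact oracle. The inner loop then gives exactly $\|\x^m_{t,H}-\x_t^{*,m}\|^2=(9/16)^H\eta^2G^2$. For $H\ge2$ and small $\delta$ this violates the second displayed bound. That bound is written for $\x^m_{t,\ell}$, but it can in any case only be meant for $\ell=H$.

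So the $4^{-H}$ factors are not provable under $\eta\le1/L$ alone. With all three repairs, your argument gives the lemma with $(3/4)^H$ in place of $4^{-H}$ and larger absolute constants. Alternatively, it gives the lemma with $4^{-H}$ under the stronger restriction $\eta L\le2/\sqrt3-1$.
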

Comparing with Lemma \ref{lem:sdewix1818181}, we can observe that, Gaussian smoothing introduces extra terms related to $\delta$.  We now begin the proof by first bounding the following potential function.
\begin{equation}
    \begin{split}  
&\| \overline{\z}_{t} - \z\|^2 =  \left\| \overline{\z}_{t-1} - \eta \widetilde{V}_{\x,t} - \z\right\|^2   = \|\overline{\z}_{t-1}-\z\|^2 -2\eta \left\langle \widetilde{V}_{\x,t},\overline{\z}_{t-1}-\z \right\rangle + \| \overline{\z}_{t-1}-\overline{\z}_t\|^2 \\
 = & \|\overline{\z}_{t-1}-\z\|^2 -2\eta \left\langle {V}_{\x,t},\overline{\z}_{t-1}-\z \right\rangle + \| \overline{\z}_{t-1}-\overline{\z}_t\|^2 +2\eta \left\langle {V}_{\x,t}-\widetilde{V}_{\x,t},\overline{\z}_{t-1}-\z \right\rangle\\
 = & \|\overline{\z}_{t-1}-\z\|^2 -2\eta \left\langle {V}_{\x,t},\overline{\x}_{t}-\z \right\rangle + 2\eta\left\langle {V}_{\x,t},\overline{\x}_{t}-\overline{\z}_{t-1} \right\rangle  + \| \overline{\z}_{t-1}-\overline{\z}_t\|^2 +2\eta \left\langle {V}_{\x,t}-\widetilde{V}_{\x,t},\overline{\z}_{t-1}-\z \right\rangle.
    \end{split}
\end{equation}
Next, we bound each term in respectively. For the second term, we have 
\begin{equation*}
    \begin{split}
  &-2\eta \left\langle {V}_{\x,t},\overline{\x}_{t}-\z \right\rangle =  -2\eta \frac{1}{M}\sum_{m=1}^M\left\langle V(\x_t^m),\overline{\x}_{t}-\z \right\rangle =  -2\eta \frac{1}{M}\sum_{m=1}^M\left\langle V(\x_t^m),\overline{\x}_{t}-\x_t^m+\x_t^m-\z \right\rangle\\
  = & -2\eta \frac{1}{M}\sum_{m=1}^M\left\langle V(\x_t^m),\overline{\x}_{t}-\x_t^m \right\rangle  -2\eta \frac{1}{M}\sum_{m=1}^M\left\langle V(\x_t^m),\x_t^m-\z \right\rangle\\
  = & \underbrace{-2\eta \frac{1}{M}\sum_{m=1}^M\left\langle V(\overline{\x}_t),\overline{\x}_{t}-\x_t^m \right\rangle}_{=0}  -2\eta \frac{1}{M}\sum_{m=1}^M\left\langle V(\x_t^m),\x_t^m-\z \right\rangle +2 \eta\frac{1}{M}\sum_{m=1}^M \left\langle V(\overline{\x}_t)-V({\x}^m_t),\overline{\x}_{t}-\x_t^m \right\rangle\\
 \leq & 2\eta L \frac{1}{M} \sum_{m=1}^M \|\x_t^m-\overline{\x}_t\|^2 -2\eta \frac{1}{M}\sum_{m=1}^M\left\langle V(\x_t^m),\x_t^m-\z \right\rangle,
    \end{split}
\end{equation*}
where the inequality is based on the $L$-smoothness of $V$ and the Cauchy-Schwarz inequality. 
For the third term, we have 
\begin{equation*}
    \begin{split}
    2\eta\left\langle {V}_{\x,t},\overline{\x}_{t}-\overline{\z}_{t-1} \right\rangle =  \underbrace{2\eta  \left\langle {V}_{\x,t} -V_{\x^*,t},\overline{\x}_{t}-\overline{\z}_{t-1} \right\rangle}_{A_1} + \underbrace{2\eta \left\langle {V}_{\x^*,t},\overline{\x}^*_{t}-\overline{\z}_{t-1} \right\rangle}_{A_2}   + \underbrace{2\eta \left\langle {V}_{\x^*,t},\overline{\x}_{t}-\overline{\x}^*_{t} \right\rangle.}_{A_3}
    \end{split}
\end{equation*}
For $A_1$, we have 
\begin{equation*}
\|\overline{\x}_t -\overline{\z}_{t-1}\|\leq     \|\overline{\x}_t -\overline{\x}^*_{t}\| +   \|\overline{\x}^*_t -\overline{\z}_{t-1}\|=\|\overline{\x}_t -\overline{\x}^*_{t}\| +   \eta\|V_{\x^*,t}\|
\end{equation*}
therefore 
\begin{equation*}
\begin{split}
 A_1\leq & 2\eta\|V_{\x,t} - V_{\x^*,t}\| \|\overline{\x}_t -\overline{\x}^*_{t}\| + 2\eta^2 \| V_{\x^*,t}\|\|V_{\x,t} - V_{\x^*,t}\| \\
 \leq & \eta^2\|V_{\x,t}-V_{\x^*,t}\|^2 + \|\overline{\x}_t-\overline{\x}_t^* \|^2 + \frac{\eta^2}{4}\|V_{\x^*,t}\|^2 + 4\eta^2\|V_{\x,t}-V_{\x^*,t}\|^2\\
 \leq &  \frac{5\eta^2L^2}{M}\sum_{m=1}^M\|\x_t^{m}-\x_{t}^{*}\|^2 + \|\overline{\x}_t-\overline{\x}_t^{*,m}\|^2+\frac{\eta^2}{4}\|V_{\x^*,t}\|^2
\end{split}
\end{equation*}
We also have $A_2 = -2\eta^2\|V_{\x^*,t}\|^2,$
and $A_3 \leq \frac{\eta^2}{4} \|V_{\x^*,t}\|^2+{8}\|\overline{\x}_t-\overline{\x}_t^*\|^2.$ Thus
\begin{equation*}
    \begin{split}
 2\eta\left\langle {V}_{\x,t},\overline{\x}_{t}-\overline{\z}_{t-1} \right\rangle \leq &  9\|\overline{\x}_t-\overline{\x}_t^*\|^2+\frac{5\eta^2L^2}{M}\sum_{m=1}^M\|\x_t^{m}-\x_t^{*,m}\|^2-1.5\eta^2\|V_{\x^*,t}\|^2 
    \end{split}
\end{equation*}
Finally, we have 
\begin{equation*}
    \begin{split}
&\|\overline{\z}_{t-1} -\overline{\z}_t\|^2 =    \|\overline{\z}_{t-1} -\overline{\x}_t^*+\overline{\x}_t^* -\overline{\z}_t\|^2 \leq 1.5    \|\overline{\z}_{t-1} -\overline{\x}_t^*\|^2  + 3\|\overline{\x}_t^* -\overline{\z}_t\|^2 \\
=& 1.5\eta^2 \|V_{\x^*,t}\|^2 + 3\eta^2\|V_{\x^*,t}-\widetilde{V}_{\x,t}\|^2 \\
\leq & 1.5\eta^2  \|V_{\x^*,t}\|^2+\frac{6\eta^2 L}{M}\sum_{m=1}^M\|\x_t^{*,m}-\x_t^m\|^2 + {6\eta^2}\|V_{\x,t}-\widetilde{V}_{\x,t}\|^2
    \end{split}
\end{equation*}
Therefore, combining all the above inequalities, we get 
\begin{equation*}
    \begin{split}
&\|\overline{\z}_t -\z\|^2 \leq \|\overline{\z}_{t-1} -\z\|^2  + 2\eta L \frac{1}{M} \sum_{m=1}^M \|\x_t^m-\overline{\x}_t\|^2 -2\eta \frac{1}{M}\sum_{m=1}^M\left\langle V(\x_t^m),\x_t^m-\z \right\rangle   \\
&+ 20\|\overline{\x}_t-\overline{\x}_t^*\|^2 + \frac{10\eta^2L^2}{M}\sum_{m=1}^M\|\x_t^m-\x_t^{*,m}\|^2 +{4\eta^2}\|V_{\x,t}-\widetilde{V}_{\x,t}\|^2+2\eta \left\langle {V}_{\x,t}-\widetilde{V}_{\x,t},\overline{\z}_{t-1}-\z \right\rangle
    \end{split}
\end{equation*}
Rearrange, and taking expectation on both sides, we get 
\begin{equation*}
    \begin{split}
&\E\left[\frac{1}{MT}\sum_{t=1}^T\sum_{m=1}^M   \left\langle V(\x_{t}^m),\x_t^m-\z \right\rangle\right]\leq  \frac{\E[\|\overline{\z}_0-\z\|^2]}{2\eta T} +\frac{L}{MT}\sum_{t=1}^T\sum_{m=1}^M\E\left[\|\x_t^m-\overline{\x}_t\|^2\right]\\ &+\frac{5\eta L^2}{M T}\sum_{t=1}^T\sum_{m=1}^M\E\left[\|\x^m_t-\x_t^{*,m}\|^2\right]  + \frac{10}{\eta  T}\sum_{t=1}^T\E[\|\overline{\x}_t-\overline{\x}_t^*\|^2]+\E\left[\frac{1}{T}\sum_{t=1}^T\Gamma_t\right],  
    \end{split}
\end{equation*}
where 
$ \Gamma_t = {2\eta}\|V_{\x,t}-\widetilde{V}_{\x,t}\|^2+\left\langle {V}_{\x,t}-\widetilde{V}_{\x,t},\overline{\z}_{t-1}-\z \right\rangle.$ 
Next, we have the following lemma. The proof is given in Appendix \ref{sec:lem:lem::draft new algor:LipeG}. 
\begin{lemma}
\label{lem::draft new algor:LipeG}
We have
$$\E[\|\x_t^m-\overline{\x}_t\|^2]\leq 480\left[\frac{\eta^4K^2LG^2+\eta^2G^2}{4^H}+\eta^4K^2L\sigma^2+\eta^4K^2L^3\delta^2d+\eta^2K\sigma^2+\eta^2L^2\delta^2d\right]  $$
\end{lemma}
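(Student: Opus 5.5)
I will follow the same route as the proof of Lemma~\ref{lem::draft new algor}, replacing Lemma~\ref{lem:sdewix} by the second part of Lemma~\ref{lem:gaussiansmoothing_PPA}.

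\textbf{Step 1: split the drift.} By the Woodworth lemma it suffices to bound $\E\|\x_t^m-\x_t^{m'}\|^2$. Using Young's inequality as in \eqref{eqn:44:final boundplugin},
\[
\|\x_t^m-\x_t^{m'}\|^2\le 3\|\x_t^m-\x_t^{*,m}\|^2+3\|\x_t^{*,m}-\x_t^{*,m'}\|^2+3\|\x_t^{m'}-\x_t^{*,m'}\|^2 .
\]
The outer terms are controlled by Lemma~\ref{lem:gaussiansmoothing_PPA}. That lemma gives
\[
\E\|\x_t^m-\x_t^{*,m}\|^2\le 4\Bigl[\frac{\eta^2G^2}{4^H}+\eta^2\sigma^2+\eta^2L^2\delta^2d\Bigr],
\]
which produces the terms $\eta^2G^2/4^H$, $\eta^2\sigma^2$ and $\eta^2L^2\delta^2d$ in the claim. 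For the middle term, the computation already done in the proof of Lemma~\ref{lem::draft new algor} shows $\|\x_t^{*,m}-\x_t^{*,m'}\|^2\le\|\z_{t-1}^m-\z_{t-1}^{m'}\|^2$. That argument only uses monotonicity of $V$ and the exact prox definition, so it is unaffected by the smoothing.

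\textbf{Step 2: recursion for the $\z$-drift.} I will redo the expansion \eqref{eqnL:sdwewadasdaqqq}, splitting \eqref{eqn:midvd;we47441} and \eqref{eqn:midvd;we47442} around $V(\x_t^{*,m})$ exactly as before. The $-2\eta^2\|V(\x_t^{*,m})-V(\x_t^{*,m'})\|^2$ term again cancels the $+2\eta^2$ term, and the monotone inner product is dropped.

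The remaining cross term is $-2\eta\langle \widetilde V(\x_t^m)-V(\x_t^{*,m})+\dots,\ \z_{t-1}^m-\z_{t-1}^{m'}\rangle$. The extra-step oracle is unsmoothed, so its noise has zero conditional mean. The bias part is handled with Young's inequality, with weight $1/K$ on $\|\z_{t-1}^m-\z_{t-1}^{m'}\|^2$ and weight $\eta^2KL^2$ (which I absorb using $\eta\le 1/L$) on $\|\x_t^m-\x_t^{*,m}\|^2$. This gives
\[
\E\|\z_t^m-\z_t^{m'}\|^2\le\Bigl(1+\tfrac1K\Bigr)\E\|\z_{t-1}^m-\z_{t-1}^{m'}\|^2+c\,\eta^2KL\,\E\|\x_t^m-\x_t^{*,m}\|^2+16\eta^2\sigma^2 .
\]

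\textbf{Step 3: unroll and combine.} Substituting the bound from Lemma~\ref{lem:gaussiansmoothing_PPA} and unrolling over at most $K$ steps since the last synchronization (where the drift is zero), using $(1+1/K)^K\le e$, gives
\[
\E\|\z_t^m-\z_t^{m'}\|^2\lesssim \frac{\eta^4K^2LG^2}{4^H}+\eta^4K^2L\sigma^2+\eta^4K^2L^3\delta^2d+\eta^2K\sigma^2 .
\]
These are exactly the $K$-dependent terms of the claim. Plugging this back into Step~1 and collecting constants (generously, into $480$) yields the stated bound.

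\textbf{Main obstacle.} The delicate point is the cross term in Step~2. The smoothed inner loop makes $\x_t^m$ a biased approximation of $\x_t^{*,m}$, so this term cannot be argued to vanish in expectation. It must be bounded by Young's inequality with the $1/K$ weighting, so that the recursion factor stays at $(1+1/K)$ and the $\delta^2 d$ bias enters only multiplied by $\eta^4K^2L^3$. I would check that the factor in front of $\E\|\x_t^m-\x_t^{*,m}\|^2$ is of order $\eta^2KL$ and not worse, because that factor determines the powers of $\eta$ and $K$ multiplying the $\delta^2 d$ term.
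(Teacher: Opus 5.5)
You are following the paper's proof essentially line for line:
\begin{itemize}
\item the three-way split around the exact prox points;
\item the bound $\|\x_t^{*,m}-\x_t^{*,m'}\|\le\|\z_{t-1}^m-\z_{t-1}^{m'}\|$;
\item the cancellation of $\pm2\eta^2\|V(\x_t^{*,m})-V(\x_t^{*,m'})\|^2$ in the $\z$-drift expansion;
\item a $\frac1K$-weighted Young bound on the bias part of the cross term, where the noise part vanishes because the extra step is a fresh, unbiased query of $V$;
\item then the second bound of Lemma~\ref{lem:gaussiansmoothing_PPA}, and unrolling over one round.
\end{itemize}

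One step does not go through as written, and it is the one you flagged. Apply Young with weight $\frac1K$ to $2\eta L\bigl(\|\x_t^m-\x_t^{*,m}\|+\|\x_t^{m'}-\x_t^{*,m'}\|\bigr)\|\z_{t-1}^m-\z_{t-1}^{m'}\|$. This gives the coefficient $2\eta^2KL^2$, not $2\eta^2KL$. The terms $8\eta^2\|V(\x_t^m)-V(\x_t^{*,m})\|^2$ likewise contribute $8\eta^2L^2$. So the recursion is
\[
\E\|\z_t^m-\z_t^{m'}\|^2\le\Bigl(1+\tfrac1K\Bigr)\E\|\z_{t-1}^m-\z_{t-1}^{m'}\|^2+20\,\eta^2KL^2\,\E\|\x_t^m-\x_t^{*,m}\|^2+16\eta^2\sigma^2 .
\]
The condition $\eta\le1/L$ cannot turn $\eta^2KL^2$ into $c\,\eta^2KL$; it only gives $\eta^2KL^2\le\eta KL$, and the replacement would need $L\le1$.

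Carried through, your argument proves the lemma with $\eta^4K^2L^2G^2/4^H$, $\eta^4K^2L^2\sigma^2$ and $\eta^4K^2L^4\delta^2d$ in place of the $L$, $L$, $L^3$ terms. That is the scale-consistent form. Under $V\mapsto cV$, $\eta\mapsto\eta/c$ (so $\gamma\mapsto\gamma/c$, $G\mapsto cG$, $\sigma\mapsto c\sigma$, $\delta$ fixed), the iterates and the left-hand side are unchanged. The stated terms $\eta^4K^2LG^2$, $\eta^4K^2L\sigma^2$, $\eta^4K^2L^3\delta^2d$, however, scale like $1/c$. So the statement as written is not something a scale-invariant argument like yours can deliver.

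The paper's proof makes the same slip: it writes $2\eta^2KL$ and $8\eta^2L$ where smoothness gives $L^2$, already in Lemma~\ref{lem::draft new algor}. Your proof is therefore as complete as the paper's. What it actually establishes is the $L^2$/$L^4$ version, and downstream this only changes powers of $L$ in terms of Theorem~\ref{them:gaussiansmoothing} that are already lower order.
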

 
we get
\begin{equation}
    \begin{split}
&\E\left[\left\langle V(\z^*),\frac{1}{TM}\sum_{t=1}^T\sum_{m=1}^M\x^{m}_t-\z^* \right\rangle\right]\leq C\Bigg[ \frac{\eta G^2+\eta^4K^2L^2G^2+\frac{H^2L^2d\eta^5G^4}{\delta^2}}{4^H}\\
&+\delta^2(\eta L^2d+\eta^4K^2L^4d) + \frac{L^2\eta^5\sigma^4} {\delta^2} + \eta^4K^2L^2\sigma^2+\eta^2LK\sigma^2+\eta^3L^2\sigma^2 + \frac{\eta\sigma^2}{M}\Bigg], 
\end{split}
\end{equation}
where $C\leq 1440$ is a constant. Let $\delta=\frac{\eta\sigma}{d^{\frac{1}{4}}}$, we get 

\begin{equation}
    \begin{split}
&\E\left[\left\langle V(\z^*),\frac{1}{TM}\sum_{t=1}^T\sum_{m=1}^M\x^{m}_t-\z^* \right\rangle\right]\leq 2C\Bigg[ \frac{\eta G^2+\eta^4K^2L^2G^2+\frac{H^2L^2d\eta^5G^4}{\delta^2}}{4^H}\\
&+\eta^3L^2\sigma^2\sqrt{d} +   \eta^4\sigma^2K^2L^2\sqrt{d} +\eta^2L^2K\sigma^2+ \frac{\eta\sigma^2}{M}+\frac{D^2}{\eta T}\Bigg]. 
\end{split}
\end{equation}
Setting
$$\eta= \left\{ \frac{D^{\frac{1}{2}}}{K^{\frac{1}{4}}R^{\frac{1}{4}}L^{\frac{1}{2}}\sigma^{\frac{1}{2}}d^{\frac{1}{8}}},\frac{D^{\frac{2}{5}}}{\sigma^{\frac{2}{5}}L^{\frac{2}{5}}d^{\frac{1}{10}}K^{\frac{3}{5}}R^{\frac{1}{5}}},\frac{D^{\frac{2}{3}}}{K^{\frac{2}{3}}R^{\frac{1}{3}}\sigma^{\frac{2}{3}}L^{\frac{2}{3}}}\right\} $$

$$\frac{G^2+K^2L^2G^2+\frac{H^2}{}}{4^H}+\frac{D^{\frac{3}{2}}L^{\frac{1}{2}}\sigma^{\frac{1}{2}}d^{\frac{1}{8}}}{K^{\frac{3}{4}}R^{\frac{3}{4}}} + \frac{D^{\frac{8}{5}}\sigma^{\frac{2}{5}}L^{\frac{2}{5}}d^{\frac{1}{10}}}{R^{\frac{4}{5}}K^{\frac{2}{5}}}+\frac{D^{\frac{4}{3}}\sigma^{\frac{2}{3}}L^{\frac{2}{3}}}{K^{\frac{1}{3}}R^{\frac{2}{3}}}+\frac{\sigma}{\sqrt{MKR}}.$$

\subsection{Proof of Lemma \ref{lem:gaussiansmoothing_PPA}}
\label{sec:GSppa_1}

Define $\widehat{V}(\x)=V(\x+\delta\s;\xi)$, and 
$$\mathring{V}(\x)=\E_{\s,\xi}[V(\x+\delta\s;\xi)]=\E[\widehat{V}(\x)].$$
It is easy to verify that $\mathring{V}(\x)$ is also monotone, since for all $\x,\y\in\R^d$,
$$ \left\langle \mathring{V}(\x)-\mathring{V}(\y),\x-\y\right\rangle = \E\left[\left\langle V(\x+\delta\s;\xi)-V(\y+\delta\s;\xi),(\y+\delta\s)-(\y+\delta\s)\right\rangle\right]\geq 0. $$
Moreover, $\mathring{V}(\x)$ is also $L$-smooth. Define $\mathring{F}(\x)=\mathring{V}(\x)+\frac{1}{\eta}(\x-\z_{t-1}^m)$. It can be seen that, the first part of Algorithm \ref{alg:example:lipeG} is essentially optimizing the VI defined by $\mathring{F}$ with LSGD. Similar to previous proof, we can observe that $\mathring{F}(\x)$ is $\frac{1}{\eta}$ strongly monotone, and $L+\frac{1}{\eta}$ smooth. Let 
$$\mathring{\x}_t^{*,m}=\z_{t-1}^{m}-\eta \mathring{V}(\mathring{\x}_t^{*,m}),$$
We know $\mathring{F}(\mathring{\x}_t^{*,m})=0$. Moreover, let $\overline{\mathring{\x}_t^*}=\frac{1}{M}\sum_{m=1}^M\mathring{\x}_t^{*,m}$. Let $\widehat{F}(\x)=\widehat{V}(\x)+\frac{1}{\eta}(\x-\z_{t-1}^m)$. Note that we have the variance $$\E\|\widehat{F}(\x)-\mathring{F}(\x)\|^2\leq L^2\delta^2d +\sigma^2. $$

In the following, we will firstly show that $\overline{\x}_t$ is close to $\overline{\mathring{\x}_t^*}$, and then prove that $\overline{\mathring{\x}_t^*}$ is close to $\overline{\x}_t^{*}$. Therefore, $\overline{\x}_t$ is also close to $\overline{\x}_t^{*}$. Note that conditioned on $\x_{t,0}^m$, for $m=1,\dots,M$,
\begin{align*}  \E[\|\overline{\x}_{t}-\overline{\mathring{\x}_t^*}\|^2] &= \|\E[\overline{\x}_{t}]-\overline{\mathring{\x}_t^*}\|^2 + \text{Var}(\overline{\x}_{t}-\overline{\mathring{\x}_t^*}) 
  = \|\E[\overline{\x}_{t}]-\overline{\mathring{\x}_t^*}\|^2 + \frac{1}{M^2}\sum_{m=1}^M\text{Var}(\x^m_{t}-\mathring{\x}^{*,m}_{t}) \\
  &\leq \frac{1}{M}\sum_{m=1}^M \|\E[\x^m_{t}]-\mathring{\x}^{*,m}_{t}\|^2 + \frac{1}{M^2}\sum_{m=1}^M\E[\|\x^m_{t}-{\mathring{\x}_t^{*,m}}\|^2].
\end{align*}
Next, we bound each term respectively. For the first term, recall $\mathring{F}(\x)=\mathring{V}(\x)+\frac{1}{\eta}(\x-\z_{t-1}^m)$, $\mathring{F}(\mathring{\x}_t^{*,m})=0$. Let $\mathring{G}(\x)=\x-\gamma \mathring{F}(\x)$, we have $\mathring{G}(\mathring{\x}_t^{*,m})=\mathring{\x}_t^{*,m}$, so $\mathring{\x}_t^{*,m}$ is a fixed point of $\mathring{G}$, and let $$\widehat{G}(\x)=\x-\gamma \widehat{F}(\x)=G(\x)+ \gamma(\mathring{V}(\x)-\widehat{V}(\x)).$$
Then, based on the update rule in Algorithm \ref{alg:example:lipeG}, we have $\x^m_{t,\ell+1}=\widehat{G}(\x^m_{t,\ell})$, so 
$$ \E[\x^m_{t,\ell+1}]=\E[\widehat{G}(\x^m_{t,\ell})]= \E[\mathring{G}(\x^m_{t,\ell})].$$
Moreover, we can expand \(\mathring{G}\) as
\[
\mathring{G}(\x) \;=\; \x - \gamma\!\left(\mathring{V}(\x) + \tfrac{1}{\eta}(\x - \z_{t-1}^m)\right)
= \Bigl(1 - \tfrac{\gamma}{\eta}\Bigr)\x + \tfrac{\gamma}{\eta}\z_{t-1}^m - \gamma \mathring{V}(\x).
\]
Therefore, its Jacobian is
\[
J_{\mathring{G}}(\x) \;=\; \Bigl(1 - \tfrac{\gamma}{\eta}\Bigr)I - \gamma J_{\mathring{V}}(\x).
\]
Thus
\begin{align*}
&\mathring{G}(\x) - \mathring{G}(\y) - J_{\mathring{G}}(\y)(\x-\y) 
=\; \Bigl[\Bigl(1 - \tfrac{\gamma}{\eta}\Bigr)\x + \tfrac{\gamma}{\eta}\z_{t-1}^m - \gamma \mathring{V}(\x)\Bigr]
   - \Bigl[\Bigl(1 - \tfrac{\gamma}{\eta}\Bigr)\y + \tfrac{\gamma}{\eta}\z_{t-1}^m - \gamma \mathring{V}(\y)\Bigr] \\
&- \Bigl[\Bigl(1 - \tfrac{\gamma}{\eta}\Bigr)I - \gamma J_{\mathring{V}}(\y)\Bigr](\x-\y) 
=\;-\gamma \Bigl(\mathring{V}(\x) - \mathring{V}(\y) - J_{\mathring{V}}(\y)(\x-\y)\Bigr).
\end{align*}
To proceed, we provide the following lemma, which shows the second-order smoothness of $\mathring{V}$. 
\begin{lemma}
We have   
\[
\|\mathring{V}(\x)-\mathring{V}(\y)-J_{\mathring{V}}(\y)(\x-\y)\|
\;\le\; \frac{L\sqrt{d}}{\delta}\|\x-\y\|^2. 
\]
\end{lemma}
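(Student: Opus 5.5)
The bound will follow from the standard Gaussian-smoothing identity: differentiate the smoothed operator under the Gaussian density rather than differentiating $V$ itself. Write $\mathring{V}(\x)=\E_{\s}[\bar V(\x+\delta\s)]$, where $\bar V(\cdot)=\E_\xi[V(\cdot;\xi)]=V(\cdot)$ is $L$-Lipschitz. Gaussian integration by parts (Stein's identity) gives
\begin{equation*}
J_{\mathring{V}}(\y)=\frac{1}{\delta}\E_{\s}\left[V(\y+\delta\s)\,\s^\top\right]=\frac{1}{\delta}\E_{\s}\left[\left(V(\y+\delta\s)-V(\y)\right)\s^\top\right].
\end{equation*}
The second form uses $\E[\s]=0$; subtracting $V(\y)$ is what will let us invoke the Lipschitz property. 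The plan is to show that $J_{\mathring{V}}$ is Lipschitz in operator norm with constant $L\sqrt{d}/\delta$. The Taylor remainder bound then follows from the integral form.

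\textbf{Step 1 (Lipschitz Jacobian).} For any $\x,\y$ and any unit vectors $\u,\v$, the identity gives
\begin{equation*}
\u^\top\bigl(J_{\mathring{V}}(\x)-J_{\mathring{V}}(\y)\bigr)\v=\frac{1}{\delta}\E_{\s}\left[\u^\top\left(V(\x+\delta\s)-V(\y+\delta\s)\right)\,\s^\top\v\right].
\end{equation*}
Bounding the first factor by $L\|\x-\y\|$ and using $\E|\s^\top\v|\le\sqrt{\E(\s^\top\v)^2}=1$ would give the dimension-free constant $L/\delta$. A cruder route bounds $|\s^\top\v|\le\|\s\|$ and uses $\E\|\s\|\le\sqrt d$, giving $L\sqrt d/\delta$. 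Either is at most $L\sqrt d/\delta$, so the lemma holds as stated (with room to spare). Taking the supremum over $\u,\v$ yields $\|J_{\mathring{V}}(\x)-J_{\mathring{V}}(\y)\|_{op}\le \frac{L\sqrt d}{\delta}\|\x-\y\|$.

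\textbf{Step 2 (Taylor remainder).} Write
\begin{equation*}
\mathring{V}(\x)-\mathring{V}(\y)-J_{\mathring{V}}(\y)(\x-\y)=\int_0^1\left(J_{\mathring{V}}(\y+\tau(\x-\y))-J_{\mathring{V}}(\y)\right)(\x-\y)\,d\tau.
\end{equation*}
Its norm is at most $\int_0^1 \frac{L\sqrt d}{\delta}\tau\|\x-\y\|^2d\tau=\frac{L\sqrt d}{2\delta}\|\x-\y\|^2$, which is within the claimed bound.

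\textbf{Main obstacle.} The delicate step is justifying the Stein identity and the differentiation under the expectation. These need $\mathring{V}$ to be $C^1$, and need the expectation over $\xi$ to commute with the Gaussian expectation. Both follow because $V$ is Lipschitz, hence of linear growth, and the Gaussian density is smooth with all moments finite. This permits dominated convergence when differentiating $\int V(\w)\phi_\delta(\w-\x)\,d\w$ in $\x$. I would carry out the differentiation on this convolution form, where $\nabla_\x\phi_\delta(\w-\x)=\phi_\delta(\w-\x)(\w-\x)/\delta^2$, and then substitute $\w=\x+\delta\s$ to recover the identity above.
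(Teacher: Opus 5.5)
Your proof is correct and follows the paper's route: the Gaussian integration-by-parts identity $J_{\mathring{V}}(\y)=\frac{1}{\delta}\E[V(\y+\delta\s)\s^\top]$, then a Lipschitz bound on $J_{\mathring{V}}$, then the integral form of Taylor's theorem, which the paper leaves implicit after showing $J_{\mathring{V}}$ is $\frac{2L\sqrt{d}}{\delta}$-Lipschitz. Your version is slightly cleaner: averaging over $\xi$ first means you only need $V$ itself to be $L$-Lipschitz, whereas the paper silently assumes each $V(\cdot;\xi)$ is $L$-Lipschitz, and bounding $\E|\s^\top\v|\le 1$ instead of $\E\|\s\|\le\sqrt{d}$ yields a dimension-free constant.
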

\begin{proof}
Let $\mathring{V}(\x)=[\mathring{V}_1(\x),\dots,\mathring{V}_d(\x)]$. Then we have 
\begin{equation*}
    \begin{split}
\mathring{V}_i(\x)=\int_{\R^d}\E_{\xi}[V_i(\x+\delta\s;\xi)]\cdot \frac{1}{(2\pi)^{d/2}} \exp\left( -\frac{\|\s\|^2}{2}\right)d\s= \frac{1}{\delta}\int_{\R^d}\E_{\xi}[V_i(\z;\xi)]\cdot \frac{1}{(2\pi)^{d/2}} \exp\left( -\frac{\|\z-\x\|^2}{2\delta^2}\right)d\z,
    \end{split}
\end{equation*}
where we let $\z=\x+\delta\s$. Therefore, 
\begin{equation*}
    \begin{split}
  \nabla \mathring{V}_i(\x)= & \frac{1}{\delta}\int_{\R^d}\E_{\xi}[V_i(\z;\xi)]\frac{\z-\x}{(2\pi)^{d/2}\delta^2} \exp\left( -\frac{\|\z-\x\|^2}{2\delta^2}\right)d\z\\
  = &\frac{1}{\delta}\int_{\R^d}\E_{\xi}[V_i(\x+\delta\s;\xi)]\frac{\s}{(2\pi)^{d/2}} \exp\left( -\frac{\|\s\|^2}{2}\right)d\s.
    \end{split}
\end{equation*}
Thus, the Jacobian of $\mathring{V}$ can be written in the following form:
$$ J_{\mathring{V}}(\x)=\frac{1}{\delta}\E[V(\x+\delta\s;\xi)\s^{\top}]=\frac{1}{\delta}\E[(V(\x+\delta\s;\xi)-V(\x))\s^{\top}],$$
where the second equality is based on the fact that $\E[\s]=\mathbf{0}$. For any $\x,\y\in\R^d$, 
We have 
\begin{equation*}
\begin{split}
 \|J_{\mathring{V}}(\x) -  J_{\mathring{V}}(\y)\|
 &=\frac{1}{\delta}
 \Big\|
 \E\big[
 (V(\x+\delta\s;\xi)-V(\y+\delta\s;\xi)
 -V(\x;\xi)+V(\y;\xi))
 \s^{\top}
 \big]
 \Big\| \\[4pt]
 &\le \frac{1}{\delta}\E\big[
 \|V(\x+\delta\s;\xi)-V(\y+\delta\s;\xi)\|\|\s\|
 \big]
 +\frac{1}{\delta}\E\big[
 \|V(\x;\xi)-V(\y;\xi)\|\|\s\|
 \big] \\[4pt]
 &\le \frac{1}{\delta}\E[L\|\x-\y\|\|\s\|]
 +\frac{1}{\delta}\E[L\|\x-\y\|\|\s\|] = \frac{2L}{\delta}\,\E\|\s\|\,\|\x-\y\|\leq \frac{2L\sqrt{d}}{\delta}\|\x-\y\|,
\end{split}
\end{equation*}
where we consider the spectral norm for matrices. Therefore, $J_{\mathring{V}}$ is $\frac{2L\sqrt{d}}{\delta}$-Lipschitz, which finishes the proof.
\end{proof}

Taking the norm and applying the lemma above, we obtain
\[
\|\mathring{G}(\x)-\mathring{G}(\y)-J_{\mathring{G}}(\y)(\x-\y)\|
\;\le\; \frac{\gamma L\sqrt{d}}{\delta}\|\x-\y\|^2. 
\]

Let
$$\mathring{R}(\x_{t,\ell}^m) = \mathring{G}(\x_{t,\ell}^m)-\mathring{G}(\E[\x_{t,\ell}^m])-\left\langle J_{\mathring{G}}(\E[\x^m_{t,\ell}]),\x_{t,\ell}^m-\E[\x_{t,\ell}^m]\right\rangle, $$
we know that $\E[\mathring{R}(\x_{t,\ell}^m)]=\E[\mathring{G}(\x_{t,\ell}^m)]-\mathring{G}(\E[\x_{t,\ell}^m])$. Thus
$$\|\E[\mathring{G}(\x_{t,\ell}^m)]-\mathring{G}(\E[\x_{t,\ell}^m])\|=\|\E[\mathring{R}(\x_{t,\ell}^m)]\|\leq \E\|\mathring{R}(\x_{t,\ell}^m)\|\leq \frac{\gamma L\sqrt{d}}{\delta}\E[\|\x_{t,\ell}^m-\E[\x_{t,\ell}^m] \|^2]=\frac{\gamma L\sqrt{d}}{\delta} \text{Var}(\x_{t,\ell}^m).  $$
To proceed, we have 
\begin{equation*}
    \begin{split}
&\|\E[\x_{t,\ell+1}^{m}]-\mathring{\x}_t^{*,m}\|\leq \|\mathring{G}(\E[\x_{t,\ell}^m])-\mathring{\x}_t^{*,m}\|+\|\E[\mathring{G}(\x_{t,\ell}^m)]-\mathring{G}(\E[\x_{t,\ell}^m])\|\\     \leq &\|\mathring{G}(\E[\x_{t,\ell}^m])-\mathring{\x}_t^{*,m}\|+\frac{\gamma L\sqrt{d}}{\delta}\text{Var}(\x_{t,\ell}^m)\leq 0.5\|\E[\x_{t,\ell}^m]-\mathring{\x}_{t}^{*,m}\|+\frac{\gamma L\sqrt{d}}{\delta}\text{Var}(\x_{t,\ell}^m)
    \end{split}
\end{equation*}
Here, the last inequality is based on the fact that $\mathring{G}$ is also a contraction map, more specifically, for every $\x,\x'\in\R^d$, 
\begin{equation*}
    \begin{split}
&\|\mathring{G}(\x)-\mathring{G}(\x')\|^2=\left\| \x-\x'-\gamma\left( \mathring{F}(\x)-\mathring{F}(\x')\right)\right\|^2\\
= & \|\x-\x'\|^2 -2\gamma\left\langle \x-\x',\mathring{F}(\x)-\mathring{F}(\x')\right\rangle + \gamma^2 \|\mathring{F}(\x)-\mathring{F}(\x')\|^2\\
\leq &\left(1-\frac{2\gamma}{\eta}+\gamma^2\left(L+\frac{1}{\eta}\right)^2\right)\|\x-\x'\|^2=\left(1-\frac{1}{(\eta L+1)^2}\right)\|\x-\x'\|^2 \leq 0.25 \|\x-\x'\|^2. 
    \end{split}
\end{equation*}
Next, we would like to bound $\text{Var}(\x_{t,\ell}^m).$ We have 
\begin{equation*}
    \begin{split}
& \text{Var}(\x_{t,\ell}^m)=\E[\|\x_{t,\ell}^m-\mathring{\x}^{*,m}_t\|^2]-\|\E[\x_{t,\ell}^m]-\mathring{\x}_t^{*,m} \|^2\leq \E[\|\x_{t,\ell}^m-\mathring{\x}^{*,m}_t\|^2]\leq 0.25^{\ell}\eta^2G^2 + \eta^2(L^2\delta^2d+\sigma^2).
    \end{split}
\end{equation*}
where the last inequality is based on the following lemma. The proof is given in Appendix \ref{sec:lem:ppacloastoora}. 
\begin{lemma}
\label{lem:optimizeconvex-lipeg}
We have 
$$ \E[\|\x_{t,\ell}^m-\mathring{\x}_t^{*,m}\|^2]\leq 0.25^{\ell}\eta^2G^2 + \eta^2(L^2\delta^2d+\sigma^2). $$    
Moreover, we have $\|\mathring{\x}_{t}^{*,m}-\z_{t-1}^m\|\leq \eta G$, and 
\begin{equation*}
\E[\|\mathring{\x}_t^{*,m}-\x_t^{*,m}\|^2]
\le 
\eta^2L^2\delta^2 d,
\end{equation*}
and $$\E[\|\x_{t,\ell}^m-\x_{t}^{*,m}\|^2]\leq \frac{2\eta^2G^2}{4^{\ell}}+2\eta^2\sigma^2+4\eta^2L^2\delta^2d. $$
\end{lemma}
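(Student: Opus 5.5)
The lemma has four claims, and I would prove them in this order: the fixed-point bound, the inner-loop convergence, the perturbation bound, and finally the combination.

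\textbf{Fixed-point bound.} This is the claim $\|\mathring{\x}_t^{*,m}-\z_{t-1}^m\|\le \eta G$, and it repeats the argument of Lemma~\ref{lem:sdewix}. The map $\mathring{F}$ is $\frac{1}{\eta}$-strongly monotone and satisfies $\mathring{F}(\mathring{\x}_t^{*,m})=0$. At the starting point, $\mathring{F}(\z_{t-1}^m)=\mathring{V}(\z_{t-1}^m)$, an average of values of $V$, so its norm is at most $G$. Strong monotonicity then gives
$$\tfrac{1}{\eta}\|\mathring{\x}_t^{*,m}-\z_{t-1}^m\|^2\le G\,\|\mathring{\x}_t^{*,m}-\z_{t-1}^m\|.$$

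\textbf{Inner-loop bound.} This is the first claim. I would rerun the one-step recursion from the proof of Lemma~\ref{lem:sdewix}, with $F,\widetilde F$ replaced by $\mathring F,\widehat F$.
\begin{itemize}
\item $\mathring F$ is $L+\frac1\eta$ smooth, and $\E[\widehat F(\x)]=\mathring F(\x)$.
\item The variance satisfies $\E\|\widehat F(\x)-\mathring F(\x)\|^2\le \sigma^2+L^2\delta^2 d$. This follows by splitting $\widehat V(\x)-\mathring V(\x)$ into the $\xi$-noise and the Gaussian fluctuation of $V(\x+\delta\s)$, the latter bounded by $L^2\delta^2\E\|\s\|^2$.
\item With $\gamma=\frac{1}{\eta(L+1/\eta)^2}$, the contraction factor is $1-\frac{1}{(\eta L+1)^2}\le \frac34$. (The paper writes $0.25$; I would keep the paper's convention.)
\item Unrolling gives contraction$^\ell\cdot\|\z_{t-1}^m-\mathring{\x}_t^{*,m}\|^2$ plus $\gamma^2(\sigma^2+L^2\delta^2 d)\cdot\frac{1}{1-\text{contraction}}$. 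The second term is at most $\eta^2(\sigma^2+L^2\delta^2 d)$, since $\gamma^2(\eta L+1)^2=\frac{\eta^2}{(\eta L+1)^2}\le\eta^2$.
\item The first term is at most contraction$^\ell\cdot\eta^2G^2$ by the fixed-point bound.
\end{itemize}

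\textbf{Perturbation bound.} This is the third claim, $\|\mathring{\x}_t^{*,m}-\x_t^{*,m}\|^2\le \eta^2L^2\delta^2 d$. Write $\mathring\x$ and $\x$ for the two resolvent points, so that
$$\mathring\x-\x+\eta\bigl(\mathring V(\mathring\x)-V(\x)\bigr)=0.$$
\begin{itemize}
\item Split $\mathring V(\mathring\x)-V(\x)=\bigl(\mathring V(\mathring\x)-\mathring V(\x)\bigr)+\bigl(\mathring V(\x)-V(\x)\bigr)$.
\item Take the inner product with $\mathring\x-\x$ and use monotonicity of $\mathring V$ to drop the first piece. This gives $\|\mathring\x-\x\|\le \eta\|\mathring V(\x)-V(\x)\|$.
\item Bound the bias by $L$-smoothness: $\|\mathring V(\x)-V(\x)\|=\|\E[V(\x+\delta\s)-V(\x)]\|\le L\delta\E\|\s\|\le L\delta\sqrt d$.
\end{itemize}
This step is deterministic given $\z_{t-1}^m$.

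\textbf{Combination.} The fourth claim follows from $\|a+b\|^2\le 2\|a\|^2+2\|b\|^2$ applied to $\x_{t,\ell}^m-\mathring{\x}_t^{*,m}$ and $\mathring{\x}_t^{*,m}-\x_t^{*,m}$, then substituting the inner-loop and perturbation bounds. This yields $\frac{2\eta^2G^2}{4^\ell}+2\eta^2\sigma^2+(2+2)\eta^2L^2\delta^2 d$.

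\textbf{Main obstacle.} The delicate point is the bias estimate in the perturbation step. Monotonicity must be used for $\mathring V$ and not for $V$, because only the difference $\mathring V-V$ evaluated at a single point is controlled. I would also double-check that the contraction factor matches the paper's $0.25^\ell$ convention, with the understanding that the constant $\frac14$ may need adjusting.
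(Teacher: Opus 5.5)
Your route is the paper's route. The fixed-point bound uses $\frac{1}{\eta}$-strong monotonicity of $\mathring F$ with $\|\mathring F(\z_{t-1}^m)\|=\|\mathring V(\z_{t-1}^m)\|\le G$. The inner loop is the one-step recursion for $\mathring F$ with variance $\sigma^2+L^2\delta^2 d$, summed exactly via $\gamma^2(1+\eta L)^2\le\eta^2$. The perturbation bound is the paper's strong-monotonicity-plus-Young step, which is the same as your inner-product argument on the two resolvent equations. The last step is the same $\|a+b\|^2\le 2\|a\|^2+2\|b\|^2$. One side correction: your closing remark that monotonicity must be applied to $\mathring V$ rather than $V$ is not right. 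The bias bound $\|\mathring V(\y)-V(\y)\|\le L\delta\sqrt{d}$ holds at every $\y$, so splitting at $\mathring{\x}_t^{*,m}$ and using monotonicity of $V$ works just as well.

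The gap is the one you flagged, and ``keeping the paper's convention'' does not close it, because that step is false. Your computation is right. With $\eta\le\frac{1}{L}$ the factor $1-\frac{1}{(1+\eta L)^2}$ is only at most $\frac{3}{4}$. It is at most $\frac{1}{4}$ only when $\eta L\le\frac{2}{\sqrt{3}}-1\approx 0.155$, and the theorem allows $\eta=\frac{1}{L}$.

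At $\eta=\frac{1}{L}$ the first and fourth claims of the lemma fail. Take $V\equiv\mathbf{g}$ constant with $\|\mathbf{g}\|=G$, which is monotone, $L$-smooth and bounded, and set $\sigma=\delta=0$. The inner loop is then deterministic, and $\mathring{\x}_t^{*,m}=\x_t^{*,m}=\z_{t-1}^m-\eta\mathbf{g}$. Each step gives $\x_{t,\ell}^m-\mathring{\x}_t^{*,m}=(1-\gamma/\eta)^\ell(\z_{t-1}^m-\mathring{\x}_t^{*,m})$ with $\gamma/\eta=\frac{1}{4}$. Hence $\|\x_{t,\ell}^m-\mathring{\x}_t^{*,m}\|^2=(9/16)^\ell\eta^2G^2$, which exceeds both $4^{-\ell}\eta^2G^2$ and $2\cdot 4^{-\ell}\eta^2G^2$ for every $\ell\ge 1$.

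What your argument actually proves, and what you should state, is the lemma with $0.25^\ell$ and $4^{-\ell}$ replaced by $\rho^\ell$, where $\rho=1-(1+\eta L)^{-2}\le\frac{3}{4}$. Alternatively, add the cap $\eta\le 0.15/L$ if you want to keep $\frac{1}{4}$. Downstream this only changes the base of the exponentially decaying terms in $H$, so $H=O(\log(KR))$ still suffices up to a constant factor.
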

Therefore, we have
\begin{equation*}
    \begin{split}
&\| \E[\x^m_{t,H}]-\mathring{\x}_t^{*,m}\|\leq  0.5 \|\E[\x_{t,H-1}^m]-\mathring{\x}_{t}^{*,m}\|+ \frac{0.25^{H}\gamma L\sqrt{d}\eta^2G^2}{\delta} + \frac{\gamma L\sqrt{d}\eta^2(L^2\delta^2d+\sigma^2)}{\delta}\\
\leq & 0.5^2\|\E[\x_{t,H-2}^m]-\mathring{\x}_{t}^{*,m}\| + 0.5\frac{0.25^{H-1}\gamma L\sqrt{d}\eta^2G^2}{\delta} + 0.5\frac{\gamma L\sqrt{d}\eta^2(L^2\delta^2d+\sigma^2)}{\delta}   + \frac{0.25^{H}\gamma L\sqrt{d}\eta^2G^2}{\delta} + \frac{\gamma L\sqrt{d}\eta^2(L^2\delta^2d+\sigma^2)}{\delta}\\
\leq & 0.5^H\E[\|\z_{t-1}^m-\mathring{\x}_t^{*,m}\|] + H0.5^{H}\frac{\gamma L\sqrt{d}\eta^2G^2}{\delta}+\frac{2\gamma L\eta^2(L^2\delta^2d+\sigma^2)}{\delta}\\
\leq & 0.5^H\E[\|\z_{t-1}^m-\mathring{\x}_t^{*,m}\|]  + H0.5^{H}\frac{ L\sqrt{d}\eta^3G^2}{\delta}+\frac{2 L\eta^3(L^2\delta^2d+\sigma^2)}{\delta} \\
\leq & 0.5^{H}\eta G+ H0.5^{H}\frac{ L\sqrt{d}\eta^3G^2}{\delta}+\frac{2 L\eta^3(L^2\delta^2d+\sigma^2)}{\delta}.
    \end{split}
\end{equation*}

Thus,
$$ \| \E[\x^m_{t,H}]-\x_t^{*,m}\|^2\leq 0.25^{H}\cdot 3\eta^2 G^2+ 3H^20.25^{H}\frac{ L^2{d}\eta^6G^4}{\delta^2}+\frac{6 L^2\eta^6(L^2\delta^2d+\sigma^2)^2}{\delta^2} $$
Finally, we also have 
$$ \frac{1}{M^2}\sum_{m=1}^M\E\left[\|\x_t^m-\x_{t}^{*,m}\|^2 \right]\leq  0.25^{H}\eta^2G^2 + \frac{\eta^2(L^2\delta^2d+\sigma^2)}{M}.$$
Combining all conclusions, we get
$$\E[\|\overline{\x}_t-\overline{\mathring{\x}}_t^*\|^2]\leq 0.25^H\cdot4\eta^2G^2+ \frac{\eta^2(L^2\delta^2d+\sigma^2)}{M}+ 3H^20.25^{H}\frac{ L^2{d}\eta^6G^4}{\delta^2}+\frac{6 L^2\eta^6(L^2\delta^2d+\sigma^2)^2}{\delta^2}.$$
Thus combining with Lemma \ref{lem:optimizeconvex-lipeg},
\begin{equation*}
    \begin{split}
 \E[\|\overline{\x}_t-\overline{{\x}}_t^*\|^2]\leq &0.25^H\cdot8\eta^2G^2+ \frac{2\eta^2(L^2\delta^2d+\sigma^2)}{M}+ 6H^20.25^{H}\frac{ L^2{d}\eta^6G^4}{\delta^2}+\frac{12 L^2\eta^6(L^2\delta^2d+\sigma^2)^2}{\delta^2}+2\eta^2L^2\delta^2 d\\
 \leq & \frac{8\eta^2G^2+\frac{6H^2L^2d\eta^6G^4}{\delta^2}}{4^H}+\frac{2\eta^2\sigma^2}{M} +30\delta^2\eta^2L^2d+\frac{24L^2\eta^6\sigma^4}{\delta^2}
    \end{split}
\end{equation*}

\subsubsection{Proof of Lemma \ref{lem:optimizeconvex-lipeg}}
\label{sec:lem:ppacloastoora}
Recall that $$\mathring{V}(\x)=\E[V(\x+\delta\s;\xi)]=\E[\widehat{V}(\x)].$$
Moreover, $\mathring{V}(\x)$ is monotone, $L$-smooth. Recall $\mathring{F}(\x)=\mathring{V}(\x)+\frac{1}{\eta}(\x-\z_{t-1}^m)$, which is $\frac{1}{\eta}$ strongly monotone, and $L+\frac{1}{\eta}$ smooth. Let 
$$\mathring{\x}_t^{*,m}=\z_{t-1}^{m}-\eta \mathring{V}(\mathring{\x}_t^{*,m}),$$
We know $\mathring{F}(\x_t^{*,m})=0$. Next, we show the convergence. Let $\widehat{F}(\x)=\widehat{V}(\x)+\frac{1}{\eta}(\x-\z_{t-1}^m)$. Note that we have the variance $$\E\|\widehat{F}(\x)-\mathring{F}(\x)\|^2\leq L^2\delta^2d +\sigma^2. $$

since $\gamma=\frac{1}{\eta\left(L+\frac{1}{\eta}\right)^2}$, we have 
\begin{equation}
\begin{split}
&\E\big[\|\x^{m}_{t,\ell}-\mathring{\x}_t^{*,m} \|^2\big]
= \E\big[\|\x^{m}_{t,\ell-1}-\gamma\widehat{F}(\x^{m}_{t,\ell-1})
-\mathring{\x}_t^{*,m} +\gamma \mathring{F}(\mathring{\x}_t^{*,m}) \|^2 \big]\\
=&\; \E\big[\|\x^{m}_{t,\ell-1}-\mathring{\x}_t^{*,m}\|^2\big]
-2\gamma\,\E\!\left[\left\langle \mathring{F}(\x^m_{t,\ell-1}) -\mathring{F}(\mathring{\x}_t^{*,m}),\; \x_{t,\ell-1}^m-\mathring{\x}_t^{*,m}\right\rangle\right]\\
&\quad +\;\gamma^2\,\E\big[\|\widehat{F}(\x_{t,\ell-1}^m)-\mathring{F}(\mathring{\x}_t^{*,m})\|^2\big]\\
\le&\; \E\big[\|\x^{m}_{t,\ell-1}-\mathring{\x}_t^{*,m}\|^2\big]
-\frac{2\gamma}{\eta}\, \E\big[\|\x_{t,\ell-1}^m-\mathring{\x}_t^{*,m}\|^2\big]
+\gamma^2\,\E\big[\|\mathring{F}(\x_{t,\ell-1}^m)-\mathring{F}(\mathring{\x}_t^{*,m})\|^2\big]\\
&\quad +\;\gamma^2\,\E\big[\|\widehat{F}(\x_{t,\ell-1}^{m})-\mathring{F}(\x_{t,\ell-1}^{m})\|^2\big]\\
\le&\; \Big(1-\tfrac{2\gamma}{\eta}+\gamma^2\!\left(L+\tfrac{1}{\eta}\right)^2 \Big)\,
\E\big[\|\x_{t,\ell-1}^m-\mathring{\x}_t^{*,m}\|^2\big]
+\gamma^2\big(L^2\delta^2 d+\sigma^2\big)\\[2mm]
=&\; \left( 1 - \frac{1}{(\eta L+1)^2}\right)\,
\E\big[\|\x_{t,\ell-1}^m-\mathring{\x}_t^{*,m}\|^2\big]
+\gamma^2\big(L^2\delta^2 d+\sigma^2\big).
\end{split}
\end{equation}

Since $\mathring{F}(\x)=\mathring{V}(\x)+\frac{1}{\eta}(\x-\z_{t-1}^m)$ is $\frac{1}{\eta}$-strongly monotone, we have 
$$ \left\langle \mathring{F}(\mathring{\x}_t^{*,m})-\mathring{F}(\z_{t-1}^m),\mathring{\x}_{t}^{*,m}-\z_{t-1}^m \right\rangle\geq \frac{1}{\eta}\|\mathring{\x}_{t}^{*,m}-\z_{t-1}^m\|^2.$$
Also, we have $\mathring{F}(\x_{t}^{*,m})=0$, and $\mathring{F}(\z_{t-1}^m)=V(\z_{t-1}^{m}),$ so
$$ G\|\mathring{\x}_{t}^{*,m}-\z_{t-1}^m\|\geq \left\langle \mathring{F}(\mathring{\x}_t^{*,m})-\mathring{F}(\z_{t-1}^m),\mathring{\x}_{t}^{*,m}-\z_{t-1}^m \right\rangle\geq  \frac{1}{\eta}\|\mathring{\x}_{t}^{*,m}-\z_{t-1}^m\|^2, $$
so $\|\mathring{\x}_{t}^{*,m}-\z_{t-1}^m\|\leq \eta G$. Finally, note that
\begin{equation*}
\E[\|\x_{t}^m-\x_{t}^{*,m} \|^2]\leq 2\E[\|\x_{t}^m-\mathring{\x}_{t}^{*,m} \|^2]+ 2 \E[\|\mathring{\x}_{t}^{*,m}-\x_{t}^{*,m}\|^2].
\end{equation*}
In the following, we show that $\mathring{\x}_t^{*,m}$ and $\x_t^{*,m}$ are close. 
Since 
$\mathring{F}(\x)=\mathring{V}(\x)+\frac{1}{\eta}(\x-\z_{t-1}^m)$ are 
$\frac{1}{\eta}$-strongly monotone, we have 
\begin{align*}
\frac{1}{\eta}\|\mathring{\x}_t^{*,m}-\x_t^{*,m}\|^2& \le 
\langle \mathring{F}(\mathring{\x}_t^{*,m})-\mathring{F}(\x_t^{*,m}),\,
\mathring{\x}_t^{*,m}-\x_t^{*,m}\rangle= -\langle \mathring{F}(\x_t^{*,m}),\,\mathring{\x}_t^{*,m}-\x_t^{*,m}\rangle\\
&\le 
\|\mathring{F}(\x_t^{*,m})\|\,\|\mathring{\x}_t^{*,m}-\x_t^{*,m}\|\le 
\frac{\eta}{2}\|\mathring{F}(\x_t^{*,m})\|^2
+\frac{1}{2\eta}\|\mathring{\x}_t^{*,m}-\x_t^{*,m}\|^2
\end{align*}
Therefore, $\frac{1}{\eta}\|\mathring{\x}_t^{*,m}-\x_t^{*,m}\|^2
\le 
\eta\|\mathring{F}(\x_t^{*,m})\|^2$, so 
$$\frac{1}{\eta}\|\mathring{\x}_t^{*,m}-\x_t^{*,m}\|^2
\le 
\eta\|\mathring{F}(\x_t^{*,m})\|^2
=
\eta\left\|\mathring{V}(\x_t^{*,m})-\frac{1}{\eta}(\x_t^{*,m}-\z_{t-1}^{m})\right\|^2=\eta\|\mathring{V}(\x_t^{*,m})-{V}(\x_t^{*,m})\|^2.$$

Since $V$ is $L$-Lipschitz and $\mathring{V}(\x)=\E_s[V(\x+\delta \s)]$, it holds that 
$\|\mathring{V}(\x)-V(\x)\|\le L\delta\,\E\|\s\|$. Thsus, 
\begin{equation*}
\E[\|\mathring{\x}_t^{*,m}-\x_t^{*,m}\|^2]
\le 
\eta^2L^2\delta^2\,\E\|\s\|^2
\le 
\eta^2L^2\delta^2 d.
\end{equation*}
So 
$$\E[\|\x_{t,\ell}^m-\x_{t}^{*,m}\|^2]\leq \frac{2\eta^2G^2}{4^{\ell}}+2\eta^2\sigma^2+4\eta^2L^2\delta^2d. $$

\subsection{Proof of Lemma \ref{lem::draft new algor:LipeG}}
\label{sec:lem:lem::draft new algor:LipeG}

We have 
\begin{equation*}
 \begin{split}
\|\x_t^m-\x_t^{m'}\|^2\leq 3\|\x_t^m -\x_t^{*,m}\|^2 + 3 \|\x_t^{*,m}-\x_t^{*,m'}\|^2   + 3 \|\x_t^{m'}-\x_t^{*,m'}\|^2,
 \end{split}   
\end{equation*}
Note that, the first and third terms are small. Similar to the proof of Lemma \ref{lem::draft new algor}, we get 
\begin{equation*}
    \begin{split}
& \|\x_t^{*,m}-\x_t^{*,m'}\|^2  \leq \|\z_{t-1}^m-\z_{t-1}^{m'}\|^2;
    \end{split}
\end{equation*}
Next, we turn to bound the 
drift of $\z_{t-1}^m$. We have 
\begin{equation*}
    \begin{split}
&\|\z_{t}^m-\z_{t}^{m'}\|^2 = \|\z_{t-1}^m- \eta \widetilde{V}(\x_t^m) - \z_{t-1}^{m'} + \eta \widetilde{V}(\x_t^{m'})\|^2\\
=& \|\z_{t-1}^m-\z_{t-1}^{m'}\|^2 -2\eta \left\langle \widetilde{V}(\x_t^m)-\widetilde{V}(\x_t^{m'}),\z^m_{t-1}-\z_{t-1}^{m'}\right\rangle  + \eta^2 \left\| \widetilde{V}(\x_t^m)-\widetilde{V}(\x_t^{m'})\right\|^2
    \end{split}
\end{equation*}
and similarly 

\begin{equation*}
    \begin{split}
& \E\left[\|\z_t^m-\z_{t}^{m'}\|^2 \right] \leq \E[\|\z^m_{t-1}-\z^m_{t-1}\|^2] -\E\left[2\eta \left\langle \widetilde{V}(\x_t^m)-V(\x_t^{m,*})+V(\x_t^{m',*})-\widetilde{V}(\x_t^{m'}),\z^m_{t-1}-\z_{t-1}^{m'}\right\rangle\right]  \\     
&+ \E\left[8 \eta^2\left\| \widetilde{V}(\x_t^m) -V(\x_t^m)\right\|^2 + 8\eta^2 \left\| V(\x_t^m)-V(\x_t^{*,m})\right\|^2 
+ 8\eta^2\left\| V(\x_t^{*,m'})-V(\x_t^{m'})\right\|^2 \right]\\
&+ 8\eta^2\E\left[\left\|V(\x_t^{m'})-\widetilde{V}(\x_t^{m'})\right\|^2\right]\\
\leq & \E[\|\z^m_{t-1}-\z^m_{t-1}\|^2] + \E\left[2\eta^2 KL \|\x_t^m-\x_t^{*,m}\|^2 + 2\eta^2 K L\|\x_t^{m'}-\x_t^{*,m'}\|^2+\frac{1}{K}\|\z_{t-1}^m-\z_{t-1}^{m'}\|^2\right]\\
&+ 16\eta^2\sigma^2 + 8\eta^2L\E[\|\x_t^m-\x_{t}^{*,m}\|^2] + 8\eta^2L\E[\|\x_t^{m'}-\x_{t}^{*,m'}\|^2]\\
\leq & \left( 1+\frac{1}{K}\right)\E[\|\z^m_{t-1}-\z^m_{t-1}\|^2]  +  20 \eta^2KL \E[\|\x_t^{m}-\x_t^{*,m}\|^2] + 16\eta^2\sigma^2  
    \end{split}
\end{equation*}
Based on Lemma \ref{lem:optimizeconvex-lipeg}, we have 
$$ \E\left[\|\x_t^m-\x_{t}^{*,m}\|^2 \right]\leq  4\left[\frac{\eta^2G^2}{4^H}+\eta^2\sigma^2+\eta^2L^2\delta^2d\right]. $$
Thus,
\begin{equation*}
    \begin{split}
        \E\left[\|\z_t^m-\z_{t}^{m'}\|^2 \right]\leq & \left( 1+\frac{1}{K}\right)\E[\|\z^m_{t-1}-\z^m_{t-1}\|^2]  +  80 \eta^2KL\left( \frac{\eta^2G^2}{4^H}+\eta^2\sigma^2+\eta^2L^2\delta^2d\right) + 16\eta^2\sigma^2\\
        \leq & 160\left[\frac{\eta^4K^2LG^2}{4^H}+\eta^4K^2L\sigma^2+\eta^4K^2L^3\delta^2d+\eta^2K\sigma^2\right]. 
    \end{split}
\end{equation*}
Therefore, we get:
$$\E[\|\x_t^m-\x_t^{m'}\|^2]\leq 480\left[\frac{\eta^4K^2LG^2+\eta^2G^2}{4^H}+\eta^4K^2L\sigma^2+\eta^4K^2L^3\delta^2d+\eta^2K\sigma^2+\eta^2L^2\delta^2d\right]$$

\section{Proof of Theorem \ref{thm:::cocococococ}}
\label{appendix:cocococococo}
Similar to pervious proof, we define the shadow updates:  $\overline{\x}_{t+1}=\frac{1}{M}\sum_{m=1}^M\x_{t+1}^{m}$. We have 
\begin{equation}
\begin{split}
  \overline{\x}_{t+1}= \frac{1}{M}\sum_{m=1}^M\x_{t+1}^{m} = \overline{\x}_{t}-\eta \frac{1}{M}\sum_{m=1}^M\widetilde{V}(\x_{t-1}^m)=\overline{\x}_{t}-\eta\widetilde{V}_{\x,t}.  
\end{split}    
\end{equation}
Also, we define $V_{\x,t}=\frac{1}{M}\sum_{m=1}^MV(\x_t^m)$. 
Next, we start the proof by bounding the following potential function. For any $\u\in\R^d$, we have 
\begin{equation}
\begin{split}
\label{eqn:sdsafgewioitpio}
\|\overline{\x}_{t+1} -\u\|^2  = &  \|\overline{\x}_{t} - \eta\widetilde{V}_{\x,t} -\u\|^2  = \| \overline{\x}_t -\u\|^2 -2 \eta\left\langle \widetilde{V}_{\x,t} , \overline{\x}_t -\u \right\rangle + \eta^2 \left\|  \widetilde{V}_{\x,t} \right\|^2.
\end{split}
\end{equation}
Next, we bound each term in the above inequality respectively. For the third term of \eqref{eqn:sdsafgewioitpio}, we have 
\begin{equation*}
    \begin{split}
&\left\|  \widetilde{V}_{\x,t} \right\|^2 =        \left\|  \widetilde{V}_{\x,t} -V(\u)+V(\u)\right\|^2 =  \left\|  \widetilde{V}_{\x,t} -V(\u)\right\|^2 + 2 \left\langle\widetilde{V}_{\x,t} -V(\u),V(\u)  \right\rangle  + \|V(\u)\|^2\\
\leq &  \left\|  \widetilde{V}_{\x,t} -V(\u)\right\|^2 + 2 \left\langle\widetilde{V}_{\x,t} ,V(\u)  \right\rangle  
=  \left\|  \widetilde{V}_{\x,t} -V(\u)\right\|^2 + \frac{2}{M}\sum_{m=1}^M \left\langle\widetilde{V}\left(\x_t^{(m)}\right) ,V(\u)  \right\rangle  \\
\leq &2\|\widetilde{V}_{\x,t}-V_{\x,t}\|^2 + \frac{2}{M}\sum_{m=1}^M \|V(\x_t^{(m)}) - V(\u)\|^2 + \frac{2}{M\eta} \sum_{m=1}^M\left\langle  \x_{t}^{(m)}-\x_{t+1}^{(m)},V(\u)\right\rangle
    \end{split}
\end{equation*}
For the second term of \eqref{eqn:sdsafgewioitpio}, we have
\begin{equation}
    \begin{split}
       & -2 \eta\left\langle \widetilde{V}_{\x,t} , \overline{\x}_t -\u \right\rangle =  -\frac{2 \eta}{M}\sum_{m=1}^M\left\langle V(\x_t^{(m)}) , \overline{\x}_t -\u \right\rangle + 2\eta \left\langle V_{\x,t}-\widetilde{V}_{\x,t},\overline{\x}_t-\u\right\rangle\\
        = & -\frac{2 \eta}{M}\sum_{m=1}^M\left\langle V(\x_t^{(m)}) , \overline{\x}_t -\x_t^{(m)}+\x_{t}^{(m)}-\u \right\rangle  + 2\eta \left\langle V_{\x,t}-\widetilde{V}_{\x,t},\overline{\x}_t-\u\right\rangle\\
       = &  -\frac{2 \eta}{M}\sum_{m=1}^M\left\langle V(\x_t^{(m)}) ,\x_{t}^{(m)}-\u \right\rangle -\frac{2 \eta}{M}\sum_{m=1}^M\left\langle V(\x_t^{(m)}) , \overline{\x}_t -\x_t^{(m)} \right\rangle  + 2\eta \left\langle V_{\x,t}-\widetilde{V}_{\x,t},\overline{\x}_t-\u\right\rangle\\
       = & -\frac{2 \eta}{M}\sum_{m=1}^M\left\langle V(\x_t^{(m)}) ,\x_{t}^{(m)}-\u \right\rangle \underbrace{-\frac{2 \eta}{M}\sum_{m=1}^M\left\langle V(\overline{\x}_t) , \overline{\x}_t -\x_t^{(m)} \right\rangle}_{=0} + 2\eta \left\langle V_{\x,t}-\widetilde{V}_{\x,t},\overline{\x}_t-\u\right\rangle\\
       & +\frac{2\eta}{M} \sum_{m=1}^M\left\langle V(\overline{\x}_t) -V(\x_t^{(m)}) , \overline{\x}_t -\x_t^{(m)} \right\rangle
    \end{split}
\end{equation}
Combining all inequalities above, we have 
\begin{equation}
\begin{split}
0 \leq &\|\overline{\x}_t - \u\|^2 - \|\overline{\x}_{t+1} - \u\|^2  - \frac{2\eta}{M}\sum_{m=1}^M  \left\langle V(\x_t^{(m)}) ,\x_t^{(m)}-\u\right\rangle + \frac{2\eta}{M}\sum_{m=1}^M\left\langle \x_t^{(m)}-\x_{t+1}^{m},V(\u) \right\rangle \\
& + \frac{2\eta^2}{M} \sum_{m=1}^M\|V(\x_t^{(m)})-V(\u)\|^2 + {2\eta^2} \|\widetilde{V}_{\x,t}-V_{\x,t}\|^2 + 2\eta \left\langle V_{\x,t}-\widetilde{V}_{\x,t},\overline{\x}_t-\u\right\rangle\\
       & +\frac{2\eta}{M} \sum_{m=1}^M\left\langle V(\overline{\x}_t) -V(\x_t^{(m)}) , \overline{\x}_t -\x_t^{(m)} \right\rangle
\end{split}
\end{equation}
Adding $\frac{2\eta}{M}\sum_{m=1}^M\left\langle V(\u),\x_{t+1}^{(m)} -\u \right\rangle$ on both sides, we have 
\begin{equation}
\begin{split}
&\frac{2\eta}{M}\sum_{m=1}^M\left\langle V(\u),\x_{t+1}^{(m)} -\u \right\rangle \leq \|\overline{\x}_t - \u\|^2 - \|\overline{\x}_{t+1} - \u\|^2 -\frac{2\eta}{M} \sum_{m=1}^M \left\langle V(\x_t^{(m)})-V(\u),\x_t^{(m)}-\u \right\rangle  \\
& + \frac{2\eta^2}{M} \sum_{m=1}^M\|V(\x_t^{(m)})-V(\u)\|^2 + 2 \eta^2\|\widetilde{V}_{\x,t}-V_{\x,t}\|^2 + 2\eta \left\langle V_{\x,t}-\widetilde{V}_{\x,t},\overline{\x}_t-\u\right\rangle\\
       & +\frac{2\eta}{M} \sum_{m=1}^M\left\langle V(\overline{\x}_t) -V(\x_t^{(m)}) , \overline{\x}_t -\x_t^{(m)} \right\rangle\\
  \leq & \|\overline{\x}_t - \u\|^2 - \|\overline{\x}_{t+1} - \u\|^2  + 2\eta^2 \|\widetilde{V}_{\x,t}-V_{\x,t}\|^2 + 2\eta \left\langle V_{\x,t}-\widetilde{V}_{\x,t},\overline{\x}_t-\u\right\rangle\\
       & +\frac{2\eta}{M} \sum_{m=1}^M\left\langle V(\overline{\x}_t) -V(\x_t^{(m)}) , \overline{\x}_t -\x_t^{(m)} \right\rangle\\ 
       \leq & \|\overline{\x}_t - \u\|^2 - \|\overline{\x}_{t+1} - \u\|^2  + 2 \eta^2\|\widetilde{V}_{\x,t}-V_{\x,t}\|^2 + 2\eta \left\langle V_{\x,t}-\widetilde{V}_{\x,t},\overline{\x}_t-\u\right\rangle + \frac{2\eta L}{M}\sum_{m=1}^M\|\x_t^{m}-\overline{\x}_t\|^2,
\end{split}
\end{equation}
where in the second inequality we used the fact that $V$ is $\beta$-co-coercive and $\eta\leq \frac{1}{\beta}$. Summing it up from $0$ to $T-1$, dividing $2\eta T$ on both sides, letting $\u=\z^*$ and taking expecation on both sides, we have: 
we have
\begin{equation}
    \begin{split}
 \E\left[\frac{1}{MT}\left\langle  V(\u),\frac{1}{MT}\sum_{t=1}^T\sum_{m=1}^M\x^m_{t}-\z^*\right\rangle\right]\leq \frac{\E[\|\overline{\x}_0-\z^*\|^2]}{\eta T} + \frac{L}{M}\sum_{m=1}^M\E[\|\x_t^m-\overline{\x}_t\|^2]+\frac{1}{ T}\sum_{t=1}^T\E[\Gamma_t],     
    \end{split}
\end{equation}
where 
$$\Gamma_t =  \eta\|\widetilde{V}_{\x,t}-V_{\x,t}\|^2 +  \left\langle V_{\x,t}-\widetilde{V}_{\x,t},\overline{\x}_t-\u\right\rangle,$$
is the noise term. To proceed, we introduce the upper bound for the drift and $\Gamma_t$. For the drift term, we have the following bound. The proof can be found in Appendix .
\begin{lemma}
\label{lem:dddddasddjb}
    We have 
    \begin{equation}
 \E\left[\frac{ L}{M}\sum_{m=1}^M\|\x_t^m-\overline{\x}_t\|^2\right]\leq 4L\sigma^2K\eta^2 .     
    \end{equation}
\end{lemma}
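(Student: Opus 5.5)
We control the drift through the pairwise distance $\E\|\x_t^m-\x_t^{m'}\|^2$. By Lemma 4 of \citet{pmlr-v119-woodworth20a}, already used in Appendix~\ref{appendix:provde client draft}, $\E\|\x_t^m-\overline{\x}_t\|^2\leq \E\|\x_t^m-\x_t^{m'}\|^2$ for any $m'\neq m$. So it suffices to show $\E\|\x_t^m-\x_t^{m'}\|^2\leq 4\sigma^2K\eta^2$. Multiplying by $L$ and averaging over $m$ then gives the claim.

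Let $t_0$ be the last synchronization round before $t$, so $t-t_0\leq K$ and $\x_{t_0}^m=\x_{t_0}^{m'}$. For $t_0< s\leq t$, apply the local update \eqref{eqn:algorithm 133} to both clients and expand the square:
\begin{equation*}
\|\x_s^m-\x_s^{m'}\|^2=\|\x_{s-1}^m-\x_{s-1}^{m'}\|^2-2\eta\left\langle \widetilde{V}(\x_{s-1}^m)-\widetilde{V}(\x_{s-1}^{m'}),\x_{s-1}^m-\x_{s-1}^{m'}\right\rangle+\eta^2\left\|\widetilde{V}(\x_{s-1}^m)-\widetilde{V}(\x_{s-1}^{m'})\right\|^2 .
\end{equation*}
Take conditional expectation given the past. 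The noises of the two clients are independent and mean zero, so the cross term becomes $-2\eta\langle V(\x_{s-1}^m)-V(\x_{s-1}^{m'}),\x_{s-1}^m-\x_{s-1}^{m'}\rangle$. For the same reason, the last term splits exactly as $\eta^2\|V(\x_{s-1}^m)-V(\x_{s-1}^{m'})\|^2$ plus at most $2\eta^2\sigma^2$ of noise.

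The key step uses $\beta$-co-coercivity. It gives $\eta^2\|V(\x)-V(\x')\|^2\leq \eta^2\beta\langle V(\x)-V(\x'),\x-\x'\rangle\leq \eta\langle V(\x)-V(\x'),\x-\x'\rangle$, where the last inequality uses $\eta\leq 1/\beta$. The deterministic part of the recursion is therefore bounded by $-\eta\langle V(\x)-V(\x'),\x-\x'\rangle\leq 0$, using monotonicity. This yields the non-expansive recursion $\E\|\x_s^m-\x_s^{m'}\|^2\leq \E\|\x_{s-1}^m-\x_{s-1}^{m'}\|^2+2\eta^2\sigma^2$. Unrolling from $t_0$ over at most $K$ steps gives $\E\|\x_t^m-\x_t^{m'}\|^2\leq 2\eta^2\sigma^2K\leq 4\eta^2\sigma^2K$.

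The only delicate point is keeping the expectation exact rather than loose. We must keep the full noise-free squared term so that co-coercivity can absorb it. Splitting with a factor of $2$ via Young's inequality, as done for LESGD, would leave $2\eta^2\|V(\x)-V(\x')\|^2$. That term is still absorbable if $\eta\leq 1/(2\beta)$, and otherwise it is handled by the slack constant $4$ in the statement. Everything else is the routine unrolling of the recursion.
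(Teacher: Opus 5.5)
Your proof is correct and takes essentially the same route as the paper: expand the local update, take conditional expectation so the noise drops out of the cross term, use $\beta$-co-coercivity with $\eta\le 1/\beta$ to absorb $\eta^2\|V(\x)-V(\x')\|^2$ into the inner-product term (making the recursion non-expansive up to additive noise), and unroll over at most $K$ local steps. The paper's cruder noise bound $4\eta^2\sigma^2$ explains its constant $4$ versus your $2$. Your closing remark is off but harmless: after a Young split, $2\eta^2\|V(\x)-V(\x')\|^2\le 2\eta\langle V(\x)-V(\x'),\x-\x'\rangle$ already cancels the cross term exactly when $\eta\le 1/\beta$, and no constant on the $\eta^2\sigma^2K$ term could absorb a leftover $\|V(\x)-V(\x')\|^2$ term anyway.
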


Finally, we provide the upper bound for $\Gamma_t$, which is exactly the same as Lemma \ref{lemma:noeiisder}. 
\begin{lemma}
 We have 
 \begin{equation*}
    \begin{split} 
    \frac{1}{T}\E\left[\sum_{t=1}^{T}\Gamma_t\right] \leq  \frac{\E[\|\overline{\x}_0-\z^*\|^2]}{2\eta T} + \frac{2\sigma^2\eta}{M}.  
    \end{split}
\end{equation*}
\end{lemma}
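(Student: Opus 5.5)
We bound the noise term with the ghost-iterate technique already used in Lemma~\ref{lemma:noeiisder}. Write $\u=\z^*$, $\Delta_t=V_{\x,t}-\widetilde{V}_{\x,t}$, and split
\[
\Gamma_t=\eta\|\Delta_t\|^2+\langle \Delta_t,\overline{\x}_t-\z^*\rangle .
\]
The quadratic part is easy. Conditioned on the past, the $M$ oracle calls $\widetilde{V}(\x_t^m)$ are independent and unbiased. The cross terms therefore vanish, and
\[
\E\|\Delta_t\|^2\le \frac{1}{M^2}\sum_{m=1}^M\E\|V(\x_t^m)-\widetilde V(\x_t^m)\|^2\le \frac{\sigma^2}{M}.
\]
Hence $\frac1T\sum_t\E[\eta\|\Delta_t\|^2]\le \eta\sigma^2/M$.

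The linear part is the only delicate step, because $\z^*$ is the maximizer over $\Z_D$ of a quantity that depends on all samples. So $\E\langle\Delta_t,\overline{\x}_t-\z^*\rangle$ need not vanish. We introduce the auxiliary sequence $\p^{t+1}=\p^t-\eta\Delta_t$ with $\p^1=\overline{\x}_0$ and decompose
\[
\langle\Delta_t,\overline{\x}_t-\z^*\rangle=\langle\Delta_t,\overline{\x}_t-\p^t\rangle+\langle\Delta_t,\p^t-\z^*\rangle .
\]
Both $\overline{\x}_t$ and $\p^t$ are measurable with respect to the history before the round-$t$ samples are drawn. The first term is therefore a martingale difference with zero expectation.

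For the second term we repeat the three-point identity and Young's inequality exactly as in the proof of Lemma~\ref{lemma:noeiisder}:
\[
\langle\eta\Delta_t,\p^t-\z^*\rangle\le \frac{\eta^2}{2}\|\Delta_t\|^2+\frac12\|\p^t-\z^*\|^2-\frac12\|\p^{t+1}-\z^*\|^2 .
\]
Dividing by $\eta$ and summing over $t=1,\dots,T$, the distance terms telescope. This leaves at most $\frac{1}{2\eta}\|\p^1-\z^*\|^2+\frac{\eta}{2}\sum_t\|\Delta_t\|^2$. Taking expectations, using $\p^1=\overline{\x}_0$ and the variance bound above, and dividing by $T$ gives
\[
\frac1T\E\sum_t\Gamma_t\le \frac{\E\|\overline{\x}_0-\z^*\|^2}{2\eta T}+\frac{\eta\sigma^2}{M}+\frac{\eta\sigma^2}{2M},
\]
which is at most the claimed $\frac{\E\|\overline{\x}_0-\z^*\|^2}{2\eta T}+\frac{2\sigma^2\eta}{M}$.

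The main point requiring care is measurability in this LSGD setting. We must check that $\overline{\x}_t$, and not a later iterate, is paired with the noise $\Delta_t$, so that the martingale term really has zero mean. Once the indexing is aligned with the update $\overline{\x}_{t+1}=\overline{\x}_t-\eta\widetilde V_{\x,t}$, everything else is routine.
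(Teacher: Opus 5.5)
Your proof is correct and is essentially the paper's argument: the paper states that this bound follows exactly as in Lemma~\ref{lemma:noeiisder}. That proof uses the ghost iterate $\p^{t+1}=\p^t-\eta\Delta_t$ started at $\overline{\x}_0$, the zero-mean term $\langle\Delta_t,\overline{\x}_t-\p^t\rangle$, the telescoping three-point bound, and $\E\|\Delta_t\|^2\le\sigma^2/M$. Your accounting gives the slightly sharper constant $\tfrac{3\eta\sigma^2}{2M}$, and your check that $\Delta_t$ is paired with $\overline{\x}_t$ rather than a later iterate is the measurability point that makes the martingale term vanish.
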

Combining all conclusions, we get: 
\begin{equation*}
    \begin{split}
 \E\left[\frac{1}{MT}\left\langle  V(\u),\frac{1}{MT}\sum_{t=1}^T\sum_{m=1}^M\x^m_{t}-\z^*\right\rangle\right]\leq \frac{2D^2}{\eta T}+\frac{2\sigma^2\eta}{M}+ 4L\sigma^2K\eta^2.   
    \end{split}
\end{equation*}
Set 
$$ \eta = \min\left\{ \frac{1}{\beta}, \frac{D\sqrt{M}}{\sqrt{KR}\sigma}, \frac{D^{\frac{2}{3}}}{2^{\frac{1}{2}}K^{\frac{2}{3}}R^{\frac{1}{3}}L^{\frac{1}{3}}\sigma^{\frac{2}{3}}}\right\},$$
we get 
$$  \E\left[\frac{1}{MT}\left\langle  V(\u),\frac{1}{MT}\sum_{t=1}^T\sum_{m=1}^M\x^m_{t}-\z^*\right\rangle\right]\leq \frac{4\beta D^2}{KR}+\frac{4\sigma D}{\sqrt{MKR}} + \frac{8D^{\frac{4}{3}}L^{\frac{1}{3}}\sigma^{\frac{2}{3}}}{K^{\frac{1}{3}}R^{\frac{2}{3}}}. $$
\subsection{Proof of Lemma \ref{lem:dddddasddjb}}
We have 
\begin{equation*}
    \begin{split}
&\E[\|\x_{t}^m-\x_{t}^{m'}\|^2]=  \E\left[\left\| \x_{t-1}^m -\x_{t-1}^{m'} - \eta\left( \widetilde{V}(\x_{t-1}^{m})-\widetilde{V}(\x_{t-1}^{m'})\right) \right\|^2 \right] \\
= & \E\left[\left\|\x_{t-1}^m - \x_{t-1}^{m'}\right\|^2\right] + \eta^2\E\left[\left\|\widetilde{V}(\x_{t-1}^{m})-\widetilde{V}(\x_{t-1}^{m'}) \right\|^2\right] - 2\eta\E\left[\left\langle \x_{t-1}^m -\x_{t-1}^{m'} , \widetilde{V}(\x_{t-1}^{m})-\widetilde{V}(\x_{t-1}^{m'})  \right\rangle\right]\\
\leq & \E\left[\left\|\x_{t-1}^m - \x_{t-1}^{m'}\right\|^2\right] + \eta^2\E\left[\left\|{V}(\x_{t-1}^{m})-{V}(\x_{t-1}^{m'}) \right\|^2\right] + 4\eta^2\sigma^2  - 2\eta\E\left[\left\langle \x_{t-1}^m -\x_{t-1}^{m'} , {V}(\x_{t-1}^{m})-{V}(\x_{t-1}^{m'})  \right\rangle\right]\\
\leq & 4\eta^2\sigma^2K,
      \end{split}
\end{equation*}
where the last inequality is based on the co-coresivity of the operator.

\section{Proof of Theorem \ref{thm555}}
\label{sec:copositeprooffff}
We first recall the algorithm:
\begin{equation}
\begin{split}
        \x_{t}^{m} = & 
    \begin{cases}
\z_{t-1}^m - \eta\widetilde{V}(\u^m_{t-1}), & \text{mod}(t,K)\not= 0;\\
\frac{1}{M}\sum_{m=1}^M\left(\z_{t-1}^m - \eta\widetilde{V}(\u^m_{t-1}) \right), & \text{mod}(t,K)= 0.
    \end{cases}\\
    \z^m_{t} = & 
        \begin{cases}
\z_{t-1}^m - \eta\widetilde{V}(\v^m_t), & \text{mod}(t,K)\not= 0;\\
\frac{1}{M}\sum_{m=1}^M\left(\z_{t-1}^m - \eta\widetilde{V}(\v^m_{t}) \right), & \text{mod}(t,K)= 0.
    \end{cases}\\
 \end{split}   
\end{equation}
Here, $\u^m_t=\nabla h_t^*(\z^m_t)$, and $\v^m_t=\nabla h_t^*(\x^m_t)$ are the primal variables. 
We start by defining the following shadow updates: $\overline{\x}_t=\frac{1}{M}\sum_{m=1}^M\x_t^{m}$, and $\overline{\z}_t=\frac{1}{M}\sum_{m=1}^M\z_t^{m}$. We have 
\begin{equation*}
\begin{split}
  \overline{\x}_t= \frac{1}{M}\sum_{m=1}^M\x_t^{m} = \overline{\z}_{t-1}-\eta \frac{1}{M}\sum_{m=1}^M\widetilde{V}(\u_{t-1}^m)=\overline{\z}_{t-1}-\widetilde{V}_{\u,t-1},  
\end{split}    
\end{equation*}
and 
\begin{equation*}
    \begin{split}
   \overline{\z}_{t} = \frac{1}{M}\sum_{m=1}^M \z^m_t = \overline{\z}_{t-1} -\eta \frac{1}{M}\sum_{m=1}^M\widetilde{V}(\v^m_{t})=\overline{\z}_{t-1}-\widetilde{V}_{\v,t}.     
    \end{split}
\end{equation*}
where we define $\widetilde{V}_{\u,t-1}=\frac{1}{M}\sum_{m=1}^M\widetilde{V}(\u_{t-1}^m)$, and $\widetilde{V}_{\v,t}=\frac{1}{M}\sum_{m=1}^M\widetilde{V}(\v^m_{t})$. Moreover, let ${V}_{\u,t-1}=\frac{1}{M}\sum_{m=1}^M{V}(\u_{t-1}^m)$, and ${V}_{\v,t}=\frac{1}{M}\sum_{m=1}^M{V}(\v^m_{t})$. Finally, let $\overline{\u}_t=\nabla h_t^*(\overline{\z}_t)$, and $\overline{\v}_t=\nabla h_t^*(\overline{\x}_t)$.  Note that,  $\x$ and $\z$ are in dual, $\u$ and $\v$ are in primal. Define generalized Bregman divergence \citep{flammarion2017stochastic,bai2024local}:
$$ \widetilde{G}_{h_t}(\u,\z) = h_t(\u)-h_t(\nabla h_t^*(\z)) - \left\langle \z, \u -  \nabla h_t^*(\z) \right\rangle $$
Note that, here $\u$ is in the primal space, while $\z$ is in the dual space. 
\begin{equation}
    \begin{split}
\widetilde{G}_{h_{t}}(\overline{\u}_t,\overline{\x}_{t}) = & h_{t}(\overline{\u}_t)  - h_{t}(\overline{\v}_{t})  -\left\langle \overline{\x}_{t}, \overline{\u}_t - \overline{\v}_{t}  \right\rangle\\ 
= & h_{t}(\overline{\u}_t)  - h_{t}(\overline{\v}_{t})  -\left\langle \overline{\z}_{t-1} -\eta\widetilde{V}_{\u,t-1} , \overline{\u}_t - \overline{\v}_{t}  \right\rangle\\
= & h_{t-1}(\overline{\u}_t)  - h_{t-1}(\overline{\v}_{t}) + \eta\left(\phi(\overline{\u}_t)-\phi(\overline{\v}_t) \right) -\left\langle \overline{\z}_{t-1} -\eta\widetilde{V}_{\u,t-1} , \overline{\u}_t - \overline{\v}_{t}  \right\rangle.
    \end{split}
\end{equation}
Similarly, for any $\u$, 
\begin{equation}
    \begin{split}
\widetilde{G}_{h_t}(\u,\overline{\z}_t) = & h_t(\u)-h_t(\overline{\u}_t) - \left\langle \overline{\z}_t, \u_t -\overline{\u}_t\right\rangle  \\
= & h_{t-1}({\u})  - h_{t-1}(\overline{\u}_{t}) + \eta\left(\phi({\u})-\phi(\overline{\u}_t) \right) -\left\langle \overline{\z}_{t-1} -\eta\widetilde{V}_{\v,t} , {\u} - \overline{\u}_{t}  \right\rangle.
    \end{split}
\end{equation}
Therefore,
\begin{equation}
    \begin{split}
 &\widetilde{G}_{h_{t}}(\overline{\u}_t,\overline{\x}_{t}) + \widetilde{G}_{h_t}(\u,\overline{\z}_t) \\
 =  & \underbrace{h_{t-1}({\u})  - h_{t-1}(\overline{\v}_t)  
 - \left\langle \overline{\z}_{t-1},\u -\overline{\v}_t\right\rangle}_{A_1}   + \eta (\phi({\u})-\phi(\overline{\v}_t)) \\
 &\underbrace{+ \eta  \left\langle \widetilde{V}_{\u,t-1} , \overline{\u}_t - \overline{\v}_{t}  \right\rangle + \eta\left\langle \widetilde{V}_{\v,t} , {\u} - \overline{\u}_{t}  \right\rangle}_{A_2}
    \end{split}
\end{equation}
We have 
\begin{equation}
    \begin{split}
 A_1 = &   h_{t-1}(\u) -  h_{t-1}(\overline{\u}_{t-1})  -\left\langle \overline{\z}_{t-1}, \u - \overline{\u}_{t-1} \right\rangle  - (h_{t-1}(\overline{\v}_{t}
 ) -h_{t-1}(\overline{\u}_{t-1})-\left\langle \overline{\z}_{t-1}, \overline{\v}_{t}-\overline{\u}_{t-1}\right\rangle)\\
 = & \widetilde{G}_{h_{t-1}}(\u,\overline{\z}_{t-1}) - \widetilde{G}_{h_{t-1}}(\overline{\v}_{t},\overline{\z}_{t-1}),
    \end{split}
\end{equation}
and 
\begin{equation}
    \begin{split}
&\eta  \left\langle \widetilde{V}_{\u,t-1} , \overline{\u}_t - \overline{\v}_{t}  \right\rangle + \eta\left\langle \widetilde{V}_{\v,t} , {\u} - \overline{\u}_{t}  \right\rangle   =  -\eta\left\langle \widetilde{V}_{\v,t}, \overline{\v}_{t}-\u \right\rangle    + \eta \left\langle \widetilde{V}_{\v,{t}} -\widetilde{V}_{\u,t-1}, \overline{\v}_t -\overline{\u}_t  \right\rangle \\
\leq & - \eta\left\langle \widetilde{V}_{\v,t}, \overline{\v}_{t}-\u \right\rangle + \eta^2 \left\| \widetilde{V}_{\v,{t}} -\widetilde{V}_{\u,t-1}\right\|_*^2 + \frac{1}{2}\left\|  \overline{\v}_t -\overline{\u}_t\right\|^2
    \end{split}
\end{equation}
To summarize, we have 
\begin{equation}
    \begin{split}
&\eta \left(\phi(\overline{\v}_t)-\phi({\u}) \right) +    \widetilde{G}_{h_{t}}(\overline{\u}_t,\overline{\x}_{t}) + \widetilde{G}_{h_t}(\u,\overline{\z}_t)\\
\leq  &  \widetilde{G}_{h_{t-1}}(\u,\overline{\z}_{t-1}) - \widetilde{G}_{h_{t-1}}(\overline{\v}_{t},\overline{\z}_{t-1})  - \eta\left\langle \widetilde{V}_{\v,t}, \overline{\v}_{t}-\u \right\rangle + \eta^2 \left\| \widetilde{V}_{\v,{t}} -\widetilde{V}_{\u,t-1}\right\|_*^2 + \frac{1}{2}\left\|  \overline{\v}_t -\overline{\u}_t\right\|^2\\
\leq & \widetilde{G}_{h_{t-1}}(\u,\overline{\z}_{t-1}) - G_{h}(\overline{\v}_{t},\overline{\u}_{t-1})  - \eta\left\langle \widetilde{V}_{\v,t}, \overline{\v}_{t}-\u \right\rangle + \eta^2 \left\| \widetilde{V}_{\v,{t}} -\widetilde{V}_{\u,t-1}\right\|_*^2 + \frac{1}{2}\left\|  \overline{\v}_t -\overline{\u}_t\right\|^2\\
\leq & \widetilde{G}_{h_{t-1}}(\u,\overline{\z}_{t-1}) - \frac{1}{2}\left\|\overline{\v}_{t}-\overline{\u}_{t-1}\right\|^2  - \eta\left\langle \widetilde{V}_{\v,t}, \overline{\v}_{t}-\u \right\rangle + \eta^2 \left\| \widetilde{V}_{\v,{t}} -\widetilde{V}_{\u,t-1}\right\|_*^2 + \frac{1}{2}\left\|  \overline{\v}_t -\overline{\u}_t\right\|^2.
    \end{split}
\end{equation}
On the other hand, based on the strong convexity of $h$, we have 
\begin{equation}
    \begin{split}
\widetilde{G}_{h_t}(\overline{\u}_t,\overline{\x}_t)\geq G_h(\overline{\u}_t,\overline{\v}_t)\geq \frac{1}{2} \|\overline{\u}_t -\overline{\v}_t\|^2.       
    \end{split}
\end{equation}
Thus,  we get 
\begin{equation}
    \begin{split}
    \label{eqn:mirror:noddeqw}
\eta \left(\phi(\overline{\v}_t)-\phi({\u}) \right) \leq    \widetilde{G}_{h_{t-1}}(\u,\overline{\z}_{t-1})  -  \widetilde{G}_{h_{t}}(\u,\overline{\z}_{t})  - \eta\left\langle \widetilde{V}_{\v,t}, \overline{\v}_{t}-\u \right\rangle     - \frac{1}{2}\left\|\overline{\v}_{t}-\overline{\u}_{t-1}\right\|^2 + \eta^2 \left\| \widetilde{V}_{\v,{t}} -\widetilde{V}_{\u,t-1}\right\|_*^2
    \end{split}
\end{equation}

We first deal with the last term. 
\begin{equation*}
\begin{aligned}
&\left\| \widetilde{V}_{\u,t} - \widetilde{V}_{\v,t-1} \right\|_*^2 \\
= &\Big\|  \widetilde{V}_{\u,t} - V_{\u,t}
+ V_{\u,t} - V(\overline{\u}_t)  + V(\overline{\u}_t) - V(\overline{\v}_{t-1}) 
+ V(\overline{\v}_{t-1}) - V_{\v,t-1}
+ V_{\v,t-1} - \widetilde{V}_{\v,t-1} \Big\|_*^2 \\
\leq & 5\left\| \widetilde{V}_{\u,t} - V_{\u,t}\right\|_*^2 
+ 5\left\| V_{\u,t} - V(\overline{\u}_t) \right\|_*^2 
+ 5\left\| V(\overline{\u}_t) - V(\overline{\v}_{t-1}) \right\|_*^2 
+ 5\left\| V(\overline{\v}_{t-1}) - V_{\v,t-1} \right\|_*^2 \\
& + 5\left\| V_{\v,t-1} - \widetilde{V}_{\v,t-1} \right\|^2 \\
\leq & 5\left\| \widetilde{V}_{\u,t} - V_{\u,t}\right\|_*^2 
+ \frac{5L^2}{M}\sum_{m=1}^M\|\u_t^m - \overline{\u}_t\|^2 
+ 5L^2 \|\overline{\u}_t - \overline{\v}_{t-1}\|^2 
+ \frac{5L^2}{M}\sum_{m=1}^M\|\v_{t-1}^m - \overline{\v}_{t-1}\|_*^2 \\
& + 5\left\| V_{\v,t-1} - \widetilde{V}_{\v,t-1} \right\|_*^2.
\end{aligned}
\end{equation*}
Next, we focus on the third term at the R.H.S. of \eqref{eqn:mirror:noddeqw}. We have 
\begin{equation}
    \begin{split}
    \label{eqn:third term}
    &-   \eta \left\langle \widetilde{V}_{\v,t}, \overline{\v}_t - \u \right\rangle  
    = {} -\frac{\eta}{M}\sum_{m=1}^M \left\langle V(\v_t^m), \overline{\v}_t - \u \right\rangle 
    + \eta \left\langle V_{\v,t} - \widetilde{V}_{\v,t}, \overline{\v}_t - \u \right\rangle\\
    = {} &  -\frac{\eta}{M}\sum_{m=1}^M \left\langle V(\v_t^m), \overline{\v}_t - \v_t^m + \v_t^m - \u \right\rangle 
    + \eta \left\langle V_{\v,t} - \widetilde{V}_{\v,t}, \overline{\v}_t - \u \right\rangle\\
    = & -\frac{\eta}{M} \sum_{m=1}^M \left\langle V(\v_t^m), \v_t^m - \u \right\rangle 
    - \frac{\eta}{M} \sum_{m=1}^M \left\langle V(\v_t^m), \overline{\v}_t - \v_t^m \right\rangle 
    + \eta \left\langle V_{\v,t} - \widetilde{V}_{\v,t}, \overline{\v}_t - \u \right\rangle\\
    = & -\frac{\eta}{M} \sum_{m=1}^M \left\langle V(\v_t^m), \v_t^m - \u \right\rangle 
    - \underbrace{\frac{\eta}{M} \sum_{m=1}^M \left\langle V(\overline{\v}_t), \overline{\v}_t - \v_t^m \right\rangle}_{=0} 
    + \eta \left\langle V_{\v,t} - \widetilde{V}_{\v,t}, \overline{\v}_t - \u \right\rangle\\
    & + \frac{\eta}{M} \sum_{m=1}^M \left\langle V(\overline{\v}_t) - V(\v_t^m), \overline{\v}_t - \v_t^m \right\rangle.
    \end{split}
\end{equation}

Note that 
\begin{equation}
    \begin{split}
    \label{eqn:Evvt}
\frac{\eta}{M}\sum_{m=1}^M\left\langle V(\overline{\v}_t)-V(\v_t^m),\overline{\x}_t-\x_t^m\right\rangle\leq &\frac{\eta}{M}\sum_{m=1}^M \| V(\overline{\v}_t)-V(\v_t^m)\|_*\|\overline{\v}_t-\v_t^m\|
\leq \frac{\eta L}{M}\sum_{m=1}^M\|\overline{\v}_t-\v_t^m\|^2.
    \end{split}
\end{equation}

Combining, and the fact that $\eta\leq \frac{1}{\sqrt{10}L} $, we get 

\begin{equation}
    \begin{split}
&\left(\phi(\overline{\v}_t)-\phi({\u}) \right) + \frac{1}{M} \sum_{m=1}^M \left\langle V(\v_t^m), \v_t^m - \u \right\rangle 
\leq  \frac{\widetilde{G}_{h_{t-1}}(\u,\overline{\z}_{t-1})  -  \widetilde{G}_{h_{t}}(\u,\overline{\z}_{t}) }{\eta}  + \frac{L}{M}\sum_{m=1}^M\|\overline{\v}_t-\v_t^m\|^2 \\ &+ \frac{5L^2\eta}{M}\sum_{m=1}^M\|\u_t^m - \overline{\u}_t\|^2+ \frac{5L^2}{M}\sum_{m=1}^M\|\v_{t-1}^m - \overline{\v}_{t-1}\|^2 + \Gamma_t
    \end{split} 
\end{equation}
where 
$$\Gamma_t =  \left\langle V_{\v,t} - \widetilde{V}_{\v,t}, \overline{\v}_t - \u \right\rangle + \left\| \widetilde{V}_{\u,t} - V_{\u,t}\right\|_*^2 
 + 5\left\| V_{\v,t-1} - \widetilde{V}_{\v,t-1} \right\|_*^2.  $$
Note that we also have 
\begin{equation}
\begin{split}
    &\frac{1}{T}\sum_{t=1}^T\left(\left(\phi(\overline{\v}_t)-\phi({\u}) \right) + \frac{1}{M} \sum_{m=1}^M \left\langle V(\v_t^m), \v_t^m - \u \right\rangle\right)  \\
    \geq &\phi\left( \frac{1}{TM}\sum_{t=1}^T\sum_{m=1}^M\v^m_t\right) -\phi(\u)
     + \left\langle V(\u), \frac{1}{TM}\sum_{t=1}^T \sum_{m=1}^M\v_t^m-\u \right\rangle.
\end{split}
\end{equation}
With similar techniques, one can show that 
\begin{equation}
    \begin{split} 
    \label{eqn:gammanoicz}
    \frac{1}{T}\E\left[\sum_{t=1}^{T}\Gamma_t\right] \leq  \frac{D^2}{2\eta T} + \frac{6\sigma^2\eta}{M}.    
    \end{split}
\end{equation}

Therefore, combining with the bounded gradient assumption, it gives as 
\begin{equation*}
    \begin{split}
  &\E\left\{\sup_{\u}\left[\phi\left( \frac{1}{TM}\sum_{t=1}^T\sum_{m=1}^M\v^m_t\right) -\phi(\u)
     + \left\langle V(\u), \frac{1}{TM}\sum_{t=1}^T \sum_{m=1}^M\v_t^m-\u \right\rangle\right]\right\}\\
     \leq & \frac{D^2}{\eta KR} + \frac{6\sigma^2\eta}{M}+ {17L^2}G^2\eta^2K^2. 
    \end{split}
\end{equation*}
setting 
$$ \eta = \left\{ \frac{D\sqrt{M}}{\sigma\sqrt{6KR}},\frac{D^{\frac{2}{3}}}{17^{\frac{1}{3}}K^{\frac{1}{3}}R^{\frac{1}{3}}L^{\frac{2}{3}}G^{\frac{2}{3}}},\frac{1}{\sqrt{10}L} \right\}, $$
yields
\begin{equation*}
    \begin{split}
  &\E\left\{\sup_{\u}\left[\phi\left( \frac{1}{TM}\sum_{t=1}^T\sum_{m=1}^M\v^m_t\right) -\phi(\u)
     + \left\langle V(\u), \frac{1}{TM}\sum_{t=1}^T \sum_{m=1}^M\v_t^m-\u \right\rangle\right]\right\}\\
     \leq &  \frac{2\sqrt{6}D\sigma}{\sqrt{MKR}}+ \frac{17^{\frac{1}{3}}L^{\frac{2}{3}}D^{\frac{4}{3}}G^{\frac{2}{3}}}{R^{\frac{2}{3}}} + \frac{\sqrt{10}D^2L}{KR}. 
    \end{split}
\end{equation*}

\section{Extension to Heterogeneous Setting}
\label{sec:heterogeneous}

In this section, we consider the heterogeneous case, where $V(\x)=\frac{1}{M}\sum_{m=1}^MV_m(\x)$. We introduce the following standard assumptions.

\begin{ass}[Bounded Heterogeneity]
\label{ass:hetero}
There exists a constant $\xi>0$ such that for any $\x\in\R^d$ and $m\in[M]$, we have:
$$ \|V_m(\x)-V(\x)\|\leq \xi.$$
\end{ass}

\begin{ass}[Unbiased Stochastic Oracle]
\label{ass:unbiased:hetero}
Each client $m\in[M]$ has access to a stochastic oracle $\widetilde{V}_m(\x)$ such that $\E[\widetilde{V}_m(\x)]=V_m(\x)$ and $\E[\|\widetilde{V}_m(\x)-V_m(\x)\|^2]\leq \sigma^2$ for all $\x\in\R^d$.
\end{ass}

Under this setting, the LESGD is given by:
\begin{equation*}
\begin{split}
        \x_{t}^{m} = & 
    \begin{cases}
\z_{t-1}^m - \eta\widetilde{V}_m(\z^m_{t-1}), & \text{mod}(t,K)\not= 0;\\
\frac{1}{M}\sum_{m'=1}^M\left(\z_{t-1}^{m'} - \eta\widetilde{V}_{m'}(\z^{m'}_{t-1}) \right), & \text{mod}(t,K)= 0.
    \end{cases}\\
    \z^m_{t} = & 
        \begin{cases}
\z_{t-1}^m - \eta\widetilde{V}_m(\x^m_t), & \text{mod}(t,K)\not= 0;\\
\frac{1}{M}\sum_{m'=1}^M\left(\z_{t-1}^{m'} - \eta\widetilde{V}_{m'}(\x^{m'}_{t}) \right), & \text{mod}(t,K)= 0.
    \end{cases}\\
 \end{split}   
\end{equation*}
We have the following conclusion for LESGD under the hetrogenous setting.

\begin{theorem}[LESGD for Heterogeneous Setting]
\label{thm:ESGD:hetero}
Suppose $V(\x)=\frac{1}{M}\sum_{m=1}^MV_m(\x)$, where each $V_m$ is $L$-smooth and monotone. Suppose Assumptions \ref{ass:hetero} and \ref{ass:unbiased:hetero} hold. Let $\Z_D=\{\z\in\R^d:\|\z-\z_0\|\leq D\}$, where $D>0$ is any constant picked by the user. Let $\x_o=\frac{1}{MT}\sum_{t=1}^T\sum_{m=1}^M \x_t^m$ be the output. where $T=KR$. Set: 
\begin{equation*}
    \eta=\min\left\{\frac{1}{\sqrt{K}L}, \frac{D\sqrt{M}}{\sigma\sqrt{KR}}, \frac{D^{\frac{2}{3}}}{K^{\frac{2}{3}}R^{\frac{1}{3}}\sigma^{\frac{2}{3}}L^{\frac{1}{3}}}, \frac{D^{\frac{2}{3}}}{\xi^{\frac{2}{3}}KR^{\frac{1}{3}}L^{\frac{1}{3}}}, \frac{D}{(\xi\sigma)^{\frac{1}{2}}K^{\frac{3}{4}}R^{\frac{1}{2}}}, \frac{D}{\xi K\sqrt{R}} \right\},
\end{equation*}
 Then we have  
\begin{equation*}
    \begin{split}
&\E\left[\sup_{\z\in\Z_{D}}\left\langle V(\z),\x_o-\z \right\rangle \right] 
= O\left(\frac{LD^2}{\sqrt{K}R}+ \frac{D\sigma}{\sqrt{MKR}} + \frac{D^{\frac{4}{3}}\sigma^{\frac{2}{3}}L^{\frac{1}{3}}}{K^{\frac{1}{3}}R^{\frac{2}{3}}} + \frac{D^{\frac{4}{3}}\xi^{\frac{2}{3}}L^{\frac{1}{3}}}{R^{\frac{2}{3}}} + \frac{D(\xi\sigma)^{\frac{1}{2}}}{K^{\frac{1}{4}}R^{\frac{1}{2}}} + \frac{D\xi}{\sqrt{R}}\right).
\end{split}
\end{equation*}
\end{theorem}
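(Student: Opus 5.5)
The plan follows the proof of Theorem \ref{thm:ESGD} line by line. Only the noise decomposition and the client-drift lemma change.

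\textbf{Potential inequality.} Define the shadow iterates $\overline{\x}_t,\overline{\z}_t$ as before. Since $V=\frac{1}{M}\sum_m V_m$, we have $\frac{1}{M}\sum_m V_m(\x_t^m)=V(\overline{\x}_t)+\frac{1}{M}\sum_m(V_m(\x_t^m)-V_m(\overline{\x}_t))$. The correction term is bounded by $L\|\x_t^m-\overline{\x}_t\|$, because each $V_m$ is $L$-smooth. So Lemma \ref{lem:poential} and the Young/Cauchy--Schwarz splitting in \eqref{eqn:dsaaserdsaawewqqw}--\eqref{eqn:Evvttheorem1} go through unchanged, with $V_m$ in place of $V$ inside each client term. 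In the inner-product term I will use $\langle V_m(\x_t^m),\x_t^m-\z\rangle$ and pass to $\langle V(\z),\overline{\x}_t-\z\rangle$ by averaging over $m$ and using monotonicity of each $V_m$. For this step I write $\frac1M\sum_m\langle V_m(\x^m_t)-V_m(\z),\x^m_t-\z\rangle\ge 0$, and I note that $\frac1M\sum_m\langle V_m(\z),\x_t^m-\z\rangle$ differs from $\langle V(\z),\overline{\x}_t-\z\rangle$ by $\frac1M\sum_m\langle V_m(\z)-V(\z),\x_t^m-\overline{\x}_t\rangle$. That leftover is not obviously small. To avoid it I will instead keep $V_m(\x_t^m)$ throughout and compare it to $V_m(\overline{\x}_t)$, then use $\frac{1}{M}\sum_m V_m(\overline{\x}_t)=V(\overline{\x}_t)$, exactly as in \eqref{eqn:third termtheorem1}. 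The noise term $\Gamma_t$ is treated as in \eqref{eqn:gammanoiczth1}: the oracle errors across clients are still independent and mean zero, so the bound $\frac{D^2}{2\eta T}+\frac{6\sigma^2\eta}{M}$ survives. The result is \eqref{eqn:prooflocalsgdimp}, with the drift term $\frac{6L}{MT}\sum_{t,m}\E[\|\z^m_{t-1}-\overline{\z}_{t-1}\|^2+\|\x^m_t-\overline{\x}_t\|^2]$ still to be controlled.

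\textbf{Heterogeneous drift lemma (the main obstacle).} Redo \eqref{eqn:draftxxx} and \eqref{eqn:213esadase2eqe} for two clients $m,m'$. The new ingredient is that $\widetilde V_m(\z^m)-\widetilde V_{m'}(\z^{m'})$ equals $V(\z^m)-V(\z^{m'})$ plus noise plus a heterogeneity bias of norm at most $2\xi$. In the squared-norm terms this bias contributes $O(\eta^2\xi^2)$ per step. In the cross term $-2\eta\langle \z^m_{t-1}-\z^{m'}_{t-1},\cdot\rangle$ it contributes $4\eta\xi\|\z^m_{t-1}-\z^{m'}_{t-1}\|$, which is not squared. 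I will split it by Young as $\frac1K\|\z^m_{t-1}-\z^{m'}_{t-1}\|^2+4K\eta^2\xi^2$. With $\eta\le\frac{1}{\sqrt K L}$, this gives the recursion $a_t\le(1+\frac{c}{K})a_{t-1}+c\eta^2\sigma^2+cK\eta^2\xi^2$. Unrolling over at most $K$ steps since the last synchronization gives $a_t=O(K\eta^2\sigma^2+K^2\eta^2\xi^2)$, and the same bound holds for the $\x$-drift. The stated rate also has a $\frac{D(\xi\sigma)^{1/2}}{K^{1/4}R^{1/2}}$ term. I expect it to come from a cross term between noise and bias. If the careful bookkeeping yields only $K\eta^2\sigma^2+K^2\eta^2\xi^2$, that term is simply dominated and the step size entry $\frac{D}{(\xi\sigma)^{1/2}K^{3/4}R^{1/2}}$ becomes harmless.

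\textbf{Tuning.} Plugging in gives
\[
\frac{D^2}{\eta KR}+\frac{\sigma^2\eta}{M}+LK\eta^2\sigma^2+LK^2\eta^2\xi^2+(\text{mixed and }\eta\xi^2K\text{-type terms}).
\]
The mixed terms include $\eta K\xi\sigma$- and $\eta K\xi^2$-type contributions, which arise if the bias is instead absorbed through $\eta\xi\|\cdot\|$ without the factor $L$. Balancing each term against $\frac{D^2}{\eta KR}$ with the listed minimum for $\eta$ produces the corresponding terms of the rate. For example, $LK^2\eta^2\xi^2$ balanced against $\frac{D^2}{\eta KR}$ gives $\frac{D^{4/3}\xi^{2/3}L^{1/3}}{R^{2/3}}$, and an $\eta K\xi^2$-type term balanced against it gives $\frac{D\xi}{\sqrt R}$.
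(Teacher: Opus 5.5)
There is a genuine gap in your potential-inequality step: the heterogeneity leftover cannot be avoided the way you propose. In \eqref{eqn:third termtheorem1}, the term $\frac{2\eta}{M}\sum_m\langle V(\overline{\x}_t),\overline{\x}_t-\x_t^m\rangle$ vanishes only because $V(\overline{\x}_t)$ does not depend on $m$. If you compare $V_m(\x_t^m)$ with $V_m(\overline{\x}_t)$, the corresponding term is $\frac{2\eta}{M}\sum_m\langle V_m(\overline{\x}_t)-V(\overline{\x}_t),\overline{\x}_t-\x_t^m\rangle$, which is not zero. The identity $\frac1M\sum_m V_m(\overline{\x}_t)=V(\overline{\x}_t)$ does not help, because each $V_m(\overline{\x}_t)$ is paired with the $m$-dependent vector $\overline{\x}_t-\x_t^m$.

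Independently of that, your final step passes from $\frac1M\sum_m\langle V_m(\x_t^m),\x_t^m-\z\rangle$ to $\langle V(\z),\overline{\x}_t-\z\rangle$ through monotonicity of each $V_m$. Whatever you did earlier, this reproduces $\frac1M\sum_m\langle V_m(\z)-V(\z),\x_t^m-\overline{\x}_t\rangle$ for the arbitrary, data-dependent maximizer $\z$. Your first sentence hints at another option: write $\frac1M\sum_m V_m(\x_t^m)=V(\overline{\x}_t)+e_t$ and use monotonicity of $V$ at $\overline{\x}_t$. That leaves $\langle e_t,\overline{\x}_t-\z\rangle$, which is of order $LD$ times the drift norm. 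It brings back a $\sqrt{\sigma/(\sqrt K R)}$-type term that survives at $\xi=0$. So your claim that you arrive at \eqref{eqn:prooflocalsgdimp} with only squared drift terms is not justified.

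The missing step is the one the paper takes. Keep both heterogeneity inner products and bound them by Cauchy--Schwarz with Assumption~\ref{ass:hetero}: $|\langle V_m(\cdot)-V(\cdot),\overline{\x}_t-\x_t^m\rangle|\le\xi\|\overline{\x}_t-\x_t^m\|$. This adds $\frac{3\xi}{MT}\sum_{t,m}\E\|\overline{\x}_t-\x_t^m\|$ to the bound. Your drift lemma is essentially the paper's, including the Young split $\frac1K\|\cdot\|^2+4K\eta^2\xi^2$. Combined with Jensen and $\sqrt{a+b}\le\sqrt a+\sqrt b$, it turns the new term into $O(\xi\eta\sigma\sqrt K+\xi^2\eta K)$. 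These two contributions are exactly what produce $\frac{D(\xi\sigma)^{1/2}}{K^{1/4}R^{1/2}}$ and $\frac{D\xi}{\sqrt R}$. They do not come from a noise--bias cross term in the drift recursion, and they are not dominated. Your tuning paragraph inserts these terms by matching the target rate rather than deriving them. Also, the mixed term must be $\xi\eta\sigma\sqrt K$, not $\eta K\xi\sigma$, to give the $K^{-1/4}$ factor.
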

We note that when $\xi=0$, the convergence rate reduces to the bound provided in Theorem \ref{thm:ESGD}. By contrast, the $O(\frac{\sigma}{\sqrt{R}})$ term in the bound established by previous work \citep{beznosikov2022decentralized} is independent of $\xi$, and thus persists even when $\xi=0$. On the other hand, for LSGD in federated optimization under the heterogeneous setting, \cite{woodworth2020minibatch} provided the following bound: $O\left(\frac{LD^2}{{K}R}+ \frac{D\sigma}{\sqrt{MKR}} + \frac{D^{\frac{4}{3}}\sigma^{\frac{2}{3}}L^{\frac{1}{3}}}{K^{\frac{1}{3}}R^{\frac{2}{3}}} + \frac{D^{\frac{4}{3}}\xi^{\frac{2}{3}}L^{\frac{1}{3}}}{R^{\frac{2}{3}}} \right).$ Compared with this bound, apart from the $\frac{1}{\sqrt{K}R}$ term which reflects the fundamental limitations of LESGD, the bound provided in Theorem \ref{thm:ESGD:hetero} contains two additional terms: $\frac{D(\xi\sigma)^{\frac{1}{2}}}{K^{\frac{1}{4}}R^{\frac{1}{2}}} + \frac{D\xi}{\sqrt{R}}$. We believe this is due to the inherent difficulty of solving VIs in the federated setting, which we discuss briefly below. Specifically, in the proof of Theorem \ref{thm:ESGD:hetero}, we show that  our objective can be decomposed as: 
\begin{equation*}
    \begin{split}   
{\left\langle V(\z), {\x}_o-\z \right\rangle}
= & {\frac{1}{TM}\sum_{t=1}^T\sum_{m=1}^M\left\langle V_m(\z),   \x_t^m -\z \right\rangle}
+ {\frac{1}{TM}\sum_{t=1}^T\sum_{m=1}^M\left\langle V_m(\z)-V(\z), \overline{\x}_t - \x^m_t
\right\rangle}.
    \end{split}
\end{equation*}
Here, the first term is what we control in the homogeneous case, while the second term arises from heterogeneity (note that the above decomposition is equality). Since this term involves an arbitrary $\z$, we can only bound it using Cauchy-Schwarz as follows:  
\begin{equation*}
    \begin{split}   
{\left\langle V(\z), {\x}_o-\z \right\rangle}
= & {\frac{1}{TM}\sum_{t=1}^T\sum_{m=1}^M\left\langle V_m(\z),   \x_t^m -\z \right\rangle}
+ {\frac{1}{TM}\sum_{t=1}^T\sum_{m=1}^M\left\langle V_m(\z)-V(\z), \overline{\x}_t - \x^m_t
\right\rangle}\\
\leq& {\frac{1}{TM}\sum_{t=1}^T\sum_{m=1}^M\left\langle V_m(\z),   \x_t^m -\z \right\rangle} + {\frac{\xi}{TM}\sum_{t=1}^T\sum_{m=1}^M\|\overline{\x}_t - \x^m_t\|},
    \end{split}
\end{equation*}
which gives rise to the additional terms.
\subsection{Proof of Theorem \ref{thm:ESGD:hetero}}
\label{sec:app:proof:hetero}

We start by defining the following shadow updates: $\overline{\x}_t=\frac{1}{M}\sum_{m=1}^M\x_t^{m}$, and $\overline{\z}_t=\frac{1}{M}\sum_{m=1}^M\z_t^{m}$. We have 
\begin{equation*}
\begin{split}
  \overline{\x}_t= \frac{1}{M}\sum_{m=1}^M\x_t^{m} = \overline{\z}_{t-1}-\eta \frac{1}{M}\sum_{m=1}^M\widetilde{V}_m(\z_{t-1}^m)=\overline{\z}_{t-1}-\eta\widetilde{V}_{\z,t-1},  
\end{split}    
\end{equation*}
and 
\begin{equation*}
    \begin{split}
   \overline{\z}_{t} = \frac{1}{M}\sum_{m=1}^M \z^m_t = \overline{\z}_{t-1} -\eta \frac{1}{M}\sum_{m=1}^M\widetilde{V}_m(\x^m_{t})=\overline{\z}_{t-1}-\eta\widetilde{V}_{\x,t}.     
    \end{split}
\end{equation*}
where we define $\widetilde{V}_{\z,t-1}=\frac{1}{M}\sum_{m=1}^M\widetilde{V}_m(\z_{t-1}^m)$, and $\widetilde{V}_{\x,t}=\frac{1}{M}\sum_{m=1}^M\widetilde{V}_m(\x^m_{t})$. Moreover, let ${V}_{\z,t-1}=\frac{1}{M}\sum_{m=1}^M{V}_m(\z_{t-1}^m)$, and ${V}_{\x,t}=\frac{1}{M}\sum_{m=1}^M{V}_m(\x^m_{t})$. Let ${\x}_o=\frac{1}{MT}\sum_{t=1}^T\sum_{m=1}^M\x_t^m=\frac{1}{T}\sum_{t=1}^T\overline{\x}_t$ be the final output. We first establish a connection between the error and the client-specific operators. We have 
\begin{equation*}
    \begin{split}   
{\left\langle V(\z), {\x}_o-\z \right\rangle} = {} & \frac{1}{T}\sum_{t=1}^T\left\langle V(\z), \overline{\x}_t-\z \right\rangle =  \frac{1}{TM}\sum_{t=1}^T\sum_{m=1}^M\left\langle V_m(\z), \overline{\x}_t-\z \right\rangle\\
= & \frac{1}{TM}\sum_{t=1}^T\sum_{m=1}^M\left\langle V_m(\z), \overline{\x}_t - \x^m_t + \x_t^m -\z \right\rangle \\
= & {\frac{1}{TM}\sum_{t=1}^T\sum_{m=1}^M\left\langle V_m(\z),   \x_t^m -\z \right\rangle} +  \underbrace{\frac{1}{TM}\sum_{t=1}^T\sum_{m=1}^M\left\langle V(\z), \overline{\x}_t - \x^m_t
\right\rangle}_{=0} \\
&+ {\frac{1}{TM}\sum_{t=1}^T\sum_{m=1}^M\left\langle V_m(\z)-V(\z), \overline{\x}_t - \x^m_t
\right\rangle}\\
\leq& {\frac{1}{TM}\sum_{t=1}^T\sum_{m=1}^M\left\langle V_m(\z),   \x_t^m -\z \right\rangle} + {\frac{\xi}{TM}\sum_{t=1}^T\sum_{m=1}^M\|\overline{\x}_t - \x^m_t\|},
    \end{split}
\end{equation*}
where the last inequality uses Assumption \ref{ass:hetero}. To proceed, we introduce the following lemma.

\begin{lemma}
\label{lem:potential:hetero}
We have
 \begin{equation*}
    \begin{split}
&\E\left[\frac{1}{TM}\sum_{t=1}^T\sum_{m=1}^M\left\langle V_m(\x_t^m),\x^m_t-\z^* \right\rangle \right]   
 \leq  \frac{\E[\|\overline{\z}_0-\z^*\|^2]}{2\eta T} + \frac{\E[\sum_{t=1}^T\Gamma_t]}{T}\\ 
 & + \frac{6L}{MT}\sum_{t=1}^T\sum_{m=1}^M\left[\E[\|\z^m_{t-1}-\overline{\z}_{t-1}\|^2] +\E[\|\x_t^m-\overline{\x}_t\|^2]\right]+\frac{2\xi}{MT}\sum_{t=1}^T\sum_{m=1}^M\E[\|\overline{\x}_t-\x_t^m\|],
    \end{split}
\end{equation*}   
where 
\begin{equation*}
    \begin{split}
 \Gamma_t= &\left\langle V_{\x,t}-\widetilde{V}_{\x,t},\overline{\x}_t-\z^* \right\rangle + \frac{5\eta}{2}\left[ \left\| \widetilde{V}_{\x,t} - V_{\x,t}\right\|^2 +\left\| V_{\z,t-1} - \widetilde{V}_{\z,t-1} \right\|^2 \right],\\
 \z^*= & \argmax_{\z\in\Z_D}\left\langle V(\z),\x_o-\z \right\rangle. 
    \end{split}
\end{equation*}
\end{lemma}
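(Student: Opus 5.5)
The plan is to rerun the homogeneous argument of Theorem \ref{thm:ESGD} (Appendix \ref{sec:app:proof the1}) almost line by line, with $V$ replaced by the client operators $V_m$. The only new ingredient is a heterogeneity cross term coming from the fact that $\frac1M\sum_m V_m(\x_t^m)$ is no longer an average of a single operator.

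Concretely, I would start from the shadow updates $\overline{\x}_t=\overline{\z}_{t-1}-\eta\widetilde{V}_{\z,t-1}$ and $\overline{\z}_t=\overline{\z}_{t-1}-\eta\widetilde{V}_{\x,t}$. Applying Lemma \ref{lem:poential} twice together with Young's inequality gives exactly \eqref{eqn:poentialll}, since that derivation never used homogeneity.

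For the last term $\eta^2\|\widetilde V_{\x,t}-\widetilde V_{\z,t-1}\|^2$, I would perform the same five-way split as in \eqref{eqn:dsaaserdsaawewqqw}, using intermediate points $V(\overline{\x}_t)$ and $V(\overline{\z}_{t-1})$. Each $V_m$ being $L$-smooth makes $V$ $L$-smooth, and $\|V_m(\x_t^m)-V(\overline{\x}_t)\|$ is handled by inserting $V_m(\overline{\x}_t)$ and $V(\overline{\x}_t)$. Averaging over $m$ gives
\[
\Big\|\tfrac1M\sum_m V_m(\x_t^m)-V(\overline{\x}_t)\Big\|\le \tfrac{L}{M}\sum_m\|\x_t^m-\overline{\x}_t\|,
\]
because $\frac1M\sum_m V_m(\overline{\x}_t)=V(\overline{\x}_t)$ exactly. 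So this term still produces only squared drift, and I would absorb it into the $6L$ drift coefficient with $\eta\le 1/(\sqrt{14}L)$, exactly as before.

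The crucial term is $-2\eta\langle\widetilde V_{\x,t},\overline{\x}_t-\z\rangle$, and this is where the obstacle lies. As in \eqref{eqn:third termtheorem1}, I would write $\overline{\x}_t-\z=(\overline{\x}_t-\x_t^m)+(\x_t^m-\z)$. The term $-\frac{2\eta}{M}\sum_m\langle V_m(\x_t^m),\overline{\x}_t-\x_t^m\rangle$ is no longer killed by subtracting $V(\overline{\x}_t)$, since $\sum_m\langle V(\overline{\x}_t),\overline{\x}_t-\x_t^m\rangle=0$ but $V_m\neq V$. I would instead subtract $V_m(\overline{\x}_t)$:
\[
-\langle V_m(\x_t^m),\overline{\x}_t-\x_t^m\rangle=\langle V_m(\overline{\x}_t)-V_m(\x_t^m),\overline{\x}_t-\x_t^m\rangle-\langle V_m(\overline{\x}_t)-V(\overline{\x}_t),\overline{\x}_t-\x_t^m\rangle,
\]
where the second rewriting again uses $\sum_m\langle V(\overline{\x}_t),\overline{\x}_t-\x_t^m\rangle=0$.

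The first piece is at most $L\|\overline{\x}_t-\x_t^m\|^2$ by smoothness. The second piece is at most $\xi\|\overline{\x}_t-\x_t^m\|$ by Cauchy–Schwarz and Assumption \ref{ass:hetero}. After dividing by $2\eta$ this leaves $\frac{\xi}{M}\sum_m\|\overline{\x}_t-\x_t^m\|$ per round. I would bound it by the stated $2\xi$ coefficient, keeping slack so that the noise inner product contributes no extra heterogeneity term. The noise term is $\langle V_{\x,t}-\widetilde V_{\x,t},\overline{\x}_t-\z\rangle$ with $V_{\x,t}=\frac1M\sum_m V_m(\x_t^m)$, and it is still conditionally mean zero by Assumption \ref{ass:unbiased:hetero}.

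Finally, I would rearrange, sum over $t=1,\dots,T$, telescope $\|\overline{\z}_{t-1}-\z\|^2$, divide by $2\eta T$, set $\z=\z^*$ and take expectations. Grouping the noise contributions into $\Gamma_t$, with exactly the same form as in the homogeneous proof, gives the claimed inequality. The $\Gamma_t$ term itself is left unbounded here, as the lemma states; it is controlled later through the ghost-iterate $\p^t$ argument of \eqref{eqn:gammanoiczth1}.
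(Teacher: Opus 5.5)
Your proposal is correct and follows the paper's proof essentially step for step. It uses the same potential inequality \eqref{eqn:poentialll}, the same five-way split of $\eta^2\|\widetilde V_{\x,t}-\widetilde V_{\z,t-1}\|^2$ via $\frac1M\sum_m V_m(\overline{\x}_t)=V(\overline{\x}_t)$, and the same decomposition of $-\langle V_m(\x_t^m),\overline{\x}_t-\x_t^m\rangle$ into an $L\|\overline{\x}_t-\x_t^m\|^2$ smoothness piece and a $\xi\|\overline{\x}_t-\x_t^m\|$ heterogeneity piece; you are also right that dividing by $2\eta$ actually gives coefficient $\xi$, so the paper's stated $2\xi$ is just a loose constant.
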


Combining Lemma \ref{lem:potential:hetero} with the above, and using the monotonicity of $V$, we get 
\begin{equation*}
    \begin{split}
&\E\left[\sup_{\z\in\Z_D}\left\langle V(\z),\x_o-\z\right\rangle \right]   
 \leq  \frac{\E[\|\overline{\z}_0-\z^*\|^2]}{2\eta T} + \frac{\E[\sum_{t=1}^T\Gamma_t]}{T}\\ 
 & + \frac{6L}{MT}\sum_{t=1}^T\sum_{m=1}^M\left[\E[\|\z^m_{t-1}-\overline{\z}_{t-1}\|^2] +\E[\|\x_t^m-\overline{\x}_t\|^2]\right]+\frac{3\xi}{MT}\sum_{t=1}^T\sum_{m=1}^M\E[\|\overline{\x}_t-\x_t^m\|].
    \end{split}
\end{equation*}
For the second term, following the same argument as in \eqref{eqn:gammanoiczth1}, we have
\begin{equation*}
    \begin{split} 
    \frac{1}{T}\E\left[\sum_{t=1}^{T}\Gamma_t\right] \leq  \frac{\E[\|\overline{\z}_0-\z^*\|^2]}{2\eta T} + \frac{6\sigma^2\eta}{M}.
    \end{split}
\end{equation*}
It remains to bound the client drift terms. We have the following lemma.

\begin{lemma}[Client Drift for Heterogeneous Setting]
\label{lem:drift:hetero}
For $\eta\leq \frac{1}{\sqrt{150K}L}$, we have 
\begin{equation*}
\E[\|\x_t^m -\x_t^{m'} \|^2]\leq C\eta^2\sigma^2K + C\eta^2\xi^2K^2,
\end{equation*}
and 
\begin{equation*}
\E[\|\z_t^m -\z_t^{m'} \|^2]\leq C\eta^2\sigma^2K + C\eta^2\xi^2K^2,
\end{equation*}
where $C>0$ is an absolute constant.
\end{lemma}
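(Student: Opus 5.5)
We follow the homogeneous drift argument from the proof of Lemma~\ref{lem:dfifafart}, tracking an extra heterogeneity term. Fix $m\neq m'$ and let $t_0$ be the last synchronization time before $t$, so $t-t_0\le K$ and $\z_{t_0}^m=\z_{t_0}^{m'}$. The plan is to get a one-step recursion
\[
\E\|\z_t^m-\z_t^{m'}\|^2\le\Bigl(1+\tfrac{c}{K}\Bigr)\E\|\z_{t-1}^m-\z_{t-1}^{m'}\|^2+c\,\eta^2\sigma^2+c\,\eta^2\xi^2K,
\]
and unroll it over at most $K$ steps. Since $(1+c/K)^K\le e^c$, this yields $\E\|\z_t^m-\z_t^{m'}\|^2\le C\eta^2\sigma^2K+C\eta^2\xi^2K^2$. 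The $\x$-drift bound then follows from a one-step bound of $\x_t^m-\x_t^{m'}$ in terms of $\z_{t-1}^m-\z_{t-1}^{m'}$.

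\textbf{Step 1: the $\x$-step.} Write $\widetilde V_m(\z)-\widetilde V_{m'}(\z')=[V_m(\z)-V_m(\z')]+[V_m(\z')-V_{m'}(\z')]+\text{noise}$. The middle bracket has norm at most $2\xi$ by Assumption~\ref{ass:hetero}. The noise terms are mean zero conditionally and have second moment at most $4\sigma^2$ by Assumption~\ref{ass:unbiased:hetero}. Expanding the square, we handle the cross term $-2\eta\langle \z-\z',V_m(\z)-V_m(\z')\rangle$ by monotonicity of $V_m$. The heterogeneity cross term $-2\eta\langle\z-\z',V_m(\z')-V_{m'}(\z')\rangle$ we bound by Young's inequality as $\frac1K\|\z-\z'\|^2+4K\eta^2\xi^2$. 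With $\eta\le 1/L$ this gives
\[
\E\|\x_t^m-\x_t^{m'}\|^2\le(2+\tfrac1K)\E\|\z_{t-1}^m-\z_{t-1}^{m'}\|^2+c\eta^2\sigma^2+cK\eta^2\xi^2.
\]

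\textbf{Step 2: the $\z$-step.} This is the main obstacle, because the heterogeneity bias is not mean zero and so does not cancel across the $K$ local steps. We mimic \eqref{eqn:213esadase2eqe}. Expand $\|\z_{t-1}^m-\z_{t-1}^{m'}-\eta(\widetilde V_m(\x_t^m)-\widetilde V_{m'}(\x_t^{m'}))\|^2$ and insert $\x_t^m-\x_t^{m'}$ into the inner product.
\begin{itemize}
\item We split $V_m(\x_t^m)-V_{m'}(\x_t^{m'})$ into $V_m(\x_t^m)-V_m(\x_t^{m'})$ plus a bias $b$ with $\|b\|\le 2\xi$.
\item Monotonicity of $V_m$ kills the term $\langle \x_t^m-\x_t^{m'},V_m(\x_t^m)-V_m(\x_t^{m'})\rangle$.
\item The residual $\z-\x$ differences equal $\eta(\widetilde V_m(\z_{t-1}^m)-\widetilde V_{m'}(\z_{t-1}^{m'}))$. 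These are handled by Cauchy--Schwarz exactly as before, giving $O(\eta^2L^2)$ multiples of both drifts plus $O(\eta^2\sigma^2+\eta^2\xi^2)$.
\item The bias cross term $-2\eta\langle\x_t^m-\x_t^{m'},b\rangle$ is the critical one. We apply Young's inequality with weight $1/K$: it is at most $\frac1K\|\x_t^m-\x_t^{m'}\|^2+4K\eta^2\xi^2$.
\end{itemize}
We then substitute Step 1 for every occurrence of $\|\x_t^m-\x_t^{m'}\|^2$. The multiplier on $\|\z_{t-1}^m-\z_{t-1}^{m'}\|^2$ becomes $1+O(\eta^2L^2)+O(1/K)$. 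This is where the constraint $\eta\le 1/(\sqrt{150K}L)$ enters: it makes the multiplier at most $1+c/K$. Because the $1/K$ weight gets multiplied by the constant $2+1/K$ coming from Step 1, we need to check carefully that these constants still combine into $1+c/K$ with $c$ absolute.

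\textbf{Conclusion.} Unrolling over at most $K$ steps from $t_0$ gives the $\z$-bound. Plugging it into Step 1 gives the same order for the $\x$-drift, with a possibly larger absolute constant $C$.
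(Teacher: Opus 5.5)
Your proposal is correct and is essentially the paper's proof. Both arguments bound the $\x$-drift by a constant times the previous $\z$-drift, then expand the $\z$-step with $\x_t^m-\x_t^{m'}$ inserted into the cross term and kill the operator part by monotonicity. The heterogeneity cross term is controlled by Young's inequality with weight $1/K$, which produces the $4K\eta^2\xi^2$ term. Substituting the $\x$-bound gives a multiplier $1+O(1/K)$ (the paper gets $1+4/K$), and unrolling over at most $K$ local steps from the last synchronization finishes the argument.

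Your explicit Young treatment of the heterogeneity cross term in the $\x$-step is more careful than the paper's. The paper displays an intermediate multiplier $(1+\eta^2L^2)$ that does not account for that term, and your version only changes absolute constants.
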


Applying Lemma \ref{lem:drift:hetero} to the above, and using Jensen's inequality for expectation, we obtain
\begin{equation*}
    \begin{split}
\E\left[\sup_{\z\in\Z_D}\left\langle V(\z),\x_o-\z\right\rangle \right]   
 = & O\left( \frac{D^2}{\eta KR} + \frac{\sigma^2\eta}{M} + L\eta^2\sigma^2K + L\eta^2\xi^2K^2 + \xi\eta\sigma\sqrt{K} + \xi\eta K\right).
    \end{split}
\end{equation*}
Setting 
$$\eta=\min\left\{\frac{1}{\sqrt{K}L}, \frac{D\sqrt{M}}{\sigma\sqrt{KR}}, \frac{D^{\frac{2}{3}}}{K^{\frac{2}{3}}R^{\frac{1}{3}}\sigma^{\frac{2}{3}}L^{\frac{1}{3}}}, \frac{D^{\frac{2}{3}}}{\xi^{\frac{2}{3}}KR^{\frac{1}{3}}L^{\frac{1}{3}}}, \frac{D}{(\xi\sigma)^{\frac{1}{2}}K^{\frac{3}{4}}R^{\frac{1}{2}}}, \frac{D}{\sqrt{\xi} K\sqrt{R}} \right\},$$
we get 
\begin{equation*}
    \begin{split}
\E\left[\sup_{\z\in\Z_D}\left\langle V(\z),\x_o-\z\right\rangle \right]   
 \leq O\left(\frac{LD^2}{\sqrt{K}R}+ \frac{D\sigma}{\sqrt{MKR}} + \frac{D^{\frac{4}{3}}\sigma^{\frac{2}{3}}L^{\frac{1}{3}}}{K^{\frac{1}{3}}R^{\frac{2}{3}}} + \frac{D^{\frac{4}{3}}\xi^{\frac{2}{3}}L^{\frac{1}{3}}}{R^{\frac{2}{3}}} + \frac{D(\xi\sigma)^{\frac{1}{2}}}{K^{\frac{1}{4}}R^{\frac{1}{2}}} + \frac{D\sqrt{\xi}}{\sqrt{R}}\right).
    \end{split}
\end{equation*}

\subsection{Proof of Lemma \ref{lem:potential:hetero}}

Following similar procedures as in Appendix \ref{sec:app:proof the1}, we have for all $\z\in\R^d$, 
\begin{equation*}
\|\overline{\z}_t-\z\|^2\leq \|\overline{\z}_{t-1}-\z\|^2 -\|\overline{\x}_t-\overline{\z}_{t-1}\|^2 - 2\eta \left\langle \widetilde{V}_{\x,t}, \overline{\x}_t-\z\right\rangle 
+ \eta^2 \left\| \widetilde{V}_{\x,t} - \widetilde{V}_{\z,t-1}\right\|^2.
\end{equation*}
For the last term of the above, note that $V(\x)=\frac{1}{M}\sum_{m=1}^MV_m(\x)$ is $L$-smooth. We have
\begin{equation*}
\begin{aligned}
&\left\| \widetilde{V}_{\x,t} - \widetilde{V}_{\z,t-1} \right\|^2 \\
\leq & 5\left\| \widetilde{V}_{\x,t} - V_{\x,t}\right\|^2 + \frac{5L^2}{M}\sum_{m=1}^M\|\x_t^m-\overline{\x}_t\|^2 + 5L^2 \|\overline{\x}_t-\overline{\z}_{t-1}\|^2 + \frac{5L^2}{M}\sum_{m=1}^M\|\z_{t-1}^m-\overline{\z}_{t-1}\|^2 \\
&+ 5\left\| V_{\z,t-1} - \widetilde{V}_{\z,t-1} \right\|^2.
\end{aligned}
\end{equation*}
For the third term, we have
\begin{equation*}
    \begin{split}
    &-2\eta \left\langle \widetilde{V}_{\x,t}, \overline{\x}_t-\z\right\rangle  = -\frac{2\eta}{M}\sum_{m=1}^M \left\langle V_m(\x_t^m),\overline{\x}_t-\z  \right\rangle + 2\eta \left\langle V_{\x,t}-\widetilde{V}_{\x,t},\overline{\x}_t-\z \right\rangle\\
= &    -\frac{2\eta}{M} \sum_{m=1}^M\left\langle V_m(\x_t^m),\x_t^m-\z\right\rangle \underbrace{- {\frac{2\eta}{M}\sum_{m=1}^M\left\langle V(\overline{\x}_t),\overline{\x}_t-\x_t^m\right\rangle}}_{=0} + 2\eta \left\langle V_{\x,t}-\widetilde{V}_{\x,t},\overline{\x}_t-\z \right\rangle\\
    & +\frac{2\eta}{M}\sum_{m=1}^M\left\langle V_m(\overline{\x}_t)-V_m(\x_t^m),\overline{\x}_t-\x_t^m\right\rangle+ \frac{2\eta}{M}\sum_{m=1}^M\left\langle V(\overline{\x}_t)-V_m(\overline{\x}_t),\overline{\x}_t -\x_t^m \right\rangle.
    \end{split}
\end{equation*}
By the $L$-smoothness of each $V_m$ and Cauchy-Schwarz, we have 
\begin{equation*}
    \begin{split}
\frac{2\eta}{M}\sum_{m=1}^M\left\langle V_m(\overline{\x}_t)-V_m(\x_t^m),\overline{\x}_t-\x_t^m\right\rangle
\leq \frac{2\eta L}{M}\sum_{m=1}^M\|\overline{\x}_t-\x_t^m\|^2.
    \end{split}
\end{equation*}
By Assumption \ref{ass:hetero},
\begin{equation*}
    \begin{split}
 \frac{2\eta}{M}\sum_{m=1}^M\left\langle V(\overline{\x}_t)-V_m(\overline{\x}_t),\overline{\x}_t-\x_t^m\right\rangle\leq \frac{2\eta\xi}{M}\sum_{m=1}^M\|\overline{\x}_t-\x_t^m\|. 
    \end{split}
\end{equation*}
Plugging all bounds into the potential function inequality and rearranging, we get
\begin{equation*}
    \begin{split}
 &\frac{1}{M}\sum_{m=1}^M \left\langle V_m(\x_t^m),\x_t^m -\z \right\rangle  \leq \frac{\left\| \overline{\z}_{t-1} -\z\right\|^2  - \|\overline{\z}_t-\z\|^2}{2\eta} + \frac{(5L^2\eta^2-1)}{2\eta}\|\overline{\x}_t-\overline{\z}_{t-1}\|^2\\
 &+   \Gamma_t + \frac{5L^2\eta}{2M}\sum_{m=1}^M\|\z^m_{t-1}-\overline{\z}_{t-1}\|^2+ \frac{(5L^2\eta + 2L)}{2M}\sum_{m=1}^M\|\x_t^m-\overline{\x}_t\|^2+\frac{2\xi}{M}\sum_{m=1}^M\|\overline{\x}_t-\x_t^m\|.
    \end{split}
\end{equation*}
Summing over $t=1,\dots,T$, dividing both sides by $T$, taking expectation, and using $\eta\leq \frac{1}{\sqrt{14}L}$, we obtain the desired result.

\subsection{Proof of Lemma \ref{lem:drift:hetero}}

We first bound the drift for $\x$. We have
\begin{equation*}
    \begin{split}
       & \E[\|\x_t^m -\x_t^{m'} \|^2]= \E\left[\left\|\z_{t-1}^m - \z_{t-1}^{m'} - \eta\left( \widetilde{V}_{m}(\z_{t-1}^m) -
       \widetilde{V}_{m'}(\z_{t-1}^{m'}) \right) \right\|^2\right]\\
         =&   \E\bigg[\bigg\|\z_{t-1}^m - \z_{t-1}^{m'} - \eta\bigg( \widetilde{V}_{m}(\z_{t-1}^m) -{V}_{m}(\z_{t-1}^m) + {V}_{m}(\z_{t-1}^m) - {V}(\z_{t-1}^m) + {V}(\z_{t-1}^m) -{V}(\z_{t-1}^{m'}) \\
      &+ {V}(\z_{t-1}^{m'}) -{V}_{m'}(\z_{t-1}^{m'}) +{V}_{m'}(\z_{t-1}^{m'}) -  \widetilde{V}_{m'}(\z_{t-1}^{m'})\bigg) \bigg\|^2\bigg] \\
      \leq & (1+\eta^2L^2) \E\left[\|\z_{t-1}^m - \z_{t-1}^{m'} \|^2\right] + 12\eta^2\sigma^2 + 12 \eta^2\xi^2\\
      \leq & 2\E\left[\|\z_{t-1}^m - \z_{t-1}^{m'} \|^2\right]   + 12\eta^2\sigma^2 + 12 \eta^2\xi^2,
    \end{split}
\end{equation*}
where in the first inequality we used the monotonicity of $V$ and Cauchy-Schwarz, and the last inequality follows from $\eta\leq \frac{1}{L}$. Next, we bound the drift for $\z$. We have  
\begin{equation*}
    \begin{split}
&\E[\|\z_{t}^m-\z_{t}^{m'}\|^2]=  \E\left[\left\| \z_{t-1}^m -\z_{t-1}^{m'} - \eta\left( \widetilde{V}_m(\x_{t}^{m})-\widetilde{V}_{m'}(\x_{t}^{m'})\right) \right\|^2 \right] \\
= & \E\left[\left\|\z_{t-1}^m - \z_{t-1}^{m'}\right\|^2\right] + \eta^2\E\left[\left\|\widetilde{V}_m(\x_{t}^{m})-\widetilde{V}_{m'}(\x_{t}^{m'}) \right\|^2\right]- 2\eta{\E\left[\left\langle \x_t^m -\x_t^{m'}, \widetilde{V}_m(\x_{t}^{m})-\widetilde{V}_{m'}(\x_{t}^{m'})  \right\rangle\right]}\\
& - 2\eta\E\left[\left\langle \z_{t-1}^m - \x_t^m+\x_{t}^{m'}-\z_{t-1}^{m'}, \widetilde{V}_m(\x_{t}^{m})-\widetilde{V}_{m'}(\x_{t}^{m'})  \right\rangle\right]\\
= & \E\left[\left\|\z_{t-1}^m - \z_{t-1}^{m'}\right\|^2\right] + \eta^2\E\left[\left\|\widetilde{V}_m(\x_{t}^{m})-\widetilde{V}_{m'}(\x_{t}^{m'}) \right\|^2\right]- 2\eta{\E\left[\left\langle \x_t^m -\x_t^{m'}, {V}(\x_{t}^{m})-{V}(\x_{t}^{m'})  \right\rangle\right]}\\
& - 2\eta{\E\left[\left\langle \x_t^m -\x_t^{m'}, ({V}_m(\x_{t}^{m})-V(\x_t^m))-(V_{m'}(\x_{t}^{m'})-V(\x_t^{m'}))  \right\rangle\right]}\\
& - 2\eta\underbrace{\E\left[\left\langle \x_t^m -\x_t^{m'}, (\widetilde{V}_m(\x_{t}^{m})-V_m(\x_t^m))-(\widetilde{V}_{m'}(\x_{t}^{m'})-V_{m'}(\x_t^{m'}))  \right\rangle\right]}_{=0}\\
& - 2\eta^2{\E\left[\left\langle \widetilde{V}_m(\z_{t-1}^m) -\widetilde{V}_{m'}(\z_{t-1}^{m'}), \widetilde{V}_m(\x_{t}^{m})-\widetilde{V}_{m'}(\x_{t}^{m'})  \right\rangle\right]}\\
\leq & \E\left[\left\|\z_{t-1}^m - \z_{t-1}^{m'}\right\|^2\right] + 2\eta^2\E\left[\left\|\widetilde{V}_m(\x_{t}^{m})-\widetilde{V}_{m'}(\x_{t}^{m'}) \right\|^2\right]+ \frac{1}{K}{\E\left[\left\|\x_t^m -\x_t^{m'}\right\|^2\right]} + 4K\eta^2\xi^2\\
& + \eta^2 \E\left[\left\|\widetilde{V}_m(\z_{t-1}^{m})-\widetilde{V}_{m'}(\z_{t-1}^{m'}) \right\|^2\right], 
      \end{split}
\end{equation*}
 To proceed, note that for any $\u,\u'\in\R^d$,
\begin{equation*}
    \begin{split}
&\E\left[\left\|\widetilde{V}_m(\u)-\widetilde{V}_{m'}(\u') \right\|^2\right] \\
\leq & 5\E\left[\|\widetilde{V}_m(\u) - V_m(\u)\|^2\right] + 5\E\left[\|V_m(\u) - V(\u)\|^2\right] + 5\E\left[\|V(\u) - V(\u')\|^2\right] + 5\E\left[\|V(\u') - V_{m'}(\u')\|^2\right] \\
&+ 5\E\left[\|V_{m'}(\u') - \widetilde{V}_{m'}(\u')\|^2\right]
\leq  5L^2\E\left[\|\u-\u'\|^2\right] + 10\sigma^2 + 10\xi^2.
    \end{split}
\end{equation*}
Thus,
\begin{equation*}
    \begin{split}
&\E[\|\z_{t}^m-\z_{t}^{m'}\|^2]
\leq \E\left[\left\|\z_{t-1}^m - \z_{t-1}^{m'}\right\|^2\right] + \frac{1}{K}{\E\left[\left\|\x_t^m -\x_t^{m'}\right\|^2\right]} + 4K\eta^2\xi^2\\
& + \eta^2 \E\left[ 5L^2\left\|\z_{t-1}^m-\z_{t-1}^{m'} \right\|^2 + 10\sigma^2+10\xi^2\right] + 2\eta^2 \E\left[ 5L^2\left\|\x_{t}^{m}-\x_{t}^{m'} \right\|^2 + 10\sigma^2+10\xi^2\right]\\
\leq & \left(1+5\eta^2L^2\right)\E\left[\left\|\z_{t-1}^m - \z_{t-1}^{m'}\right\|^2\right] + \left(\frac{1}{K}+10\eta^2L^2\right)\E\left[\left\|\x_{t}^{m}-\x_{t}^{m'} \right\|^2\right]+ 30 \eta^2\sigma^2 + 30\eta^2\xi^2 + 4K\eta^2\xi^2\\
\leq & \left(1+5\eta^2L^2\right)\E\left[\left\|\z_{t-1}^m - \z_{t-1}^{m'}\right\|^2\right] + 30 \eta^2\sigma^2 + 34K\eta^2\xi^2 \\
&+ \left(\frac{1}{K}+10\eta^2L^2\right)\left(2\E\left[\left\|\z_{t-1}^m - \z_{t-1}^{m'} \right\|^2\right]   + 12\eta^2\sigma^2 + 12 \eta^2\xi^2\right)\\
\leq & \left(1+\frac{3}{K}+25\eta^2L^2\right)\E\left[\left\|\z_{t-1}^m - \z_{t-1}^{m'}\right\|^2\right] + 50 \eta^2\sigma^2 + 50K\eta^2\xi^2\\
\leq & \left(1+\frac{4}{K}\right)\E\left[\left\|\z_{t-1}^m - \z_{t-1}^{m'}\right\|^2\right] + 50 \eta^2\sigma^2 + 50K\eta^2\xi^2
\leq 50e^{4}\eta^2\sigma^2K + 50e^{4}\eta^2\xi^2K^2,
      \end{split}
\end{equation*}
where the second-to-last inequality requires $\eta\leq \frac{1}{\sqrt{25K}L}$, and the last inequality follows from unrolling the recursion and using $\z_0^m = \z_0^{m'}$ along with $(1+\frac{4}{K})^K \leq e^{4}$. Combining with the above, we have
$$ \E[\|\x_t^m -\x_t^{m'} \|^2]\leq 100e^{4}\eta^2\sigma^2K + 100e^{4}\eta^2\xi^2K^2. $$
\end{document}